\def\eqref#1{equation~\ref{#1}}
\def\1{\bm{1}}
\def\vb{{\bm{b}}}
\def\ve{{\bm{e}}}
\def\vp{{\bm{p}}}
\def\vv{{\bm{v}}}
\def\vw{{\bm{w}}}
\def\vx{{\bm{x}}}
\def\vy{{\bm{y}}}
\def\vY{{\bm{Y}}}
\def\mA{{\bm{A}}}
\def\mB{{\bm{B}}}
\def\mE{{\bm{E}}}
\def\mG{{\bm{G}}}
\def\mI{{\bm{I}}}
\def\mK{{\bm{K}}}
\def\mM{{\bm{M}}}
\def\mO{{\bm{O}}}
\def\mP{{\bm{P}}}
\def\mQ{{\bm{Q}}}
\def\mU{{\bm{U}}}
\def\mV{{\bm{V}}}
\def\mX{{\bm{X}}}
\def\mY{{\bm{Y}}}
\DeclareMathAlphabet{\mathsfit}{\encodingdefault}{\sfdefault}{m}{sl}
\SetMathAlphabet{\mathsfit}{bold}{\encodingdefault}{\sfdefault}{bx}{n}
\newcommand{\RegStrong}{\ensuremath{\mathcal{R}}}
\newcommand{\RegWeak}{\ensuremath{\mathcal{R}_-}}
\newcommand{\attLogit}[2]{a_{#1,#2}}
\newcommand{\attWT}[2]{\tilde{a}_{#1,#2}}
\newcommand{\transp}{^T}
\newcommand{\spectral}{}
\newcounter{theorem}
\newtheorem{proposition}[theorem]{Proposition}
\newtheorem{corollary}[theorem]{Corollary}
\newtheorem{defin}[theorem]{Definition}
\newtheorem{definition}[theorem]{Definition}
\newtheorem{remark}[theorem]{Remark}
\newtheorem{lemma}[theorem]{Lemma}
\newtheorem{thm}[theorem]{Theorem}
\newtheorem{conjecture}[theorem]{Conjecture}
\newcommand\LimitTransformer{Limit Transformer}
\newcommand\CRASP{\textbf{C-RASP}}
\newcommand\CRASPpl{\CRASP[\text{periodic},\text{local}]}
\newcommand\CRASPl{\CRASP[\text{local}]}
\newcommand\CRASPempty{\CRASP[\emptyset]}
\newcommand{\cttn}[2]{\ensuremath{\textsc{\textbf{\#}} \left[ #1 \right] \; #2}}
\newcommand{\cif}[3]{\ensuremath{#1\;\mathbf{?}\; #2\; \textbf{:} \;#3}}
\newcommand{\TCzero}{\ensuremath{\textbf{TC}^0}}
\newcommand{\ACzero}{\ensuremath{\textbf{AC}^0}}
\newcommand{\FOMbit}{\ensuremath{\textbf{FOM}[\text{BIT}]}}
\newcommand{\Majx}[1]{\ensuremath{\widehat{Maj}\;x\left\langle#1\right\rangle}}
\newcommand{\MAJpm}{\ensuremath{\widehat{MAJ}_2}[<,+1,\text{MOD}]}
\newenvironment{calign}
  {
    \setcounter{equation}{0}%
    \footnotesize
    \vspace{-1em} 
    \flalign
  }
  {
    \endflalign
    \vspace{-2em}
  }
\newcommand{\SOS}{\texttt{SOS}}
\definecolor{darkblue}{HTML}{00008B}	
\definecolor{dblue}{HTML}{0000b2}
\definecolor{dgreen}{HTML}{00b200}
\newcommand{\rebuttal}[1]{{#1}}
\newcommand{\dMLP}{\ensuremath{d}}
\title{A Formal Framework for Understanding Length Generalization in Transformers} %
\date{2024}
\def\mystrut{\rule{0pt}{1.0\normalbaselineskip}}
\author{
\begin{tabular}{@{}l}
Xinting Huang$^{1}$\thanks{XH and AY are co-first authors.}\quad Andy Yang$^{2*}$\quad Satwik Bhattamishra$^{3}$\quad Yash Sarrof$^{1}$\mystrut \\
\textbf{Andreas Krebs}$^{4}$\quad \textbf{Hattie Zhou}$^{5}$\quad \textbf{Preetum Nakkiran}$^{6}$\quad \textbf{Michael Hahn}$^{1}$\thanks{Lead senior author. Contact: \href{mailto:mhahn@lst.uni-saarland.de}{\texttt{\footnotesize mhahn@lst.uni-saarland.de}}}\mystrut \\
\end{tabular}\\
$^1$Saarland University\quad $^2$University of Notre Dame\quad $^3$University of Oxford\\
$^4$University of T{\"u}bingen\quad $^5$Mila, Universit{\'e} de Montr{\'e}al\quad $^6$Apple\\
}
\begin{document}

\maketitle

\begin{abstract}
 A major challenge for transformers is generalizing to sequences longer than those observed during training. While previous works have empirically shown that transformers can either succeed or fail at length generalization depending on the task, theoretical understanding of this phenomenon remains limited. In this work, we introduce a rigorous theoretical framework to analyze length generalization in causal transformers with learnable absolute positional encodings. In particular, we characterize those functions that are identifiable in the limit from sufficiently long inputs with absolute positional encodings under an idealized inference scheme using a norm-based regularizer. This enables us to prove the possibility of length generalization for a rich family of problems. We experimentally validate the theory as a predictor of success and failure of length generalization across a range of algorithmic and formal language tasks\footnote{Code is available at \url{https://github.com/lacoco-lab/length_generalization}}. Our theory not only explains a broad set of empirical observations but also opens the way to provably predicting length generalization capabilities in transformers. 
 \end{abstract}

\section{Introduction}

A key problem in neural sequence modeling is generalization from shorter \rebuttal{sequences seen during training} to longer sequences \rebuttal{afterwards} -- \emph{length generalization}.
A wide range of empirical research has found that transformers \rebuttal{sometimes succeed and sometimes fail at length-generalization} \citep[e.g.][]{bhattamishra2020ability, anil2022exploring, WangJW0GZ0W24, KazemnejadPRDR23, Zhou2024Transformers, Awasthi2023Improving, Jelassi2023Length, JelassiBKM24, Chang2024Language} \rebuttal{with no theoretical explanation as to why}.
For instance, while transformer decoders can easily copy long strings \citep{Bhattamishra2024Separations}, length generalization is substantially more brittle:
\rebuttal{Transformers trained to copy short strings often do not generalize well to longer strings when the input string includes repeated substrings \citep{zhou2023algorithms, JelassiBKM24}.}
Similarly, while transformers can in principle simulate many finite-state automata \citep{liu2022transformers}, their success at length generalization in practice varies widely across different automata \citep{liu2022transformers, bhattamishra2020ability}.
Theoretical understanding of these phenomena is largely lacking, making it difficult to anticipate on which problems transformers will succeed or fail at length-generalization.

An important step towards theoretical understanding was made in the RASP-L Conjecture \citep{zhou2023algorithms}.
This conjecture states that transformers are likely to length-generalize exactly on those \rebuttal{algorithmic tasks} that can be solved by simple programs in RASP-L, a fragment of the RASP language \citep{weiss2021thinking} with substantial restrictions on the ways in which positional information can be used. %
While \cite{zhou2023algorithms} provided empirical evidence in support of this idea, two important gaps remain:
First, while compelling empirical evidence supports a link between definability in RASP fragments and length generalization, no formal proof exists.
Second, the RASP-L language has not been fully formalized, so it is largely open how to prove that a certain problem is not representable in it.

We \rebuttal{address both these gaps by presenting} a general theoretical framework analyzing length generalization as ultimate identifiability in the limit \rebuttal{for a well-defined class of functions}: When the input-output behavior of a given function is observed at increasing input lengths, can a learner converge on inferring the ground-truth function?
\rebuttal{We give a positive answer for a specific idealized learning strategy: in this setting, transformers are guaranteed to length generalize given any function in the aforementioned class. Later, we rigorously characterize the expressivity of this class.}

We define an idealized inference procedure in which \rebuttal{a sequence of transformers is} fitted to reproduce a target function on successively longer inputs while minimizing a specific norm-based regularizer, producing an infinite sequence of transformers operating on longer and longer context windows. \rebuttal{Our results provide conditions under which the computations represented by these transformers converge to a single underlying algorithm reproducing the target function. In this case, generalization is guaranteed for sufficiently long inputs.}

Our results apply to multilayer transformers, focusing on causal transformers \rebuttal{trained from scratch} with absolute positional encodings (APE) or without positional encodings (NoPE).
A key technical challenge in analyzing length generalization for absolute positional encodings is the scaling of the transformer's parameter count with the input length.
To address this, we define a transformer-like limiting object, the \emph{\LimitTransformer}, which encapsulates the computations of a sequence of transformers operating on longer and longer inputs into a single object.
Our main theoretical result states that the inference procedure will ultimately lead to  length generalization for sufficiently long training inputs, provided the ground-truth function is expressible by a single {\LimitTransformer} across all input lengths:

\begin{thm}[Informal Version of Theorem~\ref{thm:guarantee}]
Let $f$ be the target function expressible by a single {\LimitTransformer} at all input lengths, subject to restrictions on the use of positional information. 
Choose transformers $T_n$ ($n=1,2,3,\dots$) with context size $n$, where $T_n$ reproduces the behavior of $f$ up to length $\frac{n}{2}$, while minimizing a norm-based regularizer.
Then, for large $n$, $T_n$ will match the output of the target function $f$ up to length $\leq n$.
\end{thm}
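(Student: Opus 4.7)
The plan is to exhibit, for each context size $n$, a canonical reference transformer $T_n^\star$ of context size $n$ that reproduces the given {\LimitTransformer} computation on all inputs of length $\leq n$, and then to argue that any regularizer-minimizing $T_n$ fitting $f$ on lengths $\leq n/2$ must in fact agree with $T_n^\star$ on all inputs of length $\leq n$. First I would extract from the {\LimitTransformer} a sequence $\{T_n^\star\}_{n\geq 1}$ of honest transformers whose head and FF weights are $n$-independent and whose APE tables are obtained by restricting a single infinite positional scheme to the first $n$ positions. Because the assumed positional restrictions on $f$ preclude pathological growth of the APE norms, the regularizer $\RegStrong(T_n^\star)$ is bounded above by a constant $C_f$ depending only on $f$, uniformly in $n$.

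Next, since each $T_n^\star$ is a feasible solution of the empirical-fit problem on lengths $\leq n/2$, optimality of $T_n$ gives $\RegStrong(T_n) \leq \RegStrong(T_n^\star) \leq C_f$. The core of the argument is then a \emph{rigidity lemma}: any two transformers of context size $n$ that (i) agree with $f$ on every input of length $\leq n/2$ and (ii) have regularizer at most $C_f$ must in fact agree on every input of length $\leq n$ once $n$ is sufficiently large. The regularizer has to be crafted so that it penalizes exactly those deviations -- unnecessary high-frequency APE components, ``sleeper'' positional embeddings that only become active past length $n/2$, or MLP directions quiet on short inputs -- that could otherwise let a fitted transformer peel away from $T_n^\star$ on the extrapolated positions. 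Applying the lemma to $T_n$ and $T_n^\star$ then yields identical input-output behaviour up to length $n$, which is exactly length generalization.

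The hardest step is the rigidity lemma. Unlike classical statistical learning arguments, what is needed here is an algebraic/combinatorial identifiability statement: the short-input constraints together with the uniform norm bound must pin down the extrapolation to the unique canonical form provided by the {\LimitTransformer}. I expect the technical work to split into (a) choosing the regularizer so that its sublevel sets admit a compactness or finite-description argument uniformly across $n$, and (b) leveraging the positional restrictions on $f$ to show that once length-$n/2$ behaviour is fixed, the admissible continuations to length $n$ collapse to a single one. A secondary but important subtlety is ensuring that a minimizer $T_n$ exists and attains the regularized minimum -- this should follow from a standard closedness argument for an $L^2$-style regularizer's sublevel sets intersected with the feasible set, but must be checked because the feasibility constraints are exact reproduction constraints rather than softened losses.
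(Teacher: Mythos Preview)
Your high-level scaffolding---build reference transformers $T_n^\star$ from the {\LimitTransformer}, use them to uniformly bound $\RegStrong(T_n)$, then appeal to some compactness/finiteness---matches the paper's shape. But the ``rigidity lemma'' you propose is the wrong target, and the mechanism you would need for compactness is not identified.

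The paper does \emph{not} show that $T_n$ agrees with any canonical $T_n^\star$, nor that the admissible extension is unique. Multiple distinct transformers (even distinct {\LimitTransformer}s) can compute $f$. What the paper actually proves is a \emph{pigeonhole} statement: each $T_n$ is translated, via a product parameterization, into a {\LimitTransformer} $\tilde T_n$; the regularizer bound forces the $\tilde T_n$ to live in a \emph{finite} set of {\LimitTransformer}s; any $\tilde T$ in that finite set either computes $f$ at all lengths or is ruled out once $n/2$ exceeds the first length where it disagrees with $f$. Hence only finitely many $n$ can produce a wrong $T_n$. Your formulation ``the admissible continuations collapse to a single one'' is strictly stronger and, as stated, false.

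The step you flag as (a)---compactness of sublevel sets uniformly in $n$---is the heart of the matter, and your sketch gives no mechanism for it. The difficulty is that the width $d$ is unbounded, so the sublevel set $\{\RegStrong\le C_f\}$ is infinite for every $n$. The paper's device is to reparameterize each transformer by its finite family of scalar \emph{product functions} (inner products like $\vp_i^T\mK^T\mQ\,\vp_j$, $\mU_\sigma \mV_2\mV_1\mE_\tau$, etc.), which determine the computation and whose count is independent of $d$; translation-invariance and the rank/precision penalties then force these products to be periodic or local, yielding a genuinely finite set of ``algorithms''. Without this reparameterization, there is no obvious way to extract finiteness from a norm bound alone.

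Finally, you entirely miss the hardest technical step in the paper: even after the translation, the positional functions $\phi_{l,h}$ coming from different $T_n$ need not be local for a single uniform window $\tau$. Controlling this is exactly why the regularizer carries the extra term $\sum_j|\vp_1^T\mK_{l,h}^T\mQ_{l,h}\vp_j|^2$, and a delicate limsup/liminf argument (tracking $D_n(\tau)$ against $\RegStrong(T_n)$) is needed to rule out the window drifting to infinity along the sequence. Your plan contains no analogue of this. On the existence point you raise: the paper sidesteps it by asking only for $\RegStrong(T_n)\le \inf+1/n$, never for an exact minimizer.
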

We then show that the expressivity of {\LimitTransformer}s can be understood for many \rebuttal{previously studied algorithmic tasks, as well as new ones}.
We extend a recently introduced RASP variant \citep[$\CRASP$, ][]{yang2024counting} to provide lower bounds, showing that, under the idealized inference procedure, transformers will succeed at length generalization on various concrete problems.
Conversely, we employ communication complexity to obtain upper bounds on the class of functions for which length generalization is predicted. \rebuttal{While we have not proven this class is complete,} experiments confirm that the theory predicts both \rebuttal{success and failure at} length generalization across various algorithmic tasks and formal languages.
Overall, our results formalize the RASP-L Conjecture and take a step toward a theoretical understanding of length generalization.

\section{Model of Transformers}\label{sec:model-transformers}

\paragraph{Positional Encoding Scheme}
We study two positional encoding schemes. One uses no positional encoding at all; we refer to this as NoPE (No Positional Encoding).
The other one uses Absolute Positional Encodings (APE), with learned per-position embedding vectors $\vp_1, \dots, \vp_{N}$.
We follow \cite{zhou2023algorithms} in requiring transformers to be able to perform a task at different offsets within a longer context.
Whereas \cite{zhou2023algorithms} concatenated different examples of a task, we simply encode an input $x$ of length $|x|=k \leq N$ using positional encodings $\vp_{1+o}, \dots, \vp_{k+o}$  where $o$ is an offset such that $k+o \leq N$, and require that the transformer correctly performs the task independently of the offset $o \geq 0$. This mimics the computations in language models, where the same reasoning task can typically appear at different places in a long context.
For simplicity, we treat positions outside of the input, including those preceding the offset, as empty.

\paragraph{Parameterization}
We focus on transformers with causal masking; for simplicity, we will use the term ``transformer'' for these throughout.
A transformer $T$ is parameterized by a finite alphabet $\Sigma$, a width $d \in \mathbb{N}$, a token embedding matrix $\mE \in \mathbb{R}^{|\Sigma| \times d}$, %
a context width $N(T) \in \mathbb{N} \cup \{+\infty\}$, positional encodings $\{\vp_i  \in \mathbb{R}^d : 1 \leq i < N(T)+1\}$, a depth $L \in \mathbb{N}$, head count $H \in \mathbb{N}$, key, query, and value matrices $\{\mK_{l,h}, \mQ_{l,h}, \mV_{l,h} \in \mathbb{R}^{d\times d} : 1 \leq l \leq L, 1 \leq h \leq H\}$, %
MLP matrices and biases $\{\mA_l \in \mathbb{R}^{\dMLP \times d}, \mB_l \in \mathbb{R}^{d \times \dMLP}; \vb_l \in \mathbb{R}^{\dMLP} : 1 \leq l \leq L\}$, and an unembedding matrix $\mU \in \mathbb{R}^{|\Sigma| \times d}$. %
For matrices, we use $\|\cdot\|\spectral$ and $\|\cdot\|_F$ to denote the spectral and Frobenius norm respectively.

\paragraph{Computation of Activations and Outputs}
We assume a standard causal transformer, with a few technical points: We explicitly scale attention logits with the logarithm of the input length, omit layer norm, allow Heaviside activations in addition to ReLU activations, and assume that, while the transformer may overall compute at infinite precision, attention logits are computed at fixed fractional precision.
We next define all computations formally.
Reserving a special {\SOS} symbol not in $\Sigma$, written as ``\$'', we take the set of input strings to be $\mathfrak{S}$, the set of strings $x \in \Sigma^*$ where $x_1 = \$$ and \$ does not occur in $x_{2\dots |x|}$.
We now define the computation of the transformer $T$ on an input $x \in \mathfrak{S}$ where $1 \leq |x| < N(T) + 1$ (that is -- $|x| \leq N(T)$ if $N(T) < \infty$, and $|x|$ is any finite length otherwise).
If $L$ is the number of layers, then we write the output of layer $l=1, \dots, L$ at position $i =1, \dots, N(T)$ as $\vy_i^{(l)} \in \mathbb{R}^d$.
Let $o \geq 0$ be any offset such that $|x|+o < N(T)+1$ -- that is, $x$ still fits into the transformer's context width if encoded at offset $o$.
Given this offset, we set 
\begin{equation}
    \vy_i^{(0)} = \mE_{x_i} + \vp_{i+o}\ \ \ \ \ i=1, \dots, |x|
\end{equation}
where $x_i \in \Sigma$ is the input symbol at position $i$.
Attention logits, at query position $i$ and key position $j$ are computed as
\begin{equation}\label{eq:logits}
    \attLogit{i}{j}^{(l,h)} = (\vy_j^{(l-1)})\transp \mK_{l,h}\transp \mQ_{l,h} \vy_i^{(l-1)} \ \ \text{for} \ 1 \leq j \leq i \leq |x|;\ l = 1, \dots, L;\ h=1, \dots, H
\end{equation}
We assume standard softmax attention, but incorporate scaling with $\log |x|$ following prior work finding it necessary to theoretically represent sparse functions and circumvent theoretical limitations of soft attention \citep{chiang2022overcoming, edelman2022inductive}:
\begin{equation}
    \vY_i^{(l)} := \vy_i^{(l-1)} + \sum_{h=1}^H \frac{\sum_{j=1}^i \exp\left(\log |x| \cdot a^{(l,h)}_{i,j}\right) \mV_{l, h} \vy_j^{(l-1)}}{\sum_{j=1}^i \exp\left(\log |x| \cdot a^{(l,h)}_{i,j}\right)}
\end{equation}
After each attention block, the activations are passed through a one-layer MLP:
\begin{equation}\label{eq:mlp}
\vy_i^{(l)} := \vY_i^{(l)} + \mB_l \cdot \psi_l(\mA_l \vY_i^{(l)} + \vb_l) 
\end{equation}
where we allow the activation function $\psi_l$ to be, in each coordinate, either ReLU or Heaviside (see Appendix~\ref{app:mlp} for discussion of this).
We omit layer norm, as it plays no important role in our results, but it can be accounted for (See Appendix~\ref{app:layer-norm}).
We assume an infinite-precision setup for the activations, with the restriction that attention logits (\ref{eq:logits}) and the output of the $\exp(\cdot)$ function are both rounded to $p$ fractional bits of precision before further processing. This is a mild restriction preventing tiny changes in attention patterns from potentially snowballing into large changes in the output due to infinite precision
(See Appendix~\ref{app:fixed-precision}). %

A transformer $T$ maps strings $x \in \mathfrak{S}$ ($|x| \leq N(T)$) to vectors of next-token prediction logits, $T(x,o) \in \mathbb{R}^{|x| \times |\Sigma|}$, where  $T(x, o)_i = \mU \vy_i^{(L)}$ ($i=1, \dots, |x|$) for the unembedding matrix $\mU \in \mathbb{R}^{|\Sigma| \times d}$, and $o$ is the offset.
Let $\mathcal{F}(\Sigma)$ be the set of all  maps $f$ mapping $x \in \mathfrak{S}$ to $f(x) \in \mathbb{R}^{|x| \times |\Sigma|}$.

\section{Theoretical Framework}
\subsection{{\LimitTransformer}s}

Our theory addresses the setting of transformers with absolute positional encodings, where the width may grow with the input length.
Importantly, we cannot view the ground-truth function as realized by a single transformer: Even if one assigned such a transformer an infinite number of positional encodings, it would still effectively only be able to distinguish between a bounded number of positions, because the width of positional encodings within a single transformer is bounded.
Instead, we will derive a parameterization of transformers that allows us to convert sequences of transformers operating on longer and longer sequences to a single limiting transformer-like object.
Our key technical idea is to reparameterize the transformer in terms of \textbf{product functions}, inner products of parameter vectors as mediated by parameter matrices, such as
\begin{equation}\label{eq:products-examples}
\begin{aligned}
   \mE_\sigma^T \mK_{1,h}^T \mQ_{1,h} \mE_\tau & &\ \ \ \  & \vp_i^T \mK_{1,h}^T \mQ_{1,h} \vp_j \\
    \vp_i^T \mK_{2,h}^T \mQ_{2,h} \mV_{1} \vp_j & &\ \ \ \  &  \mU_\sigma \mV_{2} \mV_{1} \mE_\sigma 
\end{aligned}
\end{equation}
and various others; see Appendix~\ref{app:product-parameterization} for the full formal definition.
We first note that the transformer's computations are uniquely specified by such products.
The number of products as in (\ref{eq:products-examples}) depends, among others, on $|\Sigma|$, $L$, $H$, $N(T)$, but crucially not on $d$.
We will use this parameterization %
 to translate sequences $T_1, T_2, T_3, \dots$ of transformers running on inputs of length $1, 2, 3, \dots$ to limiting transformer-like objects that are applicable at all input lengths, while keeping width $d$ bounded even if the widths of $T_n$ diverge to infinity.
This limiting object, a \textbf{{\LimitTransformer}}, differs from an ordinary transformer, as defined in Section~\ref{sec:model-transformers}, just in a few respects. Formally: 
\begin{definition}\label{def:limit-transformer}
    A \emph{{\LimitTransformer}} is a transformer $T$ where:
    \begin{enumerate}
    \item $N(T) = +\infty$

        \item %
All parameters (including positional encodings $\vp_i$, and the output of $\phi_{l,h}$) are expressed in $p$-bit precision, for some $p \in \mathbb{N}$

        \item %
        Attention logits on input length $N$ are computed as 
\begin{equation}
    \attLogit{i}{j}^{(l,h)} = (\vy^{(l-1)}_j)\transp  \mK_{l,h}\transp \mQ_{l,h} \vy^{(l-1)}_i + \phi_{l,h}(j,i)
\end{equation}
where $\phi_{l,h} : \mathbb{N} \times \mathbb{N} \rightarrow \mathbb{R}$. 

    \end{enumerate}

\end{definition}
The most important point is the third one, which augments the attention logits (Eq.~\ref{eq:logits}) with a second term, $\phi_{l,h}(j,i)$, where $\phi_{l,h}$ is a potentially arbitrary function.
Thus, a {\LimitTransformer} can use positional information in two ways: through bounded-width and bounded-precision positional encodings $\vp_i$, and additionally through potentially more complicated functions $\phi_{l,h}$.
Our main result will link length generalization to expressibility by {\LimitTransformer}s satisfying specific properties:
\begin{defin}\label{def:periodic_local}
 \rebuttal{   A {\LimitTransformer} satisfies 
    \begin{enumerate}[topsep=0pt, partopsep=0pt, parsep=0pt, itemsep=1pt]
        \item \textsc{Periodic} if $\vp_i = \vp_{i+\Delta}$ for all $i$ for some $\Delta>0$
        \item \textsc{Local} if each $\phi_{l,h}$ is translation-invariant and local.
    \end{enumerate}
}
\rebuttal{
Here, a function $f : \mathbb{N} \times \mathbb{N} \rightarrow \mathbb{R}$ is ``translation-invariant'' if $f(i,j) = f(i+\tau, j+\tau), \forall i \leq j, \forall \tau \geq 0$, and ``local'' if there is $\tau$ such that $f(i,j) = 0$ when $j > i+\tau$.}
\end{defin}

The parameterization in terms of inner products permits a translation from a transformer $T$ to a bounded-width {\LimitTransformer} %
satisfying \textsc{Periodic} and \textsc{Local} (Lemma~\ref{lemma:translation-positional} in the Appendix).
In this translation, products of the form $\vp_i\transp  \mK_{l,h}^T \mQ_{l,h} \vp_j$ are encoded into the functions $\phi_{l,h}(i,j)$, permitting the {\LimitTransformer} to distinguish infinitely many different positions even while keeping its width bounded.
\rebuttal{We note that {\LimitTransformer} are a mathematical construct helping us prove statements about standard transformers, and are not themselves trained or implemented.}

\subsection{Definition of Inference Procedure}
To define the inference procedure, we specify the following hypothesis class at each input length $n$:
\begin{defin}[Hypothesis Class]\label{def:hypothesis-class}
For each $n = 1, 2, 3, \dots$, define the hypothesis class $\Theta_n$ as the set of transformers $T$ (as defined in Section~\ref{sec:model-transformers}) where
(1) $N(T) = n$,
   (2) each parameter vector and matrix of $T$ is represented at $p$ bits of precision, for some $p \in \mathbb{N}$, %
    (3) each product function involving positional encodings is translation-invariant.
    That is, every product function involving exactly one positional encoding is constant  across positions, and for every $1 \leq i, j, i+\Delta, j+\Delta \leq n$,
    \begin{equation}\label{eq:translation-invariance}
        \vp_i^T \mM_1 \dots \mM_k \vp_j = \vp_{i+\Delta}^T \mM_1 \dots \mM_k \vp_{j+\Delta}
    \end{equation}
    whenever $\mM_1 \dots \mM_k$ is a product of parameter matrices linking the input layer.\footnote{Such as $\mK_{1,h}^T \mQ_{1,h}$, $\mV_{2,h}^T \mK_{3,h'}^T \mQ_{3,h'} \mV_{1,h''}$, and similar. See Appendix~\ref{app:hypothesis-class} for a formal definition.}
\end{defin}
Note that the width $d$ of the transformers $T \in \Theta_n$ is unconstrained. %
The most interesting requirement here is the third one: We ask that, while the positional encodings $\vp_i$ will typically vary with position,  their contributions to the transformer's computations are offset-independent. This is a stronger requirement than for the input-output behavior to be offset-independent: we ask for the transformer's ``algorithm'' itself to be the same across offsets.
This is a substantive condition, but we believe it to be a natural requirement in the context of length generalization (Appendix~\ref{app:translation-invariance}).
Our inference procedure will use a regularizer $\mathcal{R}$ favoring simpler hypotheses. The following will be sufficient:
\begin{defin}[Regularizer]\label{def:regularizer}
Let $T \in \Theta_n$, thus $N(T) = n$.
Define $\RegStrong(T)$ as the sum of
(1) $L+H$; %
(2) the precision $p$ used in Definition~\ref{def:hypothesis-class}; the precision $p$ used for rounding attention logits and the output of $\exp(\cdot)$ (Section~\ref{sec:model-transformers}); 
(3) $\max_{l,h} rank(\mV_{l,h})$;  %
(4) $\max_{l,h} \|\mK_{l,h}^T\mQ_{l,h}\|\spectral$; $\max_{l,h} \|\mV_{l,h}\|\spectral$; $\max_l \|\mA_{l}\|_{F}$, $\|\mB_l\|_{F}$; $\|\mU\|\spectral$;  %
(5) $\max_i \|\vp_i\|_2$, $\max_\sigma \|\mE_\sigma\|_2$, $\max_l \|\vb_l\|_2$; %
(6) the term
    \begin{equation}\label{eq:additional-penalty}
    \sum_{l=1}^L \sum_{h=1}^H \sum_{j=1}^{N(T)} \left|\vp_1^T \mK_{l,h}^T \mQ_{l,h} \vp_j\right|^2
\end{equation}

\end{defin}
The idea of (\ref{eq:additional-penalty}) is to discourage accidental attention between far-away positions that do not appear together during training, which could hamper length generalization.
Due to translation invariance, this term entails a bound on products for all pairs $\vp_i, \vp_j$ ($i \leq j$) entering causal attention.
While such a regularizer is not part of standard training, standard initialization tends to lead to bounded values for (\ref{eq:additional-penalty}) when $d$ is large (Appendix~\ref{app:reg-init}); it thus captures an implicit bias of standard initialization and training.
Importantly, the width $d$ does not explicitly enter ${\RegStrong}$; as a consequence, for any sufficiently large $C$, the number of transformers $T_n \in \Theta_n$ with ${\RegStrong}(T_n) \leq C$ is infinite, simply because $d$ is not constrained.
Nonetheless, this regularizer will be sufficient for identification under our idealized inference procedure, which observes the input-output behavior of the target function $f$ on inputs of length $\leq \frac{n}{2}$ and selects a transformer $T$ with maximal context window $n$, $T \in \Theta_n$ that exactly fits that input-output behavior while minimizing the regularizer $\RegStrong(T)$:
\begin{defin}[Inference Procedure]\label{def:inference-procedure}
Given a function $f \in \mathcal{F}(\Sigma)$, the \emph{Inference Procedure} obtains a sequence of transformers $T_1 \in \Theta_1, T_2 \in \Theta_2, \dots$ as follows. Define $U_n$ as the set of $T \in \Theta_n$ matching the behavior of $f$ on all inputs of length $\leq \frac{n}{2}$.
Then choose $T_n \in U_n$ such that
\begin{equation}\label{eq:minimization-regularizer}
    \RegStrong(T_n) \leq \frac{1}{n} + \inf_{T \in U_n} \RegStrong(T) %
\end{equation}
\end{defin}
In (\ref{eq:minimization-regularizer}), we do not simply ask for minimizing the regularizer, as the set of elements of $U_n$ with $\RegStrong(T)$ smaller than a given value need not be finite and thus a minimum need not be attained by any $T_n$.
Importantly, we only ask $T_n$ to match the behavior of $f$ up to length $\frac{n}{2}$, formalizing the idea of training on shorter inputs and testing on longer ones; our identifiability guarantee will provide conditions under which $T_n$ will end up matching $f$ correctly up to length $n$ -- representing length generalization.
While we take the testing length to be twice the training length, there is nothing special about this; our analysis works whenever the training length diverges to infinity.

\subsection{Main Result: Convergence of Inference Procedure}
Our main result asymptotically characterizes length generalization under the inference procedure from Definition~\ref{def:inference-procedure}.
For functions representable by {\LimitTransformer}s satisfying \textsc{Local} and \textsc{Periodic}, we guarantee that \emph{any} run of the Inference Procedure will ultimately achieve length generalization, so that transformers with context length $n$ chosen to fit the target function on inputs with length $\leq \frac{n}{2}$ will, when $n$ is sufficiently large, also perform correctly at all lengths $\leq n$.
Formally,
\begin{thm}[Guaranteed Length Generalization in the Limit]\label{thm:guarantee}
Let $f \in \mathcal{F}(\Sigma)$.
Then the following are equivalent:
\begin{enumerate}
\item $f$ is expressible by a {\LimitTransformer} satisfying \textsc{Periodic} and \textsc{Local}.
\item (Guaranteed Length Generalization) %
Applying the Inference Procedure from Definition~\ref{def:inference-procedure}  to $f$ generates a sequence $T_1, T_2, \dots$ with $\sup_{n =1,2,3,\dots} {\RegStrong}(T_n) < \infty$, for which there is some $N_0$ such that, for all $m > N_0$, $T_m$ matches $f$ on all inputs of any length $k \leq m$.
\end{enumerate}
\end{thm}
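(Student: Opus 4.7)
The plan is to prove the two directions separately, with (1) $\Rightarrow$ (2) containing the substantive content.

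\textbf{Direction (2) $\Rightarrow$ (1).} Since $\sup_n \RegStrong(T_n) < \infty$, all structural parameters of $T_n$ are uniformly bounded: depth, head count, precision, ranks, spectral/Frobenius norms of matrices, and norms of positional and token embeddings. I would apply Lemma~\ref{lemma:translation-positional} to each $T_n$ to obtain bounded-width {\LimitTransformer}s $\widehat{T}_n$ satisfying \textsc{Periodic} and \textsc{Local} whose input-output behavior matches $T_n$ on inputs of length $\leq n$. Because the number of products (in the product parameterization) depends only on $|\Sigma|$, $L$, $H$, the precision, and the common period $\Delta$ -- all uniformly bounded -- a standard diagonal/compactness argument extracts a subsequence on which every product stabilizes, yielding a single {\LimitTransformer} $T^\infty$ which inherits \textsc{Periodic} and \textsc{Local}. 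Since $T_n$ eventually matches $f$ on all inputs of length $\leq n$, and computations at any fixed length are continuous in the products, $T^\infty$ represents $f$ at every length.

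\textbf{Direction (1) $\Rightarrow$ (2), uniform regularizer bound.} Let $T^*$ be a {\LimitTransformer} satisfying \textsc{Periodic} (period $\Delta$) and \textsc{Local} (window $\tau$) representing $f$. The first step is to \emph{finitize} $T^*$: for each $n$, construct $T^*_n \in \Theta_n$ that exactly reproduces $T^*$ on all inputs of length $\leq n$, with $\RegStrong(T^*_n) \leq C$ uniformly in $n$. The construction augments the width by $O(\Delta + \tau)$ coordinates reserved for positional features: a finite set of orthonormal vectors indexed by $i \bmod \Delta$ encodes the periodic part, and the translation-invariant local $\phi_{l,h}$ terms are emulated by additional $\mK^\top\mQ$ components realizing $\phi_{l,h}(i,j)$ on the bounded set of offsets $j - i \in \{-\tau, \dots, 0\}$ where it is nonzero. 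The translation-invariance constraint (\ref{eq:translation-invariance}) is preserved because all resulting products involving only positional vectors depend solely on $j - i \bmod \Delta$. The penalty (\ref{eq:additional-penalty}) is controlled because the auxiliary attention contributions are local and bounded. Definition~\ref{def:inference-procedure} then yields $\sup_n \RegStrong(T_n) \leq C + 1$.

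\textbf{Length generalization.} Given the uniform bound, the compactness argument from the first direction extracts a subsequence $T_{n_k}$ whose associated bounded-width {\LimitTransformer}s converge to some $T^\infty$ satisfying \textsc{Periodic} and \textsc{Local}, representing $f$ on all inputs. Because all products take values in a finite set (bounded precision in both $\Theta_n$ and the {\LimitTransformer}), convergence is eventually exact equality of products. Suppose for contradiction length generalization fails along a subsequence: there exist inputs $x_{n_k}$ with $|x_{n_k}| \leq n_k$ on which $T_{n_k}$ disagrees with $f$. Translation invariance reduces to offset zero; locality means the computation at each position of $T^\infty$ depends only on a bounded window, so the products of $T_{n_k}$ eventually agree with those of $T^\infty$ at every position of $x_{n_k}$, giving $T_{n_k}(x_{n_k}) = T^\infty(x_{n_k}) = f(x_{n_k})$ -- contradiction.

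\textbf{Main obstacle.} The hardest step is the finitization: converting the {\LimitTransformer}'s potentially unbounded $\phi_{l,h}$ into bounded-norm positional encodings that implement the correct inner-product pattern, while simultaneously preserving the strong translation-invariance constraint (\ref{eq:translation-invariance}) and keeping the penalty (\ref{eq:additional-penalty}) bounded. The \textsc{Periodic} property is what makes positional encodings take finitely many values (so bounded norm suffices), and \textsc{Local} combined with translation invariance of $\phi_{l,h}$ limits the number of position-offset pairs that must be realized -- both conditions are exactly what makes the finitization feasible and, conversely, exactly what the converse direction extracts from the bounded-regularizer sequence.
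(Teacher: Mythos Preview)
Your overall structure is right---use the translations (Lemmas~\ref{lemma:limit-to-transformer} and \ref{lemma:translation-positional}) to pass between transformers and {\LimitTransformer}s, establish $\sup_n \RegStrong(T_n) < \infty$ via the finitization, then use finiteness of the parameter space. But there is a genuine gap in the length-generalization step of (1)$\Rightarrow$(2). Your compactness argument gives \emph{pointwise} stabilization of products along a subsequence: for each fixed pair $(i,j)$, the value $\phi_{l,h}^{(\tilde T_{n_k})}(i,j)$ eventually equals $\phi_{l,h}^{(T^\infty)}(i,j)$. This is enough for (2)$\Rightarrow$(1), where you fix an input length and only finitely many products matter. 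It is \emph{not} enough for length generalization, where the input length grows with $n$. Concretely: the regularizer bound gives $\sum_j |\phi_{l,h}^{(\tilde T_n)}(1,j)|^2 \leq C$, which at precision $p$ limits the \emph{number} of nonzero entries to $C \cdot 2^{2p}$, but not their \emph{location}. Nothing in your argument rules out $\phi^{(\tilde T_n)}(1,\lfloor n/3\rfloor) = 2^{-p}$ with all other values zero; this sequence converges pointwise to the zero function, yet no $\tilde T_n$ equals the limit on inputs of length $\leq n$. Your sentence ``locality means the computation at each position of $T^\infty$ depends only on a bounded window, so the products of $T_{n_k}$ eventually agree with those of $T^\infty$ at every position of $x_{n_k}$'' conflates locality of the limit with locality of the approximants: $T_{n_k}$'s output at a length-$n_k$ input depends on \emph{its own} $\phi$ at all distances up to $n_k$, not on $T^\infty$'s.

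The missing ingredient is the near-minimization property of the Inference Procedure, which you use only for the uniform bound but not again. The paper's proof (its Lemma~\ref{lemma:key-lemma}) exploits minimization a second time to force the $\phi^{(\tilde T_n)}$ to be local for a \emph{uniform} window $\tau_\infty$: one constructs competitors $T_n'$ by restricting already-chosen $T_{\nu_{i(n)}}$ to the first $n$ positions, which can only shrink the penalty term (\ref{eq:additional-penalty}); comparing $\limsup \RegStrong(T_n')$ to $\lim \RegStrong(T_n)$ and tracking the penalty contribution $D_n(\tau)$ shows that if infinitely many $T_n$ had $\phi$-support beyond $\tau_\infty$, the penalty would exceed the minimum by at least $2^{-2p}$, contradicting (\ref{eq:minimization-regularizer}). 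Once uniform locality is established, only finitely many {\LimitTransformer}s are traversed and the finiteness argument goes through for the full sequence, not just a subsequence.
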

The formal proof is in Appendix~\ref{app:proof-main}.
Intuitively, if $f$ is expressible by a {\LimitTransformer} satisfying \textsc{Periodic} and \textsc{Local}, then, even though the Inference Procedure produces infinitely many distinct \emph{transformers} $T_1, T_2, \dots$, these can only traverse a finite set of underlying \emph{algorithms}, each described by some {\LimitTransformer}.
\textsc{Periodic} and \textsc{Local} ensure that the {\LimitTransformer}'s parameter count effectively remains finite, as its position-related parameters can be fully specified in terms of $\vp_1, \dots, \vp_{\Delta}$ and $\phi(1,1), \dots, \phi(1,\tau)$.
The regularizer bounds width, depth, and precision of the {\LimitTransformer}s; this keeps the set of algorithms traversed finite.
Each of these finitely many algorithms will either be ruled out at some input length $n$, or else match the behavior of $f$ at all input lengths.
At some finite $N_0$, only the latter type of algorithm remains; hence, transformers produced after this point will match the target function.
The proof also entails a result on NoPE length generalization: Applying the inference procedure to $f$ while constraining $\vp_i \equiv 0$ will lead to length generalization when $f$ is expressible by a Limit Transformer where all $\vp_i$ and all $\phi_{l,h}$ are zero (Corollary~\ref{cor:nope-length-gen}).
While Theorem~\ref{thm:guarantee} guarantees length generalization from length $\frac{n}{2}$ to length $n$ for expressible problems, it does not rule out length generalization for inexpressible problems.
Such a statement becomes possible if we allow arbitrary scaling of training vs. testing lengths (Appendix~\ref{app:arbitrary-training-length}).
Besides length generalization guarantees, {\LimitTransformer}s are also useful in providing \emph{expressiveness} results for transformers with absolute positional encodings (Appendix~\ref{sec:limit_expressivity}).

\section{Which Functions are Identifiable? Expressiveness of {\LimitTransformer}s and $\CRASP$}

We have found that, if a target function $f$ is expressible by a {\LimitTransformer} satisfying \textsc{Periodic} and \textsc{Local}, then Theorem~\ref{thm:guarantee} implies length generalization under our Inference Procedure.
In order to understand the ramifications of this result, we now study what functions {\LimitTransformer}s can express -- for these functions, Theorem~\ref{thm:guarantee} will then guarantee length generalization.

\subsection{Simple Example: Induction Head}\label{sec:induction_circuit}
We consider the task of predicting the next token in proportion to the frequency at which different tokens had previously followed tokens matching the current one:
\begin{equation}\label{eq:induct}
    f(x_1\dots x_N)_{i,\sigma} = \frac{\# \{k < i : x_k = x_i, x_{k+1} = \sigma\}}{\# \{k < i : x_k = x_i\}}
\end{equation}
A {\LimitTransformer} solves this with the \emph{Induction Head} construction \citep{olsson2022context}, in two layers and one head: In the first layer, $\phi(i,j) = 1$ if $i+1=j$ and $0$ else; the head copies the preceding symbol.
In the second layer, attention focuses on positions with the same symbol. The transformer outputs next-token predictions in proportion to bigram frequencies in the context, up to approximation error $O(\frac{1}{n})$ (due to logit scaling).
Hence, Theorem~\ref{thm:guarantee} guarantees that the Inference Procedure will length-generalize on (\ref{eq:induct}). %
A special case of (\ref{eq:induct}) occurs when each symbol occurs exactly once; here, such an induction head circuit suffices to copy a string \citep{zhou2023algorithms}, we thus obtain a length generalization guarantee for copying such strings (see Section~\ref{sec:experiments}).

\subsection{Length Generalization for $\CRASP$}\label{sec:crasp}
We next present a large class of functions for which Theorem~\ref{thm:guarantee} guarantees length generalization. 
We extend the $\CRASP$ formalism \citep{yang2024counting} with positional information, and then show that any function defined by a {\CRASP} program is expressible by a {\LimitTransformer}; hence, transformers will, by Theorem~\ref{thm:guarantee}, length-generalize on those functions.
We first define $\CRASP$:

\begin{definition}[$\CRASP$]\label{def:CRASP}
    Let $\Sigma$ be an alphabet, let $\Phi$ be a set of \emph{unary relations} $\phi : \mathbb{N} \rightarrow \{0,1\}$, and let $\Psi$ be a set of \emph{binary relations} $\psi : \mathbb{N} \times \mathbb{N} \rightarrow \{0,1\}$.
     A $\CRASP[\Phi,\Psi]$ program $P$ is defined as a sequence $P_1,\ldots, P_k$ of $\CRASP$ operations. There are two sorts of operations:
        \begin{minipage}{\textwidth}
        \quad
        \setlength{\columnsep}{0pt}
        \begin{multicols}{2}
            \begin{center}
            \begin{flushleft}
            \begin{center}
            \textbf{Boolean-Valued Operations}
            \end{center}
            \begin{tabular}{ll}
            \toprule
            \textbf{Initial} & $P(i):=Q_\sigma(i)$ \\
            &\quad\quad\quad for $\sigma\in\Sigma$\\
            \midrule
            \textbf{Boolean} & $P(i):=\lnot P_1(i)$ \\
            & $P(i):=P_1(i)\land  P_2(i)$\\
            \midrule
            \textbf{Constant} & $P(i):=\top$\\
            \midrule
            \textbf{Positional} & $P(i):=\phi(i)$\\
            &\quad\quad\quad for $\phi\in\Phi$\\
            \midrule
            \textbf{Comparison} & $P(i):=C_1(i)\leq C_2(i)$\\
            \bottomrule
            \end{tabular}
            \end{flushleft}
            \end{center}
        
        \columnbreak
        
            \begin{center}
            \begin{flushleft}
            \begin{center}
            \textbf{Count-Valued Operations}
            \end{center}
            \begin{tabular}{ll}
            \toprule
            \textbf{Counting} & $C(i):=\cttn{j\leq i, \psi(i,j)}{P(j)}$ \\
            &\quad\quad\quad for $\psi\in\Psi\cup\{\top\}$\\

            \midrule
            \textbf{Conditional} & $C(i):=\cif{P(i)}{C_1(i)}{C_2(i)}$\\
            \midrule
            \textbf{Addition} & $C(i):=C_1(i)+C_2(i)$\\
            \midrule
            \textbf{Subtraction} & $C(i):=C_1(i)-C_2(i)$\\
            \midrule
            \textbf{Constant} & $C(i):=1$\\
            \bottomrule
            \end{tabular}
            \end{flushleft}
            \end{center}
        \end{multicols}
        \end{minipage}
    \end{definition}

A Counting operation returns the number of positions $j\leq i$ where $P(j)$ and $\psi(i,j)$ hold. A conditional operation returns $C_1(i)$ if $P(i)$, and $C_2(i)$ otherwise. We use the value of the \emph{last} Boolean-valued operation, at the last position of the string, to determine acceptance using a $\CRASP$ program. That is, if the program is run on input $w$ with final operation $L$, then we accept $w$ if and only if $L(|w|)$ is true.
$\CRASPpl$ is the class of $\CRASP$ programs where each $\phi(i)$ is periodic in $i$, and  each $\psi(i,j)$ is translation-invariant and local (Definition~\ref{def:periodic_local}). We also write $\CRASPpl$ for the class of all languages accepted by some $\CRASPpl$ program. As an example, we present a program recognizing $L=\Sigma^*ab\Sigma^*$ over $\Sigma=\{a,b\}$:

\begin{tcolorbox}[title={$\CRASP$ program for $L=\Sigma^*ab\Sigma^*$ over $\Sigma=\{a,b\}$} ]
\begin{calign}
    C_{a-}(i) & := \cttn{j\leq i, j=i-1}{Q_a(j)} && \text{\# of immediately preceding $a$}\\
    P_{a-}(i) &:= C_{a-}(i) \geq 1 && \text{Position $i-1$ holds an $a$}\\
    Q_{ab}(i) &:= Q_b(i) \land P_{a-}(i) && \text{A substring $ab$ ends at position $i$}\\
    C_{ab}(i) & := \cttn{j\leq i}{Q_{ab}(j)} && \text{\# of substrings $ab$}\\
     L(i) & := C_{ab}(i) \geq 1 && \text{At least one $ab$ precedes position $i$}
\end{calign}
\end{tcolorbox}

Any $\CRASP[\text{periodic, local}]$ program can be translated to a {\LimitTransformer} with corresponding positional functions. 
We say a {\LimitTransformer} $T$ \emph{accepts} an input if the value in the last dimension in the last position of the output is greater than $0$, and rejects otherwise.
\begin{thm}\label{thm:crasp-to-limit-main}
    For every $\CRASP[\Phi,\Psi]$ program $P$ with local functions $\Psi$ and  periodic functions $\Phi$ there exists a {\LimitTransformer} $T_\infty$ that satisfies \textsc{Periodic} and \textsc{Local} such that for all $w\in\Sigma^*$, $P$ accepts $w$ iff $T_\infty$ accepts $\$w$.
    If $P$ uses no local or periodic relations, then $T$ requires no functions $\phi_{l,h}$ or positional encodings $\vp_i$. 
\end{thm}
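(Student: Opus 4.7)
The plan is to prove the theorem by induction on the length of the $\CRASP$ program $P = P_1,\ldots,P_k$, constructing $T_\infty$ layer by layer. I would maintain the invariant that, after the sublayers simulating $P_1,\ldots,P_m$, the residual stream at each position $i$ contains one dimension per operation, storing Boolean operations as $\{0,1\}$ and count operations $C$ in the normalized form $C(i)/i$ (so stored values stay bounded as $|x|$ grows). The base case is handled by the token embedding $\mE$, which provides one dimension per $\sigma\in\Sigma\cup\{\$\}$ encoding $Q_\sigma(i)$.

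For the inductive step, each operation is realized by a bounded number of attention heads and MLP sublayers. Boolean combinations, constants, comparisons on the stored ratios, and positional predicates are computed by MLPs with Heaviside activation; a periodic $\phi\in\Phi$ is decoded from $\vp_i$, where I take all $\vp_i$ periodic with common period $\Delta$ equal to the least common multiple of the periods of $\phi\in\Phi$, which supplies \textsc{Periodic}. Counting with $\psi=\top$ is one head with zero key/query contribution and value projecting the stored $P(j)$; softmax averaging then yields exactly $\tfrac{1}{i}\sum_{j\leq i} P(j)$, the intended normalized form. Counting with a local, translation-invariant $\psi$ uses one head with $\phi_{l,h}(j,i) := M\cdot\mathbf{1}[\psi(i,j)]$ for large $M$; combined with the $\log|x|$ logit scaling this gives near-hard attention over the window $W_\psi(i)=\{j\le i:\psi(i,j)\}$. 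Locality and translation-invariance of $\phi_{l,h}$ are inherited from $\psi$, giving \textsc{Local}. A parallel head with the constant indicator $1$ as value computes $|W_\psi(i)|/i$, and a subsequent MLP multiplies these two ratios together to recover $C(i)/i$. The constant count $C(i)\equiv 1$ is produced by a uniform-attention head whose value is the indicator of \$, yielding $1/i$ at every position. Addition, subtraction, and conditional selection are linear or piecewise-linear in the normalized ratios and realized by MLPs. Finally, the unembedding matrix $\mU$ outputs in the last coordinate the stored value of the final Boolean operation $L$ at the last position shifted by $-1/2$, so its sign matches acceptance by $P$.

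For the second claim, when $\Phi$ and $\Psi$ are empty only the $\psi=\top$ counting branch and the Boolean machinery are invoked, so no periodic positional encodings $\vp_i$ and no nontrivial $\phi_{l,h}$ are required. The main technical obstacle I anticipate is the bookkeeping around the ratio representation: verifying that each $\CRASP$ operation's semantics commutes with the $1/i$ normalization, and in particular that the local-counting step correctly reconstructs $C(i)/i$ from the window average in the presence of edge effects near the start of the string (where $|W_\psi(i)|$ is smaller than its asymptotic size), which is handled by a Heaviside-activated MLP branch that detects the bounded prefix of exceptional positions and tabulates their counts explicitly. A secondary technical point is showing that the $O(1/|x|)$ error introduced by the $\log|x|$ softmax approximation cannot flip the sign of the final Boolean output once $|x|$ is sufficiently large, and that the finite parameter precision $p$ suffices because every constant occurring in the construction is a rational with denominators bounded in terms of the program structure.
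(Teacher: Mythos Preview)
Your overall plan—induction on program length, storing Booleans as $\{0,1\}$ and counts in a normalized ratio form, simulating uniform counting by a zero-logit head, routing periodic $\phi\in\Phi$ through $\vp_i$, and routing local $\psi\in\Psi$ through $\phi_{l,h}$—is essentially the paper's proof. Two small alignments: the paper uses $C(i)/(i+1)$ rather than $C(i)/i$ to absorb the \$ position shift, and it spends an initial step zeroing the residual stream at the \$ position so that it never contaminates counts; you should do the same.

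The one place where you diverge and develop a real gap is the local-counting case. You propose a near-hard-attention head over the window $W_\psi(i)$ with value $P(j)$, a parallel head with constant value $1$ that you say yields $|W_\psi(i)|/i$, and then an MLP that ``multiplies these two ratios''. Two problems. First, a head whose $\phi_{l,h}$ concentrates on $W_\psi(i)$ with constant value $1$ returns approximately $1$, not $|W_\psi(i)|/i$; to get $|W_\psi(i)|/i$ via uniform attention you would need $\mathbf{1}[\psi(i,j)]$ as a \emph{value} at position $j$, but that indicator depends on the query index $i$ and is not a per-position quantity. Second, a bounded-width ReLU/Heaviside MLP cannot compute a product of two variable inputs, so ``multiply these two ratios'' is not an available primitive. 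Both issues are repairable—$|W_\psi(i)|$ is constant once $i$ exceeds the locality radius, and $C(i)$ is itself bounded by that radius, so one can threshold the window average to the finitely many possible integers and then conditionally select the matching multiple of $1/i$—but your writeup does not name this route, and as written the step does not go through.

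The paper avoids all of this by reducing every translation-invariant local $\psi$ to a finite sum of single-offset relations $\psi_\ell(i,j)=[j=i-\ell]$. For a single offset the count is $0$ or $1$; one head with $\phi_{l,h}(j,i)=[j=i-\ell]$ and zero content logits produces an output that is provably $\geq 1/2$ iff $P(i-\ell)$, for \emph{every} input length (because $N^{1/\ln 2}\geq i$), so a Heaviside threshold recovers the exact bit and a conditional emits $0/(i+1)$ or $1/(i+1)$. This also dissolves your ``secondary technical point'': there is no asymptotic $O(1/|x|)$ slack left to control, which matters because the theorem demands exact equivalence for all $w\in\Sigma^*$, not merely for sufficiently long $w$.
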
\label{thm:crasp_to_limit}

The proof is in Appendix~\ref{sec:crasp_to_limit}.
As a consequence, the Inference Procedure will ultimately length-generalize on inputs from a function $f$ expressible by a $\CRASPpl$ program.
If the $\CRASP$ program requires no positional functions (i.e., it is in $\CRASPempty$), then length generalization will succeed even with NoPE transformers.
We  establish that various functions are in \CRASP:
\begin{thm}
Membership in the following languages is definable in $\CRASPempty$: (1) MAJORITY, (2) DYCK-1, (3) $a^nb^nc^n$.

\end{thm}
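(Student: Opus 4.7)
The plan is to exhibit, for each of the three languages, an explicit $\CRASPempty$ program whose final Boolean predicate, evaluated at the last position $i=|w|$, decides membership. Since $\Phi=\Psi=\emptyset$, the only counting operation available is the unconditional prefix count $C_\sigma(i):=\cttn{j\leq i}{Q_\sigma(j)}$ for each $\sigma\in\Sigma$, and every program will be assembled from these prefix counts together with Boolean operations, count-arithmetic, comparisons, and conditionals, all of which are present in the $\CRASP$ signature regardless of $\Phi$ and $\Psi$.

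MAJORITY over $\Sigma=\{a,b\}$ is immediate: accept iff $C_b(i)+1\leq C_a(i)$. For DYCK-1, let $C_o,C_c$ be the prefix counts of $($ and $)$. I declare $\mathrm{Bad}(i):=(C_o(i)+1\leq C_c(i))$, flagging prefixes in which $)$ has overtaken $($, and accept iff $\cttn{j\leq i}{\mathrm{Bad}(j)}\leq 0$ (no bad prefix ever occurred) together with $C_o(i)\leq C_c(i)$ and $C_c(i)\leq C_o(i)$ (equal totals).

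For $a^nb^nc^n$ I combine an $a^*b^*c^*$ ordering check with equality of the three prefix counts at $i=|w|$. Ordering is encoded by flagging bad positions: those carrying an $a$ with some earlier $b$ or $c$, or a $b$ with some earlier $c$. Since a position $i$ with $Q_a(i)$ is not itself a $b$ or $c$, the prefix counts $C_b(i)$ and $C_c(i)$ automatically describe occurrences strictly before $i$, so $Q_a(i)\land(1\leq C_b(i)+C_c(i))$ and $Q_b(i)\land(1\leq C_c(i))$ serve as the bad-position predicates, with $\lor$ emulated via $\neg$ and $\land$. The final predicate asks that the count of bad positions be zero and that $C_a(i)=C_b(i)$ and $C_b(i)=C_c(i)$, each equality expressed as two $\leq$-comparisons.

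The main technical point is that the constraints that look inherently positional --- ``every prefix is balanced'' for DYCK-1, ``no $b$ or $c$ precedes any $a$'' for $a^nb^nc^n$ --- are universal quantifications over positions. My approach is to turn each into an existential statement ``there is no bad position'' and certify it by counting the offending positions unconditionally and requiring the total to be zero. Because the prefix count $C_\sigma(i)$ at a position $i$ not itself labelled $\sigma$ already carries the needed strict-prefix information, no binary positional relation $\psi\neq\top$ is required, and the construction stays inside $\CRASPempty$. Aside from this observation, every step reduces to a routine composition within the $\CRASPempty$ signature, so I expect no further obstacle.
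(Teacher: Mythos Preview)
Your proposal is correct and follows essentially the same approach as the paper: compute per-symbol prefix counts, define a Boolean ``violation'' predicate at each position, count violations with an unconditional $\cttn{j\leq i}{\cdot}$ and require the total to be zero, then check the remaining count equalities at the final position. The only cosmetic difference is that for $a^nb^nc^n$ the paper counts \emph{good} positions (e.g.\ $a$'s with no prior $b$ or $c$) and verifies that every $a$ is good, whereas you count \emph{bad} positions and verify there are none; the two are trivially equivalent.
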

The proof is in  Appendix~\ref{sec:CRASP_constructions}.
By Theorem~\ref{thm:crasp-to-limit-main}, these positive results translate into length generalization guarantees.
For these tasks, length generalization even with NoPE is empirically already well-documented \citep{bhattamishra2020ability}.
Further, ${\CRASP[\text{local}]}$ can implement %
versions of the Induction Head task from Section~\ref{sec:induction_circuit} (Appendix~\ref{sec:example_induction_head}--\ref{sec:example_induction_head-2}).
{\CRASP} also helps understand why transformers show varying abilities even on simple finite-state languages \citep{bhattamishra2020ability, liu2022transformers, liu2024exposing}, a fact poorly understood theoretically.
For instance, we find: %
\begin{lemma}\label{lemma:regular-languages-crasp}
Consider the alphabet $\Sigma = \{a,b,e\}$.
\begin{enumerate}[topsep=0pt, partopsep=0pt, parsep=0pt, itemsep=1pt]
\item    $PARITY := b^*(ab^*ab^*)^* \not\in {\CRASPpl}$

\item $(aa)^* \in \CRASPpl$ and $(aa)^* \not\in \CRASPempty$ %

\item $\Sigma^*be^* \not\in \CRASPpl$ %

\item $L\in \CRASPempty$ for piecewise testable $L$ %

\end{enumerate}
\end{lemma}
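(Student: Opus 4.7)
The plan treats the four claims separately, grouping them by technique.

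\textbf{Upper bounds (2a and 4).} For $(aa)^*\in\CRASPpl$, I would exhibit a short program with period-$2$ relation $\phi(i):=[i\equiv 0 \bmod 2]$: accept iff $\cttn{j\le i}{\lnot Q_a(j)}$ equals the constant contributed only by the $\$$ symbol (so every actual character is $a$) and the appropriate periodic predicate holds at the last position (so the content length is even). For piecewise testable $L$, I use closure of $\CRASPempty$ under Boolean combinations to reduce to the subsequence-pattern languages $L_{a_1\cdots a_k}:=\Sigma^*a_1\Sigma^*\cdots a_k\Sigma^*$, and build each by induction on $k$. Introduce a Boolean $B_\ell(i)=$``$a_1\cdots a_\ell$ is a subsequence of $x_{1\cdots i}$'' together with $\mathrm{adv}_\ell(i):=Q_{a_\ell}(i)\wedge B_{\ell-1}(i-1)$, and set $B_\ell(i)\iff C^{\mathrm{adv}}_\ell(i):=\cttn{k\le i}{\mathrm{adv}_\ell(k)}\ge 1$. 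The one subtle step is expressing the lookback $B_{\ell-1}(i-1)$ at position $i$: by monotonicity of $B_{\ell-1}$ it equals $(\cif{\mathrm{adv}_{\ell-1}(i)}{C^{\mathrm{adv}}_{\ell-1}(i)-1}{C^{\mathrm{adv}}_{\ell-1}(i)})\ge 1$, a legitimate $\CRASPempty$ expression.

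\textbf{Easy lower bound (2b).} For $(aa)^*\not\in\CRASPempty$, I restrict any hypothetical program to inputs $a^n$ and induct on syntax: every count-valued subexpression is an integer linear function of $n$ with non-negative leading coefficient (since the only input-dependent primitive, $Q_a$, is identically $1$ on the interior of $a^n$), and every Boolean subexpression is a Boolean combination of threshold inequalities between such linear functions. Each such inequality stabilizes for large $n$, so the output is eventually constant, contradicting the alternation of $(aa)^*$.

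\textbf{Hard lower bounds (1 and 3).} For both, the plan is to pass to a semantic signature of any $\CRASPpl$ program $P$ at the final position $N$---the residue $N\bmod\Delta$, the local window of radius $\tau$, and the vector of accessible integer counts---and construct an adversarial pair of strings $P$ cannot tell apart but whose $L$-membership differs. For (1), on the canonical family $a^m b^n e^p$ with fixed residues $(m,n,p)\bmod\Delta$, the counts are linear in $(m,n,p)$, so $P$'s output is a polyhedral Boolean combination; since $\{m\equiv 0\bmod 2\}$ is not polyhedral within fixed residues unless $2\mid\Delta$, this already rules out $\Delta$ with $2\nmid\Delta$. The residual case $2\mid\Delta$ is handled by a separate construction: two strings with matching letter counts and matching positional distributions modulo $\Delta$ but opposite $\#a$-parities, so that PARITY disagrees while the signatures agree. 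For (3), I build $w_1=u_1 b e^n$ and $w_2=u_2 a e^n$ with $n\gg\tau$ so that $w_1\in L$, $w_2\notin L$, while $u_1,u_2$ are engineered so the local window at $N$ sees only $e$'s and all cumulative counts coincide.

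\textbf{Main obstacle.} For both (1) and (3), the hard part is the ability of $\CRASPpl$ to build arbitrarily nested counting predicates: a count like $\cttn{j\le N}{C_a(j)\ge k}$ effectively accesses the position of the $k$-th $a$. To defeat every such predicate the adversarial pair must match positions of all symbol ranks simultaneously; the plan is to insert $\Delta$-aligned $e$-padding between distinguishing symbols so that corresponding occurrences align modulo $\Delta$, making every recursive count agree. Verifying that this padding trick truly forces agreement across all syntactically bounded $\CRASPpl$ expressions is where the bulk of the technical work lies.
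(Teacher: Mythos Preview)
Your handling of (2a), (2b), and (4) is correct and essentially matches the paper; in particular, the eventually-linear-then-constant induction on unary inputs is exactly the paper's proof of $(aa)^*\notin\CRASPempty$.

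For (1) and (3) there is a genuine gap. First, a concrete flaw: in your case $2\mid\Delta$ for (1) you ask for two strings with ``matching letter counts \ldots\ but opposite $\#a$-parities,'' which is self-contradictory---equal $\#a$ forces equal parity. More fundamentally, the obstacle you correctly identify---that nested counts such as $\cttn{j\le N}{C_a(j)\ge k}$ expose \emph{positions} of occurrences, not just totals---is not neutralized by $\Delta$-aligned padding. Matching all residues, local windows, and level-$d$ counts does not force level-$(d{+}1)$ counts to agree, and repairing one level typically breaks a previously matched one. What you would need is an Ehrenfeucht--Fra\"iss\'e-style invariant for $\CRASPpl$ closed under arbitrary nesting depth, and you have not supplied one; the sketch ``verify the padding trick forces agreement across all syntactically bounded expressions'' is precisely the whole difficulty.

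The paper avoids this entirely by \emph{eliminating the positional predicates via morphisms} rather than trying to fool them. For (1) it proves a morphism lemma: for any $\CRASPpl$ program $P$ there is $s$ (a common multiple of all moduli, larger than all local offsets) such that $P$ on $h(w)$, where $h(a)=b^{s}ab^{s-1}$, is simulated by a $\CRASPempty$ program $\hat P$ on $w\in a^*$; a $P$ recognizing PARITY would then yield a $\hat P$ recognizing $(aa)^*$ over $\{a\}$, contradicting (2b). For (3) the paper first embeds $\CRASPpl$ into the two-variable majority logic $\widehat{MAJ}_2[<,+1,\text{MOD}]$, then uses a padding morphism to strip the $+1$ and $\text{MOD}$ predicates down to $\widehat{MAJ}_2[<]$, and finally invokes Krebs' theorem that $L_{bb}=\Sigma^*be^*b\Sigma^*\notin\widehat{MAJ}_2[<]$; a two-line $\CRASPpl$ reduction shows that any program for $\Sigma^*be^*$ would yield one for $L_{bb}$. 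Both arguments replace your open-ended combinatorial game with a structural translation that removes the positional predicates at the cost of a fixed blow-up.
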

The proof is in Appendix~\ref{sec:inexpressivity}. %
Notably, all of these languages are recognizable by simple finite-state automata which are expressible by transformers \citep{liu2022transformers}, but empirical length generalization behavior differs in line with {\CRASP} expressiveness (Section~\ref{sec:experiments}).
PARITY (1) has long been found difficult for transformers \citep[e.g.][]{hahn2020theoretical, bhattamishra2020ability, anil2022exploring, chiang2022overcoming, deletang2022neural, hahn2024sensitive}.
Result (2) exemplifies the effect of different positional relations.
The language $\Sigma^*be^*$ (3) is a simple model of  FlipFlop \citep{liu2024exposing}, a language on which transformers empirically struggle to generalize perfectly despite its simplicity for recurrent models \citep{liu2024exposing, Sarrof2024SSMs}. The class (4) is useful for determining expressibility of languages in $\CRASPempty$, as in Section~\ref{sec:expressiveness-bhattamishra}.

\subsection{Limitations: Logarithmic Communication Complexity}
Having shown that various functions are definable by {\LimitTransformer}s, we now provide a simple technique for showing that various functions are \emph{not} definable by {\LimitTransformer}s.
Informally, any function satisfying the conditions in Theorem~\ref{thm:guarantee} has logarithmic communication complexity:
\begin{thm}\label{thm:comm-comp-limit-transformer}
    Let $T$ be a {\LimitTransformer} satisfying \textsc{Periodic} and \textsc{Local}.
    On an input $x \in \Sigma^{2N}$, assume Alice has access to $x_{1\dots N}$ and Bob has access to $x_{N+1 \dots 2N}$.
Then Alice can communicate $C \log N$ bits to Bob, where $C$ depends on $T$ but not $N$, so that Bob can compute each activation in the second half, $\vy_{i}^{(l)}$ ($N+1 \leq i \leq 2N$).
\end{thm}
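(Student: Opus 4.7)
The plan is to proceed by induction on the layer index $l$, establishing that Alice can send $O(\log N)$ bits per layer to enable Bob to compute $\vy_i^{(l)}$ for every $i > N$. Since the depth $L$ of the Limit Transformer is a $T$-dependent constant, the total communication sums to $C \log N$ for some $C = C(T)$. The base case $l=0$ is immediate: $\vy_i^{(0)} = \mE_{x_i} + \vp_i$, with $x_i$ known to Bob for $i > N$ and $\vp_i$ depending only on $i \bmod \Delta$ by \textsc{Periodic}, so no bits are required.

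For the inductive step to $l+1$, the attention output at a second-half query $i$ is a weighted sum of $\mV_{l+1,h}\vy_j^{(l)}$ over $j \leq i$, with weights $(2N)^{a_{i,j}^{(l+1,h)}}$. Contributions from $j > N$ are computable by Bob using the inductive hypothesis. The contributions from $j \leq N$ split into (a) the boundary window $j \in [N-\tau+1, N]$, which has constant width (by \textsc{Local}) and where $\phi_{l+1,h}(j,i)$ may be nonzero, and (b) the far-interior $j \leq N - \tau$, where $\phi_{l+1,h}(j,i) = 0$ and the logit reduces to $(\vy_j^{(l)})^T \mK_{l+1,h}^T \mQ_{l+1,h} \vy_i^{(l)}$ rounded to $p$ fractional bits. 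For (a), Alice directly sends the activation data for the $\tau$ boundary positions. For (b), the $p$-bit rounding together with bounded activation norms (attention is norm-contractive and MLPs have $T$-bounded spectral norm) ensures the rounded logit lives in a finite set $S_{l+1,h}$ whose size depends only on $T$.

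For region (b), Alice groups first-half positions by \emph{attention-signature class}: two positions $j, j'$ belong to the same class when they induce the same rounded logit against every possible second-half query value $\vy_i^{(l)}$ and have the same $\mV_{l+1,h}\vy_j^{(l)}$. For each class, Alice sends a count in $[0, N]$ ($O(\log N)$ bits) together with the aggregate $\sum_j \mV_{l+1,h}\vy_j^{(l)}$ over the class, a rational vector of bounded norm and polynomially-bounded denominator, requiring $O(\log N)$ bits per coordinate. Given these aggregates, Bob uses his own $\vy_i^{(l)}$ to look up the logit in each class, reconstructs the attention numerator and denominator, and applies the MLP and residual to obtain $\vy_i^{(l+1)}$.

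The hard part is controlling the number of signature classes, which in turn requires bounding the set $W_l$ of possible second-half activation values at layer $l$: the number of classes is at most $|S_{l+1,h}|^{|W_l|}$ times the number of distinct $\mV_{l+1,h}\vy_j^{(l)}$. Propagating the inductive hypothesis requires showing that $|W_{l+1}|$ remains a $T$-dependent constant; this draws on the finite-precision architecture of the Limit Transformer and the fact that, for a \emph{fixed} message from Alice, $\vy_i^{(l+1)}$ is a function of Bob's finite-alphabet token prefix together with constant-many aggregates from lower layers, so only finitely many reachable states exist at each layer. The bit-complexity accounting additionally requires verifying that activation denominators grow only polynomially in $N$, which follows because each attention denominator is an integer sum of $(2N)^v$ terms for $v$ in a bounded finite set, so exponents and norms remain controlled independently of $N$.
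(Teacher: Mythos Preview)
Your overall architecture---induction on layers, a boundary window of width $\tau$ handled explicitly, and aggregation over equivalence classes for the far interior---matches the paper's proof. The gap is in how you bound the number of classes.

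You partition Alice's positions by their \emph{attention signature}: the map $q \mapsto \operatorname{Round}_p[(\vy_j^{(l)})^T \mK^T\mQ\, q]$ as $q$ ranges over $W_l$, the set of possible second-half query activations. You then bound the number of signatures by $|S|^{|W_l|}$ and argue $|W_l|$ is a $T$-dependent constant. But $|W_l|$ is \emph{not} bounded independently of $N$: the activations $\vy_i^{(l)}$ carry $O(\log N)$ fractional bits (they come from softmax ratios with denominators of size up to $\operatorname{poly}(N)$), so even with bounded norm and bounded dimension, the number of reachable activation values is polynomial in $N$. Your justification---that for a fixed Alice message, $\vy_i^{(l+1)}$ is a function of Bob's token prefix and constant-many aggregates---does not help: Bob's token prefix alone has $|\Sigma|^N$ possibilities, so ``finitely many reachable states'' is true but not $N$-independent. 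With $|W_l|$ growing, your class count $|S|^{|W_l|}$ explodes. A related issue: you additionally refine classes by exact $\mV_{l+1,h}\vy_j^{(l)}$ value, but these vectors also have $O(\log N)$ precision and can take polynomially many distinct values; this refinement is both unnecessary (you already send per-class \emph{sums}) and harmful to the class count.

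The paper sidesteps the query space entirely. Because the logit is rounded to $p$ fractional bits after the inner product, and because $\|\mK_{l,h}^T\mQ_{l,h}\|$ and $\|\vy_i^{(l-1)}\|$ are bounded by $T$-dependent constants, one can pre-round the \emph{key} activation $\vy_j^{(l-1)}$ to a fixed precision $p'$ (depending only on $p$ and these norms) without changing the resulting rounded logit. Alice therefore partitions her positions directly by $\operatorname{Round}_{p'}[\vy_j^{(l-1)}]$ together with $\min(N-j,\tau)$. Since rounded keys live in a bounded ball in $\mathbb{R}^d$ at constant precision, the number of buckets is a $T$-dependent constant---with no need to enumerate or bound Bob's query values. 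Per bucket Alice sends the count and the summed value vector; both fit in $O(\log N)$ bits as you noted. Replacing your signature partition with this rounded-key partition closes the gap.
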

The proof is in Appendix~\ref{sec:comm-comp-limit-transformer}.
In principle, computing activations in the second half of the input requires full knowledge of the first half of the input, because positions in the second half can freely attend to positions in the first half.
In this situation, one would expect Bob to need full knowledge of $\frac{N}{2}$ input symbols from Alice's part, exponentially more than the $C \log N$ claimed in the theorem.
This is indeed needed if $T$ performs the task of, say, checking if $x_{1\dots, N}$ and $x_{N+1\dots 2N}$ are identical.
However, if $T$ satisfies \textsc{Periodic} and \textsc{Local}, attention must largely be determined by the presence of tokens and token sequences; when an attention head's behavior is determined by positional information, it can only focus its attention on a local neighborhood or equally distribute it over a periodic pattern. Intuitively, in such cases, the set of possible queries and keys can be grouped into a finite partitioning, of size bounded independently of $N$. It then suffices for Alice to communicate, for each possible group of keys, an aggregate of the value vectors at the positions where a matching key is computed.
The proof (Appendix~\ref{sec:comm-comp-limit-transformer}) formalizes this.
As a corollary:
\begin{corollary}\label{corr:comm-comp-no-gen}
    The following problems are not expressible by {\LimitTransformer}s satisfying \textsc{Periodic} and \textsc{Local}: (1)  copying arbitrary strings, (2) addition of $n$-digit numbers.

\end{corollary}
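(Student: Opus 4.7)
The plan is to derive both claims by contradiction using Theorem~\ref{thm:comm-comp-limit-transformer} together with an elementary counting argument. Suppose $f$ (either copying or addition) were computed by some Limit Transformer $T$ satisfying \textsc{Periodic} and \textsc{Local}. Then for any $x \in \Sigma^{2N}$ partitioned as Alice's half $x_{1\dots N}$ and Bob's half $x_{N+1\dots 2N}$, Alice transmits only $C\log N$ bits (with $C$ depending on $T$ but not on $N$) and Bob reconstructs every activation $\vy_i^{(l)}$ for $N+1 \leq i \leq 2N$, and hence every next-token logit vector $\mU \vy_i^{(L)}$ in the second half. Fixing Bob's input and letting Alice's input range over its $2^N$ possibilities, the tuple of outputs Bob computes in his half can take at most $2^{C\log N}=N^C$ distinct values.

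For copying of arbitrary strings, I would fix a standard encoding in which an arbitrary source $s \in \Sigma^{N-O(1)}$ sits inside Alice's half while Bob's half contains the fixed prompt where the copy is to be produced (for example, $\$\,s\,\#\,\bullet\dots\bullet$ with blanks or queries to be filled autoregressively). Correctness of copying forces the tuple of outputs at positions in Bob's half to determine $s$ uniquely, hence to take at least $|\Sigma|^{N-O(1)} = 2^{\Omega(N)}$ distinct values as $s$ varies. Since $N^C < 2^{\Omega(N)}$ for sufficiently large $N$, this contradicts the communication-complexity upper bound.

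For addition of $n$-digit numbers, I would use the format $\$\,a_1\dots a_n\,{+}\,b_1\dots b_n\,{=}\,\bullet\dots\bullet$ with the boundary chosen so that the first operand $a \in \{0,1\}^n$ lies in Alice's half while $b$, the equals sign, and the answer positions lie in Bob's half; then $2N = \Theta(n)$. Fixing $b = 0^n$, correctness forces the output tuple at Bob's positions to encode $a+b=a$, so it takes $2^n = 2^{\Omega(N)}$ distinct values, again contradicting the $N^C$ upper bound.

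The main obstacle — and essentially the only nontrivial ingredient beyond invoking Theorem~\ref{thm:comm-comp-limit-transformer} — is choosing a concrete input/output convention for each task so that (i) the natural Alice/Bob partition aligns with an $\Omega(N)$-bit hard subproblem, and (ii) correctness of the Limit Transformer forces outputs at positions inside Bob's half to reveal that hard-to-transmit information. Once such conventions are fixed the rest is pigeonhole, and in particular no deeper communication-complexity machinery (fooling sets, rank arguments, reductions from \textsc{Disjointness}, etc.) is needed: a single fixed choice of Bob's input already makes Alice's input be uniquely determined by the prescribed output.
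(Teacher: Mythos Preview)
Your overall strategy---invoke Theorem~\ref{thm:comm-comp-limit-transformer} and finish with a counting argument---is the right one, and it matches the paper in spirit. But there is a real gap in the encoding you chose for copying. You place placeholder symbols $\bullet$ in Bob's half and hold Bob's input fixed while Alice's source $s$ varies. That proves inexpressibility of a \emph{fill-in-the-blanks} variant of copying. The paper, however, defines copying autoregressively: on input $\$x\#x_{1\ldots k}$ the model must predict $x_{k+1}$. Under this formulation a length-$2N$ instance is $\$x\#x$, so Bob's half is $\#x$ and therefore \emph{depends on} Alice's source. You cannot fix Bob's input and let Alice's vary; on inputs $\$u\#v$ with $v\neq u$, the task is only constrained at position $N{+}1$ (which must output $u_1$), and that single symbol is far too little for a contradiction.

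The paper closes this gap by a reduction to string equality rather than a direct count. Alice holds $u$, Bob holds $v$, and they feed $\$u\#v$ to the transformer. At the first position $k$ where $u$ and $v$ differ, the causal prefix is $\$u\#u_{1\ldots k-1}$, so the prediction there must be $u_k\neq v_k$; at all earlier positions the prediction matches $v$. Hence Bob, knowing the outputs at his positions (via Theorem~\ref{thm:comm-comp-limit-transformer}) and his own string $v$, can decide whether $u=v$ using $C\log N$ bits from Alice---contradicting the classical $\Omega(N)$ deterministic lower bound for equality. Addition is then handled exactly as you suggest, by specializing one operand to zero so that the task becomes autoregressive copying of the other operand. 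So your pigeonhole instinct is correct, but for the autoregressive formalization it has to be routed through equality rather than applied with a fixed Bob-input.
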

This is proven in Appendix~\ref{sec:comm-comp-limit-transformer}.
As a consequence, any run of the  Inference Procedure on these functions will output solutions $T_1, T_2, T_3, \dots$ for which the depth, number of heads, parameter norms or ranks, MLP dimensions, or precision $p$, must increase with the input length $n$; indeed, length generalization is empirically challenging for these functions (Section~\ref{sec:experiments}). 

\section{Experiments}\label{sec:experiments}
We evaluate the expressiveness of {\LimitTransformer}s and $\CRASP$ as predictors of empirical length generalization.
Based on Theorems~\ref{thm:guarantee} and \ref{thm:crasp-to-limit-main}, we expect that APE transformers should length-generalize on problems with a {\CRASPpl} program, and that NoPE transformers will be successful on problems with a $\CRASPempty$ program.
We test this prediction on a suite of algorithmic problems and formal languages, largely from prior empirical work on length generalization \citep{bhattamishra2020ability, zhou2023algorithms}, but evaluated within a uniform framework.

\begin{figure}
    \centering
    \includegraphics[width=0.99\linewidth]{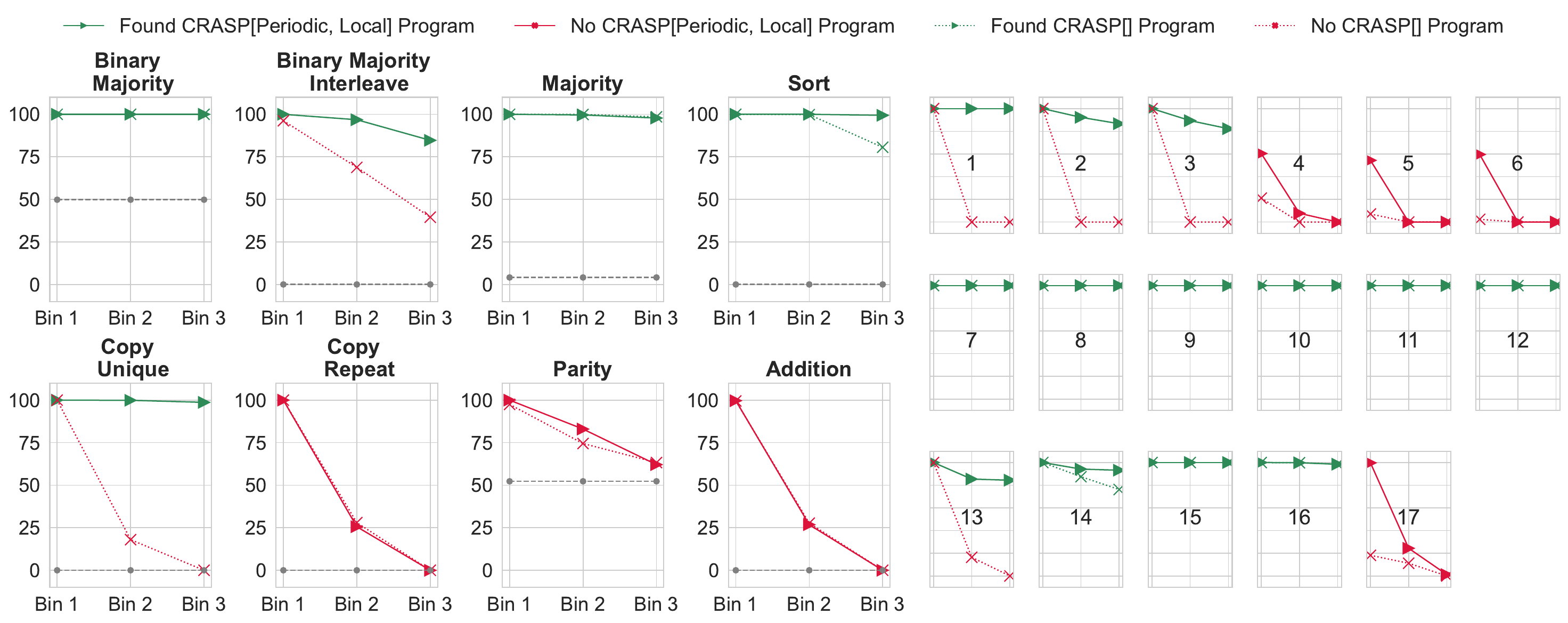}

    \caption{Experimental results (y axis: accuracy), at lengths $\leq 50$ (Bin 1, training), $[51, 100]$ (Bin 2), and $[101, 150]$ (Bin 3, generalization), for APE (solid) and NoPE (dotted). 
    Green lines indicate that we found a {\CRASP} program ($\CRASPpl$ for APE, $\CRASPempty$ for NoPE), red lines indicate that we proved nonexistence, or found no program.
    Random baselines are indicated in gray in (left), and very close to zero in (right).
    On the algorithmic problems (left), we replicate prior empirical findings; {\CRASP} expressiveness predicts observed length generalization.
    On the regular languages (right, with same $x$ and $y$-axes as left, Table~\ref{tab:finite-state-expressiveness}), length generalization tracks $\CRASP$ expressiveness established in Lemma~\ref{lemma:regular-languages-crasp} ((1) = $(aa)^*$, (17) = $\Sigma^*be^*$) and other results (see Appendix~\ref{sec:language-definitions}). %
    {\CRASP} expressiveness performs much better than circuit complexity and standard notions of regular language complexity in predicting length generalization (Appendix, Figures~\ref{fig:figure-ac0}--\ref{fig:star-free-ness}).
    }
    \label{fig:results}
\end{figure}

For each task, the model is trained on inputs whose \textsc{Len} is in the range $[l_{min}, 50]$, where  $l_{min}$ is the minimum length for this task. 
\textsc{Len} is the length of the input in the algorithmic tasks (Appendix~\ref{app:task-definitions}), and the overall sequence length in the formal language tasks. %
The model is tested on 3 test sets, where \textsc{Len} is in the range $[l_{min}, 50]$, $[51, 100]$, $[101, 150]$; these lengths are based on the source of the regular languages benchmark \citep{bhattamishra2020ability}.
We trained using a standard AdamW setup; see details in Appendix~\ref{sec:ap-exp-setup}.
Hyperparameters are selected by searching in order of increasing complexity until we find a setting that performs well up to length $\leq 100$. We interpret results on lengths $[101, 150]$ as a measure of length generalization.
Each model has as many positional encodings as needed to encode the longest inputs (at least 150); each input is presented with a random offset in agreement with the theoretical setup.
On algorithmic sequence-to-sequence tasks, we train with cross-entropy loss on the output.
On formal languages, where next-symbol predictions are generally not deterministic, we instead train the model to predict the \emph{set of legal next symbols}, with each such set coded as an atomic symbol \citep[as in][]{bhattamishra2020ability, Sarrof2024SSMs}.
At test time, predictions are considered correct on a sequence if and only if the output at every step is correct; the random baseline is thus very low on the formal language benchmark. %
We report accuracy as the fraction of test sequences where the predictions are correct at each step.
We first evaluate on 8 algorithmic problems, which largely overlap with \citet{zhou2023algorithms}, but are tailored to those where {\CRASP} expressiveness can be clearly settled.
Tasks are defined formally in Appendix~\ref{sec:task-definitions-algorithmic}.
A new problem here is BINARY MAJORITY INTERLEAVE, which interleaves multiple MAJORITY functions and can be solved by {\CRASP} using periodic functions.
Length generalization behavior matches {\CRASP} expressiveness; {$\CRASPempty$} expressiveness predicts the success of NoPE (see Figure~\ref{fig:results}).
In agreement with prior empirical results \citep{zhou2023algorithms, Jelassi2023Length}, COPY is difficult in the presence of repetition and easy when it is avoided; these findings match {\CRASP} expressiveness (Corollary~\ref{corr:comm-comp-no-gen} and Section~\ref{sec:induction_circuit}).
We next applied the experimental framework to 17 regular languages assembled by \cite{bhattamishra2020ability}, who evaluated length generalization in transformers and LSTMs.
Whereas LSTMs perform strongly across the board, the behavior of transformers on these regular languages has so far eluded theoretical understanding.
While it is known that transformers struggle with PARITY, it has remained unclear why they would struggle to length-generalize on some seemingly very simple languages.
We found {\CRASPpl} programs for 13 of the languages and proved nonexistence for the others (Appendix~\ref{sec:expressiveness-bhattamishra}).
Length generalization succeeded in those cases where we had found a {\CRASPpl} program (see Figure~\ref{fig:results} right).
In those cases where a $\CRASPempty$ program exists, generalization also succeeded with NoPE.
Generalization failed for languages where no $\CRASP$ program exists, such as $\Sigma^*be^*$ (\#17 in Figure~\ref{fig:results}; Lemma~\ref{lemma:regular-languages-crasp}).

\section{Discussion}\label{sec:discussion}

Prior work has empirically found that transformers' length generalization capabilities differ between tasks, but theoretical understanding has been lacking.
We have introduced a formal framework analyzing length generalization in an idealized inference procedure. %
The framework explains the commonality across diverse tasks where successful length generalization has been observed in prior research, in terms of expressiveness in two simple mathematical formalisms, {\LimitTransformer}s and {\CRASP}.
We also proved that various problems, on which length generalization is less successful empirically, are not expressible in one or both of these formalisms.
Beyond length generalization, the framework further sheds light on the expressiveness of APE transformers.
Our results on length generalization study an idealized regularizer and assume perfect fitting of the training distribution. Making the guarantee from Theorem~\ref{thm:guarantee} more realistic by incorporating SGD training dynamics and subsampling of training data is an interesting problem for future research.

Our results can be viewed as formalizing the RASP-L Conjecture \citep{zhou2023algorithms}.
Both {\LimitTransformer}s and {\CRASPpl} formalize intuitions underlying RASP-L in restricting how positional information can be used.
An important advance over \citet{zhou2023algorithms} is that we settle the expressiveness of these formalisms for many problems, and are able to explicitly prove a variety of problems with poor empirical length generalization, such as copying with repeated strings, to be inexpressible by {\LimitTransformer}s.
Our results provide a step towards rigorously confirming the idea that expressiveness in such restricted formalisms predicts length generalization.

\paragraph{Expressiveness of Transformers}
A substantial line of research has studied the in-principle expressiveness of transformers \citep{strobl2023survey}.
Transformers express a subset of the class $\TCzero$ \citep{merrill2023logic, strobl2023averagehard}, but it is unknown if this inclusion is proper.
\emph{All} problems considered in Section~\ref{sec:experiments} are in $\TCzero$, but empirical length generalization behavior largely tracks {\CRASPpl} expressiveness, which defines a proper subclass of $\TCzero$ (Appendix~\ref{app:crasp-majority-logic}).
While it remains open if the expressive power of transformers exhausts $\TCzero$, our results suggest a separation between $\TCzero$ and those problems for which length generalization is possible with absolute positional encodings.
In particular, our results suggest that the existence of APE transformers that perform a task across larger ranges of input lengths is linked to the expressiveness of {\LimitTransformer}s (Section~\ref{sec:expressivity}).
It is an open question how far new, yet-to-be-discovered positional encoding schemes may increase the range of length generalization;
empirical evidence indicates that NoPE and APE may be hard to beat by other general-purpose encodings \citep{KazemnejadPRDR23}.
The proof of Theorem~\ref{thm:comm-comp-limit-transformer} is closely linked to previous communication-complexity bounds for transformer layers \citep{sanford2023representational, sanford2024onelayer, peng2024limitations, Bhattamishra2024Separations}, which importantly were shown only for individual layers, not multilayer transformers.
Indeed, \citet{Bhattamishra2024Separations} showed that such a logarithmic bound is not in general possible for arbitrary multilayer transformers.
In contrast, our result applies even at multilayer models, which is enabled by the restrictions on the ways in which positional information can be used in a {\LimitTransformer}. %

\paragraph{Length Generalization of Transformers}
Various studies have empirically evaluated length generalization in transformers.
Our work is most closely related to \citet{zhou2023algorithms}, discussed above.
\citet{bhattamishra2020ability} study length generalization on formal languages; we find that  {\CRASPpl} expressiveness explains behavior on their benchmark well (Section~\ref{sec:experiments}).
\citet{anil2022exploring} show that language models, finetuned on various reasoning problems, do not length-generalize well.
\citet{WangJW0GZ0W24} evaluate length generalization of NoPE transformers on real-world tasks.
\citet{KazemnejadPRDR23} explore length generalization across different positional encoding schemes, finding NoPE to perform surprisingly well.
\citet{Zhou2024Transformers} show that length generalization for addition improves with specific encoding schemes and input formats.
\citet{JelassiBKM24} show that transformers can succeed in length generalization on copying when inputs avoid $n$-gram repetition.
\citet{Chang2024Language} empirically find limitations in generalization in counting.
In contrast to the rich landscape of empirical studies, theoretical understanding of length generalization has been limited.
Most relevant, \citet{ahuja2024provable} study length generalization in simple neural architectures, including a one-layer transformer setup with linear (not softmax) attention. Our results, in contrast, apply to multi-layer softmax transformers and make statements about many concrete problems that have been studied empirically.
Some other works \citep[e.g.][]{Hou2024Universal, Xiao2023Conditions} provide length-generalizing constructions for certain problems, but leave open whether learning would lead to such constructions. 
\citet{wang2024transformers} show that GD training leads to length generalization on a specific token selection task.

\paragraph{Limitations}
We study idealized asymptotic identification of a global minimum with perfect knowledge of behavior on the training distribution (cf. %
Q.4 in App.~\ref{app:faq} for more discussion).
Extending Theorem~\ref{thm:guarantee} to  account for subsampling of the training data and learning dynamics is an important problem for future research.
In particular, providing a practical upper bound on the threshold $N_0$ at which length generalization is expected is an interesting problem.
Our study focuses on absolute positional encodings; extending it to other positional encodings \citep[e.g.][]{su2024roformer, press2021train, ruoss2023randomized} is another important problem for future research.

\section{Conclusion}
We have introduced a theoretical framework that unifies a broad array of empirical findings about successes and failures of length generalization in transformers with absolute positional encodings.
Our framework is based on the analysis of an idealized inference procedure, for which length generalization provably happens whenever the ground-truth function is expressible with only limited access to positional information.
By providing upper and lower bounds on the expressiveness of transformers trained using this inference procedure, we accurately predict the success and failure of length generalization across a wide set of algorithmic tasks and formal languages.

\section*{Contributions}

MH coordinated the project.
MH and XH developed the conceptual framework of Theorem~\ref{thm:guarantee}, with input from the other authors.
XH and YS contributed Section~\ref{sec:experiments}.
AY contributed Section~\ref{sec:crasp} with input from MH and AK.
MH, XH, AY, YS jointly developed the translation to {\LimitTransformer}s; MH worked out the formalization.
SB contributed Proposition~\ref{prop:init-att-sum}, Lemma~\ref{lemma:nuclear-norm}, and provided conceptual input throughout the project.
AK contributed to settling the {\CRASP} expressiveness of formal languages.
PN and HZ provided conceptual and writing-level input over the course of the project.
MH drafted the remaining portions of the paper and the proof of Theorem~\ref{thm:guarantee}, including definitions and lemmas.

\section*{Acknowledgments}
Funded by the Deutsche Forschungsgemeinschaft (DFG, German Research Foundation) – Project-ID 232722074 – SFB 1102.
MH thanks Lena Strobl, Dana Angluin, David Chiang, Mark Rofin, Anthony Lin, and Georg Zetzsche for conversations on related topics. We thank Entang Wang for comments on the draft. We also thank anonymous reviewers for their close reading and detailed feedback.

\bibliographystyle{iclr2025_conference}
\bibliography{literature}

\appendix
\newpage
\setlength{\cftbeforesecskip}{11pt} 
\tableofcontents

\section{FAQ}\label{app:faq}

 \textbf{(1)} \textit{What is the point of introducing {\LimitTransformer}s?}

{\LimitTransformer}s are a mathematical formalism helping us prove a length generalization guarantee (Theorem~\ref{thm:guarantee}) for a broad class of functions, not just one specific function. They thus serve as an object that can help us prove things about standard transformers.

\textbf{(2)} \textit{What is the relation between {\LimitTransformer}s and {\CRASP}? Why use two different formalisms?}

\rebuttal{They serve two distinct purposes - one is easier to prove a length-generalization guarantee with, and the other is easier to prove expressivity results with}. {\LimitTransformer}s are closely connected to standard transformers, and provide a convenient formalism for formalizing a length generalization guarantee in our inference setup (Theorem~\ref{thm:guarantee}); they also provide bounds on APE transformer expressiveness as a side result (Appendix~\ref{sec:expressivity}).
{\CRASP} is a formalism based on the RASP language \citep{weiss2021thinking}, intended to provide a formal abstraction of the kinds of computations that transformers can perform in a human-readable format.
{\LimitTransformer}s with \textsc{Periodic} and \textsc{Local} express all the functions definable in {\CRASPpl}, though it is open if this inclusion is strict.
We provide rigorous tools for understanding the expressiveness of both formalisms.
For {\LimitTransformer}s, we prove a logarithmic communication complexity bound (Theorem~\ref{thm:comm-comp-limit-transformer}).
{\CRASP} brings additional use in understanding expressiveness from two angles.
First, one can conveniently prove functions expressible by writing down programs, as we did in Section~\ref{sec:crasp}.
Second, to prove negative results, we can bring to bear a set of deep results about logics using majority quantifiers \citep{krebs2008typed}, which allow us to provably settle expressiveness of many problems.
Positive results translate into positive results about {\LimitTransformer} expressiveness and hence, under our idealized learning setup, length generalization.
While it is open if problems not expressible in {\CRASP} cannot in principle show length generalization, experimental results suggest that such an implication might hold in many cases.

\textbf{(3)} \textit{Why are {\LimitTransformer}s needed -- can't one just consider transformers whose parameters have infinite precision and hence can accommodate infinitely many different positional encodings $\vp_i$?  }

The key advantage of {\LimitTransformer}s is that they effectively have finite parameter counts whenever they satisfy \textsc{Local} and \textsc{Periodic}, which is useful in establishing Theorem~\ref{thm:guarantee}. 
In an ordinary transformer, due to fixed width, effectively distinguishing unboundedly many positions requires infinitely many parameters $\vp_i$. %
Even then, a function as simple as $\phi(i,j) = \delta_{ij}$ cannot be exactly represented for infinitely many $i,j$ by a product $\vp_i \mQ^T \mK \vp_j$ at bounded width $d$.

\rebuttal{ We used {\LimitTransformer}s as a tool specifically to prove results about APE transformers. We note in Corollary~\ref{cor:nope-length-gen} that the analogous results for the special case of NoPE transformers do not require the use of {\LimitTransformer}s. Understanding what is needed to derive results for other positional encoding schemes (such as relative positional encodings) is left for future work.}

\textbf{(4)} \textit{Why is the idealized setup considered for the analysis, as opposed to more practical frameworks of learning?}

Proving guarantees in a more practical setting (SGD training dynamics, subsampling of training data) would, of course, be ideal.
However, such guarantees have been notoriously difficult to establish for deep learning models \citep{neyshabur2017exploring}. Standard frameworks for learning, such as PAC-learning, assume that the training and test distributions are the same, which precludes out-of-distribution guarantees such as length generalization. Even within the PAC-learning framework, obtaining nontrivial guarantees for deep neural networks remains challenging without making strong assumptions. Instead of analyzing the learning and generalization of Transformers trained with gradient-based methods, our work aims to understand the length generalization properties of Transformers from an architectural perspective. A substantial body of work (cf. Section~\ref{sec:discussion}) has empirically investigated the length generalization properties of Transformers and found a complex array of empirical behavior, while theoretical understanding has been very limited. Hence, consolidating the theoretical relation between these empirical observations and the computational model of Transformers seems like an important direction. Our work provides a formal framework, based on an idealized model of learning, that separates the tasks on which Transformers succeed and those on which they fail to length-generalize. 
The learning model considered in our work is closely related to the ``identification in the limit" setting, which has been widely studied for decades in the context of learning automata and grammars \citep{de2010grammatical}.
Our framework is successful in explaining a wide range of empirical observations (Figure~\ref{fig:results}). This is a substantial advance, as no prior theoretical framework has been able to explain the empirical patterns in Figure~\ref{fig:results} to the extent that our framework can.
We hope that further work can build on these insights to establish guarantees that reproduce this success while narrowing the gap between theoretical analysis and practical learning.

\textbf{(5)} \textit{Why does the length generalization condition in Theorem~\ref{thm:guarantee} ask for $\sup_n \RegStrong(T_n) < \infty$? Isn't asking for length generalization sufficient?}

If $\sup_n \RegStrong(T_n) = \infty$, a transformer $T$ minimizing $\RegStrong(T)$ while fitting behavior at some length will be unlikely to work at substantially longer lengths, because performing the task correctly at longer and longer lengths requires unbounded increase in $\RegStrong(T)$.
It might still happen that generalization from length $\frac{n}{2}$ to length $n$ is possible in certain problems not expressible by {\LimitTransformer}s.
However, this will depend on the problem and the specific scaling of test lengths relative to training lengths; for problems not satisfying the conditions in Theorem~\ref{thm:guarantee}, length generalization will fail when the test length is made sufficiently longer than the training length, even as the training length diverges to infinity. We make this formal in Section~\ref{app:arbitrary-training-length}.

\textbf{(6)} \textit{Given a task, how can one settle {\LimitTransformer} and {\CRASP} expressiveness?}

Showing that a task is definable by {\LimitTransformer}s or {\CRASP} simply requires providing an explicit construction, as we exemplify for various tasks (Section~\ref{sec:CRASP_constructions}).
For showing that a task is not definable in these formalisms, we provide a battery of methods which allow us to provide an answer for many tasks: communication complexity (Theorem~\ref{thm:comm-comp-limit-transformer}) applies to both formalisms; for showing non-definability in {\CRASP}, reduction to specific languages already proven not to be expressible (such as Parity and $L_{bb}$, see Appendix~\ref{sec:expressiveness-bhattamishra}) is frequently useful. \rebuttal{Although Theorem~\ref{thm:crasp_to_limit} shows that every function expressible in {\CRASP} is expressible by {\LimitTransformer}s, it is not known whether or not this inclusion is strict. }

\textbf{(7)} \textit{Why does the guarantee specifically apply to \textsc{Periodic} and \textsc{Local} {\LimitTransformer}s? What is special about such positional relations?}

Local positional relations are important because, if a product function of the form  $\vp_i^T \mQ^T \mK \vp_j$, where the rank of $\mQ, \mK$ is not constrained, takes nonzero values at unboundedly long distances $j-i$, there is no general reason why the function should length-generalize.
Independent initialization of the $\vp_i$'s tends to lead to values close to zero for most of these products (Appendix~\ref{app:reg-init}); our Inference Procedure incorporates this via the term (\ref{eq:additional-penalty}).
Given this, one expects a learned model to still exhibit small products at distances not present in the training distribution, and hence a failure of length generalization in the presence of nonlocal product functions.
In the translations between standard transformers and {\LimitTransformer}s, such local positional relations correspond to the functions $\phi_{l,h}$.

The situation is different for products involving $\mV_{l,h}$ matrices, whose rank is penalized by $\RegStrong(\cdot)$; these are able to represent not local, but periodic functions. Periodicity falls out of the setup: In the finite-precision setup, a translation-invariant product function of the form $\vp_i^T \mM_1 \dots \mM_k \vp_j$ must be periodic in $j-i$ whenever one of the matrices $\mM_1 \dots \mM_k$ has bounded rank as the number of positions considered diverges to infinity, with period bounded in terms of the rank (Lemma~\ref{lemma:deducing-periodicity-fidi}). Hence, in a transformer $T \in \Theta_n$, any product function involving one or more $\mV_{l,h}$ matrices needs to be periodic with period bounded in terms of $\RegStrong(T)$.
In the translations between standard transformers and {\LimitTransformer}s, such periodic relations are encoded into bounded-width bounded-precision positional encodings $\vp_i$ of the {\LimitTransformer}; finite width and precision are sufficient due to periodicity.

\section{Proofs about Limit Transformers}

\subsection{Proof of Theorem~\ref{thm:guarantee}}\label{app:proof-main}

We re-state and then prove Theorem~\ref{thm:guarantee}:
\begin{thm}[Guaranteed Length Generalization in the Limit, restated from Theorem~\ref{thm:guarantee}]
Let $f \in \mathcal{F}(\Sigma)$.
Then the following are equivalent:
\begin{enumerate}
\item $f$ is expressible by a {\LimitTransformer} satisfying \textsc{Periodic} and \textsc{Local}.
\item (Guaranteed Length Generalization) %
Consider the inference procedure from Definition~\ref{def:inference-procedure} applied to $f$ with ${\RegStrong}$, generating a sequence $T_1, T_2, \dots$.
For \emph{any} such sequence, there is some $N_0$ such that, for all $m > N_0$, $T_m$ matches $f$ on all inputs of any length $k \leq m$, and $\sup_{n =1,2,3,\dots} {\RegStrong}(T_n) < \infty$.

\end{enumerate}
\end{thm}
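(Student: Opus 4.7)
The plan is to prove the equivalence via the translation between standard transformers and {\LimitTransformer}s provided by Lemma~\ref{lemma:translation-positional}, combined with a compactness argument showing that, under a bounded regularizer, the induced sequence of {\LimitTransformer}s can only take finitely many distinct ``algorithmic'' values.

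For the direction $(1)\Rightarrow(2)$, I would first produce an explicit witness bounding $\RegStrong(T_n)$. Given a {\LimitTransformer} $T^\star_\infty$ expressing $f$ with \textsc{Periodic} and \textsc{Local}, I would truncate $T^\star_\infty$ to context width $n$ by compiling its $\phi_{l,h}$ and periodic $\vp_i$ back into an ordinary transformer $T^\star_n\in\Theta_n$ (the reverse of the reparameterization in Appendix~\ref{app:product-parameterization}). Since $\phi_{l,h}$ is translation-invariant and local, the products $\vp_i^T\mK_{l,h}^T\mQ_{l,h}\vp_j$ vanish outside a bounded window, so the penalty (\ref{eq:additional-penalty}) is bounded uniformly in $n$; all other terms in $\RegStrong$ depend only on $T^\star_\infty$, not on $n$. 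This yields $T^\star_n\in U_n$ with $\RegStrong(T^\star_n)\le C$, hence $\RegStrong(T_n)\le C+1/n$ for the actual iterates $T_n$ of the Inference Procedure.

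The central technical step, which I expect to be the main obstacle, is to show that the translated {\LimitTransformer}s $\widetilde T_n$ of these $T_n$ (obtained from Lemma~\ref{lemma:translation-positional}) range over only finitely many distinct {\LimitTransformer}s. The translation absorbs position-dependent products $\vp_i^T \mK_{l,h}^T \mQ_{l,h} \vp_j$ into $\phi_{l,h}(i,j)$ while keeping width bounded. Under $\RegStrong(T_n)\le C$: (a) $L$, $H$, $p$, the ranks of $\mV_{l,h}$, and the relevant Frobenius/spectral norms are bounded; (b) translation invariance combined with the rank constraint on $\mV_{l,h}$ forces the period of $\vp_i$ in the translated object to be bounded by a function of $\RegStrong(T_n)$ (the finite-precision rank-based periodicity argument underlying Lemma~\ref{lemma:deducing-periodicity-fidi}), so \textsc{Periodic} is witnessed by a uniformly bounded $\Delta$; (c) locality of $\phi_{l,h}$ follows from the vanishing of (\ref{eq:additional-penalty}) at large offsets, once one combines this with finite precision to conclude that $\phi_{l,h}(i,j)=0$ for $|j-i|>\tau$ with $\tau$ uniformly bounded; (d) $p$-bit precision on the remaining finitely many positional values and all other parameters collapses an otherwise continuum-valued space into a finite set. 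Together these four facts yield the finiteness claim.

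Once finiteness is established, the rest is a pigeonhole argument. Each of the finitely many candidate algorithms either matches $f$ on every input, or first disagrees with $f$ at some finite length $k_0$. For each of the finitely many ``bad'' algorithms, once $n/2$ exceeds its witnessing length $k_0$, no $T_n\in U_n$ can translate to it, because $\widetilde T_n$ faithfully reproduces the input-output behavior of $T_n$. Taking $N_0$ to be the maximum such $k_0$ across bad algorithms gives the required threshold, together with $\sup_n\RegStrong(T_n)\le C$ from the first step. The converse direction $(2)\Rightarrow(1)$ is easier: the bounded-$\RegStrong$ hypothesis again yields finiteness of the $\widetilde T_n$, so some $\widetilde T_\infty$ appears as $\widetilde T_{n_k}$ along a subsequence $n_k\to\infty$; since each $T_{n_k}$ matches $f$ up to length $n_k$ and $\widetilde T_{n_k}$ agrees with $T_{n_k}$ on those inputs, $\widetilde T_\infty$ expresses $f$ everywhere, with \textsc{Periodic} and \textsc{Local} inherited from the translation.
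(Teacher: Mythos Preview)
Your overall architecture matches the paper's: translate via Lemma~\ref{lemma:translation-positional}, argue finiteness of the resulting {\LimitTransformer}s, then pigeonhole. But step (c) contains a genuine gap, and it is precisely the main technical obstacle in the paper.

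Boundedness of the penalty (\ref{eq:additional-penalty}) together with $p$-bit precision tells you only that the \emph{number} of nonzero values $\phi_{l,h}^{(\widetilde T_n)}(1,1+m)$ is at most $C\cdot 2^{2p}$. It does \emph{not} tell you that these nonzero values lie in a fixed window $[0,\tau]$. A priori, the inference procedure could return $T_n$ whose few nonzero $\phi$-values sit at distances near $n$, drifting outward as $n$ grows; each such $T_n$ still satisfies $\RegStrong(T_n)\le C$, yet no single $\tau$ works, the set $\{\widetilde T_n\}$ is infinite, and your pigeonhole collapses. (Relatedly, in your $(2)\Rightarrow(1)$ paragraph, Lemma~\ref{lemma:translation-positional} only yields \textsc{Periodic} and translation-invariance of $\phi_{l,h}$; \textsc{Local} is \emph{not} ``inherited from the translation''.)

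The paper closes this gap by exploiting the \emph{near-optimality} of $T_n$ in the inference procedure, not just the bound $\RegStrong(T_n)\le C$. Writing $\RegStrong=\RegWeak+D_n(n)$ with $D_n(\tau)=\sum_{l,h}\sum_{i\le\min(n,\tau)}|\phi_{l,h}^{(\widetilde T_n)}(1,i)|^2$, the paper shows $\RegStrong(T_n)$ converges, extracts a subsequence along which $\RegWeak\to R_0$ and $D_n(n)\to D_0$, and constructs competitors $T_n'$ by restricting far-out iterates to shorter contexts. These competitors achieve $\limsup\RegStrong(T_n')=R_0+D_\infty(\tau_\infty)$ where $D_\infty(\tau)=\liminf D_{\nu_i}(\tau)$ stabilizes at some finite $\tau_\infty$; near-optimality forces $D_\infty(\tau_\infty)=D_0$. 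If infinitely many $\phi^{(\widetilde T_n)}$ failed to be $\tau_\infty$-local, one would get $D_0\ge D_0+2^{-2p}$, a contradiction. This argument is what actually delivers the uniform $\tau$ you need; your sketch should acknowledge that step (c) requires it rather than following directly from bounded (\ref{eq:additional-penalty}).
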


\begin{remark}
    We note that a limit transformer $T_\infty$ representing $f$ need not itself be offset-invariant. It is sufficient to have
    \begin{equation}
T_\infty(x,0) = f(x)
    \end{equation}
    Lemma~\ref{lemma:limit-to-transformer} shows that such a function has a sequence of transformers $T_n \in \Theta_n$ which are offset-invariant, even without assuming $T_\infty$ to be offset-invariant.  
\end{remark}

\paragraph{High-Level Proof Sketch}
The key to the proof is a compactness property: Any sequence $T_1, T_2, \dots$ ($T_i \in \Theta_i$) where $\sup_i \RegStrong(T_i) < \infty$ has a subsequence of transformers whose behavior across inputs can be summarized into a single {\LimitTransformer}.
For 1$\Rightarrow$2, given a sequence generated by the Inference Procedure, we show that ${\RegStrong}$ stays bounded and use the compactness property to show that a subsequence exhibits behavior equivalent to $f$.
To show that, in fact, \emph{all} possible sequences $T_n$ generated by the Inference Procedure ultimately exhibit behavior equivalent to $f$, when $n$ is large, we show that subsequences failing to length-generalize would exhibit increasing attention dot products between far-away positions as input length increases. However, due to the penalty on attention dot products in ${\RegStrong}$, any such sequence would, for large $n$, need to have a higher value of ${\RegStrong}$ than sequences avoiding such an increase.
For 2$\Rightarrow$1, we obtain the {\LimitTransformer} from the compactness property applied to the sequence generated by the Inference Procedure.  The penalty on attention dot products enforces that it satisfies \textsc{Local}; the bounds on the  MLP and value matrices enforce that the positional encodings in the {\LimitTransformer} can be taken to be periodic.

\paragraph{Preliminaries and Formal Proof}
We now proceed to the formal proof. We make crucial use of the two technical Lemmas~\ref{lemma:limit-to-transformer} and \ref{lemma:translation-positional}, which provide translations between ordinary transformers and {\LimitTransformer}s.

The following definition will be used:
\begin{defin}
    If $T \in \Theta_i$, then define $\RegWeak(T)$ to be $\RegStrong(T)$ minus the term in Eq.~(\ref{eq:additional-penalty}). That is,
    \begin{equation}\label{eq:RegWeak}
        \RegStrong(T) = \RegWeak(T) + \sum_{l,h} \sum_{1 \leq  j \leq N(T)} |\vp_1^T \mK_{l,h}^T \mQ_{l,h} \vp_j|^2
    \end{equation}
\end{defin}

The following lemma will be used for both directions of the main theorem:
\begin{lemma}\label{lemma:key-lemma}
    Let $T_1, T_2, \dots$, where $T_n \in \Theta_n$, be a sequence generated by the Inference Procedure based on the functional behavior of a function $f \in \mathcal{F}$, and such that
    \begin{equation}
    \sup_{n =1,2,3,\dots} {\RegStrong}(T_n) < \infty
    \end{equation}
Then $f$ is expressible by a {\LimitTransformer} satisfying \textsc{Periodic} and \textsc{Local}, and there is some $N_0$ such that, for all $m > N_0$, $T_m$ matches $f$ on all inputs of length $k \leq m$.
\end{lemma}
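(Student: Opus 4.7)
My overall plan is a compactness-plus-diagonalization argument. First I would extract a convergent subsequence from $T_1,T_2,\dots$ that can be packaged into a single {\LimitTransformer} $T_\infty$, then show $T_\infty$ satisfies \textsc{Periodic} and \textsc{Local} and agrees with $f$ everywhere (yielding the first conclusion), and finally upgrade ``subsequence'' to ``eventually all $n$'' via a minimality argument against the regularizer (yielding the second conclusion). Concretely: because $\sup_n \RegStrong(T_n) < \infty$, the integer quantities entering $\RegStrong$ --- precision $p$, depth $L$, head count $H$, and all ranks $\mathrm{rank}(\mV_{l,h})$ --- take only finitely many values along the sequence, so by passing to a subsequence (call it $T_{n_k}$) I can fix them. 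Then, using the product-function reparameterization (Appendix~\ref{app:product-parameterization}) and the norm/rank bounds inside $\RegWeak$, each relevant product function $\vp_i^T \mM_1\dots \mM_k \vp_j$ (and its token/MLP analogs) takes values in a bounded, finite-precision set. By translation invariance (Definition~\ref{def:hypothesis-class}(3)) such a product depends only on the displacement $j-i$, so along a further subsequence I can stabilize the corresponding functions of displacement pointwise, and then invoke Lemma~\ref{lemma:translation-positional} to assemble them into a single bounded-width {\LimitTransformer} $T_\infty$ whose input/output behavior at any fixed length agrees with $T_{n_k}$ for all large $k$.

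Next I would verify the structural properties of $T_\infty$. For \textsc{Periodic}, any product involving a bounded-rank $\mV_{l,h}$ factor that is translation-invariant in finite-precision arithmetic must be periodic in $j-i$ with period controlled by the rank (Lemma~\ref{lemma:deducing-periodicity-fidi} / the FAQ~(7) argument); since all such ranks are uniformly bounded via $\RegStrong$, the positional encodings of $T_\infty$ can be chosen periodic with a common period $\Delta$. For \textsc{Local}, the key is the extra penalty (\ref{eq:additional-penalty}): $\sum_{j=1}^{N(T_{n_k})} |\vp_1^T \mK_{l,h}^T \mQ_{l,h} \vp_j|^2$ is bounded uniformly in $k$, and by translation invariance this controls $|\vp_i^T \mK_{l,h}^T \mQ_{l,h} \vp_j|^2$ for all causal pairs. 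Thus the attention contribution of positional products must decay, forcing the corresponding $\phi_{l,h}$ in $T_\infty$ to be supported on a bounded displacement window, i.e.\ local. That $T_\infty$ computes $f$ on every input follows because each fixed input $x$ fits inside the context of $T_{n_k}$ for large $k$, and by the Inference Procedure $T_{n_k}$ reproduces $f$ on all inputs of length $\le n_k/2$; the behavior of $T_\infty$ on $x$ matches that of $T_{n_k}$ once $k$ is large enough, hence equals $f(x)$.

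To promote this to the statement for the full sequence, I would argue by contradiction: suppose there exist arbitrarily large $m$ for which $T_m$ fails to match $f$ at some input of length $\le m$. Along such a failing subsequence I can repeat the compactness argument to obtain another limit $T_\infty'$, again satisfying \textsc{Periodic} and \textsc{Local} and with $\RegStrong$-parameters no larger than those of $T_\infty$. Now use Lemma~\ref{lemma:limit-to-transformer} to build, from the genuine solution $T_\infty$, a ``reference'' sequence $T_n^\star \in \Theta_n$ that exactly represents $f$ at every length and whose regularizer value converges to the natural $\RegStrong$-value of $T_\infty$. The Inference Procedure's near-minimality condition (\ref{eq:minimization-regularizer}) then forces $\RegStrong(T_m) \le \RegStrong(T_m^\star) + 1/m$ on the failing subsequence; taking the limit gives $\RegStrong(T_\infty') \le \RegStrong(T_\infty)$. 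However, $T_\infty'$ must still match $f$ on arbitrarily long prefixes (by the same subsequence-limit argument applied to the $\le m/2$ fitting), so the only remaining way for $T_m$ to fail on the upper half is through attention scores at displacements not seen in training. The extra penalty~(\ref{eq:additional-penalty}) makes this strictly costly relative to $T_m^\star$ for all sufficiently large $m$, contradicting near-minimality. Hence failure can only occur for finitely many $m$, giving the desired $N_0$.

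The step I expect to be the main obstacle is the last one: converting ``a convergent subsequence matches $f$'' into ``all sufficiently large $n$ match $f$.'' Naively, different subsequences could converge to different {\LimitTransformer}s and one of them could length-fail; the penalty~(\ref{eq:additional-penalty}) is precisely what is needed to rule this out, but making the contradiction tight requires careful bookkeeping between the discrete (precision $p$, depth, heads, ranks) and continuous (norms, products) parts of $\RegStrong$, plus a clean construction of the reference sequence $T_m^\star$ from $T_\infty$ whose regularizer value converges cleanly. A secondary obstacle is controlling the extraction of $T_\infty$ at unbounded width $d$; this is where the product-function parameterization, which is independent of $d$, does the essential work by reducing the problem to convergence of a bounded collection of scalar-valued, finite-precision functions of displacement.
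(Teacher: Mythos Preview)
Your extraction of a limiting object $T_\infty$ along a subsequence, via Lemma~\ref{lemma:translation-positional} and pointwise stabilization of the finite-precision product functions, is essentially what the paper does, and your argument that this $T_\infty$ is \textsc{Periodic}, \textsc{Local}, and computes $f$ is correct. So the first conclusion is fine.

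The gap is exactly where you flagged it, and your proposed resolution does not close it. You want to compare the failing $T_m$ against a reference $T_m^\star$ built from $T_\infty$ via Lemma~\ref{lemma:limit-to-transformer}, and argue that the extra nonlocal mass in $\phi^{(m)}$ makes $\RegStrong(T_m) > \RegStrong(T_m^\star)$. But Lemma~\ref{lemma:limit-to-transformer} is a coarse construction: it adds two layers, up to $\Delta$ extra heads, and new parameter blocks, so $\RegWeak(T_m^\star)$ bears no controlled relation to $\RegWeak(T_m)$. Nothing prevents $\RegWeak(T_m)$ from being smaller than $\RegWeak(T_m^\star)$ by more than the $2^{-2p}$ penalty surplus, so no contradiction with near-minimality follows. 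The same problem arises if you try to use $T_\infty'$ (the limit along the failing subsequence) as the source of the reference.

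The paper's route avoids this entirely. Instead of producing a reference from a limit transformer, it builds competitors by \emph{restricting} already-chosen transformers $T_{\nu_i}$ (from a subsequence achieving $\liminf \RegWeak$) to shorter context lengths. Restriction leaves $\RegWeak$ \emph{exactly} unchanged and can only decrease the penalty term (\ref{eq:additional-penalty}). Combining this with the convergence $\RegStrong(T_n)\to\tilde R$ and a short $\limsup/\liminf$ computation, the paper shows that the $\phi_{l,h}^{(T_n)}$ are $\tau_\infty$-local for a \emph{single} $\tau_\infty$ uniform in $n$. Once that holds, the set $\{\tilde T_n : n\in\mathbb N\}$ of associated limit transformers is finite, and the $N_0$ conclusion follows directly: every limit transformer that is hit infinitely often must match $f$ everywhere, and after some $N_0$ only those are hit. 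No contradiction against a reference sequence is needed.
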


\begin{proof}
\rebuttal{As we will be reasoning over the $\phi_{l,h}$ functions of different limit transformers, we use a superscript, $\phi_{l,h}^{(T)}$, to indicate the relevant limit transformer.}

From the sequence $T_1, T_2, \dots$ generated by the Inference Procedure, we obtain, using Lemma~\ref{lemma:translation-positional}, {\LimitTransformer}s $\tilde{T}_1, \tilde{T}_2, \dots$ such that $\sup_i \mathcal{R}_\infty(\tilde{T}_i) < \infty$
where
\begin{equation}
    \tilde{T}_i(x,o) = T_i(x,o),\ \ \ \ \forall i, o, x; |x|+o \leq i
\end{equation}
and, in each $T_n$, $\tilde{T}_n$,
\begin{equation}
    \phi_{l,h}^{(\tilde{T}_n)}(i,j) = \vp_i^T \mK_{l,h}^T \mQ_{l,h} \vp_j
\end{equation}

\rebuttal{Each limit transformer $\tilde{T}_i$ consists of two parts: (i) the collection of its parameter vectors and matrices, (ii) the functions $\phi_{l,h}^{(\tilde{T}_i)}$. 
We will write $\mathfrak{P}(\tilde{T}_i)$ for the collection of parameter vectors and matrices of $\tilde{T}_i$.
Let $A := \sup_i \mathcal{R}_\infty(\tilde{T}_i) < \infty$.
Then $\{\mathfrak{P}(\tilde{T}) : \mathcal{R}_\infty(\tilde{T}) \leq A\}$
is finite, because $A$ bounds  (1) the number of parameters, (2) their magnitudes, (3) the precision at which they are represented.
Hence, %
only a finite number of {\LimitTransformer} parameter settings $\mathfrak{P}(\tilde{T}_i)$ will be traversed as $i \rightarrow \infty$.
The remainder of the proof is devoted to showing that, in fact, the number of limit transformers $\tilde{T}_i$ itself is finite.
For this, we need to show that each $\phi_{l,k}^{(\tilde{T}_i)}$ only traverses a finite set of distinct functions as $i\rightarrow \infty$.}
Each function $\phi_{l,k}^{(\tilde{T}_i)}$ is local; however, a priori, they might not be local for any single finite $\tau$ across the different $\tilde{T}_n$.
We will show that this is not possible, i.e., we will show that all $\phi_{l,k}^{\tilde{T}_n}$ are local for a single finite $\tau$. %
This will occupy us for the remainder of the proof.

\rebuttal{Now note that
$$\RegStrong(T_n) \in \left[\inf_{T \in U_n} (\RegStrong(T)), \frac{1}{n} + \inf_{T \in U_n} (\RegStrong(T))\right]$$
Because $\inf_{T \in U_n} \RegStrong(T)$ is bounded and monotonically increasing in $n$, $\inf_{T \in U_n} (\RegStrong(T)))$ converges to a limit, say $\tilde{R}$. 
Since $1/n \rightarrow 0$, the width of the interval converges to 0. The Squeeze Theorem then implies that $\RegStrong(T_n) \rightarrow \tilde{R}$.}

For each $\tau$ and each $n$, we consider
\begin{align*}
    D_n(\tau) &= \sum_{l,h} \sum_{i=1}^{\min(n,\tau)} |\phi_{l,h}^{(\tilde{T}_n)}(1,i)|^2 \leq {\RegStrong}(T_n)  
\end{align*}

\rebuttal{As $\phi_{l,h}^{(\tilde{T}_n)}$ has precision bounded in terms of $\RegStrong(T_n)$, $\{D_n(\tau) : n \in \mathbb{N}\}$ is a discrete set. 
An important consequence is that any accumulation point of the sequence $(D_n(\tau))_{n \in \mathbb{N}}$ must equal $D_n(\tau)$ for infinitely many $n$.}

Consider $\RegWeak(T_n)$ (Equation~\ref{eq:RegWeak}).
Let 
\begin{equation}
    R_0 := \liminf_{n\rightarrow \infty} \RegWeak(T_n)
\end{equation}
and let $\nu_1, \nu_2, \nu_3, \dots$ be such that
\begin{equation}
     \lim_{i \rightarrow \infty} \RegWeak(T_{\nu_i}) = R_0
\end{equation}
Then, for some $D_0$,
\begin{equation}
  \lim_{i \rightarrow \infty}  D_{\nu_i}(\nu_i) = D_0
\end{equation}
and
\begin{equation}
\rebuttal{\tilde{R}} =    \lim_{n\rightarrow\infty} \RegStrong(T_n) = \lim_{i\rightarrow\infty} \RegStrong(T_{\nu_i}) = R_0 + D_0
\end{equation}
Indeed,
\begin{equation}
D_0 = \limsup_{n\rightarrow \infty} D_n(n) 
\end{equation}
because\footnote{In general, if $a_n+b_n$ converges and $a_n, b_n$ are bounded, then the limit $\lim (a_n+b_n)$ equals $\limsup a_n + \liminf b_n$. For, assume $\limsup a_n + \liminf b_n > \lim (a_n+b_n)$ (similar if $>$ is replaced by $<$). Then let $i(n)$ be a subsequence such that $a_{i(n)} \rightarrow \limsup a_n$. Then $\lim(a_n+b_n) = \lim (a_{i(n)} + b_{i(n)}) = \limsup a_n + \lim b_{i(n)} \geq \limsup a_n + \liminf b_n > \lim(a_n+b_n)$, contradiction.}
\begin{equation}
    D_0+R_0 = \lim_{n\rightarrow\infty} (\RegWeak(T_n) + D_n(n)) = \liminf_{n\rightarrow \infty} \RegWeak(T_n) + \limsup_{n\rightarrow \infty} D_n(n) 
\end{equation}
Define, for each $\tau \in \mathbb{N}$,
\begin{equation}
    D_\infty(\tau) = \liminf_{i\rightarrow\infty} D_{\nu_i}(\tau)
\end{equation}
As this function is monotonically increasing, and as each $\phi_{l,h}^{(\tilde{T}_n)}$ has precision bounded in terms of $\RegStrong(T_n)$, there must be $\tau_\infty$ such that  $D_\infty(\tau_\infty) = \lim_{\tau\rightarrow \infty} D_\infty(\tau)$.

\rebuttal{Now define a sequence $T_n'$ as follows.
For each $n$, recall from the definition of $D_\infty$ that 
\begin{align*}
\liminf_{j\rightarrow\infty} D_{\nu_j}(n) = D_\infty(n) \leq D_\infty(\tau_\infty)
\end{align*}
As $\phi_{l,h}$ has bounded precision, there are infinitely many $\nu_i$ such that $D_{\nu_i}(n) = \liminf_{j\rightarrow\infty} D_{\nu_j}(n)$.
Hence, we can select $i(n) \in \mathbb{N}$ such that $\nu_{i(n)} \geq n$ and 
\begin{align*}
D_{\nu_{i(n)}}(n) = \liminf_{j\rightarrow\infty} D_{\nu_j}(n)
\end{align*}}
Then, for each $n$, define $T_n'$ as the restriction of $T_{\nu_{i(n)}}$ to positions up to $n$.
As $T_{\nu_{i(n)}}$ agrees with the behavior of $f$ up to length $\frac{\nu_{i(n)}}{2} \geq \frac{n}{2}$, we also find that $T_n'$ agrees with the behavior of $f$ up to length $\frac{n}{2}$.
Then
\begin{align*}
    \limsup_{n\rightarrow \infty} \RegStrong(T_n') =&\limsup_{n\rightarrow \infty} \RegWeak(T_n') + D_{\nu_{i(n)}}(n) \\
    =&\limsup_{n\rightarrow \infty} \RegWeak(T_{\nu_{i(n)}}) + D_\infty(\tau_\infty) \\
    =& R_0 + D_\infty(\tau_\infty) 
\end{align*}
Since $T_n$ was created by the Inference Procedure, we have
\begin{equation}
    \limsup_{n\rightarrow \infty} \RegStrong(T_n') \geq \lim_{n\rightarrow \infty} \RegStrong(T_n)
\end{equation}
On the other hand, since $\RegStrong(T_n') \leq \RegStrong(T_{\nu_{i(n)}})$, we also have
\begin{equation}
    \limsup_{n\rightarrow \infty} \RegStrong(T_n') \leq \lim_{n\rightarrow \infty} \RegStrong(T_n)
\end{equation}
giving
\begin{equation}
    \limsup_{n\rightarrow \infty} \RegStrong(T_n') = \lim_{n\rightarrow \infty} \RegStrong(T_n) = D_0 + R_0
\end{equation}
Hence, 
\begin{align*}
R_0 + D_\infty(\tau_\infty) = &\limsup_{n\rightarrow \infty} \RegStrong(T_n') \\
= & \lim_{n\rightarrow \infty} \RegStrong(T_n) \\
= & R_0 + D_0 %
\end{align*}
and $D_\infty(\tau_\infty) = D_0$.
Now assume there are infinitely many $n$ such that $\phi_{l,h}^{(T_n)}$ is not $\tau_\infty$-local, hence, infinitely many $n$ such that $D_{n}(n) \geq D_{n}(\tau_\infty) + 2^{-2p}$. Then:
\begin{align*}
 D_0 =  & \limsup_{n\rightarrow \infty} D_{n}(n)  \\
 \geq & \limsup_{n\rightarrow \infty} D_{n}(\tau_\infty) + 2^{-2p} \\
 \geq & \liminf_{i\rightarrow\infty} D_{\nu_i}(\tau_\infty) + 2^{-2p} \\
 = & D_0+ 2^{-2p}
\end{align*}
This is a contradiction.
\rebuttal{Here, the first inequality holds because $D_n(n) \geq D_n(\tau_\infty)$ whenever $n \geq \tau_\infty$, simply because $D_n(\cdot)$ is monotonically increasing for each individual $n$.
The second inequality holds because $(\nu_i)_{i\in \mathbb{N}}$ is a subsequence of $(n)_{n \in \mathbb{N}}$; hence a $\lim \sup$ over the larger sequence upper-bounds the $\lim \inf$ over the subsequence.
Overall, and as announced at the beginning of the proof, we thus have shown that the functions $\phi_{l,h}^{(\tilde{T}_n)}$ must be local for a uniform $\tau_\infty$.}

\rebuttal{Because each $\phi_{l,h}^{(\tilde{T}_n)}(i,j)$ equals an inner product $\vp_i^T \mK_{l,h}^T \mQ_{l,h} \vp_j$, all values are expressed at precision bounded by $(R_0)^4$, and are bounded in absolute value by $\leq \|\vp_i\|_2 \|\mK_{l,h}^T\| \|\mQ_{l,h}\| \|\vp_j\|_2 \leq (R_0)^4$. There are only a finite set of functions that satisfy these properties and are are local for this $\tau_\infty$.}

\rebuttal{
Above, we have remarked that each limit transformer $\tilde{T}$ consists of (i) the collection $\mathfrak{P}(\tilde{T})$ of parameter matrices and vectors, (ii) the collection of functions $\phi_{l,h}^{(\tilde{T})}(i,j)$.
Hence, we know that,  
\begin{equation}
    \mathcal{Q} := \{\tilde{T}_i : i\in\mathbb{N}\}
\end{equation} is finite.
Let $\mathcal{Q}_\infty \subseteq \mathcal{Q}$ be the set of {\LimitTransformer}s that equal $\tilde{T}_i$ for infinitely many different $i$.
By definition of the Inference Procedure, every element of $\mathcal{Q}_\infty$ is functionally equivalent to $f$ at all input lengths.
Because $\mathcal{Q}$ is finite, there is $N_0$ such that $\tilde{T}_i \in \mathcal{Q}_\infty$ for each $i \geq N_0$.
Hence, $T_i$ is functionally equivalent to $f$ at all lengths $\leq i$ as soon as $i$ exceeds the threshold $N_0$.}
\end{proof}

We now prove the theorem.

\begin{proof}[Proof of the Theorem]

Both directions are corollaries of Lemma~\ref{lemma:key-lemma}.

\paragraph{2$\Rightarrow$1:}
This directly follows from Lemma~\ref{lemma:key-lemma}.

\paragraph{1$\Rightarrow$2:}
By Lemma~\ref{lemma:limit-to-transformer}, for each $i=1,2,3, \dots$, there are $\widehat{T}_i \in \Theta_i$ such that $R := \sup_i \RegStrong(\widehat{T}_i) < \infty$ such that
    \begin{equation}
        \widehat{T}_i(x, o) = f(x,o),\ \ \ \ \forall i, o, x; |x|+o \leq i
    \end{equation}
    such that
    \begin{equation}
        \vp_i^T \mK_{l,h}^T \mQ_{l,h} \vp_j = \phi_{l,h}^{(T)}(i,j)
    \end{equation}
By \textsc{Local},
\begin{equation}
    {\RegStrong}(\widehat{T}_i) < \infty
\end{equation}
and we conclude
\begin{equation}
    \limsup_{i\rightarrow\infty} {\RegStrong}(T_i) \leq \limsup_{i\rightarrow\infty} {\RegStrong}(\widehat{T}_i) < \infty
\end{equation}
where $T_i$ refers to the sequence generated by Inference Procedure in Lemma~\ref{lemma:key-lemma}.
Lemma~\ref{lemma:key-lemma} now provides $N_0 > 0$ and a function $g$ such that for all $m > N_0$,
\begin{equation}
    T_m(x, o) = g(x), \forall x: |x|+o \leq m
\end{equation}
On the other hand, for any string $x \in \mathfrak{S}$, we have
\begin{equation}
    f(x) = T_{n}(x,0) , \forall n \geq 2|x|
\end{equation}
Hence, $f \equiv g$ and for all $m > N_0$,
\begin{equation}
    T_m(x, o) = f(x), \forall x: |x|+o \leq m
\end{equation}
\end{proof}

\subsection{Result for NoPE Transformers}

\begin{corollary}\label{cor:nope-length-gen}
For ease of the reader, we mark the differences to Theorem~\ref{thm:guarantee} in \textcolor{dblue}{blue font}.

Let $f \in \mathcal{F}(\Sigma)$.
Then the following are equivalent:
\begin{enumerate}
\item $f$ is expressible by a {\LimitTransformer}  where \textcolor{dblue}{all $\vp_i \equiv 0$, $\phi_{l,h} \equiv 0$}.
\item (Guaranteed Length Generalization) %
Consider the inference procedure from Definition~\ref{def:inference-procedure} applied to $f$ with ${\RegStrong}$ \textcolor{dblue}{while constraining all $\vp_i \equiv 0$}, generating a sequence $T_1, T_2, \dots$.
For \emph{any} such sequence, there is some $N_0$ such that, for all $m > N_0$, $T_m$ matches $f$ on all inputs of any length $k \leq m$, and $\sup_{n =1,2,3,\dots} {\RegStrong}(T_n) < \infty$.

\end{enumerate}
\end{corollary}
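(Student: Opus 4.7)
The plan is to mirror the proof of Theorem~\ref{thm:guarantee}, specializing every step to the NoPE-constrained version of the Inference Procedure. The key observation is that once we force $\vp_i \equiv 0$ on both the standard transformers and the {\LimitTransformer}s, every product of parameter matrices bracketed by positional encodings collapses to zero, which in turn forces $\phi_{l,h} \equiv 0$ in any limit object produced by Lemma~\ref{lemma:translation-positional}. So the NoPE hypothesis class and its corresponding limit objects form a matched pair, and the compactness argument in Lemma~\ref{lemma:key-lemma} can be replayed verbatim on that sub-class.

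For the direction $1 \Rightarrow 2$, suppose $f$ is expressible by a {\LimitTransformer} $T_\infty$ with all $\vp_i \equiv 0$ and all $\phi_{l,h} \equiv 0$. Then $T_\infty$ is essentially just a finite-parameter, position-agnostic transformer: applying Lemma~\ref{lemma:limit-to-transformer} (noting that both conditions \textsc{Periodic} and \textsc{Local} are trivially satisfied by $\Delta = 1$ and $\tau = 0$), we can exhibit for each $n$ a concrete $\widehat{T}_n \in \Theta_n$ with $\vp_i \equiv 0$ that matches $f$ at all lengths $\leq n$ and has $\sup_n \RegStrong(\widehat{T}_n) < \infty$; the regularizer term~\eqref{eq:additional-penalty} is identically zero. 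Because the NoPE Inference Procedure minimizes $\RegStrong$ over a subset of $\Theta_n$ that contains $\widehat{T}_n$, we obtain $\RegStrong(T_n) \leq \RegStrong(\widehat{T}_n) + \tfrac{1}{n}$ and hence $\sup_n \RegStrong(T_n) < \infty$. Then a NoPE version of Lemma~\ref{lemma:key-lemma} yields the desired length generalization.

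For the direction $2 \Rightarrow 1$, start from a NoPE sequence $T_1, T_2, \dots$ generated by the Inference Procedure with $\sup_n \RegStrong(T_n) < \infty$ and eventual length generalization. Feed this sequence through Lemma~\ref{lemma:translation-positional} to obtain limit transformers $\tilde{T}_n$. Since each $T_n$ has $\vp_i \equiv 0$, every product function $\vp_i^T \mM_1\cdots\mM_k \vp_j$ bracketed by positional encodings is zero, so the $\phi_{l,h}^{(\tilde{T}_n)}$ extracted in that lemma are identically zero, and the positional encodings of $\tilde{T}_n$ can likewise be taken to vanish. Now repeat the compactness argument of Lemma~\ref{lemma:key-lemma}: the bounded-$\RegStrong$ constraint forces the parameter blocks $\mathfrak{P}(\tilde{T}_n)$ to live in a finite set, and since $\phi_{l,h}$ and $\vp_i$ are already pinned to zero there is no residual periodic/local freedom to control. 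Hence $\{\tilde{T}_n\}$ traverses finitely many distinct limit transformers, all of them NoPE with $\phi_{l,h} \equiv 0$, and at least one of them must be functionally equivalent to $f$.

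The main technical step to double-check is that Lemma~\ref{lemma:translation-positional}, applied to a transformer with $\vp_i \equiv 0$, really produces a limit transformer with both $\vp_i \equiv 0$ and $\phi_{l,h} \equiv 0$ rather than reintroducing positional structure as an artifact of the translation. In the proof of that lemma the positional encodings of the limit object are built out of the products $\vp_i^T \mM \vp_j$ of the original model; when all these products vanish there is nothing to encode and the construction degenerates as claimed. Once this is verified the rest of the argument follows by taking everywhere ``NoPE constraint'' in place of $\Theta_n$ and dropping the \textsc{Periodic}/\textsc{Local} bookkeeping.
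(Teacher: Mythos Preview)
Your proposal is correct and follows essentially the same approach as the paper: verify that the two translation lemmas (Lemma~\ref{lemma:limit-to-transformer} and Lemma~\ref{lemma:translation-positional}) respect the NoPE constraint in both directions, then rerun the proof of Theorem~\ref{thm:guarantee} (via Lemma~\ref{lemma:key-lemma}) inside the NoPE sub-class. One small wrinkle: the statement of Lemma~\ref{lemma:limit-to-transformer} does not literally promise $\widehat{T}_n$ with $\vp_i \equiv 0$ (its construction introduces auxiliary nonzero positional encodings), so---as the paper does---you should either retrace that construction to see it degenerates, or simply observe that a {\LimitTransformer} with $\vp_i \equiv 0$ and $\phi_{l,h} \equiv 0$ is already an ordinary NoPE transformer that can be restricted to any context length $n$.
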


\rebuttal{Note that a {\LimitTransformer} where all $\phi_{l,h} \equiv 0$ is also an \emph{ordinary transformer} as defined in Section 2.
Hence, in the special case of NoPE transformers, our proof boils down to an argument using standard transformers, effectively without {\LimitTransformer}s.
In contrast, {\LimitTransformer}s are key to our proof in the more general case of APE transformers.}

\begin{proof}
Retracing the proof of Lemma~\ref{lemma:limit-to-transformer} shows that, when translating a {\LimitTransformer} to an ordinary transformer, the positional encodings can be taken to be zero when $\vp_i \equiv 0$, $\phi_{l,h} \equiv 0$ in the {\LimitTransformer}.
Retracing the proof of Lemma~\ref{lemma:translation-positional} shows that, when $\vp_i \equiv 0$ in a transformer, the resulting {\LimitTransformer} will have zero positional encodings and zero outputs for all $\phi_{l,h}$.
The proof of Theorem~\ref{thm:guarantee} then applies equally to show Corollary~\ref{cor:nope-length-gen}.
\end{proof}

\subsection{Logarithmic Communication Complexity for {\LimitTransformer}}\label{sec:comm-comp-limit-transformer}

\begin{thm}[Restated from Theorem~\ref{thm:comm-comp-limit-transformer}]
    Let $T$ be a {\LimitTransformer} satisfying \textsc{Periodic} and \textsc{Local}.
    On an input $x \in \Sigma^{2N}$, assume Alice has access to $x_{1\dots N}$ and Bob has access to $x_{N+1 \dots 2N}$.
    There is a communication protocol in which Alice and Bob exchange at most $C \log N$ bits, where $C$ depends on $T$ but not $N$ or $x$, and Bob can compute each activation in the second half, $\vy_{i}^{(l)}$ ($N+1 \leq i \leq 2N$).
    Further, $C$ is bounded linearly by $\mathcal{R}_\infty(T)$.
\end{thm}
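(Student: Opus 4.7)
The plan is to prove the bound by induction on the layer $l$, showing that after an appropriately bounded amount of communication Bob can compute $\vy_i^{(l)}$ for every $i \in \{N+1,\dots,2N\}$. Three structural features of $T$ drive the argument: (i) causality, so Alice can compute $\vy_j^{(l)}$ for every $j \leq N$ using only her own half of the input; (ii) bounded width $d$ and finite precision $p$, so activations at every layer lie in a finite set $V \subseteq \mathbb{R}^d$ of size independent of $N$; and (iii) locality of $\phi_{l,h}$ together with periodicity of $\vp_i$, which make the positional contribution to attention either trivial at long range or drawn from a bounded-variety set.

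For the inductive step, fix a single head $(l,h)$ and consider a query position $i \geq N+1$. The attention logit to a key $j \leq N$ decomposes as
\begin{equation}
a_{i,j}^{(l,h)} = (\vy_j^{(l-1)})^T \mK_{l,h}^T \mQ_{l,h}\, \vy_i^{(l-1)} + \phi_{l,h}(j,i),
\end{equation}
and by \textsc{Local} the second summand vanishes whenever $i - j > \tau$. Alice therefore partitions her indices into a ``boundary window'' $\{N-\tau+1,\dots,N\}$ of fixed size $O(\tau)$, for which she transmits the raw activations $\vy_j^{(l-1)}$ at cost $O(\tau \cdot dp) = O(1)$ bits; and the ``far'' indices $j \leq N-\tau$, for which both $a_{i,j}^{(l,h)}$ and the value contribution $\mV_{l,h}\vy_j^{(l-1)}$ depend on $\vy_j^{(l-1)}$ only through its value in $V$. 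For the latter Alice sends the counts $c_v := |\{j \leq N-\tau : \vy_j^{(l-1)} = v\}|$ for each $v \in V$, at cost $|V|\lceil \log_2(N+1)\rceil$ bits per head.

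Given the counts and the boundary values, Bob reconstructs the softmax numerator and denominator at each of his queries by evaluating one exponential per element of $V$ (weighted by $c_v$ and, for the numerator, by $\mV_{l,h} v$) plus one per boundary key, and combining these with contributions from Bob's own half which are available by the inductive hypothesis. The residual addition and the MLP in equation (\ref{eq:mlp}) are per-token operations Bob performs himself, completing the computation of $\vy_i^{(l)}$. Summing over $L$ layers and $H$ heads yields total communication of order $L H |V| \log N + L H \tau dp$, i.e.\ $C \log N$ for some $C$ depending only on $T$.

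The main obstacle is obtaining the claimed \emph{linear} dependence of $C$ on $\mathcal{R}_\infty(T)$, since $|V|$ could a priori scale exponentially in the width $d$, a quantity not directly constrained by the regularizer. The remedy is to group ``far'' positions not by their raw activations in $\mathbb{R}^d$ but by the low-dimensional projections $\mK_{l,h}\vy_j^{(l-1)}$ and $\mV_{l,h}\vy_j^{(l-1)}$ that actually enter the attention arithmetic; the number of distinct projected values is controlled by $\mathrm{rank}(\mV_{l,h})$, $\|\mK_{l,h}^T\mQ_{l,h}\|$, and the precision $p$, all of which enter $\RegStrong$ and hence $\mathcal{R}_\infty$. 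Combined with the bound on the period $\Delta$ coming from Lemma~\ref{lemma:deducing-periodicity-fidi} to control positional variety, this yields the linear dependence.
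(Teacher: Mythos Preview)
Your inductive skeleton is the same as the paper's, but there is a real gap in the precision accounting. You assert that ``bounded width $d$ and finite precision $p$'' force the activations $\vy_j^{(l-1)}$ to lie in a set $V$ whose size is independent of $N$. That is false under the paper's model: only the \emph{parameters}, the attention logits, and the outputs of $\exp(\cdot)$ are at fixed $p$-bit precision. The activations themselves are formed by dividing by a softmax denominator that is a sum of up to $2N$ fixed-precision numbers, so they carry $O(\log N)$ fractional bits (the paper states this explicitly as the first step of its proof). Consequently the number of distinct values of $\vy_j^{(l-1)}$, and of $\mV_{l,h}\vy_j^{(l-1)}$, can grow polynomially in $N$, and your scheme of sending one count $c_v$ per distinct activation does not give an $O(\log N)$ bound. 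Your final paragraph, which tries to shrink $|V|$ via the rank of $\mV_{l,h}$ and norm bounds, does not help: those quantities control dimension and magnitude, not precision, and a rank-one image of a $O(\log N)$-precision vector still has $O(\log N)$ precision.

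The paper repairs exactly this point. It partitions Alice's positions not by their exact activations but by the $p'$-bit \emph{rounding} $\mathrm{Round}_{p'}[\vy_j^{(l-1)}]$ (together with a boundary index for the local positional term). Because the attention logit is itself rounded to $p$ bits, two keys in the same rounded class produce the same logit against any query, so the classes are coarse enough to be few yet fine enough to determine attention weights. The crucial difference from your protocol is what Alice sends per class: not just the count $|\mathcal{A}_r|$, but also the \emph{sum} $\sum_{j\in\mathcal{A}_r}\mV_{l,h}\vy_j^{(l-1)}$ of the unrounded value vectors. This single vector is at $O(\log N)$ precision and bounded in norm by $\mathcal{R}_\infty(T)$, hence costs $O(\log N)$ bits; and because all $j$ in a class receive the same attention weight, the sum is exactly what Bob needs for the softmax numerator. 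Your scheme of sending counts alone would work only if the value vector were constant on each class, which is precisely what fails when activations are not fixed-precision.
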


\begin{proof}
We first establish that all activations $\vy_i^{(l)}$ are computed at \rebuttal{$O(\log N)$} precision. This holds because (i) parameters are at fixed precision, (ii) the output of $\exp(\cdot)$ in the softmax attention computation is computed at fixed fractional precision, (iii) \rebuttal{and hence attention weights can be represented at $O(\log N)$ precision}.

We first consider the attention logits, in the case where $j < N \leq i$:
\begin{align*}
    \attLogit{i}{j}^{(l,h)} = \operatorname{Round}_{p}[(\vy^{(l-1)}_j)^T \mK_{l,h}^T\mQ_{l,h} \vy^{(l-1)}_i + \phi_{l,h}(i,j)]
\end{align*}
where $\operatorname{Round}_{p}[\dots]$ rounds each entry to the closest number with $p$ fractional bits. It is certainly sufficient to have access to
\begin{align*}
    \attLogit{i}{j}^{(l,h)} = \left(\operatorname{Round}_{p'}[\vy^{(l-1)}_j]\right)^T \mK_{l,h}^T\mQ_{l,h} \vy^{(l-1)}_i + \operatorname{Round}_{p'}[\phi_{l,h}(i,j)]
\end{align*}
where $p'$ depends on $p$ and the largest singular value of $\mK_{l,h}^T\mQ_{l,h}$, which is a finite constant.
We can thus partition the positions $j=1,\dots,N-1$ into a bounded number of sets, indexed by
\begin{enumerate}
    \item $\operatorname{Round}_{p'}[\vy^{(l-1)}_j]$
    \item $\max(N-j, N-L)$ where $L = \max\{k : \phi_{l,h}(1,k) \neq 0\}$.
\end{enumerate}
\rebuttal{Let $\mathcal{K} \in \mathbb{N}$ be the number of these sets; we then write these sets as $\mathcal{A}_1, \dots, \mathcal{A}_{\mathcal{K}}$. Note that, while these sets are always disjoint, their elements are input-dependent as they depend on the activations $\vy^{(l-1)}_j$.}

Due to the finite precision rounding of logits and the locality of positional relations, we can maintain a finite set of keys and queries (though not values). This is fundamental to getting a logarithmic communication bound.

We show the claim by induction over the layers.

We can write
\begin{align*}
    \vY_i^{(l)} =&  \vy_i^{(l-1)} + \sum_{h=1}^H \frac{\sum_{j=1}^i \exp(\log |x| \cdot a^{(l,h)}_{i,j}) \mV_{l, h} \vy_j^{(l-1)}}{\sum_{j=1}^i \exp(\log |x| \cdot \attLogit{i}{j})} \\
\end{align*}
The residual stream is known to Bob by inductive hypothesis.
We need to understand the term inside the sum. The green terms are fully known to Alice, the blue ones are fully known to Bob by inductive hypothesis:
\begin{align*}
  &  \frac{\sum_{j=1}^i \exp(\log |x| \cdot a^{(l,h)}_{i,j}) \mV_{l, h} \vy_j^{(l-1)}}{\sum_{j=1}^i \exp(\log |x| \cdot \attLogit{i}{j})} \\
 = &  \frac{\color{dgreen} \sum_{j=1}^{N-1} \exp(\log |x| \cdot a^{(l,h)}_{i,j}) \mV_{l, h} \vy_j^{(l-1)}}{{\color{dgreen} \sum_{j=1}^{N-1} \exp(\log |x| \cdot \attLogit{i}{j})}+ {\color{dblue} \sum_{j=N}^i \exp(\log |x| \cdot \attLogit{i}{j})}} \\
 & + \frac{\color{dblue} \sum_{j=N}^i \exp(\log |x| \cdot a^{(l,h)}_{i,j}) \mV_{l, h} \vy_j^{(l-1)}}{{\color{dgreen} \sum_{j=1}^{N-1} \exp(\log |x| \cdot \attLogit{i}{j})} + {\color{dblue} \sum_{j=N}^i \exp(\log |x| \cdot \attLogit{i}{j})}} \\
\end{align*}
\rebuttal{Alice thus needs to communicate the green terms.
More formally, for every set $r=1,\dots, \mathcal{K}$, Alice communicates
\begin{enumerate}
    \item the number of relevant positions, $|\mathcal{A}_r| \in \{1, \dots, N-1\}$
    \item $\sum_{i \in \mathcal{A}_r} \mV_{l,h} \vy_{j}^{(l-1)}$.
\end{enumerate}
For each $r$, communicating (1) takes $O(\log N)$ bits.
The vectors in (2) are bounded in norm by the $l$-th power of the maximum spectral norm of any parameter matrix, times the maximum $\ell_2$ norm of any parameter vector. Overall, this is bounded in terms of $R_\infty(T)$, thus $O(1)$ for fixed $T$.
Further, each $\vy_{j}^{(l-1)}$ is expressed in $O(\log N)$ precision as shown above.
Expressing (2) requires $O(\log N)$ fractional bits (as the sum will not need more fractional bits than the individual vectors) and $O(\log N)$ integer bits (due to the bound on the norm).
Overall, Alice needs to communicate $\mathcal{K} \cdot O(\log N)$ bits. As $\mathcal{K}$ is independent of $N$, this is $O(\log N)$ for a fixed {\LimitTransformer} $T$.
}

Alice can partition the positions into a bounded number of partitions, and for each of them needs to transfer the number of positions in that partition.

\end{proof}

\begin{corollary}[Restated from Corollary~\ref{corr:comm-comp-no-gen}]
    The following problems are not expressible by {\LimitTransformer}s satisfying \textsc{Periodic} and \textsc{Local}: (1) copying strings with repeated n-grams, (2) addition of $n$-digit numbers.

\end{corollary}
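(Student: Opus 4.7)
The plan is to invoke Theorem~\ref{thm:comm-comp-limit-transformer} and derive a contradiction via a pigeonhole counting argument: in both tasks, the next-token predictions at Bob's positions jointly encode far more information about Alice's half than the permitted $C \log N$ bits, so no {\LimitTransformer} satisfying \textsc{Periodic} and \textsc{Local} can express them uniformly in $N$.

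For task (1), I would fix any alphabet $\Sigma$ with $|\Sigma| \geq 2$ and consider inputs of length $2N$ of the form $\$\,x_1 \cdots x_{N-1}\,\#\,y_1 \cdots y_{N-1}$ with the target $y_k = x_k$, so an autoregressive model must reproduce the source at the copy slots; since no restriction is placed on repeated $n$-grams, the source is free to range over all of $\Sigma^{N-1}$. Suppose, for contradiction, that some {\LimitTransformer} $T$ satisfying \textsc{Periodic} and \textsc{Local} computes this function at all input lengths. Hand Alice positions $1, \dots, N$ and Bob positions $N+1, \dots, 2N$. By Theorem~\ref{thm:comm-comp-limit-transformer}, Alice sends at most $C\log N$ bits and Bob then recovers every second-half activation, hence every next-token logit $\mU \vy_i^{(L)}$, hence the argmax prediction at each copy slot, and therefore the entire source $x_1, \dots, x_{N-1}$. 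So Alice's message determines $x$, giving an injection from $\Sigma^{N-1}$ into a set of size $2^{C\log N} = N^{C}$. For $N$ large enough this is impossible; two distinct sources $x \neq x'$ must produce identical messages and hence identical Bob-side outputs, contradicting correctness.

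For task (2), I would use the analogous template for $n$-digit addition: encode the input as $a_{n-1}\cdots a_0 \,+\, b_{n-1}\cdots b_0 \,=\, \square \cdots \square$, padded so that the overall length is $2N$ with $n = \Theta(N)$, and arrange the halves so that Alice holds the digits of $a$ while Bob holds $b$ together with the answer slots. The predictions Bob must produce spell out $a+b$, from which he recovers $a$ by subtracting his own $b$. The same pigeonhole argument applied to the $2^{n}$ possible values of $a$ again contradicts the $O(\log N)$ bound.

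The main fiddly step will be the input layout: the positions of source, separator, summands, and answer slots must be chosen so that the halves of $x \in \Sigma^{2N}$ split cleanly in the way Theorem~\ref{thm:comm-comp-limit-transformer} demands, and so that the offset-invariance stipulated in Section~\ref{sec:model-transformers} is respected at each chosen $N$. This is handled by padding with blanks and, if needed, shifting indices by a constant; no new idea beyond the clean counting argument above is required.
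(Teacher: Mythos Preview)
Your overall strategy---invoke Theorem~\ref{thm:comm-comp-limit-transformer} and derive a contradiction via counting---is the right one, but the pigeonhole step as written has a real gap. In your copying layout the full input is $\$\,x\,\#\,y$ with $y=x$, so Bob's half is $\#\,x$. When you conclude ``Alice's message determines $x$'' you have overlooked that Bob reconstructs the second-half activations from the message \emph{together with} his own input; since his input already contains $x$, nothing forces the message to encode it. Concretely, if $x\neq x'$ yield the same message, Bob's inputs on the two instances $\$\,x\,\#\,x$ and $\$\,x'\,\#\,x'$ differ, so his computed activations may legitimately differ too---no contradiction arises. Your addition argument inherits the same flaw: if Bob holds $b$ together with the correctly filled answer slots, he already knows $a+b$ and hence $a$ without any help from Alice.

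The paper closes this gap by reducing to the deterministic communication complexity of string equality. Alice holds $u$, Bob holds an \emph{independent} string $v$, and they run the transformer on $\$\,u\,\#\,v$. At position $N{+}1$ the causal prefix $\$\,u\,\#$ is a valid copying prefix regardless of $v$, so the prediction there must be $u_1$; Bob compares with $v_1$. If they agree, the prefix at position $N{+}2$ is $\$\,u\,\#\,u_1$, again valid, forcing prediction $u_2$; and so on inductively. Thus Bob decides whether $u=v$ from a $C\log N$-bit message, contradicting the standard $\Omega(N)$ lower bound for equality. For addition the paper simply observes that adding $0$ to a number is copying, so the bound transfers immediately---cleaner than your direct attack, which would need the same repair. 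Your argument can be salvaged along exactly these lines, but the key move---decoupling Bob's input from Alice's and exploiting that valid copying prefixes still force the correct predictions position by position---is absent from the proposal.
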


\begin{proof}
Formally, we define \emph{copying} as the task of, given a prefix $\$x\#$, autoregressively predicting $x$. 
Copying \emph{with repeated n-grams} means that there is no restriction on the repetition of consecutive subspans of $x$ of any length; this is in contrast to copying tasks with restrictions on the repetition of n-grams (for some $n$) in $x$ \citep{JelassiBKM24, zhou2023algorithms}, which we study separately (Appendix~\ref{app:task-definitions}).

Formally, we define \emph{addition} as the task of, given a prefix $\$x+y=$, where $x, y$ are binary strings, to output the sum of the numbers denoted by $x, y$ in binary.

    The communication complexity lower bound for copying follows from a standard communication complexity lower bound for determining string equality.
The bound follows for addition since the special case of adding 0 to a number amounts to copying.
\end{proof}

\begin{remark}
Analogous bounds follow for various other algorithmic and formal language problem. For instance, the special case of multiplying with 1 amounts to copying; hence, such a bound holds for multiplication.
For the unbounded-depth Dyck over two bracket types, we can consider a word of the form $(_{i_1} \dots (_{i_N} )_{j_N} \dots )_{j_1}$, which is in the Dyck language if and only if $i_k=j_k$ for all $k$, again allowing a reduction to the communication complexity lower bound for determining string equality.
\end{remark}

\subsection{Statement of Main Theorem for Arbitrary Training Lengths}\label{app:arbitrary-training-length}
Our main theorem considers generalization from length $\frac{n}{2}$ to length $n$.
Here, we discuss an alternative version applying to arbitrary scaling of training vs testing lengths.
In particular, in such a setup, we explicitly obtain \emph{failure of length generalization for inexpressible functions}, though potentially requiring testing on lengths more than twice the lengths used in training.
We use the following definition:

\begin{defin}
    A \emph{training length} is a function $t : \mathbb{N} \rightarrow \mathbb{N}$  satisfying $\lim_{t \rightarrow \infty} t(n) = +\infty$ and $t(n) \leq n$ for all $n$.

    If $t(n)$ is a training length, then the $t(n)$-Inference Procedure determines $T_n \in \Theta(n)$ to match $f$ at all inputs of lengths $\leq t(n)$ while minimizing $\RegStrong(T_n)$ up to $\frac{1}{n}$.

    The special case of  $t(n) = \frac{n}{2}$ is the  Inference Procedure from Definition~\ref{def:inference-procedure}.
\end{defin}
We then state:
\begin{thm}\label{thm:alternative-result}
    Let $f \in \mathcal{F}(\Sigma)$.
    The following are equivalent:
    \begin{enumerate}
        \item $f$  is expressible by a {\LimitTransformer} satisfying \textsc{Periodic} and \textsc{Local}.
        \item Let $t(n)$ be any training length. Then the $t(n)$-Inference Procedure  will output solutions $T_1, T_2, \dots$ such that, for some $N_0$, for all $m>N_0$, $T_m$ matches $f$ at all lengths $\leq m$. 

        \emph{Intuitively, this says that, when selected to fit the behavior of $f$ on sufficiently long inputs of length $t(n)$, the output of the Inference Procedure will generalize to unboundedly longer inputs of length $n$, where $n$ can be arbitrarily larger than $t(n)$.}
    \end{enumerate}
\end{thm}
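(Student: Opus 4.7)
The plan is to extend Theorem~\ref{thm:guarantee} to arbitrary training lengths by adapting Lemma~\ref{lemma:key-lemma} and using its contrapositive. For direction (1) $\Rightarrow$ (2), expressibility of $f$ provides, via Lemma~\ref{lemma:limit-to-transformer}, a sequence $\widehat{T}_i \in \Theta_i$ matching $f$ on all inputs of length $\leq i$ with $\sup_i \RegStrong(\widehat{T}_i) < \infty$; each $\widehat{T}_i$ in particular matches $f$ on inputs of length $\leq t(i)$, so it belongs to the modified hypothesis set $U_i$, yielding $\sup_n \RegStrong(T_n) < \infty$. The compactness argument of Lemma~\ref{lemma:key-lemma} then carries over with one modification: when choosing the restricted sequence $T_n'$, I would require $\nu_{i(n)}$ to satisfy not only $\nu_{i(n)} \geq n$ and $D_{\nu_{i(n)}}(n) = \liminf_j D_{\nu_j}(n)$ but also $t(\nu_{i(n)}) \geq t(n)$. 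The last condition is compatible because $t(k) \to \infty$ forces $t(k) \geq t(n)$ for all sufficiently large $k$, while the liminf condition holds for infinitely many $j$ by the finite-precision discreteness of $D_j(n)$. The remainder of the argument --- the locality forcing and the finiteness of traversed {\LimitTransformer}s --- proceeds unchanged, and eventual matching follows.

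For direction (2) $\Rightarrow$ (1), I argue the contrapositive: assuming $f$ is not expressible, I construct a training length $t$ violating (2). Applying Theorem~\ref{thm:guarantee} with $t_0(n) := n/2$: either the matching property already fails for $t_0$ (violating (2) directly), or $\sup_n \RegStrong(T_n^{(t_0)}) = +\infty$, in which case I let $R_n := \inf_{T \in U_n^{(t_0)}} \RegStrong(T) \to \infty$, where $U_n^{(k)} := \{T \in \Theta_n : T \text{ matches } f \text{ on inputs of length} \leq k\}$. Then define the adversarial training length
\begin{equation*}
  t(n) := \max\bigl\{k \leq n : \inf_{T \in U_n^{(k)}} \RegStrong(T) \leq \sqrt{R_n}\bigr\}.
\end{equation*}
For large $n$ we have $t(n) < n/2$ since the infimum at $k = n/2$ is exactly $R_n > \sqrt{R_n}$. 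Suppose (2) held for this $t$: then eventually $T_n^{(t)}$ matches $f$ at all lengths $\leq n$, placing it in $U_n^{(t_0)}$ and giving $R_n \leq \RegStrong(T_n^{(t)}) \leq \sqrt{R_n} + 1/n$. Rearranging yields $R_n - \sqrt{R_n} \leq 1/n$, which contradicts $R_n \to \infty$, so (2) must fail for this $t$.

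The main obstacle is verifying $t(n) \to \infty$ in this construction, which reduces to showing that $\inf_{T \in U_n^{(L)}} \RegStrong(T)$ stays bounded uniformly in $n$ for each fixed $L$. I would establish this by exhibiting, for each $L$, a NoPE construction ($\vp_i \equiv 0$) whose width, depth, and precision depend only on $L$: a causal transformer with zero positional encodings can recover position within the input by counting from the \$ symbol via softmax attention with the $\log |x|$ scaling, so a sufficiently wide MLP can implement a lookup table for $f$ on the finite set $\bigcup_{k \leq L} \Sigma^k$. Offset-invariance is trivial since $\vp_i \equiv 0$ makes all product functions involving positional encodings vanish, and the penalty term (\ref{eq:additional-penalty}) is identically zero. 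Since $R_n \to \infty$, for every fixed $L$ we eventually have $\sqrt{R_n}$ exceeding the constant $C(L)$ bounding this construction's regularizer, so $t(n) \geq L$; taking $L$ arbitrarily large gives $t(n) \to \infty$, completing the contradiction.
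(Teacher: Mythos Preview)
Your direction (1)$\Rightarrow$(2) matches the paper's claim that the proof of Theorem~\ref{thm:guarantee} ``remains valid without changes''; your observation that one must additionally enforce $t(\nu_{i(n)}) \geq t(n)$ when selecting the restricted sequence $T_n'$ is correct and a bit more careful than the paper's phrasing, since $t$ need not be monotone.

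Your direction (2)$\Rightarrow$(1) takes a genuinely different route from the paper. The paper argues the contrapositive by first establishing a claim $(\dagger)$: if $f$ is not expressible, then \emph{every} sequence $T_n \in \Theta_n$ matching $f$ at all lengths $\leq n$ satisfies $\liminf_n \RegStrong(T_n) = \infty$ (proved by retracing Lemma~\ref{lemma:key-lemma}, but for an arbitrary matching sequence rather than one produced by the inference procedure). From $(\dagger)$ the paper recursively chooses $n_k > n_{k-1}$ so that no $T\in\Theta_{n_k}$ with $\RegStrong(T)\leq U_k+1$ matches $f$ at length $n_k$, and sets $t(n)=\max\{k:n_k\leq n\}$. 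You instead black-box Theorem~\ref{thm:guarantee} to obtain $R_n=\inf_{U_n^{(n/2)}}\RegStrong\to\infty$ in the nontrivial case, define $t$ by a $\sqrt{R_n}$ threshold, and derive the clean contradiction $R_n\leq\sqrt{R_n}+1/n$. Your route is more modular; the paper's is more direct and avoids the extra case split. Both arguments hinge on the same auxiliary fact: for each fixed $L$, $\sup_n \inf_{T\in U_n^{(L)}}\RegStrong(T)<\infty$.

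The paper merely asserts this fact (its $\hat{T}_{k,n}$ with $\sup_n\RegStrong(\hat T_{k,n})<\infty$). Your NoPE sketch is an attempt to prove it, and it is the weakest link in your argument: getting a NoPE transformer to \emph{exactly} match $f$ on all prefixes of length $\leq L$ requires retrieving each $x_j$ at position $i$ via position-selective attention through imperfect softmax, and then a lookup MLP over the resulting finite set of activations --- all doable, but the one-line sketch does not cover it (in particular ``recover position, then lookup'' omits how position $i$ obtains $x_1,\dots,x_i$). A cleaner justification, and closer to what the paper implicitly intends: since the $n/2$-IP is well-defined, $U_{2L}^{(L)}\neq\emptyset$; take any $T_L^\star\in U_{2L}^{(L)}$, convert it to a {\LimitTransformer} via Lemma~\ref{lemma:translation-positional}, zero out $\phi_{l,h}$ at distances $\geq 2L$ (this does not change behavior on inputs of length $\leq L$ and makes it \textsc{Local}), then apply Lemma~\ref{lemma:limit-to-transformer} to obtain $T_n''\in U_n^{(L)}$ with $\sup_n \RegStrong(T_n'')<\infty$.
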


\begin{corollary}
    Assume $f \in \mathcal{F}(\Sigma)$ is not expressible by a {\LimitTransformer} satisfying \textsc{Periodic} and \textsc{Local}.
    Then, for some training length $t(n)$, the $t(n)$-Inference Procedure outputs a sequence $T_n$ where infinitely many $T_n$ fail to match $f$ at length $n$.
\end{corollary}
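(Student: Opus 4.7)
The plan is to deduce the Corollary from Theorem~\ref{thm:alternative-result} via a truncation construction that shifts failures from intermediate lengths to the context length itself.

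First, I would invoke the contrapositive of Theorem~\ref{thm:alternative-result}: since $f$ is not expressible, condition~(2) must fail, so some training length $t$ produces a sequence $T^t_m$ which does not eventually match $f$ on all lengths $\leq m$. Extracting witnesses yields an infinite sequence $(m_i, k_i)$ with $k_i \leq m_i$ and $T^t_{m_i}$ failing at length $k_i$. If infinitely many $k_i$ coincide with $m_i$, the Corollary is immediate with $t^* := t$. Otherwise I pass to a subsequence where $k_i < m_i$, and since $T^t_{m_i}$ matches $f$ on all lengths $\leq t(m_i)$, we necessarily have $t(m_i) < k_i$. After a further subsequence, take the $k_i$ strictly increasing.

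Next, I would define a modified training length by setting $t^*(k_i) := t(m_i)$, extending to other $n$ so that $t^*(n) \leq n$ and $t^*(n) \to \infty$ (the latter using $t(m_i) \to \infty$). The truncation $\overline{T}_i \in \Theta_{k_i}$ of $T^t_{m_i}$ is obtained by keeping only the first $k_i$ positional encodings and inheriting all other parameters; translation invariance of product functions, finite-precision constraints, and the input-output behavior on strings of length $\leq k_i$ all carry over from $T^t_{m_i}$. Thus $\overline{T}_i$ matches $f$ on lengths $\leq t^*(k_i)$, fails at $k_i$, and satisfies $\RegStrong(\overline{T}_i) \leq \RegStrong(T^t_{m_i})$, so $\overline{T}_i$ lies in the feasible set of the $t^*$-Inference Procedure at context $k_i$ and bounds the infimum there by $\RegStrong(T^t_{m_i})$.

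The main obstacle is to conclude that the IP minimizer $T^{t^*}_{k_i}$ itself fails at $k_i$ for infinitely many $i$, rather than being a distinct transformer that happens to match $f$ at $k_i$ with an even smaller regularizer. I would address this by contradiction: suppose $T^{t^*}_{k_i}$ matches $f$ at $k_i$ for all but finitely many $i$. Then either (a) $T^{t^*}_{k_i}$ matches $f$ at all lengths $\leq k_i$ for infinitely many $i$, in which case re-running the compactness argument from Lemma~\ref{lemma:key-lemma} adapted to the training length $t^*$ yields a {\LimitTransformer} satisfying \textsc{Periodic} and \textsc{Local} that expresses $f$, contradicting non-expressibility; or (b) $T^{t^*}_{k_i}$ fails at some intermediate length $\ell_i \in (t^*(k_i), k_i)$, in which case I apply the truncation construction again at the pair $(k_i, \ell_i)$. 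Each such iteration strictly shortens the failure length while keeping it above the training length, so the descent terminates at a context length equal to its own failure length. The hardest technical point is verifying that this iterative truncation can be packaged into a single well-defined training length $t^*$, and that the regularizer bounds inherited at each truncation step remain compatible with the approximate-minimization requirement in Definition~\ref{def:inference-procedure}.
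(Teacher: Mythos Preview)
Your approach has a genuine gap in case~(a). To apply the compactness argument from Lemma~\ref{lemma:key-lemma} (or the variant in the footnote to the proof of Theorem~\ref{thm:alternative-result}), you need the regularizer of the sequence $T^{t^*}_{k_i}$ to have $\liminf_i \RegStrong(T^{t^*}_{k_i}) < \infty$. Your only bound is $\RegStrong(T^{t^*}_{k_i}) \leq \RegStrong(T^t_{m_i}) + 1/k_i$, but you have no control over $\RegStrong(T^t_{m_i})$: the original Inference Procedure at context $m_i$ only has to match $f$ up to length $t(m_i)$, and since $t(m_i)\to\infty$, the minimal regularizer for this may diverge (indeed, one can show that the quantities $U_k$ in the paper's construction must diverge when $f$ is not expressible). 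So case~(a) gives you a fully-matching subsequence whose regularizer you cannot bound, and no contradiction follows. The iterative descent in case~(b) inherits the same problem at every level, and the difficulty you flag---packaging the descent into a single training length---is real and compounds this.

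The paper's argument avoids the truncation entirely by using the key fact~($\dagger$) directly: non-expressibility forces \emph{every} sequence $T_n \in \Theta_n$ that matches $f$ at all lengths $\leq n$ to have $\liminf_n \RegStrong(T_n) = \infty$. Given this, for each $k$ one first fixes a finite bound $U_k$ on the regularizer needed to match $f$ only up to length $k$ (uniformly in the context size), then chooses $n_k$ large enough that no $T\in\Theta_{n_k}$ with $\RegStrong(T)\leq U_k+1$ can match $f$ at length $n_k$---such $n_k$ exists precisely by~($\dagger$). Setting $t(n):=\max\{k:n_k\leq n\}$, the Inference Procedure at $n=n_k$ is forced to output something with $\RegStrong\leq U_k+1/n_k$, which therefore fails at length $n_k$. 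The failure is built in at the context length from the start; no truncation or transfer is needed.
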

\begin{remark}
    There are two important differences compared to Theorem~\ref{thm:guarantee}.
    First, the second condition refers to length generalization for all arbitrary training lengths $t(n)$, not specifically training length $\frac{n}{2}$.
    Second, the second condition does not ask for $\sup_i \RegStrong(T_i) < \infty$, but simply asks for $T_n$ to ultimately length generalize.
\end{remark}

\begin{proof}[Proof of Theorem~\ref{thm:alternative-result}]
    1$\Rightarrow$2
The proof of Theorem~\ref{thm:guarantee} remains valid in this direction without any changes, as it does not specifically rely on the training lengths being half the overall context size.

    2$\Rightarrow$1
    We show the contrapositive.
    Assume $f$ is not expressible by a Limit Transformer satisfying \textsc{Periodic} and \textsc{Local}.
    Then, using the same arguments as in the proof of Lemma~\ref{lemma:key-lemma}\footnote{Assume there is a sequence $T_n \in \Theta_n$ that matches $f$ and has $\liminf_{n\rightarrow \infty} \RegStrong(T_n) < \infty$. Translating each element to a {\LimitTransformer} leads to a sequence where, except perhaps for the functions $\phi_{l,h}$, only a finite number of settings will be traversed. Now, as in the proof of Lemma~\ref{lemma:key-lemma}, one can use $D_\infty(\tau)$ to construct a sequence of {\LimitTransformer}s that are local for a single $\tau$. The important difference to Lemma~\ref{lemma:key-lemma} is that here we are not assuming the sequence $(T_n)_n$ to be constructed by the inference procedure, but we nonetheless obtain such a sequence.}, any sequence $T_n \in \Theta_n$ that matches $f$ will have $\liminf_{n\rightarrow \infty} \RegStrong(T_n) = \infty$ ($\dagger$).
    Now consider $k \in \mathbb{N}$; we will assign every $k$ a number $n_k > k$, starting with $n_0 = 0$.
    For each $n > k$, there is $\hat{T}_{k,n} \in \Theta_n$ that matches $f$ up to length $k$ while $U_k := \sup_{n} \RegStrong(\hat{T}_{k,n}) < \infty$ for every fixed $k$.
    Now select $n_k > n_{k-1}$ such that no $T \in \Theta_{n_k}$ with $\RegStrong(T) \leq U_k+1$ matches $f$ at length $n_k$; this is possible because of ($\dagger$).
We thus obtain a sequence $(k, n_k) \in \mathbb{N} \times \mathbb{N}$.
By construction, there are infinitely many distinct different values $n_k$.
Then define
\begin{equation}
t(n) := \max \left(\{k : n_k \leq n\} \right)
\end{equation}
    Then $t(n)$ is a training length.
    By definition, the $t(n)$-Inference Procedure will, whenever $n$ is one of the $n_k$'s, find a transformer $T_{n_k}$ with $\RegStrong(T_{n_k}) \leq U_k + \frac{1}{n_k}$ that fails to match $f$ at length $n=n_k$.
\end{proof}

\subsection{Corollary about Expressivity}\label{sec:limit_expressivity}

We have introduced {\LimitTransformer}s as a formalism for distilling computations of transformers performing on longer and longer sequences into a single limiting object, helping understand length generalization. Here, we show that they also provide a simple lower bound for the expressiveness of causal transformers across input lengths:
\begin{corollary}\label{cor:expressivity}
Let $f \in \mathcal{F}(\Sigma)$.
Assume $f$ is expressible by a {\LimitTransformer} satisfying \textsc{Periodic} and \textsc{Local}.
Then at each input length $N$, there exists a transformer $T_N$ performing $f$ on all inputs of length up to $N$ such that:
\begin{enumerate}
    \item The parameters of $T_N$ are expressed at $p$ bit precision, with $p$ independent of $N$
    \item The number of heads and layers of $T_N$ is bounded independently of $N$.
    \item The width $d$ of $T_N$ is bounded as $O(N)$.
\end{enumerate}
\end{corollary}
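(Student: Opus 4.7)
The plan is to invoke Lemma~\ref{lemma:limit-to-transformer} from the main paper, which furnishes an explicit translation from a {\LimitTransformer} $T_\infty$ representing $f$ to a sequence of ordinary transformers $T_n \in \Theta_n$ reproducing the behavior of $T_\infty$ on all inputs of length at most $n$. Setting $T_N$ to be the transformer produced by this translation at context size $N$, the three claimed bounds follow from inspection of the construction, so the core of the corollary is really a bookkeeping exercise over what the translation lemma guarantees.

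First I would verify (1) and (2), which transfer by inheritance. The {\LimitTransformer} $T_\infty$ has a fixed precision $p$, a fixed depth $L$, a fixed head count $H$, and a fixed MLP hidden dimension. The translation of Lemma~\ref{lemma:limit-to-transformer} does not introduce new layers or heads; it faithfully replays the attention and MLP computations of $T_\infty$, and the parameters of the resulting $T_N$ are expressible at the same precision $p$, all independently of $N$.

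The substantive work is property (3), the width bound $d = O(N)$. The {\LimitTransformer} encodes positional information in two ways: through bounded-width, bounded-precision periodic encodings $\vp_i$ (which can be carried over with constant width by \textsc{Periodic}, since only $\Delta$ distinct values occur), and through additive attention-logit terms $\phi_{l,h}(i,j)$ that a standard transformer must realize as inner products $\vp_i^T \mK_{l,h}^T \mQ_{l,h} \vp_j$. Under \textsc{Local} only offsets $|i-j|\leq \tau$ contribute nontrivially; nevertheless, an ordinary APE transformer still has to supply, for each of the $N$ positions, an encoding rich enough that these inner products take the prescribed values. The natural construction augments each positional encoding with a block of dimensions that identifies the position well enough for appropriate fixed-rank $\mK_{l,h}^T\mQ_{l,h}$ matrices to reproduce the required finite palette of $\phi_{l,h}$-values; this yields width proportional to $N$.

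The main obstacle I anticipate is doing this consistently with the translation-invariance requirement of $\Theta_N$ (Definition~\ref{def:hypothesis-class}) while keeping parameter spectral and Frobenius norms, as well as the $\|\vp_i\|_2$ bounds from $\RegStrong$, uniformly controlled as $N$ grows. A naive one-hot or monomial encoding trivially realizes the required inner products but may need nontrivial care to simultaneously satisfy the relative-position identity $\vp_i^T M \vp_j = \vp_{i+\Delta}^T M \vp_{j+\Delta}$ for all relevant products $M$. Since this reconciliation is exactly the content of Lemma~\ref{lemma:limit-to-transformer}, invoking that lemma discharges the remaining work and the corollary follows directly.
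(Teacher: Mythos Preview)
Your overall strategy is correct and matches the paper's: the corollary is obtained by invoking Lemma~\ref{lemma:limit-to-transformer} and reading off the relevant complexity bounds from its construction. The paper's proof is even terser than yours---it notes that the lemma produces $T_N$ with $\sup_N \RegStrong(T_N)<\infty$, which by Definition~\ref{def:regularizer} automatically bounds precision, $L$, and $H$, and then observes that the explicit construction in the lemma's proof has width $\hat d = d + N + 3\Delta + 2 = O(N)$.

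There is one factual slip worth correcting. You write that the translation ``does not introduce new layers or heads; it faithfully replays the attention and MLP computations of $T_\infty$.'' That is not what Lemma~\ref{lemma:limit-to-transformer} does: its construction sets $\hat L = L+2$ and $\hat H = \max\{1,H,\Delta\}$, adding two preprocessing layers (to route modular positional information through a value matrix and to determine position relative to \SOS) and potentially extra heads (up to $\Delta$ of them). The conclusion of item~(2) is still fine, since $L+2$ and $\max\{1,H,\Delta\}$ are bounded in terms of $\mathcal R_\infty(T_\infty)$ and hence independent of $N$, but your stated reason for it is wrong. The paper sidesteps this by arguing via the uniform bound on $\RegStrong(T_N)$ rather than by parameter-by-parameter inheritance, which is a cleaner route precisely because it is agnostic to such implementation details of the translation.
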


We note that an important aspect is that $T_N$ performs correctly not just at length $N$, but at \emph{all lengths up to} $N$. This distinguishes the result from constructions guaranteeing the existence of a transformer at a fixed length. For instance, \citet{Bhattamishra2024Separations} provide a transformer for testing equality between length $N$-strings (which could also be used for copying), but this construction uses specific positional encodings that depend on the input length.
In contrast, the result here provides conditions under which a transformer can perform a task at all lengths up to a given bound; in this stronger setup, no APE transformer for copying with uniform complexity bounds as provided by Corollary~\ref{cor:expressivity} is known, and the problem is indeed not expressible by {\LimitTransformer}s satisfying \textsc{Periodic} and \textsc{Local} (Corollary~\ref{corr:comm-comp-no-gen}).
In contrast, Corollary~\ref{cor:expressivity} provides APE constructions performing correctly \emph{up to} any given length for a wide class of problems including {\CRASPpl}.

Another important feature is that the construction provides a fixed precision for the parameters, as is the case in real-world implementation.
We note that, if parameters are at fixed precision, it is generally not possible to find a single transformer across all input lengths in the APE setting; hence, it is unavoidable that the width of the transformers will need to increase as the input length increases.
Importantly, many other aspects of the transformer's complexity, such as the number of heads and layers, remain bounded.

\begin{proof}
The statement is an immediate corollary of Lemma~\ref{lemma:limit-to-transformer}, which provides transformers $T_1, T_2, \dots$ with bounded $\RegStrong(T_N)$, which by Definition~\ref{def:regularizer} entails a uniform bound on precision, heads, and layers. The construction provided in the proof of Lemma~\ref{lemma:limit-to-transformer} provides a width bounded as $O(N)$.
\end{proof}

\subsection{From {\CRASP} to {\LimitTransformer}s}\label{sec:crasp_to_limit}

The proofs are adaptations of the proofs from \cite{yang2024counting}.

\begin{thm}[Restated from Theorem~\ref{thm:crasp-to-limit-main}]\label{thm:crasp-to-limit}
     For every $\CRASP[\Phi,\Psi]$ program $P$ with local functions $\Psi$ and any periodic functions $\Phi$ there exists a {\LimitTransformer} $T_\infty$ that satisfies \textsc{Periodic} and \textsc{Local} such that for all $w\in\Sigma^*$, $P$ accepts $w$ iff $T_\infty$ accepts $\$w$.
    If $P$ uses no local or periodic relations, then $T$ requires no functions $\phi_{l,h}$ or positional encodings $\vp_i$.
\end{thm}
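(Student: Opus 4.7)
My plan is to proceed by structural induction on the length $k$ of the {\CRASP} program $P = P_1, \dots, P_k$, building the {\LimitTransformer} $T_\infty$ one operation at a time. Each operation $P_m$ gets its own coordinate in the residual stream; Boolean operations will be stored as $\{0,1\}$-valued coordinates, and count operations as real-valued coordinates. Because neither width, depth, nor head-count is bounded in the definition of a {\LimitTransformer}, I can afford to dedicate a fresh attention head and/or MLP block to each operation, so there is no scheduling issue. The inductive invariant at step $m$ is: after some number $L_m$ of blocks, the residual stream at position $i$ contains the values $P_1(i), \dots, P_m(i)$ (or $C_\ell(i)$ for count operations) in dedicated coordinates, with all intermediate arithmetic exact in infinite precision.

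The "easy" operations are handled by dedicated MLPs or by the embedding/positional layers: (i) \textbf{Initial} $Q_\sigma(i)$ is read off of $\mE_{x_i}$; (ii) \textbf{Boolean} operations ($\neg$, $\wedge$) and the \textbf{Comparison}, \textbf{Conditional}, \textbf{Addition}, \textbf{Subtraction}, and \textbf{Constant} operations are all bounded-arity functions of values already stored in the residual stream, and so fit inside a single one-layer MLP with Heaviside or ReLU activations (the Heaviside variant in Eq.~\ref{eq:mlp} is precisely what makes comparisons and Booleans exact); (iii) a \textbf{Positional} operation $P(i) := \phi(i)$ with $\phi$ periodic of period $\Delta$ is encoded directly into $\vp_i$: I choose $\vp_i \in \{0,1\}^{|\Phi|}$ indexed by the unary relations, so that $\vp_{i+\Delta} = \vp_i$, securing \textsc{Periodic}. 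A trivial MLP in the first layer then copies the appropriate coordinate of $\vp_i$ into the residual coordinate for $P(i)$.

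The genuine work is in \textbf{Counting} operations $C(i) := \#[j\leq i,\ \psi(i,j)]\, P(j)$. For each such operation I dedicate one attention head in a new layer. I set the key/query matrices to zero (so the data-dependent attention logit is $0$) and put all positional selection into $\phi_{l,h}(i,j)$: take $\phi_{l,h}(i,j) := 0$ if $\psi(i,j) = 1$ and $\phi_{l,h}(i,j) := -M$ otherwise, for a large constant $M$. Since $\psi$ is local and translation-invariant, so is $\phi_{l,h}$, securing \textsc{Local}. For $\psi \in \Psi$, after softmax with the $\log|x|$ scaling the non-$\psi$ mass is $O(|x|^{-M})$ and the head output equals $C(i)/K(i) + O(|x|^{-M})$, where $K(i) := \#\{j \leq i : \psi(i,j)=1\}$ is a bounded, translation-invariant quantity that itself can be computed once and for all by a preparatory counting operation (applied to $P \equiv \top$); multiplying the head output by $K(i)$ in a subsequent MLP recovers $C(i)$ exactly. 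For the special case $\psi = \top$, I take $\phi_{l,h} \equiv 0$, obtaining $C(i)/i$ at the output of the head, and then multiply by $i$ using a preparatory count $\#[j\leq i, \top]\top = i$ stored in another coordinate. The $O(|x|^{-M})$ error is absorbed because all downstream comparisons and Booleans have a positive margin bounded away from $0$, so choosing $M$ large enough (depending only on $P$, not on $|x|$) makes all thresholdings correct.

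The main obstacle will be making the ``multiply by the count'' step genuinely exact rather than $1/|x|$-approximate, because the attention block outputs an average, not a sum. I expect to handle this by (a) keeping $K(i)$ and $i$ themselves as residual-stream coordinates via earlier counting operations, and (b) observing that after multiplication the only remaining error is the $O(|x|^{-M})$ soft-attention leakage, which can be zeroed out by a final Heaviside-rounding MLP because count values are integers and the MLP needs only to distinguish a discrete set. I will then verify the regularity clauses: all $\phi_{l,h}$ constructed above are translation-invariant and local, and all $\vp_i$ are $\Delta$-periodic (taking $\Delta$ as the lcm of the periods of the relations in $\Phi$), so $T_\infty$ satisfies \textsc{Periodic} and \textsc{Local}. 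Finally, the very last block writes the value of the final Boolean operation $L(|w|)$ into the designated output dimension, so that $T_\infty$ accepts $\$w$ iff $L(|w|)$ holds, i.e., iff $P$ accepts $w$. The addendum about $\Phi = \Psi = \emptyset$ is immediate: no positional relations means no $\vp_i$ or $\phi_{l,h}$ was ever introduced.
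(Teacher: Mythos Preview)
Your overall inductive structure and the handling of the ``easy'' operations (Booleans, Conditional via ReLU, periodic $\phi\in\Phi$ encoded in $\vp_i$, Comparison via Heaviside) are correct and match the paper's proof. The genuine gap is in your plan to store \emph{raw} counts $C(i)$ and to recover them from the attention average by ``multiplying by $i$'' (or by $K(i)$).

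For $\psi=\top$ this fails twice. First, you cannot place $i$ in the residual stream by a ``preparatory count $\#[j\le i,\top]\,\top$'': the attention head averages, so that head returns $\frac{1}{i}\sum_{j\le i}1=1$, not $i$. Second, even if $i$ were available, a one-hidden-layer ReLU/Heaviside MLP (Eq.~\ref{eq:mlp}) computes a piecewise-linear function of its input and cannot form the product of two unbounded coordinates; the same obstruction kills your ``final Heaviside-rounding'' step, since rounding to the nearest integer over an unbounded range requires unboundedly many threshold units. For local $\psi$, your plan to obtain $K(i)$ by a preparatory count with the same $\psi$ is circular: that head returns $K(i)/K(i)=1$, not $K(i)$.

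The paper avoids all of this by never un-normalizing: it stores each count as $C(i)/(i+1)$. Because Addition, Subtraction and Comparison only ever combine counts at the \emph{same} position $i$, the common factor $1/(i+1)$ is harmless---$C_1(i)\le C_2(i)$ iff $C_1(i)/(i+1)\le C_2(i)/(i+1)$---and a single Heaviside unit on the difference decides the comparison exactly even though the margin is only $\Theta(1/i)$. With this representation, the $\psi=\top$ counting head (uniform attention) already outputs the desired $C(i)/(i+1)$ exactly, with no leakage and no multiplication needed. For local $\psi$ the paper restricts to the generators $\psi(i,j)=[j=i-\ell]$ (so the count is $\{0,1\}$-valued), thresholds the head output at $\tfrac{1}{2}$, and writes $0$ or $1/(i+1)$ via a Conditional; a general local $\psi$ decomposes as a finite sum of such single-offset counts. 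If you switch to this normalized representation, the rest of your outline goes through unchanged.
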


\begin{remark}
    We note that the {\LimitTransformer} $T_\infty$ provided by the proof of Theorem~\ref{thm:crasp-to-limit} emulates the {\CRASP} program $P$ at zero offset: That is, $P$ accepts $w$ iff a predetermined entry in the last output dimension of $T_\infty(\$w,0)$ is above some threshold.
    In principle, its computations may not be offset-invariant, i.e., for the constructed $T_\infty$, the output $T_\infty(\$w,o)$ may depend on $o$.
    Importantly, the proof of Theorem~\ref{thm:guarantee} does not require a {\LimitTransformer} computing $f$ to be offset-invariant, but just requires it to compute $f$ when the offset is zero.
    This is because Lemma~\ref{lemma:limit-to-transformer} ensures that, for any Limit Transformer $T_\infty$ satisfying \textsc{Local} and \textsc{Periodic}, even if it is not offset-invariant, there are transformers $T_n \in \Theta_n$ whose behavior matches $T_\infty(\cdot,0)$.
\end{remark}
\begin{proof}[Proof of Theorem~\ref{thm:crasp-to-limit}]
    $\CRASP$ has two sorts of operations, a Boolean sort and a Count sort. We will simulate each operation in the transformer by storing the Boolean values as $\{0,1\}$, and storing the counts as $\frac{c}{i+1}$.
    That is, we say that a \LimitTransformer{} $T_\infty$ simulates a $\CRASP$ program $P$ if for every operation $P_k$ of $P$ there is a dimension $d_k$ in $T$ such that when $P_k(i)$ when run on $w$ is true iff $T_\infty(\$w)_{i+1,d_k}=1$ (and $0$ otherwise) for Boolean operation and $P_k(i) = c$ iff $T_\infty(\$w)_{i+1,k}=\frac{c}{i+1}$ for count operations.
    
    The theorem will be shown by induction on the length of $P$. As a clarifying note, we use $0$-indexing everywhere in this proof. If $P$ is of length $0$, we only have initial $Q_\sigma(i)$ vectors, which can be simulated by appropriately setting the word embedding. Otherwise, assume all programs of length $\leq k$ are simulated by some transformer, and we have cases for each type of operation $P_{k+1}(i)$ can be. All cases are identical to \citet{yang2024counting} except for comparison, conditional, and counting.

    First, we must address the {\SOS} token $\$$. There exists an transformer layer which sets the entire vector to $\mathbf{0}$ in the initial position, while leaving all other layers untouched. For instance, we may use a conditional operation, as described later in the proof. 
    
    If $P_{k+1}(i):= \phi(i)$, a periodic positional function in $\Phi$, then it is simulated in $T_\infty$ by appropriately setting $\vp_i$ in the positional encoding.

    \newcommand{\ReLU}{\operatorname{ReLU}}
    If $P_{k+1}(i) := \cif{P(i)}{C_1(i)}{C_2(i)}$, we can implement the following function: for $P\in\{0,1\}$ and $V\in[0,1]$

    $$f(P,V)=\begin{cases}
        V & P=0\\
        0 & P=1
    \end{cases}$$

    This is achieved by $f(P,V) = \ReLU(V-P)$. Thus, the desired Conditional Output can be defined in a single FFN as $f(P,V_1)+f(-P,V_2)$, where the first layer and $\ReLU$ compute each $f$ term and the second layer adds them together.

    If $P_{k+1}(i):= C_1(i)\leq C_2(i)$. By the inductive hypothesis $C_1(i)$ and $C_2(i)$ are stored in dimensions $d_1$ and $d_2$ as the value $\frac{C_1(i)}{i+1}$ and $\frac{C_2(i)}{i+1}$. It suffices to check that $\frac{C_2(i)}{i+1}-\frac{C_1(i)}{i+1}\geq 0$.
    
    \newcommand{\hsfn}{\operatorname{hs}}

    To compute this, we use the heaviside activation function, which we used in our model of MLPs as discussed in \ref{app:mlp}.
    \begin{center}
        \begin{tikzpicture}[baseline=0.8cm]
            \begin{axis}[width=4cm,xlabel={x},xtick={-2,-1,0,1,2},ylabel={$\hsfn(x)$},ytick={-1,0,1},yticklabels={$-1$,$0$,$1$}]
            \addplot[mark=none,samples at={-2,-1,-0.0001,0,1,2}] {x>=0?1:-1};
            \end{axis}
        \end{tikzpicture}
    \end{center}

    Thus, there exists an MLP which, letting $x_1$ and $x_2$ be the values in dimensions $d_1$ and $d_2$, computes $(\hsfn(x_2-x_1)+1)/2$ in the  dimension reserved for $P_{k+1}$, which will be the Boolean value in $\{0,1\}$ corresponding to $C_1(i)\leq C_2(i)$.

    If $C(i) := \cttn{j\leq i}{P(j)}$ (using $\psi(i,j)=\top$), then the desired sum is computed using uniform attention, since the boolean representation of $P(j)$ is just $0$ or $1$. We enforced that $P(0)$ is false, so it does not contribute to the sum. This is described in more detail in \citet{yang2024counting}, though the case here is simpler. 

    If $C(i) := \cttn{j\leq i, \psi(i,j)}{P(j)}$, we can think of it as implementing $\cttn{j\leq i}{\psi(i,j)\land P(j)}$. Suppose $\psi$ is a local function of the following form

    $$\psi(i,j)=\begin{cases}
        1 & j=i-\ell\\
        0 & \text{else}
    \end{cases}$$

    Then $C(i)$ will either be $1$ or $0$ depending if $P(i-\ell)$ is true or false. If we set the query and key matrices to $0$ we get

    \[s_{ij}=\log N \cdot \psi(i,j)\]

    We assume the $\log$ is base 2, but the argument is similar for others. Then we can have attention compute

    $$c_{i,k}=\frac{\displaystyle\sum_{j\leq i} \exp\left(\log N \cdot \psi(i,j)\right) \cdot P(j)}{\displaystyle\sum_{j\leq i}\exp\left(\log N \cdot \psi(i,j)\right) }=\frac{\displaystyle\sum_{j\leq i} N^{\left(\frac{\psi(i,j)}{\ln 2 }\right)} \cdot P(j)}{\displaystyle\sum_{j\leq i}N^{\left(\frac{\psi(i,j)}{\ln 2 }\right)} }$$

    If $P(i-\ell)$ and $\lnot P(j)$ for $j\neq i-\ell$, then we have a lower bound:
    
    $$\frac{N^{\left(\frac{1}{\ln 2 }\right)}}{N^{\left(\frac{1}{\ln 2 }\right)}+i-1}\leq c_{i,k}$$

    If $\lnot P(i-\ell)$ and $P(j)$ for $j\neq i-\ell$ then we have an upper bound:

    $$c_{i,k}\leq \frac{i-1}{N^{\left(\frac{1}{\ln 2 }\right)}+i-1}$$

    Since $N^{\frac{1}{\ln 2}}\geq i$, and we know that $P(i-\ell)\iff c_{i,k}\geq \frac{1}{2}$, we can construct an MLP that computes the correct value. It will output either $\frac{0}{i+1}$ or $\frac{1}{i+1}$, in the dimension reserved for $P_{k+1}(i)$, for instance by using a conditional operation that checks that the output of the attention layer $c_{i+1,k}\geq\frac{1}{2}$, which was shown in an earlier case.
\end{proof}

\section{Expressivity Proofs for \CRASP}\label{sec:expressivity}

\subsection{$\CRASP$ Constructions}\label{sec:CRASP_constructions}

\subsubsection{Majority}

MAJORITY is the language of strings over $\Sigma=\{0,1\}$ with at least as many $1$'s as $0$'s.

\begin{tcolorbox}[title={MAJORITY} ]
\begin{calign}
    C_1(i) &:= \cttn{j\leq i}{Q_1(j)}&&\\
    C_0(i) &:= \cttn{j\leq i}{Q_0(j)}&&\\
    M(i) &:= C_1(i)\geq C_0(i)&&
\end{calign}
\end{tcolorbox}

\subsubsection{Dyck-1}

Dyck-1 is the language of strings over $\Sigma=\{0,1\}$ with at least as many $1$'s as $0$'s. 

\begin{tcolorbox}[title={Dyck-1} ]
    \begin{calign}
        C_((i) &:= \cttn{j\leq i}{Q_((j)} && \text{The number of $($ up to position $i$}\\
        C_)(i) &:= \cttn{j\leq i}{Q_)(j)} && \text{The number of $)$ up to position $i$}\\
        V(i) &:= C_((i) < C_)(i) && \text{\emph{Violation}: there are more $)$ than $($}\\
        C_V(i) &:= \cttn{j\leq i}{V(j)} && \text{The number of \emph{Violations}}\\
        M(i) &:= C_V(i) = 0 && \text{\emph{Matched}: zero \emph{Violations}}\\
        B(i) &:= C_((i)=C_)(i) && \text{\emph{Balanced}: same number of $($ and $)$}\\
        D(i) &:= M(i) \land B(i) && \text{String is \emph{Matched} and \emph{Balanced}}
    \end{calign}
\end{tcolorbox}
    
\subsubsection{$a^nb^nc^n$}

Let $\Sigma=\{a,b,c\}$. This is another example of a counter language which $\CRASP$ can express and which transformers have been observed to length generalize on \citep{bhattamishra2020ability}.

\begin{tcolorbox}[title={$a^nb^nc^n$} ]
\begin{calign}
    C_a(i) &:= \cttn{j\leq i}{Q_a(j)} &&\\
    C_b(i) &:= \cttn{j\leq i}{Q_b(j)} &&\\
    C_c(i) &:= \cttn{j\leq i}{Q_c(j)} &&\\
    A(i) &:= C_b(i) +C_c(i) = 0&&\\
    B(i) &:= C_c(i)= 0 &&\\
    C_A(i) &:= \cttn{j\leq i}{Q_a(j)\land A(j)}&&\\
    C_B(i) &:= \cttn{j\leq i}{Q_b(j)\land B(j)}&&\\
    G_a(i) &:= C_A(i) = C_a(i) &&\\
    G_b(i) &:= C_B(i) = C_b(i)&&\\
    G_{abc}(i) &:= C_a(i)=C_b(i)=C_c(i) &&\\
    L(i) &:= G_a(i)\land G_b(i)\land G_{abc}(i)&&
\end{calign}
\end{tcolorbox}

\subsubsection{Existential Quantification}

This is generally a useful primitive, so to save a little space we can add a macro for existential quantification towards the left in $\CRASP$. This is easily defined using counting:

\begin{tcolorbox}[title={$P(i) := \overleftarrow{\exists} A(i)$} ]
\begin{calign}
    C(i) &:= \cttn{j\leq i}{A(j)}&&\\
    P(i) &:= C(i)\geq 1
\end{calign}
\end{tcolorbox}

And we abbreviate this using $P(i) := \overleftarrow{\exists} A(i)$. We demonstrate its use below.

\subsubsection{Piecewise Testable Languages}\label{sec:piecewise_testable}

Piecewise testable languages are Boolean combinations of languages of the form $\Sigma^*a_1\Sigma^*a_2\Sigma^*\ldots \Sigma^*a_n\Sigma^*$. This allows us to check for the presence of noncontiguous substrings, which contrasts with the proof in \ref{sec:flipflop} that implies the presence of contiguous substrings cannot be expressed in $\CRASPempty$. 

It suffices to show programs for languages of the form $L=\Sigma^*a_1\Sigma^*a_2\Sigma^*\ldots \Sigma^*a_n\Sigma^*$, since Boolean combinations are recognizable using Boolean operations of $\CRASP$. For $L$ we have the following $\CRASP$ program which has the final accepting operation $L_n$:

\begin{tcolorbox}[title={$\Sigma^*a_1\Sigma^*a_2\Sigma^*\ldots \Sigma^*a_n\Sigma^*$} ]
\begin{calign}
    L_1(i) &:=\overleftarrow{\exists} Q_{a_1}(i)&&\text{$a_1$ occurred}\\
    L_2(i) &:=\overleftarrow{\exists}  Q_{a_2}(i)\land L_1(i)&&\text{$a_2$ occurred, preceded by $a_1$}\\
    \vdots&&&\\
    L_n(i) &:= \overleftarrow{\exists}  Q_{a_n}(i)\land L_{n-1}(i)&&\text{$a_n$ occurred, preceded by $a_{n-1}$, \ldots, preceded by $a_1$}
\end{calign}
\end{tcolorbox}

\subsection{$\CRASPpl$ Constructions}

\subsubsection{Induction Head (Argmax Version)}\label{sec:example_induction_head}

As an example consider $\Sigma=\{a,b,c\}$. Predicate $NEXT_a(i)$ is true iff the next token should be an $a$. First we can define predecessor 
\begin{align*}
    CP_{a}(i) &:= \cttn{j\leq i, j=i-1}{Q_a(j)}\\
    PRED_{a}(i) &:= CP_{a}(i)\geq 1\\
\end{align*}

Then we can count bigram occurence by counting

\begin{align*}
    C_{ab} & := \cttn{j\leq i}{Q_b(j)\land PRED_{a}(j)}\\
\end{align*}

Then each $NEXT_a(i)$ predicate can be defined by checking the current symbol and finding the most frequently occuring bigram.

\begin{tcolorbox}[title={$NEXT_a(i)$ (Argmax) over $\Sigma=\{a,b,c\}$} ]

\begin{calign}
    \vdots &&&\\
    MORE_{aa,ab}(i) &:= C_{aa}(i)\geq C_{ab}(i)&&\\
    MORE_{aa,ac}(i) &:= C_{aa}(i)\geq C_{ac}(i)&&\\
    NEXT_a(i) &:= Q_a(i) \land  MORE_{aa,ab}(i)\land MORE_{aa,ac}(i)&&
\end{calign}
\end{tcolorbox}

This corresponds to testing, for the $f$ in Equation~\ref{eq:induct}, for which $\sigma$ the entry $f(x_1\dots x_N)_{N, \sigma}$ is maximal.

\subsubsection{Induction Head (All possible next symbols)}\label{sec:example_induction_head-2}

Consider $\Sigma$. For $a \in \Sigma$, predicate $NEXT_a(i)$ is true iff the next token can possibly be an $a$. As in Section~\ref{sec:example_induction_head}, first, we can define predecessor 
\begin{align*}
    CP_a(i) &:= \cttn{j\leq i, j=i-1}{Q_a(j)}\\
    PRED_a(i) &:= CP_a(i)\geq 1\\
\end{align*}
Then we can check for bigram occurence by counting
\begin{align*}
    CBIGRAM_{ab} & := \cttn{j\leq i}{Q_b(j)\land PRED_a(j)}\\
    EXISTS_{ab} & :=  CBIGRAM_{ab}(i)\geq 1\\
\end{align*}

If a bigram $\sigma a$ ever occured previously in the string, nonzero probability is assigned to predicting $a$ when at symbol $\sigma$. Then each $NEXT_a(i)$ predicate can be defined as follows

\begin{tcolorbox}[title={$NEXT_a(i)$ (All Possible) over $\Sigma=\{a,b,c\}$} ]

\begin{calign}
    \vdots &&&\\
    NEXT_a(i) &:= \bigvee_{\sigma \in \Sigma} \left[ Q_\sigma(i) \land  EXISTS_{\sigma a}(i) \right] &&
\end{calign}
\vspace{1em}
\end{tcolorbox}

where $\bigvee_{\sigma\in\Sigma}$ can be expressed using the Boolean operations $\wedge$ and $\neg$ as defined in Section~\ref{sec:crasp}.
This corresponds to testing, for the $f$ in Equation~\ref{eq:induct}, for which $\sigma$ we have $f(x_1\dots x_N)_{N, \sigma} > 0$.

\paragraph{Generating based on this program}
Consider an input prefix of the form $\#x\#$, where $\#$ denotes a separator symbol. If we iteratively generate the next symbol $a$ by selecting $a \in \Sigma$ where $NEXT_a$ holds at the last position, we generate a string $\#x\#y$ where all bigrams in $\#y$ had already occurred in $\#x$, a simple version of the in-context Markov chains studied by \citep{edelman2024evolution}.

\paragraph{Special Case: Unique Copying}
In the special case of an input prefix where each symbol occurs at most once in $x$, the generation procedure defined above will copy $x$, and (assume we stop at $\#$) resulting in an overall string of the form $\#x\#x\#$. This is essentially the RASP-L construction of unique copying noted by \citet{zhou2023algorithms}.

\paragraph{Necessity of Positional Relations}
Intuitively, an induction head circuit requires positional information; indeed, we observe length generalization in unique copying with APE but not with NoPE (Figure~\ref{fig:results}).
Formally, we can prove as follows that the predicate $NEXT_a$ defined above for each $a \in \Sigma$, while definable in ${\CRASPl}$, is not definable in $\CRASPempty$.
Consider $\Sigma = \{a,b\}$; then the predicate $NEXT_b$ can be used to define the (disjoint) union of the languages $\Sigma^*ab\Sigma^*a$, $ \Sigma^*bb\Sigma^*b$. As the first one is definable in {$\CRASPempty$}\footnote{It is sufficient to check whether $a$ and $b$ both are present, and whether one $b$ has a $a$ in its preceding context; as $\Sigma=\{a,b\}$, this is equivalent to $ab$ being a substring} and the union is disjoint, the second would be definable if $NEXT_b$ is.
This contradicts the fact that $\Sigma^*bb\Sigma^* \not\in \CRASPempty$, because $\Sigma^*bb\Sigma^* \not\in \widehat{MAJ}_2[<]$ (Lemma 6.11 in \cite{krebs2008typed}) and the inclusion $\CRASPempty \subseteq \widehat{MAJ}_2[<]$ (see Section~\ref{app:crasp-majority-logic}).

\subsubsection{$(aa)^*$}\label{sec:example_aastar}

The following function that checks parity of a position mod $2$ is a periodic function. 

\[\phi(i):= i \equiv 0\mod 2\]

So the following program recognizes $(aa)^*$

\begin{tcolorbox}[title={$(aa)^*$} ]

\begin{calign}
    C_{\lnot a}(i) &:=\cttn{j\leq i}{\lnot Q_a(j)}&&\\
    A(i) &:= C_{\lnot a}(i) = 0&&\\
    D(i) &:= \phi(i)\land A(i)&&
\end{calign}

\end{tcolorbox}
The Boolean value of the last operation in the last position of the string is the accepting value. This is true if the string is of even length and contains only $a$'s.
Overall, we have constructed a program in ${\CRASPpl}$.

\subsection{Expressibility of Regular Languages in {\CRASPpl}}\label{sec:inexpressivity}

\begin{lemma}[Restated from Lemma~\ref{lemma:regular-languages-crasp}]
Consider the alphabet $\Sigma = \{a,b,e\}$.
\begin{enumerate}
\item    $PARITY := b^*(ab^*ab^*)^* \not\in {\CRASPpl}$

\item $(aa)^* \in \CRASPpl$ and $(aa)^* \not\in \CRASPempty$

\item $(a|b|e)^*be^* \not\in \CRASPpl$

\item $L\in \CRASPempty$ for piecewise testable $L$

\end{enumerate}
\end{lemma}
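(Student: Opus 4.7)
Parts (2a) and (4) are immediate from the explicit $\CRASP$ programs in Sections~\ref{sec:example_aastar} and~\ref{sec:piecewise_testable}, so the substance of the lemma lies in the three inexpressivity results. For (2b), I would argue by structural induction on any hypothetical $\CRASPempty$ program applied to inputs $a^n$. Since no positional relations are available and $Q_a(j) \equiv 1$ on these inputs, every count takes the form $\alpha n + \beta$ for integers $\alpha, \beta$ determined by the subprogram, and every Boolean predicate is eventually constant in $n$ because each comparison $\alpha_1 n + \beta_1 \le \alpha_2 n + \beta_2$ flips at most once as $n$ grows. Hence the final acceptance predicate stabilizes for large $n$, contradicting the period-$2$ alternation required by $(aa)^*$.

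For part~(1), the intuition is that $\CRASPpl$ can do modular reasoning on \emph{positions} (via $\phi$) but only threshold reasoning on \emph{counts}. Fix a candidate program with period $\Delta$ and locality bound $\tau$, and consider a family of inputs $\{w_k\}_k$ (such as $w_k = (ab)^k$ when $\Delta$ is odd, with analogous constructions for even $\Delta$) for which all counts and $\phi$-weighted subcounts are integer-linear in $k$, the length residue modulo $\Delta$ and the terminal $\tau$-window are eventually constant, yet PARITY of $w_k$ alternates with $k$. Because the final predicate is a Boolean combination of finitely many comparisons between linear-in-$k$ counts, possibly modulated by the periodic $\phi$ contribution at the end position, the output is eventually periodic in $k$ with a period determined by $\Delta$, and one can choose the family so this period cannot match PARITY's period~$2$. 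An alternative route embeds $\CRASPpl$ into a majority logic of the form $\widehat{\mathrm{MAJ}}_2[<, +1, \mathrm{MOD}]$, in which MOD acts on positions only, and invokes known inexpressivity of PARITY there (cf.~\cite{krebs2008typed}).

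For part~(3), I would fix a candidate $\CRASPpl$ program with period $\Delta$ and local bound $\tau$, take $P$ a multiple of $\Delta$ with $P > 2\tau$, fix $M > \tau$, and consider
\[
x_1 \;=\; (e^{P-1}\, a\, e^{P-1}\, b)^N\, e^M, \qquad
x_2 \;=\; (e^{P-1}\, b\, e^{P-1}\, a)^N\, e^M,
\]
so that $x_1 \in \Sigma^* b\, e^*$ and $x_2 \notin \Sigma^* b\, e^*$. The central claim is that no $\CRASPpl$ program distinguishes $x_1$ from $x_2$ at the last position $n = 2NP + M$. I would prove this by structural induction on the program, maintaining the invariant that for every predicate or count $V$ produced so far, every length-$(\tau+1)$ backward pattern $\pi$, and every residue $r \in \{0, \ldots, \Delta - 1\}$, the quantity
\[
\#\{\,i : i \equiv r \pmod{\Delta},\ x[i-\tau \ldots i] = \pi,\ V(i) = 1\,\}
\]
agrees between $x = x_1$ and $x = x_2$. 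Because non-$e$ symbols are spaced more than $\tau$ apart and all lie at the same residue modulo $\Delta$, this invariant holds at the base predicates $Q_\sigma$; one then verifies closure under Boolean operations, positional $\phi$, counting with local or $\top$ relations, comparison, and conditional. Since the backward window at position $n$ is $e^{\tau+1}$ in both strings, the accepting predicate must coincide, contradicting opposite $L$-membership.

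The main obstacle across parts~(1) and~(3) is formalizing these inductive invariants. Intermediate predicates $V$ can take genuinely different values at individual positions in $x_1$ and $x_2$, so the invariant must track aggregate per-$(\pi, r)$ distributions rather than pointwise values; care is needed when a counting operation conditions on a predicate whose pointwise behavior has already diverged. For~(1), an additional subtlety is cleanly separating ``modularity on positions'' (afforded by $\phi$) from ``modularity on counts'' (required by PARITY), so the bookkeeping must confirm that no combination of $\CRASPpl$ primitives secretly synthesizes count-modularity by composing positional $\phi$ with counting.
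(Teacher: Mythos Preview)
Your arguments for parts~(2b) and~(4) are fine and essentially match the paper's. The two substantive inexpressivity proofs, however, both have genuine gaps.

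For part~(3), the central claim is false: your pair $x_1,x_2$ \emph{is} distinguishable already in $\CRASPempty$. Take $P(i):=[C_a(i)>C_b(i)]$ where $C_\sigma(i)=\cttn{j\le i}{Q_\sigma(j)}$. On $x_1$ (where each $a$ precedes its paired $b$), $P$ holds on the interval following every $a$, so $\cttn{j\le n}{P(j)}\ge 1$; on $x_2$ (where each $b$ precedes its paired $a$), $P$ is identically false. Hence $[\,\cttn{j\le n}{P(j)}\ge 1\,]$ separates them. This is exactly the failure of your invariant under the comparison step: even though $Q_a$ and $Q_b$ have matching $(\pi,r)$-histograms, the \emph{prefix} counts $C_a(i),C_b(i)$ differ pointwise, and a single threshold comparison converts this into a predicate whose $(\pi,r)$-histogram does not match. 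The paper instead embeds $\CRASPpl$ into $\widehat{\mathrm{MAJ}}_2[<,+1,\mathrm{MOD}]$ (Proposition~\ref{prop:crasp_in_maj2}), shows $L_{bb}=\Sigma^*be^*b\Sigma^*$ is outside that logic by a morphism reduction to $\widehat{\mathrm{MAJ}}_2[<]$ and Lemma~6.11 of \cite{krebs2008typed}, and finally reduces $\Sigma^*be^*$ to $L_{bb}$ within $\CRASPpl$.

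For part~(1), your alternative majority-logic route fails outright: PARITY \emph{is} expressible in $\widehat{\mathrm{MAJ}}_2[<]$ (this is precisely why the paper notes the inclusion $\CRASPpl\subseteq\widehat{\mathrm{MAJ}}_2[<,+1,\mathrm{MOD}]$ is strict), so the embedding gives no lower bound here. Your direct approach via families $w_k=(ab)^k$ is in the right spirit but leaves the even-$\Delta$ case unspecified, and the claim that the output period ``cannot match~2'' needs a real argument. The paper's proof is a clean morphism reduction: choosing $s$ a common multiple of all moduli and larger than all localities, the map $h(a)=b^s a b^{s-1}$ lets one translate any $\CRASPpl$ program on $h(w)$ into a $\CRASPempty$ program on $w\in a^*$ (Lemma~\ref{lem:CRASPpl_morphism}); a $\CRASPpl$ recognizer for PARITY would then yield a $\CRASPempty$ recognizer for $(aa)^*$, contradicting part~(2b).
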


\begin{proof}
    1--3 are shown in Lemmas~\ref{eq:flipflop-crasp} (for 3.), \ref{lem:aastar_nin_CRASP} (for 2.), \ref{lemma:parity} (for 1.), and Appendix~\ref{sec:example_aastar} (for 2.).
        4. is shown in Appendix~\ref{sec:piecewise_testable}.
\end{proof}

\subsubsection{Link to Majority Logic}\label{app:crasp-majority-logic}
In understanding the exressiveness of {\CRASP}, we draw on an established body of work on logics using MAJORITY quantifiers.
\cite{merrill2023expresssive, strobl2023averagehard} show that the expressiveness of transformers is upper-bounded by uniform $\TCzero$, which can be defined as the logic $\FOMbit$. This logic is defined in terms of MAJORITY quantifiers and various predicates.
{\CRASPpl} can be viewed as a highly restricted fragment of this logic.
Specifically, it is contained in $\widehat{MAJ}_2[<, +1, \text{MOD}]$, which was studied by \cite{krebs2008typed, 10.1007/978-3-540-74456-6_15, 10.1007/978-3-642-02737-6_7}; results about that logic help understand the expressiveness of {\CRASP}:
\begin{defin}
    $\widehat{MAJ}_2[<, +1, \text{MOD}]$ is the logic defined by the constructs
    \begin{enumerate}
    \item The construct
    \begin{align*}
    \Majx{\phi_1,\ldots,\phi_c}
    \end{align*}
    \item The predicates $Q_a(x)$ for $a \in \Sigma$
    \item Numerical predicates  $Succ(y,x) $ and  $Mod_{m,r}(x)$ for $m,r \in \mathbb{N}$:
    \item Boolean connectives
    \item First-order quantifiers\footnote{These can be simulated by majority quantifiers with two variables by Proposition 5.5 in \cite{krebs2008typed}, which is based on Corollary 3.3 in \cite{lange2004some}. Nonetheless, as the simulation is unobvious, they are useful for writing formulas in $\widehat{MAJ}_2[<, +1, MOD]$.}
    \end{enumerate}
    such that only two variables (say, $x$ and $y$) can appear within a formula. We define the semantics, when $x \in \Sigma^*$, by defining
    \begin{enumerate}
    \item for the majority quantifier:
    \begin{align*}
        w \models \Majx{\phi_1,\ldots,\phi_c}\Leftrightarrow & 0 < \sum_{i=1}^n \sum_{j=1}^c \begin{cases}
            1 &\text{if } w|_{x=i} \models \phi_j \\ -1 &\text{else}
        \end{cases}
    \end{align*}
    \item for the predicates:
    \begin{alignat*}{3}
        &w|_{x=i} \models Q_a(x) \quad &\Leftrightarrow\quad & w_i = a \\
        &w|_{x=i} \models Mod_{m,r}(x) \quad &\Leftrightarrow\quad & i \equiv r \pmod{m} \\
        &w|_{x=i,y=j} \models Succ(y,x) \quad &\Leftrightarrow\quad & j+1 = i
    \end{alignat*}
    \end{enumerate}
    Semantics of Boolean connectives and first-order quantifiers follow the standard definition.

    A language $\mathcal{L} \subseteq \Sigma^*$ is \emph{definable} in $\widehat{MAJ}_2[<, +1, Mod]$ if there is a formula $\phi$ without free variables such that $w \in \mathcal{L}$ if and only if $w \models \phi$. 
\end{defin}
The logic $\widehat{MAJ}_2[<]$ results by omitting the numerical predicates defined under (3).

It is straightforward to convert $\CRASP$ programs into formulas of $\MAJpm$. As we shall see later in Section \ref{sec:parity_inexpressible}, the inclusion is strict because $PARITY$ is expressible even in $\widehat{MAJ}_2[<]$. 

\begin{proposition}\label{prop:crasp-maj-log}
    $\CRASPpl\subseteq \widehat{MAJ}_2[<, +1, MOD]$
\end{proposition}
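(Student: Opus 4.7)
The plan is to proceed by structural induction on the $\CRASP$ program, producing for each Boolean-valued operation $P$ a formula $\phi_P(y)$ of $\MAJpm$ with a single free variable $y$ such that for every input $w$ and every position $i$, $P(i)$ holds on $w$ iff $w \models \phi_P[y := i]$. Count-valued operations $C$ are never rendered as formulas on their own; instead, each is tracked by a \emph{signed algebraic representation}, a triple $(k_C, \Lambda_C^+, \Lambda_C^-)$ with $k_C \in \mathbb{Z}$ and $\Lambda_C^+, \Lambda_C^-$ finite lists of two-variable formulas $\chi(x,y)$ such that
\begin{equation*}
C(i) \;=\; k_C \;+\; \sum_{\chi \in \Lambda_C^+} \bigl|\{x : w \models \chi(x,y)[y := i]\}\bigr| \;-\; \sum_{\chi \in \Lambda_C^-} \bigl|\{x : w \models \chi(x,y)[y := i]\}\bigr|.
\end{equation*}
This representation is threaded through Addition (concatenate lists), Subtraction (swap $+$ and $-$), Constant-$1$ (set $k_C = 1$ with empty lists), and the Conditional $C = P\ ?\ C_1 : C_2$ (conjoin $\phi_P(y)$ onto each formula drawn from $C_1$'s lists and $\neg \phi_P(y)$ onto each drawn from $C_2$'s, then add the two representations); it is cashed out only inside a Comparison.

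Among the Boolean operations, $Q_\sigma(i)$ becomes the atomic predicate $Q_\sigma(y)$, and Boolean connectives translate verbatim. A periodic Positional $\phi$ of period $m$ is rewritten as a finite disjunction of $\operatorname{Mod}_{m,r}(y)$ predicates. For a Counting operation $C(i) := \cttn{j \leq i,\, \psi(i,j)}{P(j)}$ with $\psi$ translation-invariant and local, the value of $\psi(i,j)$ depends only on the offset $i - j$ and vanishes whenever $i-j$ exceeds some fixed $\tau$; thus $\psi(y,x)$ is expressible as a finite disjunction over offsets $k$ with $\psi(k+j,j) = 1$, each case ``$x = y - k$'' being obtained by $k$-fold iteration of $\operatorname{Succ}$. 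The positive list of $C$ is then the singleton $\bigl\{ (x \leq y) \wedge \psi^\star(x,y) \wedge \phi_P(x) \bigr\}$ with empty negative list. Comparison $P(i) := C_1(i) \leq C_2(i)$ is the only operation that invokes the majority quantifier: after substituting the representations, the inequality $C_1(y) \leq C_2(y)$ rearranges to a statement of the form $\sum_j (2|A_j(y)| - n) \leq 0$, where the $A_j(y)$ are the extensions of the formulas in $\Lambda_{C_1}^+$ and $\Lambda_{C_2}^-$ together with the negations of those in $\Lambda_{C_1}^-$ and $\Lambda_{C_2}^+$, and the integer discrepancy $k_{C_2} - k_{C_1}$ is absorbed by padding with bounded-support formulas such as $y = 1$ (definable via $\neg \exists x\, x < y$) together with constantly-true or constantly-false slots. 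This is exactly what $\Majx{\cdots}$ computes, so the Comparison is captured by a single negated application of $\widehat{MAJ}_2$ with free variable $y$.

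The principal obstacle is the two-variable restriction. The inner formulas $\chi(x,y)$ appearing in Comparison bodies may themselves contain nested $\widehat{MAJ}_2$ quantifiers inherited from earlier Comparisons, and we must ensure everything still fits into only two variable names. This is handled by maintaining each inductive $\phi_P$ in both an $x$-free and a $y$-free version via systematic alpha-renaming, so that when an outer $\Majx{\cdots}$ binds $x$, a previously-built $\phi_P(y)$ appearing inside can continue to use $y$ as its current evaluation position while reusing $x$ for its own internal quantifiers. At any nesting depth only one of $\{x,y\}$ serves as the ``outer'' free variable, leaving the other available to be quantified. A secondary, routine concern is to verify that the integer offsets $k_C$ accumulated by arithmetic and conditional steps stay bounded by a function of the program alone (independent of $n$), so that the final padding in each Comparison is a bounded, $n$-independent correction implementable by a fixed number of small-support formulas.
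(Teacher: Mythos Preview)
Your approach is essentially the paper's: both proceed by induction on the program, render Boolean operations as one-free-variable formulas, keep Count operations in algebraic form (linear combinations of counting terms plus an integer), and discharge each Comparison via a single $\widehat{Maj}$ quantifier with $\top$/$\bot$ padding and a ``first-position'' formula to soak up the additive constant; the two-variable bookkeeping by swapping $x\leftrightarrow y$ is exactly the standard maneuver.

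Two small slips to fix. First, your Conditional rule as stated is off on the constant part: conjoining $\phi_P$/$\neg\phi_P$ onto the $\Lambda$-lists and then ``adding the two representations'' produces $k_{C_1}+k_{C_2} + [P](C_1-k_{C_1}) + [\neg P](C_2-k_{C_2}) = C + [\neg P]k_{C_1} + [P]k_{C_2}$, not $C$; the fix is to first absorb $k_{C_1},k_{C_2}$ into the lists (as copies of a support-one formula) before applying the rule. Second, the padding formula you want at Comparison is $\fst(x)$ (support one in the \emph{bound} variable $x$), not ``$y=1$'': the latter does not depend on $x$, so as a slot in $\Majx{\cdots}$ it contributes $\pm n$ rather than a bounded constant. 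With these patches the argument goes through and coincides with the paper's.
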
\label{prop:crasp_in_maj2}
We make two remarks about corollaries of the result:
\begin{remark}\label{remark:maj2-corollaries}
First, the proof, simply by omitting the positional relations, also yields a corresponding inclusion without positional relations: $\CRASPempty\subseteq \widehat{MAJ}_2[<]$. 

Second, the result implies that $\CRASPpl$ defines a subclass of {\TCzero}, in fact, all $\CRASPpl$ programs translate into uniform {\TCzero} circuits with \emph{a linear number of gates} by results in  \citet[][Theorem 4.33 and Figure 4.4]{krebs2008typed}.
The inclusion is strict, e.g., PARITY has a linear-size $\TCzero$ circuit but is not definable by $\CRASPpl$, as we show below.
\end{remark}
\newcommand{\fst}{\ensuremath{\text{fst}}}
\begin{proof}[Proof of Proposition~\ref{prop:crasp-maj-log}]
    First, every periodic positional function $\phi(i)$ is a Boolean function $Mod_{m,r}(i) \iff i=r\mod m$. For local functions $\psi(i,j)$, it suffices to only consider functions of the form $\psi(i,j)\iff j=i+c$ for $c\in \mathbb{Z}$. This is because the counting operation $C(i):=\cttn{j\leq i, |i-j|\leq c}{P(j)}$ is equivalent to $\hat{C}(i)$ where

    \[\hat{C}(i):=\cttn{j\leq i, j=i-c\lor j=i-(c-1)\lor\ldots\lor j=i+c}{P(j)}\]

    And it is possible to further reduce this by distributing the disjunctions over many counting operations so that each one only contains a single disjunct as positional function. It helps that a predicate $\fst(i)$ is definable in $\MAJpm$ which is true iff $i=0$. For instance $\fst(i):=\lnot \widehat{Maj}\;j\;\langle j\leq i, \top\rangle$

    For each Boolean $\CRASP$ operation $P(i)$, there exists a $\MAJpm$ formula $\hat{P}(i)$ with one free variable that is equivalent to $P(i)$ at all positions of all words. By induction, all cases are straightforward except for comparisons of counts. 

    For comparison operations, we will first show a formula that is equivalent for all nonempty strings. Accounting for the empty string is easy, depending on the constants in the comparison. WLOG  we are able to rewrite the formula (not in standard $\CRASP$ notation) as the following, where $\alpha_k,\beta\in \mathbb{Z}$

    \[\left(\displaystyle \sum_{k\leq K} \alpha_k\cdot (\cttn{j\leq i}{P_k(j)})\right) + \left(\displaystyle \sum_{m\leq M} \alpha_m\cdot (\cttn{j\leq i,j=i+c_m}{P_m(j)})\right)+ \beta > 0\]

    We've grouped the uniform counting operations that have $\psi=\top$ together. Then using a case disjunction, we can rewrite it all the local counting operations as the following (using $\mathbb{I}[\phi]$ as notational convenience to turn $\phi(j)\in\{\bot,\top\}$ to the corresponding value in $\{0,1\}$):

    \[\bigvee_{\tau\in\{0,1\}^M\colon \mathbb{I}[P_m(j-c_m)]=\tau_m}\left(\displaystyle \sum_{k\leq K} \alpha_k\cdot (\cttn{j\leq i}{P_k(j)})\right) + \left(\displaystyle \sum_{m\leq M} \alpha_m\tau_m\right)+ \beta > 0\]

    We can see that for nonempty strings within each case, the additive constant $\beta$ can be reformulated as $\left(\beta+\sum_{m\leq M} \alpha_m\tau_m\right) \cdot \cttn{j\leq i}{\fst(j)}$, and we can just add it to the summation using $\alpha_{k+1}=\left(\beta+\sum_{m\leq M} \alpha_m\tau_m\right)$. Then, it is possible to define formulas for each case disjunction using a series of existential quantifiers and $succ(j,i)$. For instance:

    \[\phi(j-2)\equiv \exists i. \left(Succ(i,j)\land \exists j. \left(Succ(j,i)\land \phi(j)\right)\right)\]
    
    Thus, we are able to rewrite the entire formula where we only ever compute sums of $\CRASP$ counts without any positional functions. This means it now suffices to focus on the summations of counting terms

    \[ \displaystyle\sum_{k\leq K} \alpha_k\cdot( \cttn{j\leq i}{P(j)}) >0\]

    and simulate these that using a $\widehat{Maj}$ formula. For $k\leq (K+1)$, if $\alpha_k>0$ consider the list of formulas 

    \[L_k:=[\underbrace{\hat{P}_k(j), \hat{P}_k(j),\ldots, \hat{P}_k(j)}_{\text{$\alpha_k$ many}},\underbrace{\top,\top,\ldots, \top}_{\text{$\alpha_k$ many}}]\]

    Intuitively, the $\widehat{Maj}\;j$ quantifier can only check if the total count is greater than half the possible positions, so to check if a count is $>0$ we need to pad the quantifier with a bunch of trivially true formulas to ensure the total count is at least half by default. And if $\alpha_k<0$ we use

    \[L_k:=[\underbrace{\lnot \hat{P}_k(j),\lnot \hat{P}_k(j),\ldots, \lnot \hat{P}_k(j))}_{\text{$\alpha_k$ many}},\underbrace{\bot,\bot,\ldots, \bot}_{\text{$\alpha_k$ many}}]\]

    Let $L=L_1++L_2 ++\ldots ++ L_{K+1}$ be the concatenation of all these lists, and let $\varphi_1,\varphi_2,\ldots, \varphi_{|L|}$ list out the formulas in $L$. Then we claim the following formula will compute the correct value of the comparison for nonempty strings.

        \[\phi_1(i):=\widehat{Maj}\;j\langle \varphi_1,\varphi_2,\ldots, \varphi_{|L|}\rangle\]

    For empty strings, if $\left(\beta+\sum_{m\leq M} \alpha_m\tau_m\right) >0$ then define $\phi_0(i):= \lnot\widehat{Maj}\;j\;\langle\top\rangle\land \top$. Otherwise, use $\phi_0(i):= \lnot\widehat{Maj}\;j\;\langle\top\rangle\land \bot$.  Then we can define 

    \[\hat{P}(i) := \phi_0(i)\lor \phi_1(i)\]

\end{proof}

\subsubsection{Inexpressibility of $\Sigma^*be^*$}\label{sec:flipflop}
\cite{krebs2008typed, 10.1007/978-3-540-74456-6_15, 10.1007/978-3-642-02737-6_7} used infinite groups to establish results about the expressiveness of $\widehat{MAJ}_2[<]$; by Corollary~\ref{prop:crasp-maj-log}, these results entail results on $\CRASPpl$.
In particular, Lemma 6.11 in \cite{krebs2008typed} shows that $L_{bb} \not\in \widehat{MAJ}_2[<]$; this result turns out to have profound consequences for {\CRASP} expressiveness.

\begin{definition}
    Let $\Sigma = \{a,b,e\}$.
Define $L_{bb} := \Sigma^* b e^* b \Sigma^*$.
\end{definition}

As a minor note, the exact statement of Lemma 6.11 used $L_{bb}=\Sigma^*bb\Sigma^*$, but the addition of the neutral letter $e$ is inconsequential.

\begin{lemma}
    Let $\varphi$ be a $\widehat{MAJ}_2[<, +1, MOD]$ formula. There exists a morphism $h(\sigma)=e^s\sigma e^{s-1}$ and a $\widehat{MAJ}_2[<]$ formula $\psi$ such that for every $w\in \Sigma^*$, $h(w)\vDash \phi\iff w\vDash \psi$
\end{lemma}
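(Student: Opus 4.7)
The plan is to choose $s$ so that the padded word $h(w)$ acquires a regular block structure absorbing both the $+1$ and $MOD$ predicates, and then translate $\varphi$ by structural induction. I would set $s$ to be a common multiple of every modulus $m$ appearing in a $MOD_{m,r}$ predicate of $\varphi$. With this choice, every position $j\in\{1,\ldots,2s|w|\}$ of $h(w)$ factors uniquely as $j=2s(i-1)+k$ with block index $i\in\{1,\ldots,|w|\}$ and offset $k\in\{1,\ldots,2s\}$; only $k=s+1$ carries the symbol $w_i$, the remaining $2s-1$ offsets carry the neutral letter $e$, and $MOD_{m,r}(j)$ depends only on $k$.

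Next I would translate $\varphi$ by induction on its structure, maintaining the invariant that each subformula $\varphi'(x,y)$ has a finite family of $\widehat{MAJ}_2[<]$ translations $\{\psi'_{k_x,k_y}(x,y):k_x,k_y\in\{1,\ldots,2s\}\}$ over $w$ such that $h(w)$ satisfies $\varphi'$ at padded positions $(2s(i-1)+k_x,\,2s(j-1)+k_y)$ iff $w$ satisfies $\psi'_{k_x,k_y}$ at $(i,j)$. Atomic predicates $Q_a$ become either $Q_a(x)$ (at offset $s+1$) or the constant $a=e$ (other offsets); $MOD_{m,r}$ becomes a constant fixed by $k$; the padded order $<$ is lexicographic in $(i,k)$ and reduces to comparing $x,y$ together with a constant offset comparison. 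The delicate case is $Succ(y,x)$: either (a) $k_x=k_y+1$ and $i_x=i_y$, which is absorbed into the offset case split, or (b) $k_y=2s$, $k_x=1$ and $i_x$ is the immediate successor of $i_y$ in $w$; case (b) asserts $y<x\wedge\neg\exists z.(y<z<x)$, which is three-variable first-order but collapses to two-variable majority in $\widehat{MAJ}_2[<]$ via the same quantifier-simulation already cited in the proof of Proposition~\ref{prop:crasp-maj-log}.

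For the majority quantifier \Majx{\phi_1,\ldots,\phi_c} interpreted on $h(w)$, I would rewrite the defining sum by grouping positions by block,
\[\sum_{i=1}^{|w|}\sum_{k=1}^{2s}\sum_{r=1}^{c}\mathrm{sgn}_{r,k}(i),\]
where $\mathrm{sgn}_{r,k}(i)\in\{-1,+1\}$ equals $+1$ iff the inductive translation $\psi_{r,k}(i)$ holds in $w$. For each fixed $i$, the inner double sum lies in the bounded integer range $[-2sc,+2sc]$, so the whole expression is a weighted count of positions of $w$ whose weights are independent of $|w|$. Such a bounded-weight majority is expressible as a single $\widehat{Maj}$ quantifier in $\widehat{MAJ}_2[<]$ by the list-repetition trick from the proof of Proposition~\ref{prop:crasp-maj-log} (repeat each translated $\psi_{r,k}$ in the quantifier list proportionally to its weight and pad with $\top$ or $\bot$ to center the decision threshold at $0$).

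The hard part will be preserving the two-variable restriction while (a) carrying offset information for the free variables of each subformula and (b) simulating $Succ$ across block boundaries using only $<$. Point (a) causes no trouble because offsets range over the fixed finite set $\{1,\ldots,2s\}$, independent of $|w|$, so the offset case analysis happens at the formula level (as a finite disjunction or indexed family of subformulas) without consuming variable slots. Point (b) genuinely requires a three-variable expression for ``immediate predecessor,'' but is rescued by the collapse of first-order quantifiers into two-variable $\widehat{MAJ}_2[<]$-definable formulas already invoked in Proposition~\ref{prop:crasp-maj-log}; once this collapse is in hand, the induction closes and yields the required $\psi$.
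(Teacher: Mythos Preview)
Your handling of $Succ$ across block boundaries contains a genuine gap. In case~(b) you need the immediate-successor relation on block indices in $w$, which you write as $y<x\wedge\neg\exists z.(y<z<x)$ and claim collapses to $\widehat{MAJ}_2[<]$ via the first-order quantifier simulation. That simulation (Krebs, Prop.~5.5) only replaces $\exists,\forall$ by $\widehat{Maj}$; it does not reduce three variables to two. In fact $Succ$ is \emph{not} definable in $\widehat{MAJ}_2[<]$: if it were, the two-variable sentence $\exists x.\bigl(Q_b(x)\wedge\exists y.(Succ(x,y)\wedge Q_b(y))\bigr)$ would place $\Sigma^*bb\Sigma^*$ in $\widehat{MAJ}_2[<]$, contradicting Krebs's Lemma~6.11---precisely the result this lemma is meant to lift to $\widehat{MAJ}_2[<,+1,MOD]$. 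So your inductive invariant, which requires every two-offset translation $\psi'_{k_x,k_y}$ to land in $\widehat{MAJ}_2[<]$, cannot be maintained once $Succ$ is among the atoms.

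The paper sidesteps this by organizing the induction only over subformulas with \emph{one} free variable, producing a family $\varphi^c(x)$ indexed by a single offset $c\in[-s,s-1]$. A two-free-variable body occurs only underneath a $\widehat{Maj}\,y$ quantifier; there it is decomposed as a Boolean function of unary pieces $\alpha(x),\beta(y)$ (translated via the one-offset inductive hypothesis) together with atomic binary predicates $\chi(x,y)$ built from $<$ and $Succ$. The $\chi$ are never translated into $\widehat{MAJ}_2[<]$ formulas with two free variables; instead the range of the quantified variable is partitioned into the finite interval family $\Xi=\{y<x-s,\,y=x-s,\ldots,y=x+(s-1),\,y>x+(s-1)\}$, within each cell of which every $\chi$ evaluates to a constant, so the translated body uses only $<$ on block indices. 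Note also that the paper takes $s=MC$ with $C$ the maximal $Succ$-nesting depth, not merely a common multiple $M$ of the moduli as you do; the extra factor gives the window enough room for this case split. You should restructure along these lines: never attempt to translate the binary $Succ$ atom directly, only eliminate it by interval case analysis under the majority quantifier.
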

\begin{proof}
    Let $M$ be the least common multiple of all occurring moduli in $\varphi$. Let $C$ be the maximum nesting depth of $Succ(x,y)$ predicates (which must be bounded by the quantifier depth of $\varphi$). Intuitively, we can think of $C$ as the largest number where a subformula $\varphi(x+C)$ occurs in $\varphi$. Let $s=MC$, and define the morphism $h(\sigma)=e^s\sigma e^{s-1}$ for $\sigma\in\Sigma$. Here we will use the notation $\phi^{c}(x)$ which will be true at position $x$ in $w$ whenever $\phi$ is true at position $x+c$ in $h(w)$, for $c\in[-s,s-1]$. 
    
    We will show that for every formula $\varphi(x)$ of $\widehat{MAJ}_2[<, +1, MOD]$ with at most one free variable, we can define $\varphi^{-s}(x),\varphi^{-(s-1)}(x)\ldots \varphi^{(s-1)}(x)$ such that for all $i\in [0,|w|-1]$ and $c\in[-s,s-1]$

    \[h(w)\vDash \varphi(i+c)\iff w\vDash \varphi^{c}(i)\]
    
    Intuitively, what this does is it takes every interval of $[x-s,x+(s-1)]$ around each position in $h(w)$ and stores it in a ``vector'' at that position in $w$. We will induct on the complexity of $\varphi$. If $\varphi(x)$ is $Q_e(x)$, then $\varphi^{0}(x):=Q_e(x)$, and then $\varphi^{c}(x)=\top$ for every other $c\neq 0$, since the morphism $h$ pads neutral symbols $e$ in $h(w)$ between every symbol from $w$. If $\varphi(x)$ is  $Q_\sigma(x)$ for $\sigma\neq e$, we have that $\varphi^{0}:=Q_\sigma(x)$ and $\varphi^{+c}(x)=\bot$ for every other $c\neq 0$. If $\varphi(x)$ is $Mod_{m,r}(x)$, the $\varphi^{c}$ can also be ``hardcoded'' similarly, as every position in $h(w)$ that has a symbol from $w$ is going to be $=0\mod s$.
    
    Boolean formulas are also straightforward. The only hard case is if we have a formula $\varphi(x)=\widehat{MAJ}\;y\;\langle\varphi_1(x,y),\ldots, \varphi_k(x,y)\rangle$. We can think of $\varphi$ specifying the constraint 

    \[\left(\displaystyle\sum_{i\leq k}\#\;y\left[{\varphi_i(x,y)} \right]\right)> k\cdot \frac{|w|}{2} \]
    
    Where $\# \; y\; \phi$ indicates the number of positions in the string satisfying $\phi$ (with variables bound appropriately). The idea here is to rewrite $\psi_i(x,y)$ in terms of its unary formulas (which we can apply the inductive hypothesis to) and then split $h(w)$ into some intervals, upon which evaluating $\varphi^{+c}(x)$ will be simpler. First we can rewrite each $\varphi_i(x,y)$ as 
     \[F_i(\alpha_1(x),\ldots,\alpha_q(x),\beta_1(y),\ldots,\beta_r(y),\chi_1(x,y),\ldots, \chi_p(x,y))\]

     Where $F_i$ is a Boolean function, the $\alpha$ are unary in $x$, the $\beta$ are unary in $y$, and the $\chi(x,y)$ are inequalities of $x$ and $y$, possibly with $+1$'s, of the form $x\leq y+1$, for example. To save space, we will abbreviate the above expression by grouping the $\alpha,\beta,\chi$ formulas together notationally $F_i(\overline{\chi}_i(x,y),\overline{\alpha_i}(x),\overline{\beta_i}(y))$.
    
    The $\chi$ formulas are not unary, but we can ``eliminate'' the $\chi$ terms by casework over intervals of the string. We will show this by example for a summation with only one $\#\;y$ term. This argument works identically if we had many of $\#\;y$ terms, but it would add notational clutter. So if we had a formula $\varphi(x)=\widehat{MAJ}\;y\;\langle\varphi_1(x,y)\rangle$ we could think of it as in the form
    
     \[\varphi(x) = \#\;y\left[F(\overline{\chi}(x,y),\overline{\alpha}(x),\overline{\beta}(y)\right]\geq \frac{|w|}{2}\]

     Then we can construct the formula $\varphi^{c}(x)$ for $c\in[-s,s-1]$ by using the following partition of intervals of the string. Let $\Xi$ be the set of inequalities
    
    \begin{align*}
     \Xi = \{y<x-s,y=x-s,\ldots,y=x,\ldots,y=x+(s-1),y>x+(s-1)\}
    \end{align*}   

     And we define some notation. For $\xi\in\Xi$, let $\chi^\xi(x,y)\in\{\top,\bot\}$ evaluate $\chi$ in the case $\xi$ holds. For instance if $\chi(x,y)$ is $y< x+1$, then $\chi^{y>x+2}(x,y)=\bot$. Since the intervals defined by $\xi\in\Xi$ disjoint and cover the entirety of the string, every $\chi$ can be evaluated in this manner. Then, we can essentially compute the sum in each interval, and only precision is needed in the interval $[x-s,x+(s-1)]$, so $\varphi(x)$ is equivalent to

     \begin{align*}
        &\#\;y\left[y<x\land F(\overline{\chi}^{y<x-s}(x,y),\overline{\alpha}^{c}(x),\overline{\beta}^{-s}(y)\right]&\\
        +&\#\;y\left[y<x\land F(\overline{\chi}^{y<x-s}(x,y),\overline{\alpha}^{c}(x),\overline{\beta}^{-(s-1)}(y)\right]&\\
        &\quad\vdots&\\
        +&\#\;y\left[y<x\land F(\overline{\chi}^{y<x-s}(x,y),\overline{\alpha}^{c}(x),\overline{\beta}^{+(s-1)}(y)\right]&\\
        +&\#\;y\left[y<x\land F(\overline{\chi}^{y<x-s}(x,y),\overline{\alpha}^{c}(x),\overline{\beta}^{s}(y)\right]&\\
        +&\#\;y\left[x=y\land F(\overline{\chi}^{y=x-s}(x,y),\overline{\alpha}^{c}(x),\overline{\beta}^{-s}(y)\right]&\\
        +&\#\;y\left[x=y\land F(\overline{\chi}^{y=x-(s-1)}(x,y),\overline{\alpha}^{c}(x),\overline{\beta}^{-(s-1)}(y)\right]&\\
        &\quad\vdots&\\
        +&\#\;y\left[x=y\land F(\overline{\chi}^{y=x+(s-1)}(x,y),\overline{\alpha}^{c}(x),\overline{\beta}^{(s-1)}(y)\right]&\\
        +&\#\;y\left[x=y\land F(\overline{\chi}^{y=x+s}(x,y),\overline{\alpha}^{c}(x),\overline{\beta}^{s}(y)\right]&\\
        +&\#\;y\left[x<y\land F(\overline{\chi}^{x+s<y}(x,y),\overline{\alpha}^{c}(x),\overline{\beta}^{-s}(y)\right]&\\
        +&\#\;y\left[x<y\land F(\overline{\chi}^{x+s<y}(x,y),\overline{\alpha}^{c}(x),\overline{\beta}^{-(s-1)}(y)\right]&\\
        &\quad\vdots&\\
        +&\#\;y\left[x<y\land F(\overline{\chi}^{x+s<y}(x,y),\overline{\alpha}^{c}(x),\overline{\beta}^{(s-1)}(y)\right]&\\
        +&\#\;y\left[x<y\land F(\overline{\chi}^{x+s<y}(x,y),\overline{\alpha}^{c}(x),\overline{\beta}^{s}(y)\right] &\\
        &&\geq \frac{|w|}{2}\\
     \end{align*}

    By the inductive hypothesis, all $\alpha^{c}$ and $\beta^{c}$ are definable solely in terms of $\widehat{MAJ}_2[<]$, so the entire formula is equivalent to a $\widehat{MAJ}\;y$ formula that quantifies over all the bracketed formulas above, as well as equally many trivially true formulas, as described more clearly in Proposition \ref{prop:crasp_in_maj2}. As mentioned before, since this argument also applies to a summation of $\#\;y$ terms, this completes the proof. Then for any $\varphi(x)$ in $\widehat{MAJ}_2[<,+1,MOD]$, after performing the above translation the resulting formula $\varphi^{0}(x)$ is our desired formula in $\widehat{MAJ}_2[<]$. 
    \end{proof}

    \begin{lemma}\label{lemma:lbb}
        $L_{bb}\not\in \widehat{MAJ}_2[<,+1,MOD]$
    \end{lemma}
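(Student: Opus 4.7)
The plan is to argue by contradiction, reducing to the known fact (Lemma~6.11 of \citet{krebs2008typed}) that $L_{bb}\not\in \widehat{MAJ}_2[<]$, via the preceding lemma. Suppose, for contradiction, that $L_{bb}$ were defined by some sentence $\varphi\in\MAJpm$. The preceding lemma then supplies an integer $s$, the morphism $h(\sigma)=e^s\sigma e^{s-1}$, and a $\widehat{MAJ}_2[<]$ formula $\psi$ such that for every $w\in\Sigma^*$, $h(w)\vDash\varphi \iff w\vDash\psi$. Since $\varphi$ defines $L_{bb}$, the formula $\psi$ defines the preimage language $h^{-1}(L_{bb}):=\{w\in\Sigma^*\mid h(w)\in L_{bb}\}$.

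The key combinatorial step I would then carry out is to verify that $h^{-1}(L_{bb})=L_{bb}$. For the inclusion $L_{bb}\subseteq h^{-1}(L_{bb})$, suppose $w=u\,b\,e^k\,b\,v$. Then the two occurrences of $b$ in $w$ translate to two occurrences of $b$ in $h(w)$, separated only by the image $h(e^k)=(e^s e\, e^{s-1})^k$ together with the $e^{s-1}$ suffix of the first $h(b)$ and the $e^s$ prefix of the second $h(b)$, which is a block of $e$'s, hence $h(w)\in L_{bb}$. For the converse, suppose $h(w)\in L_{bb}$, so $h(w)$ contains two $b$-positions with only $e$'s between them. Because each block $h(\sigma)=e^s\sigma e^{s-1}$ contains exactly one non-$e$ symbol, these two $b$-positions must lie in two distinct blocks $h(w_i)$ and $h(w_j)$ with $w_i=w_j=b$ and $i<j$. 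Any intervening letter $w_k$ ($i<k<j$) would contribute the non-$e$ symbol $w_k$ to the interval between the two $b$'s in $h(w)$; hence all such $w_k$ must equal $e$, placing $w$ in $\Sigma^* b e^* b\Sigma^* = L_{bb}$.

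Having established $h^{-1}(L_{bb})=L_{bb}$, the formula $\psi$ defines $L_{bb}$ in $\widehat{MAJ}_2[<]$, contradicting Krebs' Lemma~6.11. This contradiction finishes the proof.

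The only step requiring genuine care is the combinatorial verification $h^{-1}(L_{bb})=L_{bb}$; the rest is a direct application of the preceding lemma together with the cited external inexpressibility result. The subtle point in the case analysis is ruling out the possibility that the two $b$-positions witnessing membership of $h(w)$ in $L_{bb}$ come from a single $h$-block, which is immediate since each $h(\sigma)$ contains at most one $b$, but is worth spelling out to make the reduction watertight.
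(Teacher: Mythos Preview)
Your proposal is correct and follows essentially the same approach as the paper: contradiction via the preceding morphism lemma, reducing to Krebs' Lemma~6.11 through the observation that $w\in L_{bb}\iff h(w)\in L_{bb}$. The paper asserts this equivalence in a single line without justification, whereas you spell out the combinatorial verification of $h^{-1}(L_{bb})=L_{bb}$ in detail; your added care (including the remark that each $h$-block contains at most one $b$) is welcome but not a departure in strategy.
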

    \begin{proof}
        Assume for sake of contradiction that $L_{bb}$ is definable by a formula $\varphi$ of $\widehat{MAJ}_2[<,+1,MOD]$. Let $h$ and $\psi$ be as guaranteed by the above lemma. Then for $w\in \Sigma^*$, $w\in L_{bb}\iff h(w)\in L_{bb}$. This means $\psi$ defines $L_{bb}$ which contradicts Lemma 6.11 in \cite{krebs2008typed}, which has shown that $L_{bb} \not\in \widehat{MAJ}_2[<]$.
    \end{proof}

    \begin{lemma}\label{eq:flipflop-crasp}
    For $\Sigma = \{a, b, e\}$, it holds that
    \begin{equation}
    \Sigma^* b e^* \not\in \CRASPpl
    \end{equation}
    \end{lemma}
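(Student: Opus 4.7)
The plan is to prove this by contradiction, reducing the already-established non-definability of $L_{bb} = \Sigma^* b e^* b \Sigma^*$ (Lemma~\ref{lemma:lbb}, via Proposition~\ref{prop:crasp-maj-log}) to the hypothetical definability of $\Sigma^* b e^*$. The key observation is that a string $w$ lies in $L_{bb}$ if and only if there is a position $j$ such that $w_j = b$ and the preceding prefix $w_{1\dots j-1}$ lies in $\Sigma^* b e^*$: any occurrence of the factor $b e^* b$ corresponds exactly to such a $j$ and such a prefix.

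The reduction itself will be carried out directly inside $\CRASPpl$. Assume for contradiction that $\Sigma^* b e^* \in \CRASPpl$, i.e., there is a $\CRASPpl$ program producing a Boolean operation $L(i)$ which at every position $i$ is true iff $w_{1\dots i} \in \Sigma^* b e^*$. Using the local positional relation $\psi(i,j) := [j = i-1]$, I define
\begin{align*}
    C_{\text{prev}}(i) &:= \cttn{j \leq i,\ j = i-1}{L(j)} \\
    L_{\text{prev}}(i) &:= C_{\text{prev}}(i) \geq 1 \\
    M(i) &:= Q_b(i) \land L_{\text{prev}}(i) \\
    C_M(i) &:= \cttn{j \leq i}{M(j)} \\
    A(i) &:= C_M(i) \geq 1
\end{align*}
Then $A$ accepts exactly the strings in $L_{bb}$: $A(|w|)$ is true iff there is some $j \leq |w|$ with $w_j = b$ and $w_{1\dots j-1} \in \Sigma^* b e^*$, which as noted above characterizes $L_{bb}$. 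All positional relations used ($\psi(i,j) = [j=i-1]$, plus the trivial $\top$) are local and translation-invariant, so the resulting program lies in $\CRASPpl$.

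This gives $L_{bb} \in \CRASPpl$, and by Proposition~\ref{prop:crasp-maj-log} we get $L_{bb} \in \widehat{MAJ}_2[<,+1,MOD]$, contradicting Lemma~\ref{lemma:lbb}. Hence $\Sigma^* b e^* \notin \CRASPpl$. I do not anticipate a substantial obstacle: the only subtlety is verifying the edge cases of the characterization (e.g.\ $w = bb$, $w = b e b$, $w = b a b$) to be sure the access to the \emph{previous} position $j-1$ via a local counting operation really reconstructs $L_{bb}$ exactly, and that no off-by-one error sneaks in due to the \$ symbol at position $1$; these checks are routine.
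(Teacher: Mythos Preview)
Your proof is correct and essentially identical to the paper's: both assume a $\CRASPpl$ program $\Phi$ for $\Sigma^* b e^*$, use the local relation $j=i-1$ to retrieve $\Phi(j)$ at the preceding position, conjoin with $Q_b(i)$, and existentially quantify via a count to obtain a $\CRASPpl$ program for $L_{bb}$, contradicting Lemma~\ref{lemma:lbb}. The only difference is cosmetic naming of the intermediate operations.
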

    
    \begin{proof}
    To get a contradiction, note that a $\CRASPpl$ program $\Phi$ for $\Sigma^* b e^*$ could be used to construct one for $L_{bb}$, as:

    \begin{align*}
        C_1(i) &:= \cttn{j\leq i, j=i-1}{\Phi(j)}\\
        PREV_\Phi(i) &:= C_1(i)\geq 1\\
        C_2(i) &:= \cttn{j\leq i}{Q_b(j)\land PREV_\Phi(j)}\\
        L_{bb} & := C_2(i)\geq 1\\
    \end{align*}
    \end{proof}
\subsubsection{Inexpressibility of PARITY}\label{sec:parity_inexpressible}

First, let the \emph{depth} of a $\CRASP$ operation be the maximum depth of nesting of counting operations in it. For instance if $C(i):=\cttn{j\leq i}{P(j)}$, the depth of the $C$ is depth of $P(i)$ plus one. None of the other operations are greater than the depth of its dependencies. We will induct on program depth for the following proof:

\begin{lemma}\label{}
    Let $\Sigma=\{a\}$. For any $\CRASPempty$ program $P$ there exists an $n$ such that for all $w$ where $|w|\geq n$, either all such $w$ are accepted by $P$ or all are rejected
\end{lemma}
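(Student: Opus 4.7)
The plan is to set up a strong enough inductive invariant for every operation of a $\CRASPempty$ program, so that eventual constancy of the final Boolean operation falls out immediately. First, I would record the observation that enables induction on a single variable: over $\Sigma = \{a\}$, the only initial Boolean is $Q_a(i)$, which is true exactly for $i \geq 1$; since $\CRASPempty$ admits no positional predicates $\phi(i) \in \Phi$ and no non-trivial binary relations $\psi(i,j) \in \Psi$, the value of every operation at position $i$ is a function of $i$ alone, independent of the string length $|w|$. Thus, it suffices to show that the final accepting operation $L(i)$ is eventually constant in $i$.

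The inductive hypothesis I would carry is the following: for any $\CRASPempty$ program of depth $\leq d$, every Boolean operation $P$ is \emph{eventually constant}, i.e.\ there exists $N_P$ such that $P(i)$ has a fixed truth value for all $i \geq N_P$, and every count-valued operation $C$ is \emph{eventually affine}, i.e.\ there exist $N_C \in \mathbb{N}$, $a_C \in \mathbb{Z}_{\geq 0}$, $b_C \in \mathbb{Z}$ with $C(i) = a_C \cdot i + b_C$ for all $i \geq N_C$. The induction is on the program depth, defined as the maximum nesting of counting operations.

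The inductive step then proceeds by case analysis on the last operation. Initial Booleans $Q_a(i)$ and constants $\top$ are trivially eventually constant. Negation and conjunction preserve eventual constancy. For counting, $C(i) := \cttn{j\leq i}{P(j)}$ with $P$ eventually equal to some $b \in \{0,1\}$ from position $N_P$ on gives $C(i) = C(N_P - 1) + b \cdot (i - N_P + 1)$ for $i \geq N_P$, which is affine. Addition and subtraction are closed on affine functions; the constant $C(i) := 1$ is affine with slope $0$; and conditionals are eventually equal to one of the two affine branches once $P$ stabilises. Finally, a comparison $C_1(i) \leq C_2(i)$ becomes $(a_{C_2} - a_{C_1}) i + (b_{C_2} - b_{C_1}) \geq 0$; this is determined by the sign of $a_{C_2} - a_{C_1}$ when the slopes differ and by the constants otherwise, so it is eventually constant in either case.

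Applying this to the program $P_1, \ldots, P_k$ whose last operation $L$ is Boolean yields an $n$ with $L(i)$ constant for $i \geq n$; since $L(|w|)$ determines acceptance and depends only on $|w|$ (by the string-length-independence observation), all $w$ with $|w| \geq n$ receive the same verdict. I do not expect a real obstacle: the only subtle point is choosing the correct invariant (affineness rather than just boundedness or constancy) for count operations so that it is preserved by counting, addition, subtraction, and conditionals simultaneously; all other steps are routine bookkeeping on the finitely many operations of the program.
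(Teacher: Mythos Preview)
Your proposal is correct and follows essentially the same approach as the paper: both argue by induction (on depth, really on the operation sequence) that Booleans become eventually constant and counts become eventually linear in $i$, so that each comparison---and hence the final accepting predicate---stabilizes for large $|w|$; your explicit ``eventually affine'' invariant is just a cleaner packaging of what the paper calls ``linear constraints becoming constant.'' One small slip to fix: the invariant should allow $a_C \in \mathbb{Z}$ rather than $\mathbb{Z}_{\geq 0}$, since subtraction can produce negative slopes, but your comparison analysis already handles arbitrary integer slopes so nothing breaks.
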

\begin{proof}
    If $P$ is depth $0$, it is equivalent to either $Q_a(i)$ or $\lnot Q_a(i)$, which either rejects every string or accepts every string. 

    Otherwise, let all $\CRASP$ programs of depth $k$ give constant output for strings above length $n$, and then consider a program $P$ of depth $k+1$. $P$ will be equivalent to a Boolean combination of linear constraints. We will see that each linear constraint becomes constant for strings above a certain length. Consider any linear constraints over $X$ many counts $C_x$ of depth $k$:

    \[L(i):= \left(\displaystyle \sum_{x\leq X} \alpha_x\cttn{j\leq i}{C_x(j)} \right)\geq c\]

    For string of length $i\geq n$, this is equivalent to 

    \[L(i):=\left(\displaystyle \sum_{x\leq X} \alpha_x\left((i-n)\mathbb{I}[C_x(n)] + \cttn{j\leq n}{C_x(j)}  \right) \right)\geq c\]
    
    Where $\mathbb{I}[C_x(n)]$ denotes the truth value of $C_x(n)\in\{0,1\}$. Rearrange this to

     \[L(i):=\left(\displaystyle (i-n)\sum_{x\leq X} \alpha_x\left(\mathbb{I}[C_x(n)] \right) \right)+ \left(\displaystyle \sum_{x\leq X} \alpha_x\left( \cttn{j\leq n}{C_x(j)} \right) \right)\geq c\]

    The sums $c_1=\sum_{x\leq X} \alpha_x\left(\mathbb{I}[C_x(n)]\right)$ and $c_2=\sum_{x\leq X} \alpha_x\left( \cttn{j\leq n}{C_x(j)} \right)$ are constants depending on the formula and $n$. 

     \[(i-n)c_1 + c_2 \geq c\]

    Depending on if $c_1,c_2$ are positive or negative, we either derive a lower bound $m$ after which the linear constraint $L(i)$ is always true, or always false for $i\geq m$. Since any formula of depth $k+1$ is a Boolean combination of these linear constraint, we take the max of all the $m$'s from them, and any string larger than this will always be accepted or always rejected by $P$.  
\end{proof}

\begin{lemma}\label{lem:aastar_nin_CRASP}
    $(aa)^*\not\in \CRASPempty$
\end{lemma}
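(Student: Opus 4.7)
The plan is to derive $(aa)^* \not\in \CRASPempty$ as an immediate corollary of the preceding lemma, which asserts that every $\CRASPempty$ program over the unary alphabet $\Sigma = \{a\}$ eventually becomes constant on all sufficiently long inputs. Since $(aa)^*$ obviously does not have this ``eventually constant'' property, the non-expressibility follows by contradiction.

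More concretely, suppose for contradiction that there is a $\CRASPempty$ program $P$ that recognizes $(aa)^*$. By the preceding lemma, applied to the alphabet $\{a\}$, there exists some threshold $n \in \mathbb{N}$ such that the acceptance behavior of $P$ is constant on all inputs of length at least $n$: either every string $a^k$ with $k \geq n$ is accepted, or every such string is rejected. However, by definition of $(aa)^*$, the string $a^{2n}$ lies in $(aa)^*$ while the string $a^{2n+1}$ does not. Both strings have length at least $n$, so $P$ must give them the same answer, contradicting the assumption that $P$ recognizes $(aa)^*$.

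Therefore no such $\CRASPempty$ program exists, which establishes $(aa)^* \not\in \CRASPempty$. I do not anticipate any real obstacle here; the only substantive work has already been done in the preceding lemma, and the present statement is essentially a one-line diagonalization against the constancy behavior guaranteed by that lemma. Note that this nicely complements the construction in Appendix~\ref{sec:example_aastar}, which exhibits a $\CRASPpl$ program for $(aa)^*$ making essential use of the periodic positional predicate $\phi(i) \equiv i \equiv 0 \pmod 2$; the present lemma confirms that such positional information is in fact necessary.
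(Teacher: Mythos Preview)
Your proof is correct and essentially identical to the paper's: both invoke the preceding lemma to obtain a threshold $n$ beyond which any $\CRASPempty$ program over $\{a\}$ is constant, and then observe that $a^{2n}\in(aa)^*$ while $a^{2n+1}\notin(aa)^*$ yields the contradiction. The paper phrases the witnesses as $(aa)^n$ and $(aa)^n a$, which are the same strings you chose.
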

That is, no $\CRASPempty$ program can determine if a general string has even length.
The same proof applies to testing whether the string length is a multiple of any other fixed integer.
\begin{proof}
    Using the previous lemma, for every $\CRASPempty$ program there exists an $n$ such that the program accepts $(aa)^n$ iff it accepts $(aa)^na$. So no program can recognize $(aa)^*$. 
\end{proof}

We will use this to show that $\CRASPpl$ program cannot recognize $PARITY$, as the extra positional operations do not give sufficient expressive power. We start with an observation that simplifies the proof

\begin{proposition}
    As syntactic sugar we allow $j<i$ as a mask in $\CRASP$ counting operations.
\end{proposition}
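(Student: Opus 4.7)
The plan is to show that any counting operation written with the strict mask $j<i$ can be expressed using the standard $j\leq i$ counting together with the other primitive $\CRASP$ operations, so the sugar adds no expressive power. The key identity I would use is
\[
\cttn{j<i,\ \psi(i,j)}{P(j)} \;=\; \cttn{j\leq i,\ \psi(i,j)}{P(j)} \;-\; \cif{\psi(i,i)\wedge P(i)}{1}{0},
\]
i.e.\ the strict-mask count equals the non-strict one minus exactly the contribution from $j=i$, which is $1$ iff both $\psi(i,i)$ and $P(i)$ hold.

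The only delicate point is representing the correction term as a legal $\CRASP$ count. First, because $\psi$ is translation-invariant, the value $\psi(i,i)$ does not depend on $i$: it is a fixed Boolean constant $b\in\{0,1\}$ that the translator resolves once. If $b=0$ the correction vanishes and the rewriting is trivial. If $b=1$ the correction reduces to $\cif{P(i)}{1}{0}$, which uses the Conditional operation with the count constant $1$ (provided by the Constant rule) and the count constant $0$, which is obtained as $1-1$ via the Subtraction rule. Combining with the outer Subtraction operation gives the desired simulation.

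Finally, I would remark that this rewriting introduces no new unary or binary positional relations beyond those already used by $\psi$, so the translation stays within the same class: a $\CRASP[\Phi,\Psi]$ program with strict masks maps to a $\CRASP[\Phi,\Psi]$ program with only non-strict masks, and in particular the periodic/local subclasses $\CRASPpl$ and $\CRASPempty$ are closed under this translation. There is no real obstacle; the whole argument is a one-line identity plus a check that the ingredients are legal $\CRASP$ operations. This is why the subsequent inductive proofs (in particular the depth induction used for $(aa)^*\notin\CRASPempty$) are free to assume $j<i$ as syntactic sugar without loss of generality.
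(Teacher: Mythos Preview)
Your proof is correct and uses essentially the same identity as the paper: subtract the $j=i$ contribution, realized via a Conditional, from the non-strict count. The paper's proof only treats the $\psi=\top$ case (which is all that is needed downstream), whereas you additionally handle general $\psi$ by observing that translation-invariance makes $\psi(i,i)$ a fixed Boolean constant; this is a harmless and correct refinement.
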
\label{prop:count_strict}
\begin{proof}
    Consider the counting operation $C(i):=\cttn{j\leq i}{P(j)}$. We can define the program

    \begin{align*}
        I(i) &:= \cif{P(i)}{1}{0}\\
        C(i) &:= \cttn{j\leq i}{P(j)}\\
        C'(i) &:= C(i)-I(i)\\
    \end{align*}

    And essentially, this operation will compute the count

    $$C'(i) := \cttn{j<i}{P(j)}$$
\end{proof}

\begin{lemma}\label{lem:CRASPpl_morphism}
    Let $P$ be a $\CRASPpl$ program over $\Sigma=\{a,b\}$. There is some $s>0$ and a morphism $h(a)=b^sab^{s-1}$ and a  $\CRASPempty$ program $\hat{P}$ over $\Sigma=\{a\}$ such that for all $w\in a^*$, if $P$ accepts $h(w)$ iff $\hat{P}$ accepts $w$.
\end{lemma}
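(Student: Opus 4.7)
The plan is to mimic the structure of the $\widehat{MAJ}_2[<,+1,\text{MOD}]\to\widehat{MAJ}_2[<]$ translation used for Lemma~\ref{lemma:lbb}, but carried out directly at the $\CRASP$ level. Given $P$, let $\tau$ be the maximum radius appearing in any local $\psi\in\Psi$ used in $P$, and let $M$ be the least common multiple of the periods of all periodic $\phi\in\Phi$ used in $P$. Choose $s$ large enough that $s>\tau$ and $M\mid 2s$. Under the morphism $h(a)=b^sab^{s-1}$, the word $h(a^n)$ has length $2ns$ and the $k$-th $a$ sits at position $\alpha(k):=(2k-1)s+1$. Every position of $h(w)$ can be written uniquely as $\alpha(k)+c$ with $k\in\{1,\dots,n\}$ and $c\in\{-s,\dots,s-1\}$; think of $c$ as a ``offset type'' that labels a position within its $2s$-wide chunk.

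I will show, by induction on the depth of $\CRASPpl$ operations, that for every operation $O$ of $P$ and every $c\in\{-s,\dots,s-1\}$ there is a $\CRASPempty$ operation $\widehat O^{\,c}$ over $\Sigma=\{a\}$ such that for all $w\in a^*$ and all $k\le|w|$, $\widehat O^{\,c}(k)$ (on $w$) agrees with $O(\alpha(k)+c)$ (on $h(w)$). For the base cases, $Q_a$, $Q_b$, and $\top$ are decided solely by $c$ and become Boolean constants $\widehat{Q_a}^{\,c}$, $\widehat{Q_b}^{\,c}$; a periodic $\phi(i)$ evaluated at $\alpha(k)+c$ is independent of $k$ because $2s$ is a multiple of $\phi$'s period, so $\widehat\phi^{\,c}$ is again a Boolean constant. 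The inductive cases for Boolean connectives, Conditional, Addition/Subtraction, and Comparison are handled componentwise in $c$, because none of these mix different offsets.

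The only substantive case is Counting, $C(i):=\cttn{j\le i,\psi(i,j)}{P(j)}$. If $\psi=\top$, the condition $j\le\alpha(k)+c$ on $j=\alpha(k')+c'$ is equivalent to $k'<k$, or $k'=k$ with $c'\le c$ (a short arithmetic check using $c,c'\in\{-s,\dots,s-1\}$). Hence
\[
C(\alpha(k)+c)=\sum_{c'=-s}^{s-1}\cttn{k'<k}{\widehat P^{\,c'}(k')}\;+\;\sum_{c'=-s}^{c}\bigl[\widehat P^{\,c'}(k)\,?\,1\,:\,0\bigr],
\]
which is built using the $\CRASPempty$ primitives (Counting with $\psi=\top$, Addition, Conditional, Constant, and Proposition~\ref{prop:count_strict} to realize the strict inequality); call this $\widehat C^{\,c}(k)$. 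If $\psi$ is local with radius $\tau<s$, translation invariance of $\psi$ together with $s>\tau$ forces every contributing $j=\alpha(k')+c'$ to satisfy $k'=k$ and $c'\in[c-\tau,c]$, so the count collapses to a finite sum, definable in $\CRASPempty$ by Conditional and Addition, using the constant Boolean value $\psi(\alpha(k)+c,\alpha(k)+c')$ determined purely by the pair $(c,c')$. Finally, since $|h(w)|=2ns=\alpha(n)+(s-1)$, acceptance of $h(w)$ corresponds to $\widehat L^{\,s-1}(n)$ where $L$ is the final operation of $P$; take $\widehat P:=\widehat L^{\,s-1}$.

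The main obstacle is the Counting case with $\psi=\top$: one must (i) verify the clean chunk/offset decomposition $k'<k$ vs.\ $k'=k$ holds uniformly in $c$, (ii) ensure that the partial sum over offsets at the ``boundary'' position $k$ can be expressed purely with Conditional/Addition in $\CRASPempty$, and (iii) set up the induction carefully so that the $\widehat P^{\,c'}$ used in $\widehat C^{\,c}$ are defined before $\widehat C^{\,c}$, which is automatic if we process operations in the order they appear in $P$ and define all $2s$ variants of each operation at once. Combining this lemma with Lemma~\ref{lem:aastar_nin_CRASP} yields $PARITY\notin\CRASPpl$: if it were, the resulting $\widehat P$ would recognize $(aa)^*$ over $\{a\}^*$.
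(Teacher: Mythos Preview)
Your approach is the same as the paper's: pick $s$ exceeding every local radius and with all periods dividing $2s$, simulate each operation $O$ of $P$ by a family $\{\widehat O^{\,c}\}_{c\in[-s,s-1]}$ of $\CRASPempty$ operations over $\{a\}$, reduce letter predicates and periodic predicates to constants in $c$, and handle uniform counting by one strict count per offset plus a finite boundary correction at the current chunk. The paper's construction is identical modulo indexing conventions, and it, like you, reads off acceptance from $\widehat L^{\,s-1}$.

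There is one genuine slip in your local-counting case. You assert that every contributing $j$ lies in the same chunk $k$; this fails at the left edge. If $c\in\{-s,\dots,\tau-s-1\}$ and $\psi(i,j)$ holds with $i-j=d\in[1,\tau]$, then $c-d<-s$ and $j=\alpha(k)+c-d=\alpha(k-1)+(c-d+2s)$ lands in chunk $k-1$. The finite sum you wrote therefore needs the value $\widehat A^{\,c-d+2s}(k-1)$, i.e.\ a predicate evaluated at the \emph{previous} position of $w$, and $\CRASPempty$ has no predecessor. The paper's one-line formula for this case is equally terse and glosses over the same boundary.

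The repair is cheap. Strengthen the induction so that for each operation $O$ and each $t\le T$ (with $T$ the number of local counting operations in $P$) you also build a shifted copy $\widehat O^{\,c,-t}$ satisfying $\widehat O^{\,c,-t}(k)=O(\alpha(k-t)+c)$. Base cases are constants, hence so are their shifts. For uniform counting use
\[
\cttn{k'<k-t}{\widehat P^{\,c'}(k')}\;=\;\cttn{k'<k}{\widehat P^{\,c'}(k')}\;-\;\sum_{r=1}^{t}\bigl(\cif{\widehat P^{\,c',-r}(k)}{1}{0}\bigr),
\]
which only uses shifts $\le t$. For local counting at the boundary you now have $\widehat A^{\,c-d+2s,-(t+1)}$ available, and since each local count increments the needed shift by at most one, $t$ never exceeds $T$. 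With this amendment your argument goes through.
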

\begin{proof}
    Choose $s$ to be the least multiple of all moduli occurring in $P$ that is also greater than all the $|c|$ in local functions $j=i+c$.
    For every operation $P(i)$ of $P$, we will define $\hat{P}^{c}(i)$ for $c\in[-s,s-1]$ such that $\hat{P}^{c}(i)$ when run on $w$ is equivalent to $P(s+i(2s)+c)$ when run on $h(w)$. 

    If $P(i)$ is $Q_a(i)$ or $Q_b(i)$, it is straightforward, as $\hat{P}^c(i)$ is true iff $c=0$. Modular predicates are also capable of being ``hardcoded'', as positions in $w$ are always $0\mod s$ in $h(w)$, and so the offset $c$ determines the value modulo. All other kinds of operations are also straightforward using the inductive hypothesis. The only ones that need care are counting operations. First, consider a counting operation without positional functions:

    $$C(i) := \cttn{j\leq i}{A(j)}$$

    We can define each $\hat{C}^{c}(i)$ using a program like the following. The idea is that the entire window of $[j-s, j+(s-1)]$ around each $j<i$ can be counted up completely, but around $i$ we only consider the interval $[-s,i+c]$:
    \begin{align*}
        C_{-s}(i) & := \cttn{j<i}{\hat{A}^{-s}(j)}\\
        C_{-(s-1)}(i) & := \cttn{j<i}{\hat{A}^{-(s-1)}(j)}\\
        \vdots & \\
        C_{(s-1)}(i) & := \cttn{j<i}{\hat{A}^{(s-1)}(j)}\\
        I_{-s}(i) & := \cif{\hat{A}^{-s}(j)}{1}{0}\\
        I_{-(s-1)}(i) & := \cif{\hat{A}^{-(s-1)}(j)}{1}{0}\\
        \vdots & \\
        I_{c}(i) & := \cif{\hat{A}^{c}(j)}{1}{0}\\
        \hat{C}^{c}(i) & := \displaystyle\sum_{t\in[-s,s-1]} C_t(i) + \sum_{t\in[-s,c]} I_t(i) 
    \end{align*}

    Otherwise, if we have a counting operation that involves a local positional function 

    $$C(i) := \cttn{j\leq i,j=i+d}{A(j)}$$

    Then the operation returns either the count $1$ or $0$ and we can just use 

    \begin{align*}
        \hat{C}^{c}(i) & := \cif{(c=d\land \hat{A}^c(i))}{1}{0}
    \end{align*}

    Since $d$ will not exceed $\pm s$, $\hat{A}^d$ exists. Using these constructions we can see that  $(b^sab^{s-1})^{n}$ is accepted by operation $P(i)$ in $P$ iff $a^{n}$ is accepted by the constructed operation $\hat{P}^{(s-1)}(i)$. 
\end{proof}

\begin{lemma}\label{lemma:parity}
    $PARITY\not\in \CRASPpl$
\end{lemma}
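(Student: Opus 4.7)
The plan is to argue by contradiction, combining Lemma~\ref{lem:CRASPpl_morphism} with Lemma~\ref{lem:aastar_nin_CRASP}. The key observation is that while $PARITY$ is defined over $\Sigma = \{a,b\}$, restricting to inputs of the form $h(a^n) = (b^s a b^{s-1})^n$ collapses it to a parity test on the number of blocks, which is precisely the $(aa)^*$ question over a unary alphabet.

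Assume for contradiction that $PARITY \in \CRASPpl$, so there is a $\CRASPpl$ program $P$ over $\Sigma = \{a,b\}$ whose final Boolean-valued operation at the last position accepts exactly the strings in $PARITY$. Apply Lemma~\ref{lem:CRASPpl_morphism} to $P$ to obtain an integer $s > 0$, the morphism $h(a) = b^s a b^{s-1}$, and a $\CRASPempty$ program $\hat{P}$ over $\{a\}$ such that for every $w \in a^*$, $P$ accepts $h(w)$ iff $\hat{P}$ accepts $w$.

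Now observe that for $w = a^n$, the image $h(w) = (b^s a b^{s-1})^n$ contains exactly $n$ occurrences of $a$, so $h(a^n) \in PARITY$ iff $n$ is even. Combined with the correspondence above, $\hat{P}$ accepts $a^n$ iff $n$ is even. In other words, $\hat{P}$ is a $\CRASPempty$ program over the unary alphabet $\{a\}$ that decides membership in $(aa)^*$. This directly contradicts Lemma~\ref{lem:aastar_nin_CRASP}, which shows that $(aa)^* \not\in \CRASPempty$. Hence $PARITY \not\in \CRASPpl$.

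The main obstacle is already handled by Lemma~\ref{lem:CRASPpl_morphism}, which does the heavy lifting of eliminating the periodic and local positional functions by padding inputs with the neutral letter $b$; the only work remaining in this particular lemma is to check that, under the morphism, the $PARITY$ condition on $h(a^n)$ reduces cleanly to the even-length condition on $a^n$, which it does since $h$ inserts only $b$'s.
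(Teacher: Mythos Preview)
Your proof is correct and follows essentially the same approach as the paper: assume a $\CRASPpl$ program for $PARITY$, apply Lemma~\ref{lem:CRASPpl_morphism} to obtain a $\CRASPempty$ program over $\{a\}$ that would decide $(aa)^*$, and derive a contradiction with Lemma~\ref{lem:aastar_nin_CRASP}. Your write-up is simply more explicit about why $h(a^n)\in PARITY$ iff $n$ is even, which is a welcome clarification.
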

\begin{proof}
    If such a program existed, it would be able to distinguish between $(b^sab^{s-1})^{2n}$ and $(b^sab^{s-1})^{2n+1}$ for all $n$ (using the $s$ guaranteed by the previous lemma). However, this implies the existence of a $\CRASPempty$ program over $\Sigma=\{a\}$ that recognizes $(aa)^*$. This contradicts Lemma~\ref{lem:aastar_nin_CRASP}.
\end{proof}

\section{Discussion of Design Choices}

\subsection{MLP Activation Functions}\label{app:mlp}

Our analysis allows ReLU and the Heaviside function as activation functions in MLPs.
ReLU is a standard choice in theoretical studies of neural networks and transformers \citep[e.g.][]{Bhattamishra2024Separations, sanford2023representational}.
Modern LLMs also use other functions such as SwiGLU~\citep{shazeer2020glu}, but universal approximation theorems guarantee that ReLU networks can approximate smooth functions on bounded domains well.
While the choice of ReLU is not necessarily key to our results, it is important that the number of active units provides a meaningful upper bound on the complexity of the function expressed. Our results would continue to go through if $\phi$ is an arbitrary activation function but operates at $p$-bit precision.

We also allow the Heaviside function as a second activation function. Heaviside allows exactly performing threshold computations at arbitrary input lengths, which is relevant to simulating {\CRASP} at arbitrary input lengths.
This includes  simple problems such as MAJORITY, on which transformers empirically do well.
Real-world transformers generally do not include this function, though ReLU MLPs can approximate it arbitrarily closely.

\subsection{Fixed Precision}\label{app:fixed-precision}

As described in Section~\ref{sec:model-transformers}, we assume that attention logits and the exponentials inside softmax are rounded to $p$ fractional bits of precision before further processing.
This allows us to cluster keys and queries into finite numbers of clusters, and compute all activations at logarithmic precision, used for proving the logarithmic communication complexity bound (Theorem~\ref{thm:comm-comp-limit-transformer}). We note that logarithmic precision of the intermediate activations is also key to upper bounds of transformers in terms of $\TCzero$ shown by \citet{merrill2023logic}.

We also assume that the parameters in transformers and {\LimitTransformer}s are expressed at fixed precision (Definitions~\ref{def:hypothesis-class} and \ref{def:limit-transformer}), and penalize the precision $p$ used of representing the parameters as part of the regularizer used in our inference procedure (Definition~\ref{def:regularizer}).
Indeed, penalizing unbounded precision of parameter values is necessary to enable full identification of a transformer algorithm from behavior at finite lengths.
For any real number $\alpha \geq 0$, a one-layer transformer with real-valued parameters can express the function
\begin{align*}
 F_\alpha(x) =  \begin{cases} 1 & \text{ if } \alpha \cdot \#_1(x) \geq \#_0(x) \\
    0 & else
    \end{cases}
\end{align*}
    For any two distinct $\alpha, \beta$, the functions $F_\alpha$ and $F_\beta$ are distinct on sufficiently long inputs (though, when $\alpha$ and $\beta$ are close, very long inputs will be needed to distinguish them).
    Thus, there are \emph{uncountably} many distinct functions implemented by transformers; however, their distinction relies on infinite precision.
    In an infinite precision setup,  one cannot hope to identify algorithms implemented by transformers from finite data, no matter the input length and the regularization applied to the model size.
    In contrast, when parameters are representable in finite precision (as in real computers), the number of distinct algorithms expressed by {\LimitTransformer}s is countable, and ultimate identification  from long inputs is possible when the precision required for representing the parameters is penalized.

\subsection{Layer Norm}\label{app:layer-norm}

Real-world transformers use Layer Norm or RMSNorm, whereby activations $\vy_i^{(l)}$ are rescaled to have norm or standard deviation $\sqrt{d}$.
Layer norm can be incorporated into the translation to {\LimitTransformer}s (Lemma~\ref{lemma:translation-positional}) by recording terms of the form
\begin{equation}
    {\vv}^T \left(\prod_{S \in \mathcal{S}_1} S\right)^T \left(\prod_{S \in \mathcal{S}_2} S\right) {\vw}^T
\end{equation}
when ${\vv} \in \mathcal{VO}_{l_1}$
and ${\vw} \in \mathcal{VO}_{l_2}$
and $\mathcal{S}_1, \mathcal{S}_2 \in \mathcal{P}$.
We can record these products in further dimensions of $\widehat{\vy_i^{(l)}}$, so that $\|\vy_i^{(l)}\|_2^2$ is recoverable from $\widehat{\vy_i^{(l)}}$. The simplest approach is then to modify the definition of {\LimitTransformer}s by normalizing $\widehat{\vy_i^{(l)}}$ based on this recovered norm. %

\rebuttal{The original translation from \CRASP{} to fixed-precision transformers \citep{yang2024counting} capitalized on layer norm; it would be no problem to translate from {\CRASPpl} to a version of {\LimitTransformer}s incorporating layer norm, and we would be able to remove the Heaviside function from {\LimitTransformer}s.}

\section{Additional Details for Experiments}
\subsection{Regular Languages from the Bhattamishra et al 2020 Benchmark}
\label{sec:language-definitions}

\subsubsection{Language Definitions}

Descriptions follow \citet{bhattamishra2020ability}.

\textbf{Tomita Grammars.} Definitions are shown in Table \ref{tab:tomitas}.

\(\boldsymbol{D_n}\) are defined on the alphabet \(\Sigma=\{a,b\}\) by the recursion \(D_n = (aD_{n-1}b)^*\).

\textbf{PARITY.} PARITY is $b^* (ab^*ab^*)^*$. It is contained in the set of algorithmic tasks.

\textbf{Others.}  Other languages: \((aa)^*\), \((aaaa)^*\) and \((abab)^*\) (not star-free),  \(aa^*bb^*cc^*dd^*ee^*\), \(\{ab\}^*d\{b,c\}^*\), and \(\{0,1,2\}^*02^*\) (star-free).

\begin{table}
\small
    \centering
    \begin{tabularx}{\columnwidth}{p{1.5cm}|p{1.5cm}|p{8cm}}
    \hline
         Grammar & Star-Free & Definition\\
         \hline
         1 & Yes & 1* \\
         2 & Yes & (10)* \\
         3 & No & strings without odd-length strings of ones followed by odd-length strings of zeros (i.e., no \(01^{2n+1}0^{2m+1}1\) substrings) \\
         4 & Yes & strings without any 000's substrings\\
         5 & No & strings of even length with an even number of 1's \\ %
         6 & No & strings where number of 0's - number of 1's is divisible by 3\\
         7 & Yes & 0*1*0*1\\
    \end{tabularx}
    \caption{Tomita Grammars \citep[originally due to][]{tomita1982dynamic}, following  \cite{bhattamishra2020ability}.}
    \label{tab:tomitas}
\end{table}

\subsubsection{{\CRASP} Expressiveness}\label{sec:expressiveness-bhattamishra}

All languages in the benchmark are in $\TCzero$, and all are expressible in principle by transformers \citep{liu2022transformers}.
We were able to provably settle the $\CRASP$ expressiveness for all languages, with results shown in Table~\ref{tab:finite-state-expressiveness}.
\begin{table}
\begin{centering}
\begin{tabular}{cc|cc|cccc}
&         & 
  \multicolumn{2}{|c|}{{\CRASP} expressiveness} & & & & \\
\# & Language & $[\emptyset]$ & [periodic, local] & Star-Free? & Dot-Depth & $\ACzero$? \\ \hline
1 & Tomita 1  &    yes                &  yes         & yes & 1 & yes\\
2 & Tomita 2  &     yes               &  yes        & yes & 1 & yes\\
3 & Tomita 3 &       no              &   no      & no  &--  & yes\\
4 & Tomita 4 &       no              & yes         & yes  & 1 & yes\\
5 & Tomita 5 &        no             &  no        & no  & -- & no\\
6 & Tomita 6 &          no         &  no        & no  & -- & no\\
7 & Tomita 7 &         yes            &  yes        & yes  & 1 & yes\\
8 & $D_2$ &           yes          &    yes      &  yes  & 2 & yes\\
9 & $D_3$ &           yes          &    yes      &  yes  & 3 & yes\\
10 & $D_4$ &           yes          &    yes      &  yes  & 4 & yes\\
11 & $D_{12}$ &           yes          &    yes      &  yes  & 12 & yes\\
-- & PARITY &             no        &    no      &  no  & -- & no\\
12 & $(aa)^*$ &             no        &  yes        &  no  & -- & yes\\
13 & $(aaaa)^*$ &             no        &  yes        &  no  & -- & yes\\
14 & $(abab)^*$ &             no        &  yes        &  no  & -- & yes\\
15 & $aa^*bb^*cc^*dd^*ee^*$ &      yes               &  yes        & yes  & 1 & yes\\
16 & $\{a,b\}^*d\{b,c\}^*$      &            yes        &   yes        & yes & 1 & yes\\
17 & $\{0,1,2\}^*02^*$ &           no          &    no      &  yes & 2 & yes\\
\end{tabular}
\end{centering}
\caption{The finite-state languages in the benchmark from \citet{bhattamishra2020ability}, with the numbering from Figure~\ref{fig:results}, {\CRASP} expressiveness properties, and three established notions of complexity (star-freeness, dot depth, and membership in the circuit complexity class $\ACzero$).
In the {\CRASP} columns, ``yes'' means we found a {\CRASP} program;
``no'' means we proved that no {\CRASP} program can exist.
Note that $\{0,1,2\}^*02^*$ is equivalent to the language $\Sigma^*be^*$ from Lemma~\ref{lemma:regular-languages-crasp}.
Note also that we discuss PARITY in the algorithmic benchmark, as it is included in \citet{zhou2023algorithms}.
See discussion in Appendix~\ref{sec:expressiveness-bhattamishra}.
See Figure~\ref{fig:detailed-formal-languages} for a version of Figure~\ref{fig:results} (right) with languages labeled.
}\label{tab:finite-state-expressiveness}
\end{table}
We compare {\CRASP} expressiveness with the standard notions of complexity of finite-state languages considered by \cite{bhattamishra2020ability}: whether languages are star-free, and (among the star-free ones) their dot-depth.
While many star-free languages in the sample show length-generalization, star-freeness does not overall account for the observed behavior (Figure~\ref{fig:star-free-ness}).
Within the star-free languages, a standard complexity metric is dot-depth (Figure~\ref{tab:finite-state-expressiveness}); this again does not accurately predict length-generalization: it succeeds for a language with dot depth 12 but fails for a language with dot depth 2.
We also considered circuit complexity of regular languages \citep{barrington1992regular}.
All regular languages included in the sample are in the class {\TCzero}; most are also in {\ACzero}, a smaller class sometimes compared to transformers \citep{hao2022formal, barcelo2024logical}.
Transformers show poor length generalization on the non-{\ACzero} regular languages\footnote{By the results of \citet{barrington1992regular}, regular languages outside of {\ACzero} are all, informally speaking, at least as hard as PARITY, and indeed they provably are not in $\CRASPpl$.}, but also fail on various languages that are in {\ACzero}.
On algorithmic problems, transformers succeed on some non-{\ACzero} problems such as Majority (Table~\ref{tab:expressiveness-algorithmic}).
Overall, {\CRASP} expressiveness is much more successful than previously considered notions of complexity in accounting for empirical length generalization behavior of transformers.

\paragraph{Proof Sketches for Membership Claims}
We sketch proofs for all {\CRASP} expressiveness claims in Table~\ref{tab:finite-state-expressiveness}.
We first note that $\CRASPpl$ is closed under the inverse images of morphisms where each symbol is mapped to a string of the same length. That is:
\begin{lemma}
    $\CRASPpl$ is closed under the inverse images of morphisms where each symbol is mapped to a string of some fixed length $n$. That is, if $h : \Sigma_1 \rightarrow \Sigma_2^n$ (for some fixed $n$) is extended to a map $\Sigma_1^* \rightarrow \Sigma_2^*$ and $\mathcal{L} \in \CRASPpl$, then $h^{-1}(\mathcal{L}) \in \CRASPpl$.
\end{lemma}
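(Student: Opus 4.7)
The plan is to simulate $P$ running on $h(w)$ by a program $P'$ that reads $w$ directly. Index positions of $h(w)$ as $n(i-1)+c$ where $i \in \{1,\ldots,|w|\}$ and $c \in \{1,\ldots,n\}$; position $n(i-1)+c$ is the $c$-th symbol of the block $h(w_i)$. For every operation $O$ of $P$ and every offset $c \in \{1,\ldots,n\}$, I will construct an operation $O^c$ of $P'$ such that, for all $w \in \Sigma_1^*$ and $i \leq |w|$, the value of $O^c(i)$ on input $w$ equals the value of $O(n(i-1)+c)$ on input $h(w)$. Acceptance of $h(w) \in \mathcal{L}$ corresponds to the final boolean operation $L$ of $P$ being true at position $n|w|$, so the final operation of $P'$ will be $L^n$.

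The construction proceeds by induction on the operations of $P$. For initial operations, $Q_\sigma^c(i) := \bigvee_{\tau \in \Sigma_1 : h(\tau)_c = \sigma} Q_\tau(i)$, a finite disjunction. Boolean, constant, addition, subtraction, conditional, and comparison operations are handled componentwise by applying the same operator to the corresponding $^c$-versions. For a periodic positional operation $\phi$ of period $\Delta$, the function $\phi^c(i) := \phi(n(i-1)+c)$ is periodic in $i$ with period $\Delta/\gcd(n,\Delta)$, hence admissible in $\Phi$ of $P'$. For every local translation-invariant binary relation $\psi$ used by $P$, define $\psi^{c,c'}(i,i') := \psi(n(i-1)+c, n(i'-1)+c')$; translation invariance of $\psi$ under shifts by $n$ implies translation invariance of $\psi^{c,c'}$ in $(i,i')$, and the locality bound $\tau$ on $\psi$ gives a locality bound $\lceil (\tau+n)/n \rceil$ on $\psi^{c,c'}$. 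I add all such $\psi^{c,c'}$ to the set of binary relations of $P'$.

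The main obstacle is the counting operation $C(j) := \cttn{k \leq j, \psi(j,k)}{Q(k)}$. Splitting the range $k \leq n(i-1)+c$ by writing $k = n(i'-1)+c'$, the condition becomes $i' \leq i$ when $c' \leq c$ and $i' \leq i-1$ when $c' > c$. So
\[
C^c(i) \;=\; \sum_{c'=1}^{c} \cttn{i' \leq i,\,\psi^{c,c'}(i,i')}{Q^{c'}(i')} \;+\; \sum_{c'=c+1}^{n} \cttn{i' \leq i-1,\,\psi^{c,c'}(i,i')}{Q^{c'}(i')}.
\]
To express the second kind of count in $\CRASPpl$, I rewrite it as
\[
\cttn{i' \leq i,\,\psi^{c,c'}(i,i')}{Q^{c'}(i')} \;-\; \cif{B^{c,c'}(i)}{1}{0},
\]
where $B^{c,c'}(i)$ is $Q^{c'}(i)$ if $\psi^{c,c'}(i,i)$ is identically true, and $\bot$ otherwise; this choice is fixed once and for all, because $\psi^{c,c'}(i,i) = \psi(c,c')$ by translation invariance and is thus independent of $i$. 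The case $\psi = \top$ is treated identically, with $\psi^{c,c'} = \top$ in $P'$. Combining the pieces, the whole expression is a finite sum and difference of $\CRASPpl$-admissible counts and conditionals, hence a valid $\CRASPpl$ expression.

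Finally, setting the output operation of $P'$ to $L^n$ ensures that $P'$ accepts $w$ iff $L^n(|w|)$ holds iff $L(n|w|)$ holds on $h(w)$ iff $h(w) \in \mathcal{L}$. Thus $h^{-1}(\mathcal{L}) \in \CRASPpl$.
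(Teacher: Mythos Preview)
Your proof is correct and follows the same block-simulation strategy as the paper: for each operation $O$ and each offset $c \in \{1,\ldots,n\}$ within a block, build an operation $O^c$ in $P'$ that computes the value of $O$ at position $n(i-1)+c$ of $h(w)$. The only noteworthy difference is in the counting case: the paper first reduces every local $\psi$ to finitely many relations of the form $\psi(i,j) := [j = i-d]$ and then handles each such $d$ by an explicit lookback using a local function in $P'$, whereas you keep general local $\psi$ and directly introduce the derived relations $\psi^{c,c'}(i,i') := \psi(n(i-1)+c,\,n(i'-1)+c')$, verifying they remain translation-invariant and local, and correct the $i'=i$ boundary term via a constant $\psi(c,c')$. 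This is a slightly cleaner packaging of the same idea; both arguments are valid and equivalent in strength.
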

\begin{proof}

    Let $h$ be such a morphism. We we take any $\CRASPpl$ program operation $P(i)$ that operates on $h(w)$ into an equivalent program which operates on $w$. We use similar notation to Lemma~\ref{lem:CRASPpl_morphism}.
    In this case for every operation $P(i)$, we create an operation $\hat{P}^c(i)$ that, evaluated on word $w$, evaluates $P(i+c)$ on the word $h(w)$.

    For $P(i) = Q_a(i)$, we can use $\hat{P}^0(i) := \bigvee_{\sigma\in\Sigma: h(\sigma)_0=a} Q_{\sigma}(i)$, $\hat{P}^1(i) := \bigvee_{\sigma\in\Sigma: h(\sigma)_1=a} Q_{\sigma}(i)$, and so on.

    For $P(i) = Mod_{m,r}(i)$, then for $c\leq n$ we can define (relying on the fact that $m,r$ are fixed). 
    This boils down to defining $Mod_{m,r}(ni+c)$ in terms of $Mod$ terms which only reference $i$. This is simple, despite being lengthy to write out, because we can simply ``hard-code'' what needs to happen for all $i\leq m$. The cases will repeat after that, since  $Mod_{m,r}(ni+c)\equiv Mod_{m,r}(n(m+i)+c)$, so we can check $i\mod m$, and enumerate the cases from there.

    For $P(i)= \cttn{j\leq i, \psi(i,j)}{V(j)}$, where $\psi(i,j)=\top$, then like before we can define $\hat{P}^c(i)$ as

       \begin{align*}
            C^0(i) & := \cttn{j<i}{\hat{V}^{0}(j)}\\
            C^1(i) & := \cttn{j<i}{\hat{V}^{1}(j)}\\
            \vdots & \\
            C^{n-1}(i) & := \cttn{j<i}{\hat{V}^{n-1}(j)}\\
            I^0(i) & := \cif{\hat{V}^{0}(j)}{1}{0}\\
            I^1(i) & := \cif{\hat{V}^{1}(j)}{1}{0}\\
            \vdots & \\
            I^c(i) & := \cif{\hat{V}^{c}(j)}{1}{0}\\
            \hat{P}^{c}(i) & := \displaystyle\sum_{t\in[0,n-1]} C^t(i) + \sum_{t\in[0,c]} I^t(i) 
        \end{align*}

    Otherwise, as described before, we only need to consider the case of $\psi(i,j):=j=i-d$ since counts over any local function can be rewritten as counts over local functions with this form.
    Here it takes a little bit more care, as the single position $j$ upon which to check $V(j)$ may occur far behind the window of $n$ around each symbol.

    So for $c\leq d$, we need to find the value of some $V(j)$ at position $j=ni-d$, where we can check the value of $V(nj+c)$ using $V^c(j)$.
    Intuitively, the idea is to use a $\cttn{j\leq i}{\phi(j)}$ operation in order to find a $j\leq i$ such that $nj+c=ni-d$. 
    We can use a local positional function to find the right $j$ modulo $n$, and then find the correct $c$ to retrieve $V^c(j)$. 
    
    This can be done by looking pack to position $i-\left\lfloor\frac{d}{n}\right\rfloor$, using a local positional function. Then, we need to find the offset which is equal to the position $(ni+c)-d$. Thus we can write the following summation, where only one of the count terms will be nonzero. Let $\tau_c=\mathbb{I}[(c=(n-(d\mod n)))]\in\{\top,\bot\}$ be a boolean value indicating we've found the correct $c$ value. 
    
    \begin{align*}
        C^0(i) & := \cttn{j\leq i, i=j+ \left\lfloor\frac{d}{n}\right\rfloor}{\tau_0\land \hat{V}^{0}(j)}\\
        C^1(i) & := \cttn{j\leq i, i=j+\left\lfloor\frac{d}{n}\right\rfloor}{\tau_1\land \hat{V}^{1}(j)}\\
        \vdots & \\
        C^{n-1}(i) & := \cttn{j\leq i, i=j+ \left\lfloor\frac{d}{n}\right\rfloor}{\tau_{n-1}\land \hat{V}^{n-1}(j)}\\
        \hat{P}^{c}(i) & := \displaystyle\sum_{t\in[0,n-1]} C^t(i)
    \end{align*}

    For $c>d$, the position to check occurs in the same window as $i$, so we only need to check the finitely many cases up to $i+c$.

    \begin{align*}
        I^0(i) & := \cif{c-d=0\land \hat{V}^0(i)}{1}{0}\\
        I^1(i) & := \cif{c-d=1\land \hat{V}^0(i)}{1}{0}\\
        \vdots &\\
        I^c(i) & := \cif{c-d=c\land \hat{V}^c(i)}{1}{0}\\
        \hat{P}^{c}(i) & := \displaystyle\sum_{t\in[0,c]} I^t(i)
    \end{align*}

    All other cases are straightforward.
\end{proof}

\paragraph{Tomita 1 $\in \CRASPempty$} A $\CRASP$ program can detect the presence of a symbol other than $1$ and flag a violation.

\paragraph{Tomita 2 $\in \CRASPempty$} A $\CRASP$ program expresses:  At each position, either the current symbol is a 0 and the count of ones and zeros is balanced; or the current symbol is a 1 and the count of ones is one more than the count of zeros.

\paragraph{Tomita 3 $\not\in\CRASPempty$} For a given string of the form $010^K$, the outputs will converge as $K \rightarrow \infty$ by the same argument as for $(aa)^*$ in Lemma \ref{lem:aastar_nin_CRASP}. Hence, Tomita 3 $\not\in\CRASPempty$.

\paragraph{Tomita 3 $\not\in\CRASPpl$}
Informally, the only way periodic and local predicates are likely to help is if the lengths of contiguous blocks of zeros and ones were bounded (local), or the parity of the lengths of $1$ and $0$ substrings were globally linked to the parity of the positions of transitions $10$ and $01$ (periodic), but neither is the case.
Sketching a formal proof, assume a $\CRASPpl$ program is given for Tomita 3.
First, we eliminate the periodic predicates by labeling every symbol with the position modulo $s$, where $s$ is a multiple of 2 and all moduli appearing in periodic functions; giving an extended alphabet $1_1, \dots, 1_s; 0_1, \dots, 0_s$.
For sufficiently large $c$ that is a co-prime with $s$, we can then also eliminate local functions by merging an adjacent block of length $c$ around every transition between ones and zeros into a single symbol $\Lambda$; indexed by the first symbol inside the block and whether the transition happens at the $floor(s/2)$-th (second part has even length) or $ceil(s/2)$-th (second part has odd length) position in the block. %
The resulting language, over an extended alphabet, is recognized by a $\CRASPempty$ program  capable of determining whether a string of the form
\begin{equation}
\Lambda_{1_1,\dots} \Lambda_{0_{(1+c) \% s},\dots} \Lambda_{1_{(1+2c) \% s},\dots} \dots
\end{equation}
contains a substring of the form
\begin{equation}
\Lambda_{0_{\dots}, even} \Lambda_{1_{\dots},even} \Lambda_{0_{\dots},even}
  \ \ \ \  \ \ \ \ \text{ or } \ \ \ \  \ \ \ \ 
\Lambda_{0_{\dots}, odd} \Lambda_{1_{\dots},odd} \Lambda_{0_{\dots},odd}
\end{equation}
This is impossible for the same reasons that $\Sigma^* bbb \Sigma^*$ (Tomita-4) is not in ${\CRASPempty}$.\footnote{Formally, Theorems 6.10 and 6.12 in \cite{krebs2008typed} show that the regular languages in $\widehat{MAJ}_2[<]$ are contained in ${\bf DA} * {\bf G}$. The second component in this product can capture the first subscript of each $\Lambda$, but not the second. Since the language $\Sigma^*aaa\Sigma^* \cup \Sigma^*bbb\Sigma^*$ over $\Sigma=\{a,b\}$ is not in ${\bf DA}$ (shown, e.g., via the Theorem 2c in \cite{tesson2002diamonds}, which would entail syntactic congruence of $(aabb)^\omega bb(aabb)^\omega$ and $(aabb)^\omega$), the claimed ${\CRASPempty}$ program cannot exist.} %

\paragraph{Tomita 4 $\not\in\CRASPempty$} By Lemma 6.11 in \citet{krebs2008typed} (discussed in Appendix~\ref{sec:flipflop}), $\Sigma^*bb\Sigma^* \not\in \widehat{MAJ}_2[<]$. 
In analogy, $\Sigma^*bbb\Sigma^* \not\in \widehat{MAJ}_2[<]$.

\paragraph{Tomita 4 $\in\CRASPpl$} A {\CRASP} program tests whether there is a position with a $0$ where the preceding position also holds a $0$ and the position preceding that also holds a $0$.

\paragraph{Tomita 5 $\not\in \CRASPpl$} 
This language is the intersection of PARITY with the strings of even length. $\CRASPpl$ inexpressiveness follows from the same arguments as for PARITY (Lemma~\ref{lemma:parity}). %

 \paragraph{Tomita 6 $\not\in \CRASPpl$} 
 Consider first the language $\mathcal{L}_3$ where the number of 1's is divisible by 3. 
 This is not in $\CRASPpl$ in analogy to PARITY. 
 Now consider the length-preserving morphism
 $h(1) = 001$,
 $h(0) = 000$.
 Then $h(w) \in \mathcal{L}_{Tomita\ 6} \Leftrightarrow w \in \mathcal{L}_3$.  

 \paragraph{Tomita 7  $\in \CRASPempty$}
 Tomita 7 is equivalent to $\{\epsilon\}\cup a^+\cup b^+\cup a^+b^+\cup b^+a^+ \cup a^+b^+a^+\cup a^+b^+a^+b^+$. It can be shown this is equivalent to $\Sigma^*\setminus \Sigma^*b\Sigma^*a\Sigma^*b\Sigma^*a\Sigma^*$, and this was constructed in section \ref{sec:piecewise_testable}. 
 Interestingly, directly implementing this $\CRASPempty$ construction in a transformer appears to require at least four layers; this is in contrast to the $\CRASPpl$ construction discussed next, where two layers are sufficient.\footnote{The first layer collects bigrams, the second layer compares the count of each bigram to the count of {\SOS} tokens (which is known to be one) to count the bigrams.}

 \paragraph{Tomita 7 $\in \CRASPpl$} A $\CRASP$ program can count the number of positions with the bigrams 01, 10 to detect a violation.

\paragraph{$D_n \in \CRASPempty$} Similar to Tomita 2.

\paragraph{PARITY $\not\in \CRASPpl$} See Lemma~\ref{lemma:parity}.

\paragraph{$(aa)^* \not\in \CRASPempty$} See Lemma~\ref{lem:aastar_nin_CRASP}.

\paragraph{$(aa)^* \in \CRASPpl$} See Example \ref{sec:example_aastar}.

\paragraph{$(aaaa)^* \not\in \CRASPempty, \in \CRASPpl$} Analogous to $(aa)^*$.

\paragraph{$(abab)^* \not\in \CRASPempty, \in \CRASPpl$} Analogous to $(aa)^*$.

\paragraph{$aa^*bb^*cc^*dd^*ee^* \in \CRASPempty$} A {\CRASP} program indicates, first, the presence of ``a'', ``b'', ``c``, ``d'', ``e'', and, second, that every ``a'' is preceded only by ``a''; every ``b'' is preceded only by ``a'' or ``b''; every ``c'' is preceded only by ``a'', ``b'', ``c''; and analogously for ``d'', ``e''.

\paragraph{$\{a,b\}^*d\{b,c\}^* \in \CRASPempty$} There is a single $d$; $a$ can only appear before it; $c$ can only appear after it.

\paragraph{$\{0,1,2\}^*02^* \not\in \CRASPpl$} See Lemma~\ref{eq:flipflop-crasp}. Note that this is equivalent to the language $\Sigma^* b e^*$ over the alphabet $\Sigma = \{a, b, e\}$.

\subsection{Algorithmic Tasks}\label{app:task-definitions}
\subsubsection{Task Definitions for Algorithmic Problems}\label{sec:task-definitions-algorithmic}

The tasks are generally from \cite{zhou2023algorithms}, except for Binary Majority Interleave. Here, we define each formally.

\paragraph{Binary Majority.} 
The binary majority problem identifies the most frequent bit in a sequence of random bits. An example is 
\texttt{
\setlength{\tabcolsep}{3pt}
\small
\begin{NiceTabular}{*{8}{c}}[hvlines]
{\SOS} & 0 & 1 & ... & 0 & SEP & 1 & EOS
\end{NiceTabular}
}
The part
\texttt{
\setlength{\tabcolsep}{3pt}
\small
\begin{NiceTabular}{*{4}{c}}[hvlines]
0 & 1 & ... & 0 
\end{NiceTabular}
}
is the sequence of random bits. We define \textsc{LEN} to be the length of this part. We constrain the sequences such that the number of 0s and 1s are always not equal. The model is trained with the language modeling loss on the part
\texttt{
\setlength{\tabcolsep}{3pt}
\small
\begin{NiceTabular}{*{2}{c}}[hvlines]
1 & EOS 
\end{NiceTabular}
}
, in other words, it is only trained to predict the most frequent bit and EOS token. The minimum length $l_{min}$ of this task is 1.

\paragraph{Binary Majority Interleave.}
The sequences in this problem are created by interleaving multiple binary majority (see above) inputs while avoiding repeating special tokens (e.g., {\SOS}). We use 3 binary majority sequence to compose one sequence in this task. 
Formally speaking, given 3 binary sequences of the same length, $x^1_1,\cdots x^1_n$, $x^2_1, \cdots x^2_n$, and $x^3_1, \cdots x^3_n$, and their corresponding labels (most frequent bits) $y^1$, $y^2$, $y^3$, the interleaved input is {\SOS} $x^1_1, x^2_1, x^3_1, x^1_2, x^2_2, x^3_2, \cdots, x^1_n, x^2_n, x^3_n$. \texttt{SEP} $y^1$, $y^2$, $y^3$ \texttt{EOS}.

An example is
\texttt{
\setlength{\tabcolsep}{3pt}
\small
\begin{NiceTabular}{*{13}{c}}[hvlines]
\SOS & 1 & 0 & 1 & 1 & 0 & 0 & ... & SEP & 1 & 0 & 0 & EOS
\end{NiceTabular}
}
\textsc{Len} in this problem refers to length between {\SOS} and \texttt{SEP} (excluding). The model is trained with the language modeling loss on the part 
\texttt{
\setlength{\tabcolsep}{3pt}
\small
\begin{NiceTabular}{*{4}{c}}[hvlines]
 1 & 0 & 0 & EOS
\end{NiceTabular}
}
and $l_{min}=3$.

\paragraph{Majority.}
This problem is similar to binary majority problem except that the vocabulary is bigger. An example is
\texttt{
\setlength{\tabcolsep}{3pt}
\small
\begin{NiceTabular}{*{8}{c}}[hvlines]
\SOS & c & b & a & b & SEP & b & EOS
\end{NiceTabular}
},
where 
\texttt{
\setlength{\tabcolsep}{3pt}
\small
\begin{NiceTabular}{*{4}{c}}[hvlines]
c & b & a & b
\end{NiceTabular}
}
is a sequence of random tokens, each of which is sampled independently from an alphabet of 26 symbols. The \textsc{Len} is defined as the length of of this part. We constrain the sequences such that there is always a unique answer. The model is trained with the language modeling loss on the part 
\texttt{
\setlength{\tabcolsep}{3pt}
\small
\begin{NiceTabular}{*{2}{c}}[hvlines]
b & EOS
\end{NiceTabular}
}
. $l_{min}=1$.

\paragraph{Sort.}
In sort problem, the model outputs a sorted version of the given sequence. An example is 
\texttt{
\setlength{\tabcolsep}{3pt}
\small
\begin{NiceTabular}{*{11}{c}}[hvlines]
\SOS & 14 & 23 & 6 & 9 & SEP & 6 & 9 & 14 & 23 & EOS
\end{NiceTabular}
}
where 
\texttt{
\setlength{\tabcolsep}{3pt}
\small
\begin{NiceTabular}{*{4}{c}}[hvlines]
14 & 23 & 6 & 9
\end{NiceTabular}
}
 is a sequence of unique numbers. \textsc{Len} in this problem refers to length this part. The model is trained with the language modeling loss on the part 
\texttt{
\setlength{\tabcolsep}{3pt}
\small
\begin{NiceTabular}{*{5}{c}}[hvlines]
6 & 9 & 14 & 23 & EOS
\end{NiceTabular}
}
. In this problem, $l_{min}=1$. The total vocabulary size of tokens except for special tokens is equal to the maximum testing length, i.e., 150.

\paragraph{Copy (unique).}

In this problem, the model outputs the same sequence as the given sequence, which consists of unique tokens. An example is 
\texttt{
\setlength{\tabcolsep}{3pt}
\small
\begin{NiceTabular}{*{11}{c}}[hvlines]
\SOS & 14 & 23 & 6 & 9 & SEP & 14 & 23 & 6 & 9 & EOS
\end{NiceTabular}
}
where the first
\texttt{
\setlength{\tabcolsep}{3pt}
\small
\begin{NiceTabular}{*{4}{c}}[hvlines]
14 & 23 & 6 & 9
\end{NiceTabular}
}
 is a sequence of unique numbers. \textsc{Len} in this problem refers to length this part. The model is trained with the language modeling loss on the second part 
\texttt{
\setlength{\tabcolsep}{3pt}
\small
\begin{NiceTabular}{*{5}{c}}[hvlines]
14 & 23 & 6 & 9 & EOS
\end{NiceTabular}
}
. In this problem, $l_{min}=1$. The total vocabulary size of tokens except for special tokens is equal to the maximum testing length, i.e., 150.

\paragraph{Copy (repeat).}
In this problem, the model outputs the same sequence as the given sequence, which can contain repeated tokens. An example is
\texttt{
\setlength{\tabcolsep}{3pt}
\small
\begin{NiceTabular}{*{9}{c}}[hvlines]
\SOS & b & a & b & SEP & b & a & b & EOS
\end{NiceTabular}
}
where the first
\texttt{
\setlength{\tabcolsep}{3pt}
\small
\begin{NiceTabular}{*{3}{c}}[hvlines]
b & a & b
\end{NiceTabular}
}
 is a sequence of random symbols. As in \cite{zhou2023algorithms}, we use an alphabet of only 2 symbols. \textsc{Len} in this problem refers to length this part. Each symbols is sampled independently and uniformly. The model is trained with the language modeling loss on the second part 
\texttt{
\setlength{\tabcolsep}{3pt}
\small
\begin{NiceTabular}{*{4}{c}}[hvlines]
b & a & b & EOS
\end{NiceTabular}
}
. In this problem, $l_{min}=1$.

\paragraph{Parity.}

In the parity problem, the model recognizes whether the given sequence contains even number of 1s and outputs a corresponding token. An example is
\texttt{
\setlength{\tabcolsep}{3pt}
\small
\begin{NiceTabular}{*{9}{c}}[hvlines]
\SOS & 1 & 0 & 0 & 1 & 0 & SEP & e & EOS
\end{NiceTabular}
}
The bits before SEP and after \SOS is a random sequence of bits. \textsc{Len} in this problem refers to length this part. The token between SEP and EOS is the label, it can be either ``e'' or ``o'', meaning even or odd number of 1s. The model is trained with the loss on the part 
\texttt{
\setlength{\tabcolsep}{3pt}
\small
\begin{NiceTabular}{*{2}{c}}[hvlines]
e & EOS
\end{NiceTabular}
}. $l_{min}=0$ for this problem. The bits are randomly sampled in a way such that the number of 1s is distributed uniformly given a fixed \textsc{Len}.

\paragraph{Addition.} 
In addition problem, the model does binary addition. An example is
\texttt{
\setlength{\tabcolsep}{3pt}
\small
\begin{NiceTabular}{*{12}{c}}[hvlines]
\SOS & 1 & 0 & 1 & + & 1 & 0 & = & 1 & 1 & 1 & EOS
\end{NiceTabular}
}
The two operands are sampled randomly. \textsc{Len} in this problem refers to the total length of them, including ``+'' and ``=''. The model is trained with the loss on the part 
\texttt{
\setlength{\tabcolsep}{3pt}
\small
\begin{NiceTabular}{*{4}{c}}[hvlines]
1 & 1 & 1 & EOS
\end{NiceTabular}
}. $l_{min}=4$ for this problem. Note that we do not pad zeros in the front of operands to make them of equal length. The length of first operand is sampled uniformly in [1, \textsc{Len}-2], and the remaining length is for the second operand. After determing the lengths, random bits are sampled uniformly. 

\subsubsection{{\LimitTransformer}s and {\CRASP} expressiveness on algorithmic tasks}\label{sec:algorithmic-crasp}

\begin{table}[]
    \centering
    \begin{tabular}{c|cc|ccccc}
         & $\CRASPempty$ & $\CRASPpl$ & {\ACzero}? \\ \hline
        Binary Majority & yes & yes & no\\
        Binary Majority Interleave & none found & yes & no\\
        Majority & yes & yes & no\\
        Sort & yes & yes & yes\\
        Copy (unique) & no & yes & yes\\
        Copy (repeat) & no & no & yes\\
        Parity & no & no  & no\\
        Addition & no & no  & yes\\
    \end{tabular}
    \caption{Expressiveness properties of algorithmic tasks as defined in Appendix~\ref{sec:task-definitions-algorithmic} and discussed in Appendix~\ref{sec:algorithmic-crasp}.
    In the {\CRASP} columns, ``yes'' means we found a {\CRASP} program;
``no'' means we proved that no {\CRASP} program can exist;
``none found'' means that we found no program despite best efforts. 
    All problems are expressible in {\TCzero}, the tightest known upper-bound on the expressiveness of transformers.
    We also show membership in the circuit complexity class {\ACzero}, a smaller class sometimes compared to transformers \citep{hao2022formal, barcelo2024logical}; it is not predictive of length generalization either here or in the regular languages benchmark (e.g., Addition is in {\ACzero} but Majority is not).
    }
    \label{tab:expressiveness-algorithmic}
\end{table}

See Table~\ref{tab:expressiveness-algorithmic}.
We provide proof sketches.

\paragraph{Binary Majority $\in \CRASPempty$}
A single count operation is sufficient.

\paragraph{Binary Majority Interleave and $\CRASPempty$}
We did not find  a $\CRASPempty$ program, though we do not have a rigorous proof of nonexistence.
Note that, by Lemma~\ref{lem:aastar_nin_CRASP}, even the (seemingly easier) task of determining whether a given input is well-formed (input length is a multiple of the number of different majority sequences) cannot be solved by $\CRASPempty$.

\paragraph{Binary Majority Interleave $\in \CRASPpl$} Periodic functions can be used to separately implement each count operation.

\paragraph{Majority $\in \CRASPempty$}
Similar to Binary Majority.

\paragraph{Copy (unique) $\in \CRASPpl$}
If character (or $n$-gram) repetition is prevented, then the sequence length is bounded by the alphabet, so that the space of possible inputs becomes finite, seemingly precluding the asymptotic analysis done in Theorem~\ref{thm:guarantee}.
To overcome this (apparent) challenge, we consider two formalizations of this task as operating on unbounded-length inputs.

First, as explained in \cite{zhou2023algorithms} (and relatedly by \cite{Jelassi2023Length}), the unique copying task can be realized with an induction head circuit (Section~\ref{sec:induction_circuit} and Appendix~\ref{sec:example_induction_head-2}).
More specifically, each position first records whether SEP has already appeared.
An induction circuit then predicts new tokens in proportion to how frequently they have previously followed appearances of the current token before SEP (Section~\ref{sec:induction_circuit}).
Copying without repetition is a special case where each token occurs at most once, so the output of $f$ in (\ref{eq:induct}) is always 0 or 1.
We show in Appendix~\ref{sec:example_induction_head-2} that the induction head construction from Section~\ref{sec:induction_circuit} is expressible in $\CRASPpl$ but not in $\CRASPempty$. 

Another formalization of the task is  in terms of \emph{repeated} copying, where, given an input such as \texttt{SOS a c b SEP}, the model repeatedly copies the string, always predicting the next character, leading to an unbounded sequence \texttt{SOS a c b SEP a c b SEP a c b SEP...}. This turns the copying task into a function $f \in \mathcal{F}(\Sigma)$ that operates on unboundedly long sequences, outputting next-token predictions at each position.
This alternative formalization is also expressed in $\CRASPpl$, by essentially the same induction head algorithm.

\paragraph{Copy (repeat) $\not\in \CRASPpl$} One proof proceeds via communication complexity: By Corollary~\ref{corr:comm-comp-no-gen}, copying of general strings is not expressible by {\LimitTransformer}s and hence not in $\CRASPpl$. While valid, this proof does not make transparent why length generalization is much easier if repetition is avoided. A different approach, not using communication complexity and crucially using the \emph{presence of repetition} proceeds from the fact that, over the alphabet $\Sigma \supseteq \{a,b,e\}^*$, the language $\Sigma^* b e^* b \Sigma^* \not\in \CRASPpl$ (Lemma~\ref{lemma:lbb}), and uses it to deduce that, given an input of the form $v b e^k b w \# v b e^k$ ($v, w \in \Sigma^*$, $k$ large), no {\CRASPpl} program can reliably determine whether a $b$ should follow.

\paragraph{Sort $\in \CRASPempty$}
As explained in \cite{zhou2023algorithms}, this can be realized by selecting the smallest number in the input that is larger than the last output symbol. This algorithm does not require local or periodic positional information.

Note that, as in COPY (unique), the input length is bounded by the alphabet size in this case, but we can view it as a task defined with unbounded length by the same trick as for COPY (unique), whereby an initial sequence such as \texttt{SOS a c b SEP} is repeatedly sorted, leading to an unbounded sequence  \texttt{SOS a c b SEP a b c SEP a b c SEP...}. 

\paragraph{Parity $\not\in \CRASPpl$} See Lemma~\ref{lemma:parity}.

\paragraph{Addition $\not\in \CRASPpl$} Addition is at least as hard as copying, because the special case of adding zero to a number amounts to copying (Corollary~\ref{corr:comm-comp-no-gen}).

\subsection{Details of Experimental Setup}
\label{sec:ap-exp-setup}

As mentioned in the main paper, at train time, we add random offsets to position indices so that all position embeddings are trained. The offsets are sampled uniformly at random in the range $[0, N-\vert x \vert ]$ (see Section~\ref{sec:model-transformers}). Like \citet{zhou2023algorithms}, we sample independent training batches on the fly instead of using a finite-size training set. In contrast, each test set contains 2000 samples that are sampled at the beginning of each experiment.

For the problems where we train models with language modeling loss, the length of inputs is sampled uniformly from minimum up to maximum length in the specified range. This is true for training data and all test sets. As mentioned before, we also use predictive modeling. At each step, the model outputs a label indicating the set of possible next characters, including EOS. The models are trained on a whole sequence of tokens. In decoder-only models, standard predictive modeling approaches are less straightforward. Therefore, we assess predictions by combining the input and output spaces. For every position in the sequence, we evaluate the predicted character by comparing the output space (where each embedding represents a subset of possible next tokens) against the expected value. 

We train decoder-only transformer from scratch, using implementations from Hugging Face Transformers\footnote{\url{https://huggingface.co/docs/transformers/en/model_doc/gpt2\#transformers.GPT2LMHeadModel}}. We train models for maximum 30k steps with a batch size of 64. We stop training early once the model's accuracy reaches 100\% on the in-distribution test set (the one in range $[l_{min}, 50]$. The model is trained with a dropout rate of 0.0. We use AdamW, with a weight decay rate of 0.01.

In preliminary experiments, we found that different model architectures, while achieving 100\% accuracy on in-distribution data, may perform differently on out-of-distribution data. To draw a conclusion about how the model performs on a problem in general, we determine the hyperparameters as follows: We consider configurations of \{1, 2, 4\} layers, \{1, 2, 4\} heads and model dimension of \{16, 64, 256\}, and learning rate of \{0.001, 0.0001\}. We sweep all the configurations by iterating over every combination and choose the one that achieves the highest accuracy on $[51, 100]$ among those configurations whose accuracy on $[l_{min}, 50]$ is 100\%. When there are multiple such options, e.g., their accuracy on $[51, 100]$ is 100\%, the one with the simplest architecture is selected (when estimating complexity, we assume the following priority: number of layers $>$ number of heads $>$ model dimension). The final hyperparameters we used for each task are shown in Table \ref{tab:task-hyperparam-APE} and \ref{tab:task-hyperparam-NoPE}.

When no configuration from the search space defined above can achieve accuracy of 100\% on $[l_{min}, 50]$, e.g., in the case of ADDITION, we use an extra configuration, where the number of layers is 12, number of heads is 12, model dimension is 768, learning rate is 1e-4 or 3e-5 (if 1e-4 does not work), and a bigger maximum number of iterations, 60k, we also use the first 3k steps as warm-up steps.
 
 After we determine the hyperparameter configuration, we run the experiments with multiple random seeds and report the average accuracy of 5 successful runs (those runs where the model achieves 100\% accuracy on in-distribution data). We do not select successful runs in cases where we use the biggest architecture (the 12-layer configuration), because we find in many cases the accuracy on in-distribution data stops at around 99\%.  

 The random baseline plotted for the algorithmic tasks (Figure~\ref{fig:results}, left) is computed using a 2-layer MLP with a token embedding layer; hence, the model predicts the next token solely based on the current token. It is trained with the same hyperparameters as transformers, the learning rate is 1e-3.

\begin{table}[]
    \centering
    \begin{tabular}{cccc} \hline
        \textbf{Problem} & \textbf{Model Size} & \textbf{LR} & \textbf{Max Steps} \\ \hline
       Tomita-1, 2  & 1 layer; 1 head; 16 dim & 1e-3 & 30k \\
       $D_2$, $D_3$, $D_4$, $D_{12}$ & 1 layer; 4 head; 128 dim & 1e-4 & 30k \\
       Tomita-4, 7  & 4 layer; 2 head; 64 dim & 1e-3 & 30k \\       
       $\{a,b\}^*d\{b,c\}^*$, 
       $aa^*bb^*cc^*dd^*ee^*$ & 6 layer ; 4 head; 64 dim & 1e-4 & 30k \\
       $(aa)^*$, $(aaaa)^*$, $(abab)^*$  & 6 layer; 4 head; 256 dim & 1e-4 & 60k \\
        Tomita-3, 5, 6  & 6 layer; 4 head; 256 dim & 1e-4 & 60k \\
        $\{0,1,2\}^*02^*$, Parity  & 6 layer; 4 head; 256 dim & 1e-4 & 60k \\

    \end{tabular}
    \caption{Experimental Hyperparameters for testing NoPE on the Regular Languages}
    \label{tab:formal-hyperparam-NOPE}
\end{table}

\begin{table}[]
    \centering
    \begin{tabular}{cccc} \hline
        \textbf{Problem} & \textbf{Model Size} & \textbf{LR} & \textbf{Max Steps} \\ \hline
       Tomita-1, 2  & 1 layer; 1 head; 16 dim & 1e-3 & 30k \\
       $D_2$, $D_3$, $D_4$, $D_{12}$ & 1 layer; 4 head; 128 dim & 1e-4 & 30k \\
       Tomita-4, 7  & 4 layer; 2 head; 128 dim & 1e-3 & 30k \\
        $\{a,b\}^*d\{b,c\}^*$, 
       $aa^*bb^*cc^*dd^*ee^*$ & 4 layer ; 4 head; 64 dim & 1e-4 & 30k \\
       $(aa)^*$, $(aaaa)^*$, $(abab)^*$, Parity  & 4 layer; 4 head; 128 dim & 1e-4 & 40k \\
        $\{0,1,2\}^*02^*$, Tomita-3, 5, 6  & 6 layer; 4 head; 128 dim & 1e-3 & 30k \\

    \end{tabular}
    \caption{Experimental Hyperparameters for testing APE on the Regular Languages. %
    }
    \label{tab:formal-hyperparam-APE}
\end{table}

\begin{table}[]
    \centering
    \begin{tabular}{cccc} \hline
        \textbf{Problem} & \textbf{Model Size} & \textbf{LR} & \textbf{Max Steps} \\ \hline
       Binary Majority  & 1 layer; 1 head; 16 dim & 1e-3 & 30k \\
       Binary Majority Interleave & 2 layer; 4 head; 256 dim & 1e-4 & 30k \\
       Majority & 1 layer; 2 head; 256 dim & 1e-3 & 30k \\
       Sort & 1 layer; 2 head; 256 dim & 1e-4 & 30k \\
       Copy (unique) & 2 layer; 1 head; 64 dim & 1e-3 & 30k \\
       Copy (repeat) & 4 layer; 4 head; 256 dim & 1e-3 & 30k \\
       Parity & 4 layer; 2 head; 256 dim & 1e-4 & 30k \\
       Addition & 12 layer; 12 head; 768 dim & 1e-4 & 60k (3k) \\ \hline
    \end{tabular}
    \caption{Experimental hyperparameters for testing APE on each problem. In the last column, numbers in parenthesis mean the warm-up steps, which is 0 when there is no number in parenthesis. 
    }
    \label{tab:task-hyperparam-APE}
\end{table}

\begin{table}[]
    \centering
    \begin{tabular}{cccc} \hline
        \textbf{Problem} & \textbf{Model Size} & \textbf{LR} & \textbf{Max Steps} \\ \hline
       Binary Majority  & 1 layer; 1 head; 16 dim & 1e-3 & 30k \\
       Binary Majority Interleave & 12 layer; 12 head; 768 dim  &  1e-4 & 60k (3k) \\
       Majority & 1 layer; 1 head; 64 dim & 1e-3 & 30k \\
       Sort & 1 layer; 1 head; 256 dim & 1e-3 & 30k \\
       Copy (unique) & 4 layer; 4 head; 256 dim & 1e-3 & 30k \\
       Copy (repeat) & 4 layer; 4 head; 256 dim & 1e-3 & 30k \\
       Parity & 12 layer; 12 head; 768 dim & 3e-5 & 60k (3k) \\
       Addition & 12 layer; 12 head; 768 dim & 1e-4 & 60k (3k) \\ \hline
    \end{tabular}
    \caption{Experimental hyperparameters for testing NoPE on each problem. In the last column, numbers in parenthesis mean the warm-up steps, which is 0 when there is no number in parenthesis.}
    \label{tab:task-hyperparam-NoPE}
\end{table}

\begin{figure}
    \centering
    \includegraphics[width=0.99\linewidth]{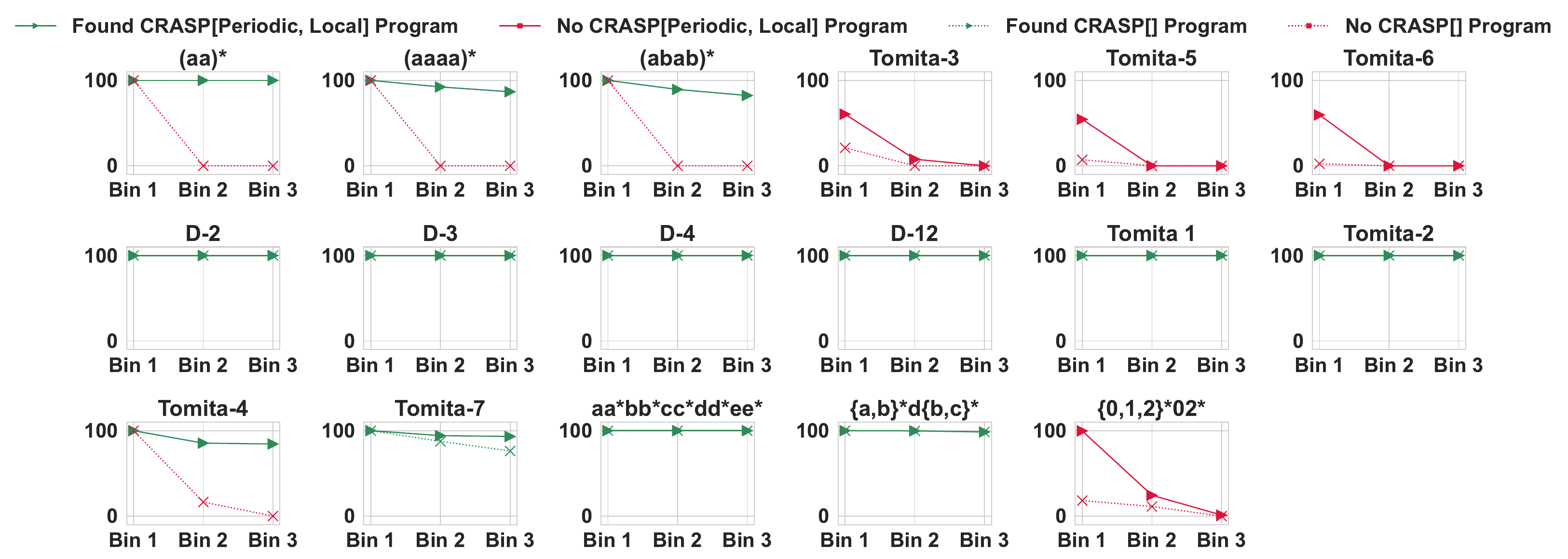}
    \caption{Detailed results for regular languages with language names, corresponding to the right part of Figure~\ref{fig:results} but with individual languages labeled.}
    \label{fig:detailed-formal-languages}
\end{figure}

\begin{figure}
    \centering
    
    \textbf{Algorithmic Problems}
    
    \includegraphics[width=0.99\linewidth]{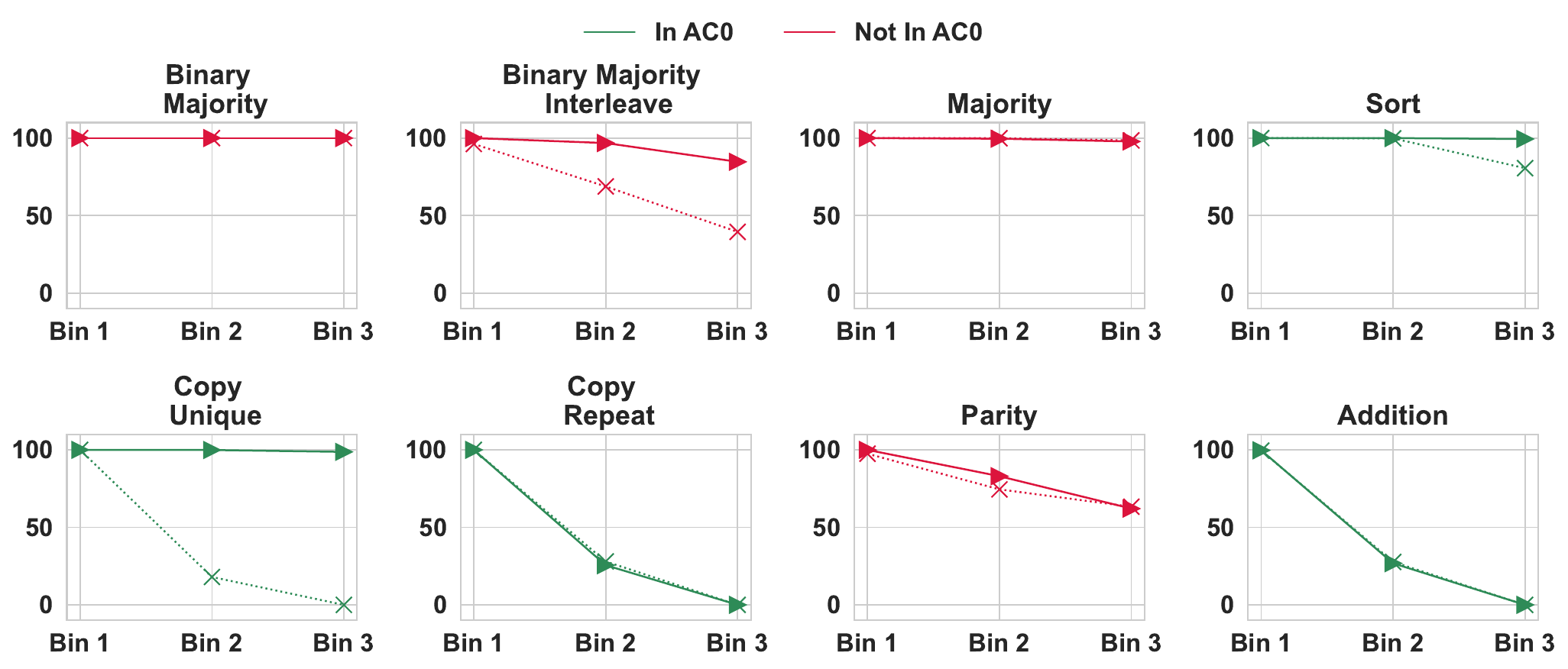}

    \ \ 

    \ \ 
    
    \textbf{Regular Languages}
    
    \includegraphics[width=0.99\linewidth]{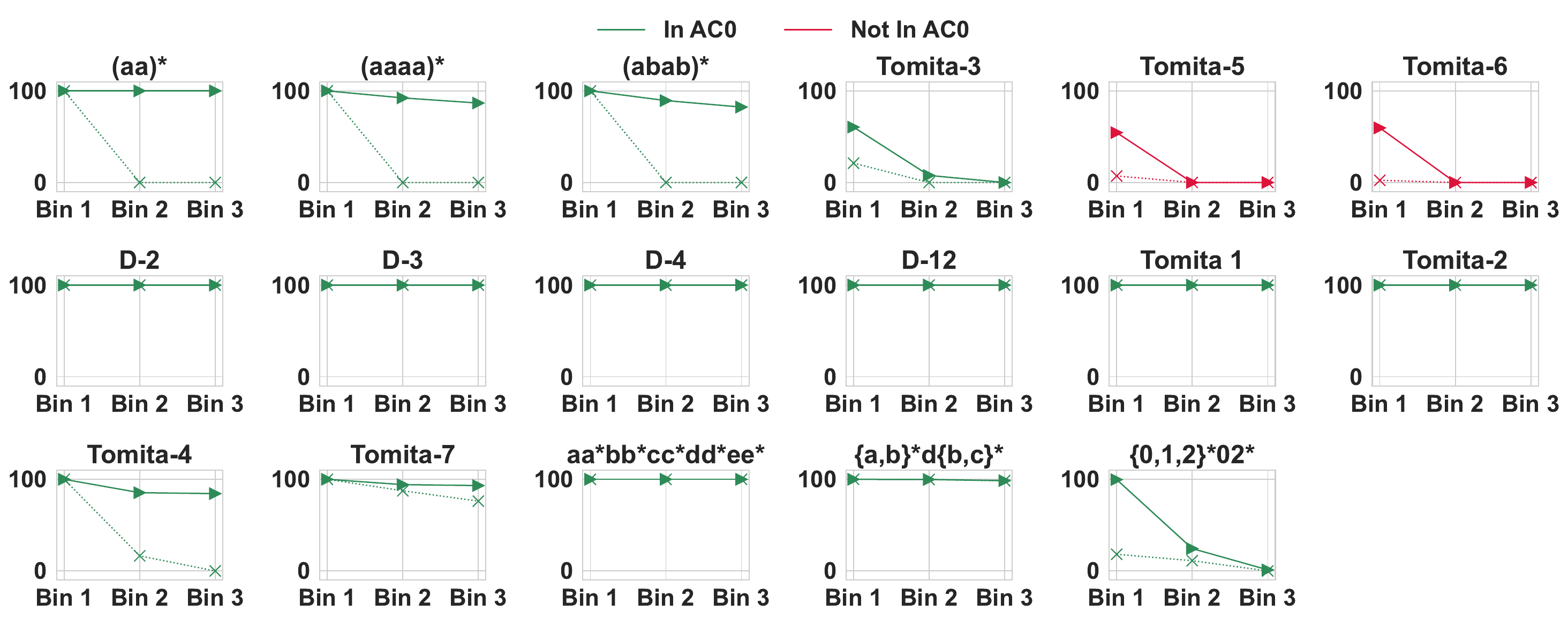}
    
    \caption{Membership in the circuit complexity class {\ACzero} does not predict transformers' length generalization on algorithmic problems (top) or regular languages (bottom).
    Prior work has often linked the expressiveness of transformers to circuit complexity \citep[e.g.][]{hahn2020theoretical, hao2022formal, merrill2023parallelism, strobl2023averagehard, barcelo2024logical}.
    All tasks included in our experiments are in the class {\TCzero}, the tightest known upper bound on transformers' expressiveness. A well-known circuit complexity class within {\TCzero} is {\ACzero}, known to upper-bound the power of certain hard-attention models of transformers \citep{hao2022formal, barcelo2024logical}, which may raise hopes that it helps understand transformers' practical abilities. However, membership in this class does not predict transformers' length generalization behavior.
    On the algorithmic problems, there is no apparent correlation at all; majority-type problems, which the attention mechanism can easily implement, are not in {\ACzero}, but problems with super-logarithmic communication complexity such as copying and addition (Corollary~\ref{corr:comm-comp-no-gen}) are contained.
    On the regular languages, {\ACzero} exactly covers the class $\textbf{FO}[reg]$. This class can be proven to include all regular languages in {\CRASP}, but it also includes various languages that transformers length-generalize poorly on, such as Tomita-3.
    A natural subclass, obtained by restricting the size of {\ACzero} circuits to a linear number of wires, yields the class $\textbf{FO}_2[Reg]$ \citep{CadilhacP22}, which does not match transformers' behavior well either, e.g. it includes $\{0,1,2\}^*02^*$ (bottom right, equals $\Sigma^*be^*$ from Lemma~\ref{lemma:regular-languages-crasp}) but does not include D-12.
    Taken together, established circuit complexity classes do not account for Transformers' length generalization behavior.
    Compare to {\CRASP} results in Figures~\ref{fig:results} and \ref{fig:detailed-formal-languages}.
    }
    \label{fig:figure-ac0}
\end{figure}

\begin{figure}
    \centering
    
    \textbf{(A) Star-Free vs Non-Star-Free Languages}
    
    \includegraphics[width=0.99\linewidth]{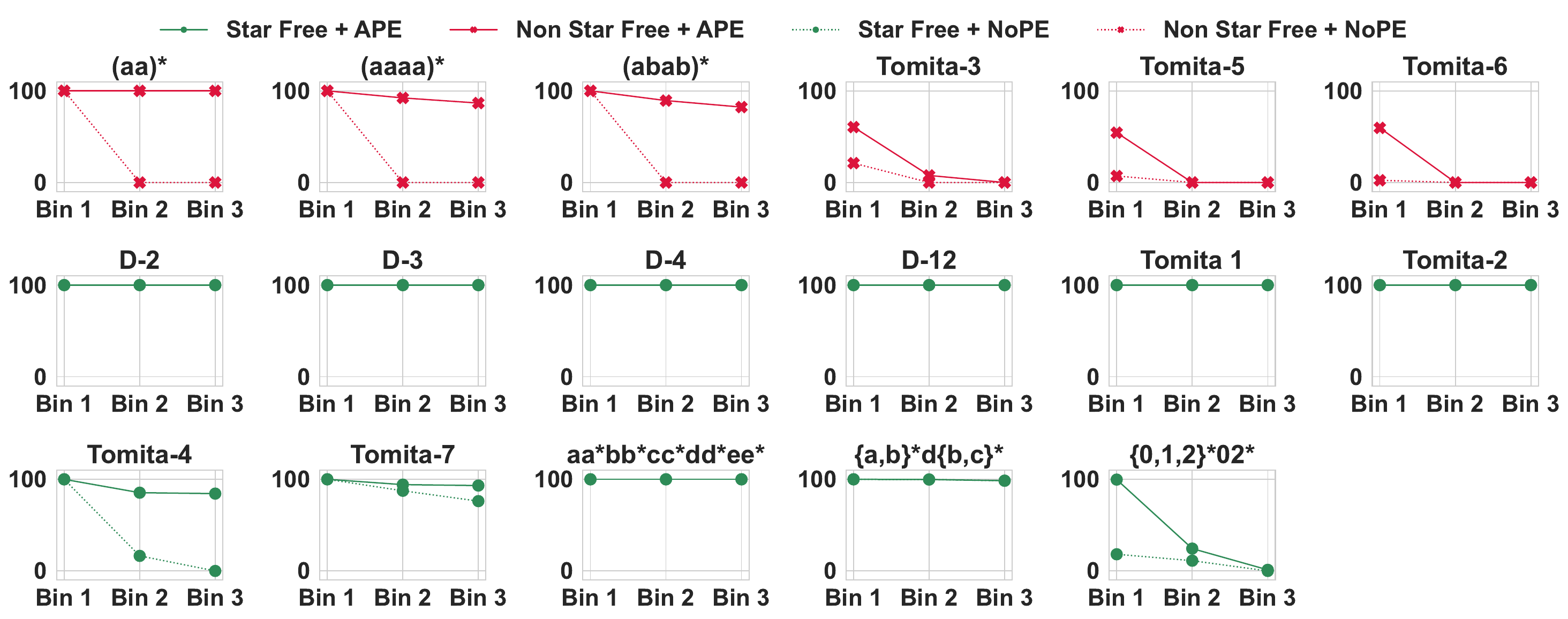}

\ \ 

\ \ 

\textbf{(B) Dot-Depth}
    \includegraphics[width=0.99\linewidth]{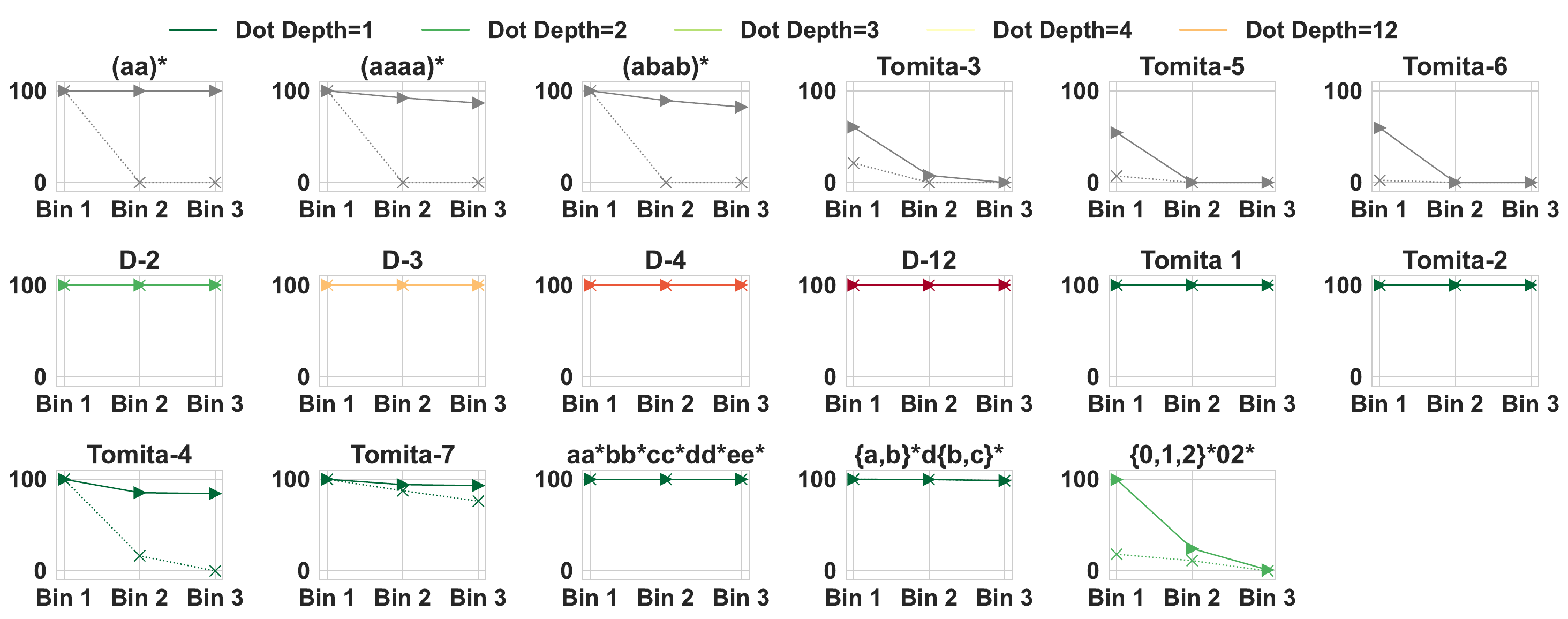}

    \caption{(1) Comparing length-generalization with a standard notion of the complexity of finite-state languages: Star-free languages (green) do not require modular counting \citep{mcnaughton1971counter}, have simpler algebraic representations in terms of group-free monoids \citep{schutzenberger1965finite}, are easily represented by modern state-space models \citep{Sarrof2024SSMs}, and match the expressiveness of a formal model of hard attention Transformers \citep{yang2023masked}. However, they do not consistently lead to length generalization in transformers, which on the other hand length-generalize on some non-star-free languages such as $(aa)^*$. The expressiveness of $\CRASP$ correctly accounts for the observed behavior.
    (2) Within the star-free languages, a standard complexity metric is dot-depth, with increased dot-depth indicating increased complexity (non-star-free languages are plotted in gray color).
    Dot-depth does not predict length generalization, which succeeds on some languages at dot depths 1 and 12 and fails at some languages at intermediate depth.
    See Figure~\ref{fig:figure-ac0} for further discussion regarding another existing notion of complexity, circuit complexity, also much less successful than $\CRASP$ expressiveness at predicting length generalization.
    Compare to {\CRASP} results in Figures~\ref{fig:results} and \ref{fig:detailed-formal-languages}.}
    \label{fig:star-free-ness}
\end{figure}

\section{Translating between Transformers and {\LimitTransformer}s}\label{app:back-and-forth}
Here, we formally introduce the product parameterization, formally state the hypothesis class, and state and prove the technical lemmas establishing the correspondence between ordinary transformers and limit transformers.

\subsection{Product Parameterization}\label{app:product-parameterization}

This parameterization is defined as follows:
\begin{defin}[Product Parameterization]\label{def:product-param}

For $l=1, \dots, L$, set
\begin{align*} 
\mathcal{VO}_0 =  & \{\vp_i : i\} \cup \{\mE_\sigma : \sigma\} \\
\mathcal{VO}_l =& \{(\mB_l)_{\cdot, s} : s = 1, \dots, \dMLP\} \\ %
\mathcal{VI}_0 = & \emptyset \\
\mathcal{VI}_l =& \{(\mA_l)_{s,\cdot} : s = 1, \dots, \dMLP;  \mU_\sigma : \sigma \in \Sigma\} \\
\mathcal{VO} = & \bigcup_{l=0,1,\dots,L} \mathcal{VO}_l \\
\mathcal{VI} = & \bigcup_{l=0,1,\dots,L} \mathcal{VI}_l \\
\mathcal{P} = & \{\{V_{l_1,h_1}, \dots, V_{l_k, h_k}\}\ \ \ :\ \ \  0 \leq k \leq L;\ \ \ \  l_1 < \dots < l_k;\ \ \ \  1\leq h_i \leq H\}
\end{align*}
Given a transformer $T$ with $N(T) < \infty$, define: %
\begin{align*}
    \alpha_{l, h, \mathcal{S}_1, \mathcal{S}_2, {\vv}, {\vw}}     := & {\vv}^T \left(\prod_{S \in \mathcal{S}_1} S\right)^T \mK_{l,h}^T \mQ_{l,h} \left(\prod_{S \in \mathcal{S}_2} S\right) {\vw} \in \mathbb{R} \\
    & \text{ for } 1 \leq l \leq L ;\ \ \ 1 \leq h \leq H; \ \ \  {\vv} \in  \mathcal{VO} ;\ \ \  {\vw} \in \mathcal{VO} ; \ \ \  \mathcal{S}_1, \mathcal{S}_2 \in \mathcal{P} \\
    \beta_{\mathcal{S}, {\vv}, {\vw}} := & {\vv}^T \left(\prod_{S \in \mathcal{S}} S\right) {\vw}  \in \mathbb{R}    \\
    & \text{ for }  {\vv} \in \mathcal{VI}_{l_1};\ \ \ \ {\vw} \in \mathcal{VO}_{l_2}  ;\ \ \  \mathcal{S} \in \mathcal{P} \\
\end{align*}
where the matrix product over a set $\mathcal{S} \in \mathcal{P}$
\begin{equation}
    \prod_{S \in \mathcal{S}} S
\end{equation}
is computed in descending order of layers; with the $S$ associated with the lowest layer at the right. For instance,
\begin{equation}
    \prod_{S \in \{V_{1,h}, V_{3,h'}, V_{4,h''}\}} S = V_{4,h''} V_{3,h'} V_{1,h}
\end{equation}
\end{defin}

\begin{remark}
    Here, we exemplify the Product Parameterization (Definition~\ref{def:product-param}).
\begin{align*}
    \alpha_{1,h,\emptyset, \emptyset, \vp_i, \mE_\sigma} =& \vp_i^T \mK_{1, h}^T \mQ_{1, h} \mE_\sigma \\
    \alpha_{2,h,\{ \mV_{1,h'} \}, \emptyset, \vp_i, \vp_j} =& \vp_i^T \mV_{1,h'}^T \mK_{2, h}^T \mQ_{2, h} \vp_j \\
    \alpha_{3,h,\{ \mV_{2,h'} \mV_{1,h''}\}, \{ \mV_{1,h'''} \}, \mE_\sigma, \mE_\tau} =& \mE_\sigma^T \mV_{1,h''}^T \mV_{2,h'}^T \mK_{2, h}^T \mQ_{2, h} \mV_{1,h'''} \mE_\tau \\
    \beta_{\emptyset, (\mA_1)_{s,\cdot}, \vp_i} =& (\mA_1)_{s,\cdot}^T \vp_i \\
    \beta_{\{ \mV_{1,h} \}, (\mA_3)_{s,\cdot}, \mE_\sigma} =& (\mA_3)_{s,\cdot}^T \mV_{1,h} \mE_\sigma \\
    \beta_{\{\mV_{3,h'}, \mV_{1,h} \}, \mU_\tau, \mE_\sigma} =& \mU_\tau^T \mV_{3,h'} \mV_{1,h} \mE_\sigma \\
    \beta_{\{\mV_{3,h'}, \mV_{2,h} \}, \mU_\tau, (\mB_1)_{\cdot,s}} =& \mU_\tau^T \mV_{3,h'} \mV_{2,h} (\mB_1)_{\cdot,s} \\
\end{align*}

\end{remark}

\begin{remark}
    For ease of notation, we have not restricted the layers from which different vector parameters are taken in the definition of $\alpha$ and $\beta$; hence, they will also include products that are not relevant to actual computations, such as 
    \begin{equation}
        {\vp_i}^T \mV_{2,h}^T \mK_{1,h'}^T \mQ_{1,h''} \mV_{3,h'''} \vp_j
    \end{equation}
    where a vector of the form $V_{3,h'''} \vp_j$ cannot actually feed into the computation of queries in the first layer.
    This is simply for simplicity of notation; such products will not impact results, though one could explicitly exclude them if one wants to obtain tighter quantitative bounds on the parameter count of {\LimitTransformer}s in Lemma~\ref{lemma:translation-positional}.
\end{remark}

\subsection{Formal Definition of Hypothesis Class}\label{app:hypothesis-class}
\begin{defin}[Hypothesis Class, corresponds to Definition~\ref{def:hypothesis-class}]
Let $p \in \mathbb{N}$ be fixed.
For each $n = 1, 2, 3, \dots$, define the hypothesis class $\Theta_n$ as the set of transformers $T$ (as defined in Section~\ref{sec:model-transformers}) where
\begin{enumerate}
\item $N(T_n) = n$
   \item each parameter vector and matrix of $T$ is represented at $p$ bits of precision 
    \item each product function (Definition~\ref{def:product-param}) involving positional encodings is translation-invariant.
    That is, every product function involving exactly one positional encoding is constant  across positions, and for every $1 \leq i, j, i+\Delta, j+\Delta \leq n$,
    \begin{align*}
        \alpha_{l, h, \mathcal{S}_1, \mathcal{S}_2, \vp_i, \vp_j} = \alpha_{l, h, \mathcal{S}_1, \mathcal{S}_2, \vp_{i+\Delta}, \vp_{j+\Delta}} 
    \end{align*}
    for all $l, h, \mathcal{S}_1, \mathcal{S}_2$ making these objects well-defined.
\end{enumerate}
\end{defin}

\subsection{From {\LimitTransformer}s to Transformers}

We give a translation from {\LimitTransformer}s to standard transformers for every finite context window $N$. The translation is uniform in the sense that ${\RegStrong}(T_N)$ from (\ref{eq:additional-penalty}) is bounded across all $T_N$, though the width may need to increase. More intuition is provided further in the proof of the translation. 

\begin{lemma}\label{lemma:limit-to-transformer}
Let $T_\infty$ be a {\LimitTransformer} satisfying \textsc{Periodic} and \textsc{Local}.
Then there are transformers $T_1, T_2, \dots$ ($T_i \in \Theta_i$) such that, for all $i \in \mathbb{N}, x \in \mathfrak{S}, o \in \{0, \dots, i-|x|\}$:
    \begin{equation}
    T_i(x,o) \equiv T_\infty(x,0)\ \ \ \ \text{when }\ \ o+|x| \leq i    
    \end{equation}
    and
    \begin{equation}
        \sup_i {\RegStrong}(T_i) < \infty
    \end{equation}
\end{lemma}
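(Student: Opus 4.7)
The plan is, for each $n$, to build $T_n \in \Theta_n$ by embedding $T_\infty$'s state into a wider residual stream split into three blocks: a $d_\infty$-wide \emph{core} mirroring $T_\infty$'s activations, an $n$-wide \emph{positional} block carrying the one-hot $\vp_i^{T_n} = (0, e_i, 0)$, and a constant-width \emph{auxiliary} block for SOS-anchoring bookkeeping. Token embeddings $\mE_\sigma^{T_n} = (\mE_\sigma^\infty, 0, 0)$ live entirely in the core. For every layer that simulates $T_\infty$, the parameter matrices $\mK_{l,h}^T\mQ_{l,h}$, $\mV_{l,h}$, $\mA_l$, $\mB_l$, $\mU$ copy their $T_\infty$ counterpart on the core block and vanish on all cross blocks between the positional block and the rest; this block structure immediately makes every one-positional product through the parameter matrices equal to zero, fulfilling the first clause of Definition~\ref{def:hypothesis-class}(3).

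\paragraph{Encoding $\phi_{l,h}$.}
On the $n\times n$ positional sub-block of each $\mK_{l,h}^T\mQ_{l,h}$ I would place the banded Toeplitz matrix $M$ with $M_{j,i} = \phi_{l,h}(j,i)$, which is nonzero only for $0 \le i - j \le \tau$ by \textsc{Local}. Since $\vp_i^{T_n}$ is a one-hot supported solely on that block, the two-positional attention-logit contribution becomes $\vp_i^{T_n\,T}\mK_{l,h}^T\mQ_{l,h}\vp_j^{T_n} = \phi_{l,h}(j,i)$, translation-invariant in $(i,j)$ as required. Bandwidth $\tau$ and bounded entries give $\|M\|\spectral = O(\tau \cdot \max_d |\phi_{l,h}(d)|)$, and the penalty~\eqref{eq:additional-penalty} collapses to $\sum_{d=0}^{\tau}|\phi_{l,h}(d)|^2$, both uniform in $n$.

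\paragraph{Reconstructing $\vp_i^\infty$ via an SOS anchor.}
For the periodic part I would prepend a single attention-plus-MLP sublayer. Its attention uses token-only keys and queries (in the core block) to attend sharply to the unique {\SOS} symbol via the $\log|x|$ logit scaling, so that its value matrix---configured to copy the attended positional one-hot into a fresh auxiliary slot---writes $e_{1+o}$ next to the already-present $e_{i+o}$ in the residual stream at input position $i$. A Heaviside MLP of width $\Delta^2$ then fires the unit indexed by $((i+o)\bmod\Delta,(1+o)\bmod\Delta)$; choosing its entries to have magnitude $1/\sqrt n$ with a threshold of the same order keeps $\|\mA_l\|_F,\|\mB_l\|_F = O(\sqrt\Delta)$ uniformly in $n$, while the pre-activation gap $\Theta(1/\sqrt n)$ is preserved at the output because Heaviside is scale-invariant. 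A final linear readout sums these units over pairs $(c,d)$ with $c - d \equiv r \pmod\Delta$ and writes $\vp_{r+1}^\infty$ into the core block, reconstructing $\vp_i^\infty$ at every input position $i$ independently of $o$. The remaining $L_\infty$ layers then replay $T_\infty$'s computation verbatim on the core, delivering $T_n(x,o) = T_\infty(x,0)$ for every $o$ with $|x|+o\le n$.

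\paragraph{Main obstacle.}
The hard step will be this last one: the SOS-anchor sub-circuit must simultaneously be translation-invariant in every chain of matrices linking the input layer (preserved by the block-diagonal, Toeplitz-on-positional design), recover $(i-1)\bmod\Delta$ from the two one-hots $e_{i+o}$ and $e_{1+o}$ even though $o$ is never an explicit quantity (achieved by the Heaviside-MLP AND gate), and keep its MLP Frobenius norms bounded as $n\to\infty$ (achieved by the $1/\sqrt n$-rescaling trick, whose gap survives because Heaviside is scale-invariant in its output). A secondary worry is that softmax attention cannot be perfectly sharp on the SOS; this is absorbed by the logarithmic logit scaling together with the fixed-precision rounding of attention weights, so that the $O(1/|x|)$ deviation from a hard argmax is eventually quantized away. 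Once these points are settled, the remaining ingredients of $\RegStrong$---depth, head count, precision, spectral norms, $\max_i\|\vp_i^{T_n}\|_2 = 1$, and $\max_\sigma\|\mE_\sigma\|_2$---inherit directly from $T_\infty$ plus a constant overhead, giving $\sup_n \RegStrong(T_n) < \infty$.
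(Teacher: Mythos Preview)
Your encoding of $\phi_{l,h}$ via a banded Toeplitz block and the block-diagonal simulation layers are essentially the paper's construction for the higher layers and are fine. The gap is in the SOS-anchor layer, and it is exactly the point you flag as the ``main obstacle''---but the resolution you sketch does not work.

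The hypothesis class $\Theta_n$ requires every \emph{one}-positional product function to be \emph{constant} across positions (Definition~\ref{def:hypothesis-class}(3)). Your Heaviside MLP, to fire on $((i{+}o)\bmod\Delta,(1{+}o)\bmod\Delta)$, must have rows $(\mA_l)_{s,\cdot}$ that read the positional one-hot $e_{i+o}$ (directly from the positional block) and the copied $e_{1+o}$ (from the auxiliary block, i.e.\ through $\mV$). This produces $\beta_{\emptyset,(\mA_l)_{s,\cdot},\vp_i}$ and $\beta_{\{\mV\},(\mA_l)_{s,\cdot},\vp_i}$ that equal $1/\sqrt n$ on one residue class and $0$ on the others---periodic in $i$, not constant. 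The $1/\sqrt n$ rescaling controls $\|\mA_l\|_F$ but does nothing for this exact-equality constraint; $T_n\notin\Theta_n$. A secondary issue: if your value matrix copies the full $n$-dimensional one-hot $e_{1+o}$ as written, $\operatorname{rank}(\mV)=n$ and the rank term in $\RegStrong$ diverges.

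The paper's construction avoids both problems by never letting an MLP row or a value matrix touch the positional block for the purpose of extracting residues. Instead it first routes the residue $i\bmod\Delta$ through a rank-$\Delta$ value matrix into a separate region, then uses $\Delta$ attention heads in a second layer whose logits are $1_{i-j\equiv h\pmod\Delta}$; these are \emph{two}-positional $\alpha$ products, for which translation invariance (periodicity in $j-i$) suffices. Each head's value matrix reads only the SOS-indicator dimension of the \emph{token} embedding---a dimension that is zero in every $\vp_i$---and writes the resulting attention weight on SOS to a fresh region. The MLP compares these $\Delta$ weights against the uniform-head weight to recover $(i-1)\bmod\Delta$ and writes $\vp^\infty$ into the core. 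Because the MLP reads only from a region reachable from $\vp_i$ through no chain of value matrices, all one-positional $\beta$ products involving $\vp_i$ vanish identically; the dependence on $i\bmod\Delta$ is carried entirely by attention \emph{weights}, which are not product functions.
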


\begin{proof}
We use hats to indicate the parameters of the constructed transformers.
Fix $N \in \mathbb{N}$; we construct $T_N$.

Let $\Delta$ be the periodicity of $\vp_i$.
The construction sets $\hat{L} = L+2$ and $\hat{H} = \max\{1, H, \Delta\}$.
Each activation has $\hat{d} := d+N+3\Delta+2$ dimensions, which can be partitioned into six regions:
\begin{equation}
    \left(\begin{matrix}
        d\text{ dimensions} \text{ (Region I: main region)}\\
        N\text{ dimensions} \text{ (Region II: position region)} \\
        \Delta\text{ dimensions} \text{ (Region III: periodic region I)} \\
        \Delta\text{ dimensions} \text{ (Region IV: periodic Region II)} \\
        1\text{ dimension} \text{ (Region V: {\SOS} Region)} \\
        \Delta+1\text{ dimensions} \text{ (Region VI: Copied {\SOS} Region)} \\
    \end{matrix}\right)
\end{equation}
Region I directly emulates the computations of the {\LimitTransformer}.
Region II carries absolute positional information, used for simulating the positional functions $\phi_{l,h}$.

We define the token and positional encodings as
\begin{align*}
    \hat{\vp_i} =& \left(\begin{matrix}
        0 \in \mathbb{R}^d \\ \ve_i \in \mathbb{R}^N \\ \ve_{(i\%\Delta)+1} \in \mathbb{R}^\Delta \\ 0 \in \mathbb{R}^\Delta \\ 0 \in \mathbb{R} \\ 0 \in \mathbb{R}^{\Delta+1}
    \end{matrix}\right) &
\hat{\mE}_\sigma =& \left(\begin{matrix}\mE_\sigma \in \mathbb{R}^d \\ 0 \in \mathbb{R}^N \\ 0\in \mathbb{R}^\Delta \\ 0\in \mathbb{R}^\Delta \\ 1_{\sigma = \$} \in \mathbb{R} \\ 0 \in \mathbb{R}^{\Delta+1} \end{matrix}\right) 
\end{align*}
where $\ve_i$ is the $i$-th unit vector.
Region I holds token information.
Region II holds exact position information.
Region III holds modular position information.
Region V indicates whether the token is start-of-sequence ({\SOS}) or not.
Content will be written to Regions IV and VI by attention components.

\paragraph{Intuition}
At first sight, a simple intuitive translation just uses Regions I and II, placing all parameters of $T_\infty$ into Region I, and taking one-hot vectors $\ve_i$ for Region II to encode exact positional information, so that the functions $\phi_{l,h}$ can be implemented by products $\hat{\vp}_i \hat{\mK}_T\hat{\mQ} \hat{\vp}_j$. One would thus use the simpler encoding $\tilde{\mE}_\sigma := [\mE_\sigma, 0]$ and $\tilde{p}_i := [\vp_i, \ve_i]$.
Such a translation would be able to reproduce the input-output behavior of $T_\infty$.
However, it would fall short in two ways:
First, the positional encodings $\vp_i$ can give rise to patterns whereby $\vp_i^T K^T Q \vp_j$ is periodic in $j-i$ when $j-i$ is large, and thus bounded away from zero at unboundedly many distances $j-i$, making (\ref{eq:additional-penalty}) unbounded.
We will avoid this by routing modular positional information through a value matrix $\mV_{1,1}$ before making it available to attention computations; intuitively, this is possible because the vectors $\vp_i$ have a bounded-dimensional span. 
Modular positional information starts out in Region III of $\hat{\vp}_i$, and is copied by $\mV_{1,1}$ into Region IV; no $\mK^T\mQ$ matrix in the construction will directly address Region III.
Second, transformers in $\Theta_N$ must satisfy the requirement that all product functions are translation-invariant; such a requirement need not be implemented by $T_\infty$ (e.g., MLPs could respond differently to different $\vp_i$'s), and thus also not by the simple translation sketched.
We overcome this by adding $\Delta$ different attention heads, each assigned to some $k \in \{0, \dots, \Delta-1\}$, each of which primarily attends to the {\SOS} symbol (based on Region V), with a stronger weight in the $k$-th head falling on {\SOS} if the distance between the query position and the {\SOS} position is congruent to $k$ modulo $\Delta$ (based on Region IV).
These attention weights are written, via the value matrix, to Region VI.
An MLP then compares each of these attention weights to the weights resulting from uniform attention, thereby determining the head giving rise to the highest attention weight, and places the matching encoding $\vp_i$ into Region I.
This construction maintains translation-invariance of all product functions; most importantly, it makes all $\beta$ functions involving positional encodings equal to zero:
The MLP reads from Region VI, whose entries are linear combinations of entries from Region V, which is zero in all $\hat{p}_i$.
Crucially, the dependence of the positional encoding $\vp_i$ written to Region I on the original positional information in $\hat{p}_i$ is mediated entirely through attention weights, which do not enter the definition of the product functions.
Once these computations are completed, Region I matches the activations in $T_\infty$ at offset 0, and a direct simulation of $T_\infty$ can proceed based on Regions I and II.
Taken together, we expand the intuitive simple construction to make rigorous the intuition that bounded-rank positional information can be utilized by transformers even under the constraints that (\ref{eq:additional-penalty}) be bounded and that product functions be translation-invariant.

\paragraph{Layer 1: Copying Periodic Positional Information to Region IV}
In the lowest layer, each position attends to itself and moves the periodic positional information from Region III to Region IV.
Formally:
\begin{align*}
    \mK_{1,1} = \mQ_{1,1} = \left(\begin{matrix}
        0 & 0 & 0 & 0 & 0 & 0 \\
        0 & \Omega \cdot \mI_{N\times N} & 0 & 0 & 0 & 0 \\
        0 & 0 & 0 & 0 & 0 & 0 \\
        0 & 0 & 0 & 0 & 0 & 0 \\
        0 & 0 & 0 & 0 & 0 & 0 \\
        0 & 0 & 0 & 0 & 0 & 0 
    \end{matrix}\right) \\
    \mV_{1,1} = \left(
    \begin{matrix}
        0 & 0 & 0 & 0 & 0 & 0\\
        0 & 0 & 0 & 0 & 0 & 0\\
        0 & 0 & 0 & 0 & 0 & 0\\
        0 & 0 & \mI_{\Delta \times \Delta}  & 0 & 0 & 0\\
        0 & 0 & 0 & 0 & 0 & 0\\
        0 & 0 & 0 & 0 & 0 & 0\\
    \end{matrix}
    \right)
\end{align*}
for some $\Omega > 0$ to be chosen later.
After attention (the MLP does nothing, $\mA_1, \mB_1, \vb_1 \equiv 0$), the output will be
\begin{equation}
    \hat{\vy}_i^{(1)} = \left(
    \begin{matrix}
        \mE_\sigma \\ \ve_i \\ \ve_{(i\%\Delta)+1} \\ (1-\epsilon) \ve_{(i\%\Delta)+1} \\ 1_{x_i=\$} \\ 0 
    \end{matrix}
    \right)
\end{equation}
where $\epsilon < 0.1$ when $\Omega$ is sufficiently large.
Region III will not be addressed by any further downstream matrices or vectors.
The idea behind this operation is to ensure no $\hat{\mK}^T\hat{\mQ}$ matrix will directly have to read from Region III -- rather, any dependence of attention logits on modular positional information is mediated by the $\mV_{1,1}$ matrix. This allows us to keep (\ref{eq:additional-penalty}) bounded even in the presence of such dependence, as we detail below.
Note that this strategy importantly relies on $\Delta \leq \mathcal{R}_\infty(T_\infty)$, so that $rank(\mV_{1,1})$ is bounded independently of N.
Intuitively, modular positional information (unlike the full positional information encoded in Region II) can be routed through bounded-rank components, which enables keeping (\ref{eq:additional-penalty}) bounded.

\paragraph{Layer 2: Determining Position Relative to {\SOS}}
We now add a second layer, in which we attend with $\Delta+1$ different heads, where the $s$-th head tests whether the distance to {\SOS} is congruent to $s$ modulo $\Delta$, and the $\Delta+1$-st head attends uniformly.
By determining which head attends most strongly to {\SOS}, we can read out the relative position with an MLP without breaking the translation invariance of product functions.

For $h=1,\dots,\Delta$, let $\mQ_{2,h}$ be such that
\begin{equation}
    \mQ_{2,h} \left(\begin{matrix}\dots \\ \dots \\ \dots \\ \ve_i \\ \dots \\ \dots\end{matrix}\right) = \left(\begin{matrix}0\\ 0 \\ 0 \\ \ve_{i+h \% \Delta} \\ 0\\ 0\end{matrix}\right)
\end{equation}
Let $\mK_{2,h}$ ($h=1, \dots, \Delta$) be the identity matrix restricted to Regions IV and V.
Further, let 
\begin{equation}
    \mQ_{2,\Delta+1} = \mK_{2,\Delta+1} \equiv 0
\end{equation}
Then, for $h=1,\dots,\Delta$,
\begin{equation}
    \hat{y}_i^{(1)} \hat{K}_{2,h}^T \hat{Q}_{2,h} \hat{y}_j^{(1)} = 1_{i-j \equiv h \operatorname{(mod)} \Delta} %
\end{equation}
and
\begin{equation}
    \hat{y}_i^{(1)} \mK_{2,\Delta+1}^T \mQ_{2,\Delta+1} \hat{y}_j^{(1)} = 0 %
\end{equation}
Intuitively, heads $1, \dots, \Delta$ attend preferentially to positions at a given distance modulo $\Delta$; head $\Delta+1$ attends everywhere.
Define $\mV_{2,h}$ for $h=1, \dots, \Delta+1$ by
\begin{equation}
    \mV_{2,h} \left(\begin{matrix}\dots \\ \dots \\ \dots \\ \dots \\ z \\ \dots\end{matrix}\right) = \left(\begin{matrix}0 \\ 0 \\ 0 \\ 0 \\ 0 \\ z\cdot \ve_h \end{matrix}\right)
\end{equation}
As the only vector parameter with an entry in Region V is the token embedding for the {\SOS} token, the outcome of this attention block effectively writes the attention falling on the {\SOS} token for each of the $\Delta+1$ attention heads.
We then use a Heaviside MLP with number of hidden units bounded in $\Delta$ to determine for which $s =1, \dots \Delta$ it holds that entry $s$ in Region VI has a greater (as opposed to smaller) entry than entry $\Delta+1$.
The MLP, via the $B$ matrix, then writes $\vp_s$ to Region I.
A special case occurs at {\SOS}, where Region V is 1 (it is 0 everywhere else); here, the $\Delta$ MLP units described above are disabled (a 1 in Region V causes a large negative number to be added to their inputs) and the MLP instead writes $\vp_1$ to Region I.
Overall, $\Delta+1$ MLP units with Heaviside activation are sufficient for this construction.
Overall, after the MLP, the $i$-th position in the string has $\vp_{((i-1)\%\Delta)+1}$ added to Region I. %
Overall,
\begin{equation}
    \hat{\vy}_i^{(2)} = \left(\begin{matrix}
        \mE_{x_i} + \vp_{i - o+1} \\ \ve_{i+o} \\ \dots \\ \dots \\ \dots \\ \dots
    \end{matrix}\right) = \left(
    \begin{matrix}
        \vy_i^{(0)} \\ \ve_{i+o} \\ \dots \\ \dots \\ \dots \\ \dots
    \end{matrix}\right)
\end{equation}
where $o$ is the offset, and the second equality holds if the {\LimitTransformer} is run at $o=0$.
In layers $2, \dots, \hat{L}$, only Regions I and II will receive any consideration.

\paragraph{Higher Layers: Emulating $T_\infty$}
Next, for $l \geq 1$,
define $\hat{\mK}_{l+2,h}^T\hat{\mQ}_{l+2,h}  \in \mathbb{R}^{\hat{d} \times \hat{d}}$ as
\begin{equation}
    \hat{\mK}_{l+2,h}^T\hat{\mQ}_{l+2,h} = \left(\begin{matrix} 
   \mK_{l,h}^T \mQ_{l,h} & 0 & 0 & 0 & 0 & 0\\
    0 & W_{l,h} & 0  & 0 & 0 & 0\\
    0 & 0 & 0 & 0 & 0 & 0 \\
    0 & 0 & 0 & 0 & 0 & 0 \\
    0 & 0 & 0 & 0 & 0 & 0 \\
    0 & 0 & 0 & 0 & 0 & 0 \\
    \end{matrix}\right)
\end{equation}
where $i \leq j$:
\begin{equation}
    (W_{l,h})_{i,j} = \phi_{l,h}(i,j)
\end{equation}
to satisfy
\begin{equation}
    \vp_i^T W_{l,h} \vp_j = \phi_{l,h}(i,j)
\end{equation}
Then $W$ is a sum of matrices each of which has the value $\phi_{l,h}(i-j, j)$ for some $i$ on an (off-)diagonal and zeros elsewhere; hence
\begin{equation}
    \|W_{l,h}\|_2 \leq \sum_{i} |\phi_{l,h}(1, i)|
\end{equation}
Overall, $ \|\hat{K}_{l+1,h}^T\hat{Q}_{l+1,h}\|\spectral$ can be bounded, independently of $N$, in terms of $\|\mK_{l,h}^T \mQ_{l,h}\|\spectral$ and $\|\phi_{l,h}\|_1$.
As $\phi_{l,h}$ is local, this 1-norm is finite.

We construct all other parameter matrices and vectors by placing the parameter from the {\LimitTransformer} into the Region I and leaving all other regions zero.
Now, by induction, the first $d$ dimensions of any activation will match those of the {\LimitTransformer}, but shifted by two layers:
\begin{equation}
    \hat{y}_i^{(l+2)} = \left(
    \begin{matrix}
        \vy_i^{(l)} \\ \ve_{i+o} \\ \dots \\ \dots \\ \dots \\ \dots
    \end{matrix}
    \right)
\end{equation}
As the $U$ matrix only reads from Region I, the output will also be the same as for the {\LimitTransformer}.

For the excess heads, the matrices are just set to 0 -- in the first or the higher layers -- depending on whether $H$ or $\Delta$ is larger than the other.

\paragraph{Bounding Norms and Ranks}
At $l\geq 2$, now the ranks of $\mV_{l,h}$ and the norms of $\mA_l, \mB_l, U$ and the $\ell_2$ norms of $e, b, c$ will be the same they were in the {\LimitTransformer}.
The increases from the first and second layer are bounded in terms of $\Delta$ and hence $\mathcal{R}_\infty(T_\infty)$

\paragraph{Verifying Boundedness of (\ref{eq:additional-penalty})}
By construction, $\hat{\vp}_i^T \hat{\mK}_{1,h}^T \hat{\mQ}_{1,h} \hat{\vp}_j = \delta_{ij} \delta_{h1}$ and $\hat{\vp}_i^T \hat{\mK}_{2,h}^T \hat{\mQ}_{2,h} \hat{\vp}_j \equiv 0$.
For the higher layers, the boundedness follows because $T_\infty$ satisfies \textsc{Local}. %

\paragraph{Verifying Translation Invariance}
We need to verify that all product functions are translation-invariant.
Each $\hat{p}_i$ contains entries in Regions II and III.
In the first layer, we have $\alpha_{1,1,\emptyset,\emptyset,\vp_i,\vp_j} = \delta_{ij}$, hence, these are translation-invariant. 
In the second layer, we have
\begin{align}
    \alpha_{2,s,\emptyset, \emptyset,\vp_i,\vp_j} = & \left(\begin{matrix} 0 \\ \ve_i \\ \ve_{(i\%\Delta)+1} \\ 0 \\ 0 \\ 0\end{matrix}\right)^T \mK_{2,s}^T \mQ_{2,s} \left(\begin{matrix} 0 \\ \ve_i \\ \ve_{(j\%\Delta)+1} \\ 0 \\ 0 \\ 0 \end{matrix}\right) = 0\\
    \alpha_{2,s,\{ \mV_1 \},\emptyset,\vp_i,\vp_j} =& 0\\
    \alpha_{2,s,\{ \mV_1\},\{ \mV_1\},\vp_i,\vp_j} =& \left(\begin{matrix} 0 \\ \ve_i \\ \ve_{(i\%\Delta)+1} \\ 0 \\ 0 \\ 0\end{matrix}\right)^T \mV_{1,1}^T \mK_{2,s}^T \mQ_{2,s} \mV_{1,1} \left(\begin{matrix} 0 \\ \ve_i \\ \ve_{(j\%\Delta)+1} \\ 0 \\ 0 \\ 0 \end{matrix}\right) \\
    =& \left(\begin{matrix} 0 \\ 0 \\ 0 \\ \ve_{(i\%\Delta)+1} \\ 0 \\ 0 \end{matrix}\right)^T  \mK_{2,s}^T \mQ_{2,s}  \left(\begin{matrix} 0 \\ 0 \\ 0 \\ \ve_{(j\%\Delta)+1}  \\ 0 \\ 0 \end{matrix}\right) \\
    =&  1_{j-i\equiv  s (\operatorname{mod} \Delta)}\\
\end{align}
These are all translation-invariant.
All $\alpha_{2+l,h,\emptyset, \emptyset, \vp_i, \vp_j}$ equal a function $\phi_{l,h}(i,j)$ and hence are translation invariant.
No $\mV$ matrix ever reads from Region II.
Overall, all $\alpha$ products are translation-invariant.
In higher layers, $\mK^T\mQ$ matrices read from Regions I and II. The positional encodings $\hat{\vp}_i$ write to Region II but not  -- not even when mediated directly through value matrices -- Region I, and the products are translation-invariant because the functions $\phi_{l,h}$ are.

Consider
\begin{align*}
\beta_{\emptyset,(\mA_1)_{s,\cdot}, \vp_i} = 0\\
\end{align*}
because the first layer MLP does nothing.
Second,
\begin{align*}
\beta_{\emptyset, (\mA_2)_{s,\cdot}, \vp_i} = 0\\
\end{align*}
because the second layer MLP only reads from Region VI, and none of $\vp_i, \mV_1 \vp_i$ contain any entries in Region VI.
Also,
\begin{align*}
\beta_{\mathcal{S}, \mU_\sigma,\vp_i} = 0\\
\beta_{\{\mV_1\}, \mU_\sigma,\vp_i} = 0\\
\beta_{\{\mV_2\mV_1\}, \mU_\sigma,\vp_i} = 0\\
\beta_{\{\mV_2\}, \mU_\sigma,\vp_i} = 0\\
\end{align*}
since none of $\vp_i$, $\mV_1\vp_i$, $\mV_2\mV_1\vp_i, \mV_2\vp_i$ contain any entries in Region I, where $\mU_\sigma$ has its entries.

Since $\mV_{2,s}$ all read only from Region V, whose entries have no connection to $\hat{p}_i$, we also have:
\begin{align*}
\beta_{\{ \mV_2, \dots\}, (\mA_l)_{s,\cdot},\vp_i} = 0\\
\beta_{\{ \mV_2, \dots\}, \mU_\sigma,\vp_i} = 0\\
\beta_{\{ \mV_2,\mV_1, \dots\}, (\mA_l)_{s,\cdot},\vp_i} = 0\\
\beta_{\{ \mV_2,\mV_1, \dots\}, \mU_\sigma,\vp_i} = 0\\
\end{align*}
Overall, all $\beta$ products involving $\vp_i$ are translation-invariant.

\end{proof}

\subsection{From Transformers to {\LimitTransformer}s}\label{app:from-transformer-to-limit}

We first establish various smaller lemmas.
The first lemma informally says that, when $f(i,j) := \vp_i^T \mA \vp_j$ is translation-invariant in $(i,j)$, and $\mA$ has bounded rank, then $f(i,j)$ is periodic. This lemma is key to the prominent role of the \textsc{Periodic} property in Theorem~\ref{thm:guarantee}.
\begin{lemma}\label{lemma:deducing-periodicity-fidi}
Let $p \in \mathbb{N}$, and let $N \in \mathbb{N}$.
Let $\vp_1, \dots, \vp_N \in \mathbb{R}^{k}$ such that $\|\vp_i\|_2 < C$; let $\mA \in \mathbb{R}^{k \times k}$.
Let $f(i,j) := \vp_i^T \mA \vp_j$ be translation invariant, in the sense that
\begin{equation}
\forall 0 \leq i \leq j: \forall M \geq 0 : j+M \leq N \Rightarrow f(i,j) = f(i+M, j+M)
\end{equation}
Further assume that, for $i \leq j$, $f(i,j)$ can be expressed with $p$ fractional bits.\footnote{In particular, this is satisfied if $\vp_i$, $\mA$ are each expressed at some fixed precision $q$, where $p \geq 3q$.}
Define for $n\geq 0$
\begin{equation}
G(n) := f(1,1+n)    
\end{equation}
Then there is $\Delta \in \mathbb{N}$ upper-bounded in terms of $\operatorname{rank}(\mA)$, $p$, $C$, and $\|\mA\|\spectral$ (but not $N$) such that
\begin{equation}
    \forall n: n+\Delta< N \Rightarrow G(n) = G(n+\Delta)
\end{equation}

\end{lemma}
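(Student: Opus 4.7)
The plan is to translate the low-rank condition on $\mA$ into a linear recurrence for the sequence $G(n)$, and then exploit finite precision via pigeonhole to extract a period. The main subtlety is to arrange that periodicity propagates all the way down to $n=0$, not merely from some large index onward.

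First, by translation invariance, $f(i,j) = G(j-i)$ whenever $1 \leq i \leq j \leq N$. Let $\mP \in \mathbb{R}^{k\times N}$ have columns $\vp_1,\ldots,\vp_N$; then $\mM := \mP^T \mA \mP$ agrees with $G(j-i)$ on its upper triangular part and satisfies $\operatorname{rank}(\mM) \leq \operatorname{rank}(\mA) = r$. Let $k^* \in \{1,\ldots,r+1\}$ be the minimum $k$ for which the first $k$ rows of $\mM$ (viewed as vectors in $\mathbb{R}^N$) are linearly dependent; by minimality, any non-trivial combination $\sum_{t=1}^{k^*} \alpha_t \mM_{t,\cdot} = 0$ must have $\alpha_{k^*} \neq 0$. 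Evaluating this identity at each column $j \geq k^*$, so that every entry $\mM_{t,j}$ with $t \leq k^*$ lies in the upper triangle, yields $\sum_{t=1}^{k^*} \alpha_t G(j-t) = 0$. Letting $t_{\min}$ be the smallest $t$ with $\alpha_t \neq 0$ and substituting $n = j-k^*$, $u = k^* - t$, this becomes
\[
\sum_{u=0}^{k^* - t_{\min}} \beta_u G(n+u) = 0 \quad \text{for all } n \geq 0,
\]
with both endpoint coefficients $\beta_0 = \alpha_{k^*}$ and $\beta_{k^*-t_{\min}} = \alpha_{t_{\min}}$ non-zero.

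Because both endpoint coefficients are non-zero, this recurrence provides a forward step (solving for $G(n+k^*-t_{\min})$ in terms of the preceding $k^*-t_{\min}$ values) and a backward step (solving for $G(n)$ in terms of the following ones), each valid for every $n \geq 0$. Hence the state $s_n := (G(n), G(n+1), \ldots, G(n + k^* - t_{\min} - 1))$ evolves under a deterministic bijection. The norm and precision hypotheses give $|G(n)| \leq C^2 \|\mA\|\spectral$ with $G(n)$ on the $2^{-p}$-grid, so $G$ takes at most $M = O(2^p C^2 \|\mA\|\spectral)$ distinct values and $s_n$ lies in a set of size at most $M^r$. If $N$ exceeds this bound, pigeonhole among $s_0, s_1, \ldots, s_{M^r}$ forces $s_{n_1} = s_{n_2}$ for some $0 \leq n_1 < n_2 \leq M^r$, and bijectivity upgrades this to $s_n = s_{n+\Delta}$ for \emph{all} $n \geq 0$, with $\Delta := n_2 - n_1 \leq M^r$; if $N \leq M^r$, then setting $\Delta := M^r$ renders the conclusion vacuous. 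Either way $\Delta$ is controlled purely in terms of $r$, $p$, $C$ and $\|\mA\|\spectral$.

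The main obstacle is the one foreshadowed above. If one simply took rows $1,\ldots,r+1$ and picked an arbitrary non-trivial relation, the coefficient of $G(n)$ in the resulting recurrence (i.e., the ``$\beta_0$'' in the notation above) could vanish, in which case the backward step breaks down near the start and one cannot rule out aperiodic transient values $G(0), G(1), \ldots$. The minimal-$k^*$ choice forces $\alpha_{k^*} \neq 0$, which is precisely what promotes the state-space dynamics from eventually periodic to genuinely periodic from the outset; the remaining work — tallying the bound on $M$, checking that the state is first-order Markov under the bijection, and disposing of small $N$ — is routine.
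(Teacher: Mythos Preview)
Your approach is genuinely different from the paper's. The paper argues geometrically: it passes to $\mathbb{R}^{2\rho}$ via the SVD of $\mA$, uses an $\epsilon$-packing bound there to locate two nearby reduced vectors $\vp_i,\vp_j$, and then lets finite precision collapse the inequality $|G(l)-G(l+\Delta)|<\epsilon\,\|\mA\|\,C$ to an equality. You argue algebraically: low rank of $\mM=\mP^T\mA\mP$ forces a linear recurrence on $G$, the minimal-$k^*$ device makes both endpoint coefficients nonzero so the companion map is a bijection on a finite grid, and pigeonhole then delivers periodicity from $n=0$. Both bounds on $\Delta$ end up exponential in $\operatorname{rank}(\mA)$, so neither is quantitatively superior; your route avoids the SVD and the $\epsilon$-tuning entirely and makes the ``periodic from the start, not just eventually'' mechanism very transparent.

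One point is not closed. The row identity $\sum_t\alpha_t\mM_{t,\cdot}=0$ holds at every column, but only for $j\geq k^*$ do all entries $\mM_{t,j}$ with $t\leq k^*$ sit in the upper triangle where $\mM_{t,j}=G(j-t)$. Hence the recurrence $\sum_u\beta_u G(n+u)=0$ is established only for $n\in[0,N-k^*]$, not ``for all $n\geq0$'' as you write, and since its order is $d=k^*-t_{\min}$ it controls $G(0),\ldots,G(N-t_{\min})$ but not the last $t_{\min}-1$ values. Your minimal-$k^*$ trick guarantees $\alpha_{k^*}\neq0$ (exactly what the backward step needs, as you correctly emphasize), but it does \emph{not} force $\alpha_1\neq0$; nothing rules out $t_{\min}>1$. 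So what you actually prove is $G(n)=G(n+\Delta)$ for $n+\Delta\leq N-t_{\min}$, which falls at most $r$ indices short of the stated ``$n+\Delta<N$''. This residual is harmless for how the lemma is used downstream, but to match the statement literally you still owe an argument covering those final values.
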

\begin{proof}

Let $\rho := \operatorname{rank}(\mA)$; it is $>0$ without loss of generality.
We write the singular value decomposition of $\mA$ as
\begin{equation}
    \mA = \mU^T \Sigma \mV
\end{equation}
where $\Sigma \in \mathbb{R}^{\rho \times \rho}$, $\mU, \mV \in \mathbb{R}^{\rho \times d}$, where $\|\mU\|, \|\mV\| \leq 1$.
Then we can write
\begin{equation}
    \vp_i^T \mA \vp_j = \vp_i^T \mU^T \Sigma \mV \vp_j = \left(\begin{matrix}\mU \vp_i \\ \mV \vp_i \end{matrix}\right)^T \left(\begin{matrix} \mI_{\rho \times \rho}  & 0_{\rho \times \rho}   \end{matrix}\right)^T \Sigma \left(\begin{matrix}   0_{\rho \times \rho} & \mI_{\rho \times \rho}\end{matrix}\right) \left(\begin{matrix}\mU \vp_j \\ \mV \vp_j \end{matrix}\right) 
\end{equation}
We will henceforth replace $\mA$ by $\left(\begin{matrix} \mI_{\rho \times \rho}  & 0_{\rho \times \rho}   \end{matrix}\right)^T \Sigma \left(\begin{matrix}   0_{\rho \times \rho} & \mI_{\rho \times \rho}\end{matrix}\right) \in \mathbb{R}^{\rho \times \rho}$ and $\vp_i$ by $\left(\begin{matrix}\mU \vp_i \\ \mV \vp_i \end{matrix}\right) \in \mathbb{R}^{2 \operatorname{rank}(A)}$; note that this increases the norms of these objects at most by a multiplicative constant while preserving the products $\vp_i^T \mA \vp_j$.

As $\|\vp_i\|_2 < C$ for all $i$, we find that, when $N$ is sufficiently large, for each $\epsilon > 0$, there are $\vp_i, \vp_j$ ($i < j$) such that:
\begin{equation}
    \|\vp_i-\vp_j\|_2 \leq \epsilon
\end{equation}
Take $\Delta := j-i$.
The minimum required distance $\Delta$ can be upper bounded by considering the $\epsilon$-packing number of $\{v : \|v\| \leq C\}$ and applying the pigeonhole principle.
Hence, overall, $\Delta$ can be upper-bounded in terms of $\epsilon$, $\operatorname{rank}(\mA)$, and $C$. Importantly, this bound is independent of $N$.
Hence, $\forall k \in \{i+\Delta, \dots, N\}$:
\begin{align*}
& = |G(k-i) - G(k-i-\Delta)| \\
& = |f(i,k) - f(i+\Delta,k)| \\
& = |f(i,k) - f(i+(j-i),k)| \\
& = |\vp_i^T \mA \vp_k  - \vp_j^T \mA \vp_k| \\
& \leq \|\vp_i - \vp_j\|_2 \|\mA \vp_k\|_2 \\
& \leq \epsilon \cdot \|\mA \vp_k\|_2 \\
\end{align*}
Equivalently, using the substitution $l := k-i-\Delta$, we have, for any $l \in \{l, \dots, N-\Delta\}$:
\begin{equation}
 |G(l+\Delta) - G(l)|  \leq \epsilon \cdot \|\mA \vp_k\|_2 
\end{equation}
Take $\epsilon = \frac{2^{-p}}{4C \|A\|\spectral}$; then
\begin{equation}
   G(l) = G(l+\Delta)
\end{equation}
due to the assumption about fixed-precision outputs.

\end{proof}

\begin{lemma}
Assume each parameter in a transformer is represented at $p$ bits of precision.
    Then each product function is exactly represented at $(4+2L)p$ bits of precision.
\end{lemma}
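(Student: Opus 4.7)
The plan is to expand each product function into a sum of monomials, where each monomial is a product of individual parameter entries, and then bound the precision of each monomial by counting the number of factors.

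First, I would recall from Definition~\ref{def:product-param} that an $\alpha$-product has the form
\[
\alpha_{l,h,\mathcal{S}_1,\mathcal{S}_2,\vv,\vw} = \vv^{\!\top}\!\Bigl(\prod_{S\in \mathcal{S}_1}\!S\Bigr)^{\!\top}\!\mK_{l,h}^{\!\top}\mQ_{l,h}\Bigl(\prod_{S\in \mathcal{S}_2}\!S\Bigr)\vw,
\]
so that the scalar $\alpha$ is obtained by summing (over appropriate index tuples) products of entries of: $\vv$ (1 factor), up to $L$ value matrices in $\mathcal{S}_1$, the two matrices $\mK_{l,h}$ and $\mQ_{l,h}$ (2 factors), up to $L$ value matrices in $\mathcal{S}_2$, and $\vw$ (1 factor). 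Thus each monomial is a product of at most $2L+4$ parameter entries. Similarly, a $\beta$-product expands into a sum of monomials each of which is a product of at most $L+2$ parameter entries.

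Next, I would use the elementary fact that if $x,y$ are rationals exactly representable with $p_1$ and $p_2$ fractional bits respectively, then $xy$ is exactly representable with $p_1+p_2$ fractional bits, whereas $x+y$ is representable with $\max(p_1,p_2)$ fractional bits. By induction on the number of factors, a product of $k$ quantities each at $p$ fractional bits is exactly representable at $kp$ fractional bits. Applying this with $k = 2L+4$ for $\alpha$ and $k=L+2$ for $\beta$ yields representability at $(2L+4)p$ and $(L+2)p$ fractional bits respectively. Since summations over index tuples only combine numbers already at these precisions (and the max is achieved by the $\alpha$ case), every product function is exactly representable at $(4+2L)p$ bits of precision, as claimed.

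I do not anticipate a real obstacle here; the statement is essentially bookkeeping on the length of the chain of matrix-vector products appearing in Definition~\ref{def:product-param}. The only mild subtlety is to be careful that ``precision'' refers to fractional bits so that multiplication adds precisions exactly, and that each vector in $\mathcal{VO} \cup \mathcal{VI}$ (which may itself be a row or column of a parameter matrix such as $(\mA_l)_{s,\cdot}$ or $(\mB_l)_{\cdot,s}$) is entry-wise at $p$ bits of precision, which follows directly from the hypothesis that all transformer parameters are.
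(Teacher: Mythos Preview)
Your proposal is correct and follows essentially the same approach as the paper: expand each product function as a sum of monomials, observe that each monomial involves at most $2+2+2L$ parameter entries (two vectors, two key/query matrices, and up to $L$ value matrices on each side), and use that multiplying $k$ numbers at $p$ fractional bits yields a number at $kp$ fractional bits while addition does not increase the required precision. The paper's proof is just a terser version of exactly this bookkeeping.
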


\begin{proof}
    Each product function consists at most of two vectors, a key and query matrix, and up to $2L$ value matrices. This results in a sum of numbers that each are a product of up to $4+2L$ numbers that each are individual parameters. As each number is represented at $p$ bits of precision, each product is represented at $(4+2L)p$ bits of precision.
\end{proof}
It will be useful to define a complexity metric applicable to {\LimitTransformer}s: %
\begin{defin}
For a {\LimitTransformer} $T$, define $\mathcal{R}_\infty(T)$ as the sum of
\begin{enumerate}
    \item $L+H+d$ 
    \item the precision $p$ used for expressing the parameters (Definition~\ref{def:limit-transformer}), and the precision $p$ used for rounding attention logits and the output of $\exp(\cdot)$ (Section~\ref{sec:model-transformers}).
    \item the maximum $\ell^\infty$ norm of all parameter vectors and matrices (including positional encodings) 
    \item the minimal periodicity $\Delta$ of the positional encodings\footnote{Formally, $1$ plus the supremum of the set of $\Delta$'s for which $p_i \not\equiv \vp_{i+\Delta}$.}  
    \item $\max_{l,h} \sum_{i=1}^\infty |\phi_{l,h}(1,1+i)|^2$ (short: $\max_{l,h} \|\phi_{l,h}\|_2^2$).
\end{enumerate}
\end{defin}

\begin{proposition}
    Let $A \in \mathbb{R}_+$. Let $U$ be the set of {\LimitTransformer}s $T$ such that $\mathcal{R}_\infty(T) \leq A$.
    Then the set of parameter settings in $U$, other than the $\phi_{l,h}$ functions, is finite.
\end{proposition}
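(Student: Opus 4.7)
The plan is to show that each class of parameters (other than the $\phi_{l,h}$) is constrained by $\mathcal{R}_\infty(T) \leq A$ to lie in an explicitly finite set, and then take the finite Cartesian product.

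First, I will unpack what $\mathcal{R}_\infty(T) \leq A$ imposes on the discrete structure of $T$. Item (1) bounds $L$, $H$, and $d$, so the collection of parameter matrices --- the $\mK_{l,h}, \mQ_{l,h}, \mV_{l,h}, \mA_l, \mB_l, \mU$ and bias/embedding vectors $\vb_l, \mE_\sigma$ --- is drawn from a catalog whose cardinality and whose shapes are all bounded in terms of $A$ and $|\Sigma|$. Item (2) bounds the precision $p$ at which every entry of every parameter is represented, and item (3) bounds each entry's magnitude. Since a $p$-bit number with $\ell^\infty$-norm at most $A$ can take only $O(2^p A)$ values, each individual entry of a parameter matrix or vector ranges over an explicit finite set. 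Since the number of entries is bounded (the shapes depend only on $L, H, d, |\Sigma|$, and the fixed MLP width $\dMLP$, which itself is bounded through $\|\mA_l\|_F$ and precision), the set of possible values for each matrix or vector is finite, and so is the Cartesian product over the finite collection of such matrices and vectors.

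The remaining subtlety is the infinite family of positional encodings $\{\vp_i\}_{i=1}^\infty$. Here item (4) is crucial: it bounds the minimal period $\Delta$ of the sequence $\vp_1, \vp_2, \dots$ by $A$. Consequently, the whole infinite sequence is uniquely determined by the finite tuple $(\vp_1, \dots, \vp_\Delta)$. Each $\vp_i$ is a vector in $\mathbb{R}^d$ with $d \leq A$, entries bounded in $\ell^\infty$ by $A$, and represented at $p$-bit precision (with $p \leq A$), so it lives in a finite set whose size depends only on $A$. Since there are only finitely many choices for $\Delta \in \{1,\dots,\lfloor A\rfloor\}$, and for each such $\Delta$ only finitely many tuples $(\vp_1,\dots,\vp_\Delta)$, the collection of admissible positional-encoding sequences is finite.

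Combining these observations, the set of admissible parameter settings (excluding the $\phi_{l,h}$) is contained in a finite Cartesian product of finite sets, hence is finite. The only place where one might expect trouble is the positional encodings --- a priori an infinite object --- but item (4) of $\mathcal{R}_\infty$ collapses this infinite family to a bounded-length tuple, which is the key step. No analogous bound is available or needed for the $\phi_{l,h}$, which is precisely why they are excluded from the statement; indeed item (5) only bounds $\|\phi_{l,h}\|_2^2$, which permits infinitely many distinct $\phi_{l,h}$ even at fixed precision since the functions have unbounded domain $\mathbb{N}\times\mathbb{N}$.
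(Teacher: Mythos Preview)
Your proof is correct and follows exactly the reasoning the paper has in mind; the paper's own proof is simply the one word ``Immediate,'' and you have spelled out precisely why it is immediate (bounded $L,H,d$; bounded precision and $\ell^\infty$ norm on each entry; and bounded period $\Delta$ collapsing the infinite sequence of $\vp_i$'s to a finite tuple). One small remark: your aside about $\dMLP$ being bounded through $\|\mA_l\|_F$ is unnecessary here, since in this paper $\dMLP$ is identified with $d$ and hence already bounded by item~(1) of $\mathcal{R}_\infty$; but this does not affect the argument.
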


\begin{proof}
    Immediate.
\end{proof}

We now state the key lemma translating ordinary transformers to {\LimitTransformer}s using the product parameterization:
\begin{lemma}\label{lemma:translation-positional}
    Let $T \in \Theta_n$, at $p$ bits of precision. Let the alphabet $\Sigma$ be fixed.\footnote{The bound on $R_\infty(T_\infty)$ depends on it, but during the inference procedure, the alphabet is assumed fixed.}
    Then there is a {\LimitTransformer} $T_\infty$ such that $T \equiv T_\infty$ at length $\leq n$ and 
    \begin{equation}
R_\infty(T_\infty) \leq F\left({\RegStrong}(T)        \right)
    \end{equation} %
    for some universal function $F : \mathbb{R}_+ \rightarrow \mathbb{R}_+$
    and
    \begin{equation}
        \vp_i^T \mK_{l,h}^T \mQ_{l,h} \vp_j = \phi_{l,h}(i,j) %
    \end{equation}
    for the $\vp_i, \mK_{l,h}, \mQ_{l,h}$ parameters of $T$; for each $l,h$.
In particular, $T_\infty$ satisfies \textsc{Periodic} and each $\phi_{l,h}$ is translation-invariant.
\end{lemma}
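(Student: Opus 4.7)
The plan is to invoke the product parameterization of Definition~\ref{def:product-param} to reduce the forward pass of $T$ to a finite collection of scalar inner products, and then repackage these into the bounded-width structure of a {\LimitTransformer}. First I would observe that every activation produced by $T$ on any input $x$ is a composition, whose coefficients depend only on the scaled softmax attention values and the MLP activations, of the product functions $\alpha_{l,h,\mathcal{S}_1,\mathcal{S}_2,\vv,\vw}$ and $\beta_{\mathcal{S},\vv,\vw}$ together with the biases $\vb_l$. Although the width $d$ of $T$ is unconstrained, the number of distinct \emph{types} of such products (indexed by the layer $l$, head $h$, matrix product sets $\mathcal{S}_i \in \mathcal{P}$, and the choice of source/target vector, with positional encodings merged up to translation) is bounded in terms of $L$, $H$, $|\Sigma|$, and the MLP dimension, all of which are controlled by $\RegStrong(T)$.

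The key intermediate step is to deduce periodicity of the position-dependent products. By the translation-invariance assumption in Definition~\ref{def:hypothesis-class}, every product $\alpha_{l,h,\mathcal{S}_1,\mathcal{S}_2,\vp_i,\vp_j}$ depends on $i,j$ only through $j-i$, and products with a single positional encoding are constant in position. I would then apply Lemma~\ref{lemma:deducing-periodicity-fidi} to each function $G_{l,h,\mathcal{S}_1,\mathcal{S}_2}(n) := \alpha_{l,h,\mathcal{S}_1,\mathcal{S}_2,\vp_1,\vp_{1+n}}$ for which $\mathcal{S}_1 \cup \mathcal{S}_2 \neq \emptyset$: since $\mathrm{rank}(\mV_{l,h})$ is bounded by $\RegStrong(T)$, the associated bilinear form has bounded rank, and Lemma~\ref{lemma:deducing-periodicity-fidi} yields periodicity with a uniform period $\Delta$ bounded in terms of $\RegStrong(T)$. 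For the purely $\mK^T\mQ$ products ($\mathcal{S}_1 = \mathcal{S}_2 = \emptyset$) no comparable rank bound exists, but these are precisely the products that will be offloaded onto the $\phi_{l,h}$ functions.

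The construction of $T_\infty$ would then proceed as follows. I would set $\phi_{l,h}(i,j) := \vp_i^T \mK_{l,h}^T \mQ_{l,h} \vp_j$, which is translation-invariant by hypothesis and whose $\ell_2$ mass over $j$ (at fixed $i$) is exactly controlled by the penalty in Equation~\ref{eq:additional-penalty}. The positional encodings $\tilde{\vp}_i$ of $T_\infty$ would be chosen periodic with period $\Delta$, living in a bounded number of coordinates sufficient to index the residue $i \bmod \Delta$; the token embeddings $\tilde{\mE}_\sigma$ would occupy a complementary block. The new matrices $\tilde{\mK}, \tilde{\mQ}, \tilde{\mV}, \tilde{\mA}, \tilde{\mB}, \tilde{\mU}$ would be constructed block-wise so that each product between activations of $T_\infty$ that can arise in the forward pass reproduces the corresponding product function of $T$ acting on the original activations. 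An induction over layers then shows that $T_\infty$ reproduces $T$ on all inputs of length $\leq n$; the bound $R_\infty(T_\infty) \leq F(\RegStrong(T))$ follows because width, precision, parameter norms, period $\Delta$, and $\max_{l,h} \|\phi_{l,h}\|_2^2$ are each controlled by $\RegStrong(T)$.

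The main obstacle is the explicit block-wise construction in the third paragraph: one must lay out the activations of $T_\infty$ so that \emph{every} product type $\alpha$ and $\beta$ that can appear downstream is simultaneously recoverable, without the width growing with $n$. What makes this feasible is the dichotomy established in the second paragraph. Products passing through at least one bounded-rank value matrix collapse, by periodicity, to a $\Delta$-element family of distinct contributions and can be encoded into the bounded-width $\tilde{\vp}_i$ via a one-hot-style indexing of the residue modulo $\Delta$; the remaining pure $\mK^T\mQ$ position-position products are moved out of the activation space and handled exclusively by the additive $\phi_{l,h}$ term in the limit transformer's attention logits. All other product types involve token embeddings or MLP rows of $T$, which already live in bounded-dimensional subspaces and carry over to $T_\infty$ without inflation.
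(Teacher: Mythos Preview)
Your proposal is correct and follows essentially the same approach as the paper's proof: product parameterization, periodicity of value-mediated position--position products via Lemma~\ref{lemma:deducing-periodicity-fidi}, offloading the pure $\mK^T\mQ$ position--position products onto $\phi_{l,h}$, and an inductive correctness argument over layers. The paper's explicit construction realizes the ``block-wise'' layout you flagged as the main obstacle via a three-region decomposition $(\Gamma,\Lambda,\Omega)$ of each activation together with a ``to-do list'' index $\mathcal{S}_i\subseteq\mathcal{T}_i$ tracking which value matrices have yet to be traversed, which is exactly the bookkeeping device needed to make your third paragraph go through.
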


We prove the lemma in the remainder of the section.

\subsubsection{Proving Lemma~\ref{lemma:translation-positional} (I): Preliminaries}
We discuss various preliminaries, before presenting the construction, explaining its intuition, and explaining how its soundness is formally proven.
\paragraph{Basic Idea}
We will construct $T_\infty$ so that the entries in every parameter are 0, 1, or one of the product functions from Definition~\ref{def:product-param}.
This will automatically ensure that its parameters are represented at fixed precision $p$ bounded in terms of $\RegStrong(T)$, and with each entry bounded in terms of the spectral norms of parameter matrices and the $\ell_2$ norms of parameter vectors, hence, also bounded in terms of $\RegStrong(T)$.
Importantly, we will use the definition of $\RegStrong(T)$ and the definition of $\Theta_n$ to restrict attention to a number of product functions that are bounded only in terms of $\RegStrong(T)$, independently of $n$.

\paragraph{Bounding Active MLP Units}
First, given $\|\mA_l\|_{F} < \infty$, the number of nonzero entries is bounded as $\leq \frac{\|\mA_l\|_{F}^2}{2^{-2p}}$, which is bounded in terms of $\RegStrong(T)$.
Similarly, the number of nonzero entries in $\vb_l$ is bounded as $\leq \frac{\|b\|_2^2}{2^{-2p}}$, and similarly for $\mB_l$.
Let $d_{MLP} \leq d$ be, across $l=1,\dots,L$ the maximum of the maximum number of nonzero entries in $\vb_l$, and of the maximum number of nonzero rows in $\mA_l$. Without loss of generality, by reordering rows in $\mA_l$ and columns in $\mB_l$, we may assume that, in each layer, these entries are in the first $d_{MLP}$ dimensions.
Then $d_{MLP}$ is upper-bounded in terms of these (or $d$, if the bounds exceed $d$); this is bounded in terms of $\RegStrong(T)$ and independent of $n$.
In each layer, only $\leq d_{MLP}$ of the MLP units have nonzero input weights $\mA_l$, output weights $\mB_l$, or biases $\vb_l$.
Removing product functions belonging to the inactive units, we set:
\begin{align*}
    \widehat{\mathcal{VI}_{l}} := & \mathcal{VI}_{l} - \{(\mA_{l})_{s,\cdot}: s=d_{MLP}+1, \dots, d\} \\
    \widehat{\mathcal{VI}} :=& \bigcup_{l=1}^L \widehat{\mathcal{VI}_l}
\end{align*}
Then, the size of $\widehat{\mathcal{VI}}$ is bounded in terms of $\RegStrong(T)$.

\paragraph{Periodicity of Bounded-Rank Positional Functions}
By Lemma~\ref{lemma:deducing-periodicity-fidi} all products $\vp_i^T \dots \vp_j$ where the intervening material has bounded rank are periodic in $j-i$ with period bounded in terms of the rank of the intervening material, and hence $\RegStrong(T)$.
Let $\Delta$ be the least common multiple of the periods obtained from Lemma~\ref{lemma:deducing-periodicity-fidi} across the (finitely many) different products of the form $\beta_{l,h,\mathcal{S}_1, \mathcal{S}_2, \vp_i, \vp_j}$ where $\mathcal{S}_1 \cup \mathcal{S}_2 \neq \emptyset$.
Define
\begin{align*}
\widehat{\mathcal{VO}_0} =&  \{p_i : i=1,\dots, \Delta\} \cup \{\mE_\sigma : \sigma\} \\
\widehat{\mathcal{VO}_l} =& \{(\mB_l)_{\cdot, s} : s=1, \dots, d_{MLP}\}, \ \ \ \  l=1,\dots, L \\
\widehat{\mathcal{VO}} =& \widehat{\mathcal{VO}_0} \cup \bigcup_{l=1}^T \widehat{\mathcal{VO}_l} \\
\end{align*}
Then, the size of $\widehat{\mathcal{VO}}$ is bounded in terms of $\RegStrong(T)$.

\subsubsection{Proving Lemma~\ref{lemma:translation-positional} (II): Construction of $T_\infty$}
\paragraph{Translation in Terms of Regions}
We use hats (i.e., $\widehat{\cdot}$) to mark the parameters and activations of the {\LimitTransformer}, distinguishing those from the parameters and activations of the original transformer $T$.

Each $d$-dimensional parameter vector and activation (residual streams and attention outputs) is translated to a vector consisting of three regions, each having a fixed number of dimensions bounded in terms of $\RegStrong(T)$,
That is, each vector parameter or activation $\vv$ (e.g., $\vp_i$, $\mE_\sigma$, $\vy_i^{(l)}$) is translated to a parameter or activation $\widehat{\vv}$ (e.g., $\widehat{\vp_i}$, $\widehat{\mE_\sigma}$, $\widehat{\vy_i^{(l)}}$) vector consisting of the following three regions:
\begin{equation}
    \widehat{\vv} = \left(
    \begin{matrix}
        \Gamma_{\mathcal{S}, {\vw}}(\widehat{\vv}) : \mathcal{S} \in \mathcal{P}; {\vw} \in \widehat{\mathcal{VI}_{l_2}} \\
        \Lambda_{\mathcal{S}_1, \mathcal{T}_1, \mathcal{T}_2, l, h, \vw_1, \vw_2}(\vv) : 1 \leq l \leq L; 1 \leq h \leq H; \mathcal{S}_1 \subseteq \mathcal{T}_1;  \mathcal{T}_1, \mathcal{T}_2 \in \mathcal{P}, \vw_1, \vw_2 \in \widehat{\mathcal{VO}} \\ %
        \Omega_{\mathcal{S}_2, \mathcal{T}_1, \mathcal{T}_2, l, h, \vw_1, \vw_2}(\vv) : 1 \leq l \leq L; 1 \leq h \leq H; \mathcal{S}_2 \subseteq \mathcal{T}_2;  \mathcal{T}_1, \mathcal{T}_2 \in \mathcal{P}, \vw_1, \vw_2 \in \widehat{\mathcal{VO}} \\ %
    \end{matrix}
    \right)
\end{equation}

\paragraph{Intuition of the Construction}
The first region, denoted $\Gamma_{\mathcal{S}, {\vw}}(\widehat{\vv})$, has one entry for every choice of $\mathcal{S} \in \mathcal{P}; {\vw} \in \widehat{\mathcal{VI}_{l_2}}$. Intuitively, the entry $\Gamma_{\mathcal{S}, {\vw}}(\widehat{\vv})$ describes the outcome of applying all value matrices in $\mathcal{S}$ and then finally the vector $\vw^T$:
\begin{equation}
   \Gamma_{\mathcal{S}, {\vw}}(\widehat{\vv}) = \vw^T \left(\prod_{S \in \mathcal{S}} S\right) \vv \in \mathbb{R}
\end{equation}
(Recall Definition~\ref{def:product-param} for the notation $\prod_{S \in \mathcal{S}} S$.)
The second and third regions each have one entry for every choice of $1 \leq l \leq L, \mathcal{S}_2 \subseteq \mathcal{T}_2;  \mathcal{T}_1, \mathcal{T}_2 \in \mathcal{P}, \vw_1, \vw_2 \in \widehat{\mathcal{VO}}$.
These regions contain the information necessary for computing attention logits. Intuitively, $\mathcal{T}_1$, $\mathcal{T}_2$ describe the value matrices through which $\vw_1$ and $\vw_2$, respectively, pass before the computation of attention logits in layer $l$. For parameter vectors $\vw_1, \vw_2$ (e.g., token embeddings or columns of a $\mB_l$ matrix -- positional encodings are somewhat special), we simply expect (note the duplicated arguments $\mathcal{T}_1$, $\mathcal{T}_2$ -- these will be explained in the next paragraph):
\begin{equation}
    \Lambda_{\mathcal{T}_1, \mathcal{T}_1, \mathcal{T}_2, l, h, \vw_1, \vw_2}(\vw_1) \Omega_{\mathcal{T}_2, \mathcal{T}_1, \mathcal{T}_2, l, h, \vw_1, \vw_2}(\vw_2) = \vw_1^T \left(\prod_{S \in \mathcal{T}_1} S\right)^T \mK_{l,h}^T \mQ_{l,h} \left(\prod_{S \in \mathcal{T}_2} S\right) \vw_2
\end{equation}
Thus, $\Lambda$ can be viewed as holding key parameters, whereas $\Omega$ can be viewed as holding query parameters, for the contribution that the pair of $\vw_1, \vw_2$ makes to attention logits in layer $l$, after passing through the value matrices in $\mathcal{T}_1$, $\mathcal{T}_2$, respectively.
As a convention, at the level of parameter vectors, the $\Lambda$ component will hold the attention logit contribution (the RHS of this equation), whereas the $\Omega$ component will just hold zeros and ones.
At the level of intermediate activations $\vv$ ($\vy_i^{(l)}$ or $\mY_i^{(l)}$), the situation is slightly more complex: here,
\begin{equation}
    \Lambda_{\mathcal{S}_1, \mathcal{T}_1, \mathcal{T}_2, k, h, \vw_1, \vw_2}(\widehat{\vy_i^{(l)}})
\end{equation}
denotes the contribution to attention logits for head $h$ at layer $k$ arising from multiples of $\left(\prod_{S \in \mathcal{T}_1} S\right) \widehat{\vw_1}$ in an activation $\widehat{\vy_i^{(k-1)}}$ interacting with multiples of  $\left(\prod_{S \in \mathcal{T}_2} S\right) \widehat{\vw_2}$ in an activation $\widehat{\vy_j^{(k-1)}}$; a similar idea applies to $\Omega_{\dots}$. However, additional care is needed to ensure that only contributions from value matrices are counted that were actually passed through. The additional argument $\mathcal{S}_1$ serves as a ``to-do-list'': it records which of the value matrices in $\mathcal{T}_1$ still have to be traversed; whenever an activation passes through a value matrix $\mV_{l,h'}$, the value matrix $\widehat{\mV_{l,h'}}$ of the {\LimitTransformer} moves entries from $\Lambda_{\mathcal{S}_1, \mathcal{T}_1, \mathcal{T}_2, k, h, \vw_1, \vw_2}$ to $\Lambda_{\mathcal{S}_1 - \{\mV_{l,h'}\}, \mathcal{T}_1, \mathcal{T}_2, k, h, \vw_1, \vw_2}$ -- effectively removing itself from the ``to-do-list''. The same princple applies to $\Omega$, which maintains a to-do-list $\mathcal{S}_2$ for $\mathcal{T}_2$.
In the end, only those components where the to-do-lists are empty (formally, $\mathcal{S}_1 = \mathcal{S}_2 = \emptyset$) will enter attention logit computations of the {\LimitTransformer}:
\begin{equation}
\phi_{k,h}(i,j) + \sum_{\vv, \vw, \mathcal{T}_1 , \mathcal{T}_2 } \Lambda_{\emptyset, \mathcal{T}_1, \mathcal{T}_2, k, h, \vv, \vw}(\widehat{\vy_i^{(l)}}) \cdot \Omega_{\emptyset, \mathcal{T}_1, \mathcal{T}_2, l, h, \vv, \vw}(\widehat{\vy_j^{(l)}})  
  =  (\vy_i^{(l)})^T \mK_{k,h}^T \mQ_{k,h}  \vy_j^{(l)} 
\end{equation}
where the sum runs over all $\vv, \vw \in \widehat{\mathcal{VO}}$, and $\mathcal{T}_1, \mathcal{T}_2$ runs over all sets of value matrices from layers $\leq l$.

In the remainder of the proof, we present a detailed formal construction implementing this intuition.
We first define, for each vector ${\vv} \in \{\vp_i, \mE_\sigma, \vb_l : i, l, \sigma\}$ its translation $\widehat{\vv}$; throughout, we will define each of the three regions.

\paragraph{Vector Parameters ${\vb_l} \in \mathcal{VO}$} We take $(\widehat{\vb_l})_s := (\vb_l)_s$ for $s=1,\dots,d_{MLP}$.

\paragraph{Vector Parameters ${\mE_\sigma} \in \mathcal{VO}$}
The first region provides products with other vectors appearing at higher layers (rows/columns of the $\mA_l$ matrices and the unembedding matrix). The second and third regions provide products leading up to keys and values. %
\begin{align*}
    \Gamma_{\mathcal{S}, {\vw}}(\widehat{\mE_\sigma}) := &\begin{cases}
    \beta_{\mathcal{S}, {\vw}, {\mE_\sigma}}  & \text{if }\mathcal{S} \in \mathcal{P}, {\vw} \in \widehat{\mathcal{VI}} \\
    0 & \text{else}
    \end{cases} \\
        \Lambda_{\mathcal{S}_1, \mathcal{T}_1, \mathcal{T}_2, l, h, \vw_1, \vw_2}(\widehat{\mE_\sigma}) := &\begin{cases}
            \alpha_{l, h, \mathcal{T}_1, \mathcal{T}_2, \vw_1, \vw_2} & \text{if } \mE_\sigma = \vw_1, \mathcal{S}_1=\mathcal{T}_1 \\ %
           0 & \text{else} \\
        \end{cases} \\
        \Omega_{\mathcal{S}_2, \mathcal{T}_1, \mathcal{T}_2, l, h, \vw_1, \vw_2}(\widehat{\mE_\sigma}) := &\begin{cases}
           1 & \text{if } \mE_\sigma = \vw_2, \mathcal{S}_2=\mathcal{T}_2 \\ %
           0 & \text{else} \\
        \end{cases}  \\
\end{align*}

\paragraph{Vector Parameters: $\widehat{\vp_i}$}
For ${\vv} = \vp_i$, the construction is analogous, however, we zero out the entries for $\Lambda_{\emptyset, \emptyset, l,h,\vp_i, \vp_j}$ and $\Omega_{\emptyset, \emptyset, l,h,\vp_i, \vp_j}$, as these will be taken care of by the $\phi_{l,h}$ functions. Formally:

\begin{align*}
    \Gamma_{\mathcal{S}, {\vw}}(\widehat{\vp_{i}}) := &\begin{cases}
    \beta_{\mathcal{S}, {\vw}, {\vp_{i}}}  & \text{if }\mathcal{S} \in \mathcal{P}, {\vw} \in \widehat{\mathcal{VI}} \\
    0 & \text{else}
    \end{cases} \\
        \Lambda_{\mathcal{S}_1, \mathcal{T}_1, \mathcal{T}_2, l, h, \vw_1, \vw_2}(\widehat{\vp_i}) := &\begin{cases}
            \alpha_{l, h, \mathcal{T}_1, \mathcal{T}_2, \vw_1, \vw_2} & \text{if } \vp_{i\%\Delta} = \vw_1; \left(\vw_2 \not\in \{\vp_j : j\} \vee \mathcal{T}_1\cup\mathcal{T}_2 \neq \emptyset\right), \\
            & \ \ \ \ \ \mathcal{S}_1=\mathcal{T}_1 \\ %
           0 & \text{else} \\
        \end{cases} \\
        \Omega_{ \mathcal{S}_2, \mathcal{T}_1, \mathcal{T}_2, l, h, \vw_1, \vw_2}(\widehat{\vp_i}) := &\begin{cases}
           1 & \text{if } \vp_{i\%\Delta} = \vw_2; \left(\vw_1 \not\in \{\vp_j : j\} \vee \mathcal{T}_1\cup\mathcal{T}_2 \neq \emptyset\right), \\
           & \ \ \ \ \ \ \mathcal{S}_2=\mathcal{T}_2 \\ %
           0 & \text{else} \\
        \end{cases}  \\
\end{align*}
We need to establish that $T_\infty$ satisfies \textsc{Periodic} with the period $\Delta$ given above.
First, $\beta_{\mathcal{S}, \vv, \vp_i}$ is independent of $i$ by translation-invariance, thus trivially periodic in $i$.
Second, the $\Lambda$ and $\Omega$ entries are periodic in $i$ with period $\Delta$ by construction.

\paragraph{Matrix Parameters: $\widehat{\mV_{l,h}}$}
Each entry in the $\widehat{\mV_{l,h}}$ matrix is zero or one.
We define it implicitly, in terms of its action on the three different regions:
\begin{align*}
    \Gamma_{\mathcal{S}, {\vw}}(\widehat{\mV_{l,h}}(\widehat{\vy_l^{(l)}})) =& \begin{cases}
    \Gamma_{\mathcal{S} \cup \{\mV_{l,h}\}, {\vw}}(\widehat{\vy_l^{(l)}}) & \mV_{l,h} \not \in \mathcal{S} \\ %
    0 & \text{else}
    \end{cases} \\
    \Lambda_{\mathcal{S}_1, \mathcal{T}_1, \mathcal{T}_2, l, h, \vw_1, \vw_2}(\widehat{\mV_{l,h}}(\widehat{\vy_l^{(l)}})) =& \begin{cases}
    \Lambda_{\mathcal{S}_1 \cup \{\mV_{l,h}\}, \mathcal{T}_1, \mathcal{T}_2, l, h, \vw_1, \vw_2}(\widehat{\vy_l^{(l)}}) & \mV_{l,h} \not \in \mathcal{S}_1, \mV_{l,h} \in \mathcal{T}_1 \\ %
    0 & \text{else}
    \end{cases} \\
    \Omega_{\mathcal{S}_2, \mathcal{T}_1, \mathcal{T}_2, l, h, \vw_1, \vw_2}(\widehat{\mV_{l,h}}(\widehat{\vy_l^{(l)}})) =& \begin{cases}
    \Omega_{\mathcal{S}_2 \cup \{\mV_{l,h}\}, \mathcal{T}_1, \mathcal{T}_2, l, h, \vw_1, \vw_2}(\widehat{\vy_l^{(l)}}) &  \mV_{l,h} \not \in \mathcal{S}_2, \mV_{l,h} \in \mathcal{T}_2 \\ %
    0 & \text{else}
    \end{cases} \\
\end{align*}

\paragraph{Matrix Parameters: $\widehat{\mA_{l}}$, $\widehat{\mB_{l}}$}
Let $s \in \{1, \dots, d_{MLP}\}$.
For the $s$-th unit in the MLP at layer $l$, we first %
define the $s$-th row of $\widehat{\mA_l}$ by setting
\begin{align*}
    (\widehat{\mA_l})_{s, \cdot} \cdot \widehat{\vY_i^{(l)}} =  \Gamma_{\emptyset, (\mA_l)_{s,\cdot}} \left( \widehat{\vY_i^{(l)}} \right) \in \mathbb{R}
\end{align*}
and define the $s$-th column of $\widehat{\mB_l}$ as follows --  writing $\hat{X} \in \mathbb{R}$ for the $s$-th hidden unit activation:
\begin{align*}
    \Gamma_{\mathcal{S}, {\vw}}(\widehat{\mB}_{s, \cdot} \hat{X}) := &%
    \hat{X} \cdot \beta_{\mathcal{S}, {\vw}, (\mB_l)_{\cdot,s}}  %
    \\
        \Lambda_{\mathcal{S}_1, \mathcal{T}_1, \mathcal{T}_2, l, h, \vw_1, \vw_2}(\widehat{\mB_l}_{\cdot, s} \hat{X}) := &\hat{X} \cdot \begin{cases}
            \alpha_{l_1, h, l_2, \mathcal{T}_1, \mathcal{T}_2, (\mB_l)_{\cdot,s}, \vw_2} & \text{if } \vw_1 = (\mB_l)_{\cdot,s}, \mathcal{S}_1=\mathcal{T}_1 \\ 
           0 & \text{else} \\
        \end{cases} \\
        \Omega_{\mathcal{S}_2, \mathcal{T}_1, \mathcal{T}_2, l, h, \vw_1, \vw_2}(\widehat{\mB_l}_{\cdot, s} \hat{X}) := &\hat{X} \cdot \begin{cases}
           1 & \text{if } \vw_2 = (\mB_l)_{\cdot,s}, \mathcal{S}_2=\mathcal{T}_2 \\ 
           0 & \text{else} \\
        \end{cases}  \\
\end{align*}

This defines the $\widehat{\mA_l}$ and $\widehat{\mB_l}$ matrix parameters, and we can write, letting $\psi_{l,s}$ denote the activation function (ReLU or Heaviside) applying to the $s$-th hidden MLP unit:
\begin{equation}
\widehat{\vy_i^{(l)}} = \widehat{\vY_i^{(l)}} + \sum_{s=1}^{d_{MLP}} (\widehat{\mB}_l)_{\cdot,s} \cdot \psi_{l,s}\left((\widehat{\mA}_l)_{s, \cdot} (\widehat{\vY_i^{(l)}}) + ({\widehat{\vb_l}})_s\right)
\end{equation}
or equivalently
\begin{equation}
\widehat{\vy_i^{(l)}} = \widehat{\vY_i^{(l)}} +  \widehat{\mB_l} \cdot \phi_l(\widehat{\mA_l} \cdot \widehat{\vY_i^{(l)}} + \widehat{\vb_l})
\end{equation}
matching the formulation of MLPs for our model of transformers (Equation~\ref{eq:mlp}).

Note that the hidden dimension of the MLP in the {\LimitTransformer} is now $d_{MLP}$, which will be smaller than $\widehat{d}$. We thus pad the remaining rows/columns of $\widehat{\mA_l}, \widehat{\mB_l}$, and the remaining entries of $\widehat{\vb_l}$ with zeros.

\paragraph{A partial order on sets of value matrices}
For $\mathcal{S}, \mathcal{T} \in \mathcal{P}$, we write $\mathcal{T} \geq_l \mathcal{S}$ to denote that
\begin{enumerate}
    \item $\mathcal{T} \supseteq \mathcal{S}$
    \item $\forall l' \in \{1, \dots, L\}: \left[\left(\mV_{l',h'} \in \mathcal{S}\right) \Rightarrow l' > l\right]$
    \item $\forall l' \in \{1,\dots,L\}: \left[\left(\mV_{l',h'} \in \mathcal{T} - \mathcal{S}\right) \Rightarrow l \geq l'\right]$
\end{enumerate}
\textbf{Intuitively}, ``$\mathcal{T} \geq_l \mathcal{S}$'' says that ``\textit{among the value matrices in $\mathcal{T}$, the activation has already passed through all value matrices at layer $l$ and below}''.
For example:
\begin{align*}
    \{ V_{1,h}, V_{2,h'}, V_{3,h''}, V_{5,'''}\} \geq_2 \{V_{3,h''}, V_{5,'''}\} \\
    \{ V_{1,h}, V_{2,h'}, V_{3,h''}, V_{5,'''}\} \not\geq_2 \{V_{4,h'''}\} \\
    \{ V_{1,h}, V_{2,h'}, V_{3,h''}, V_{5,'''}\} \not\geq_1 \{V_{3,h''}, V_{5,'''}\} \\
    \{ V_{1,h}, V_{2,h'}, V_{3,h''}, V_{5,'''}\} \not\geq_3 \{V_{3,h''}, V_{5,'''}\} \\
\end{align*}

\paragraph{Matrix Parameters: $\widehat{\mK_{l,h}^T \mQ_{l,h}}$}
We again define them implicitly in terms of regions; this can be realized using matrices $\widehat{\mK_{l,h}^T}, \widehat{\mQ_{l,h}}$ where all entries are 0 or 1.
Importantly, we sum only those entries where the ``to-do-lists'' $\mathcal{S}_1, \mathcal{S}_2$ are empty, and the sets $\mathcal{T}_1, \mathcal{T}_2$ only contain value matrices at layers $\leq l$:
\begin{equation}\label{eq:translation-ktq}
\begin{aligned}
    (\widehat{\vy_i^{(l)}})^T \widehat{\mK_{l,h}^T \mQ_{l,h}} \widehat{\vy_j^{(l)}} = & \sum_{\vv, \vw, \mathcal{T}_1 \geq_l \emptyset, \mathcal{T}_2 \geq_l \emptyset} \Lambda_{\emptyset, \mathcal{T}_1, \mathcal{T}_2, l, h, \vv, \vw}(\widehat{\vy_i^{(l)}}) \cdot \Omega_{\emptyset, \mathcal{T}_1, \mathcal{T}_2, l, h, \vv, \vw}(\widehat{\vy_j^{(l)}})
    \end{aligned}
\end{equation}

\paragraph{Matrix Parameters: $\mU$}
The $\mU$ matrix is translated as follows:
\begin{equation}
    \widehat{\mU}_\sigma^T \widehat{\vy_i^{(L)}} = \Gamma_{ \emptyset, \mU_\sigma}(\widehat{\vy_i^{(L)}})
\end{equation}

\paragraph{Positional Function}
Define for $l=1,\dots,L$ and $h=1,\dots,H$, when $1 \leq i \leq j \leq N(T)$:
\begin{equation}\label{eq:translation-pos-func}
    \phi_{l,h}(i,j) = \vp_i^T \mK_{l,h}^T \mQ_{l,h} \vp_j %
\end{equation}
As $T \in \Theta_n$, $\phi_{l,h}(i,j)$ only depends on $j-i$.

\paragraph{Bounding $\mathcal{R}_\infty(T_\infty)$}
First, we showed above that $\widehat{d}$ is upper-bounded in $\RegStrong(T)$.
Second, all parameters are represented at precision bounded in terms of $\RegStrong(T)$: those parameters that are taken from product functions have precision $\leq 4Lp$ bits; those involving the SVDs of  $K^TQ$ matrices also have bounded precision.
Third, the $\ell^\infty$ norm of all parameter vectors is bounded in terms of $\RegStrong(T)$ by construction.
Fourth, $\Delta$ is bounded in terms of $\RegStrong(T)$ as discussed above.
The boundedness of the fifth term is immediate.

\paragraph{Summary}
We have constructed a {\LimitTransformer} $T_\infty$ such that  
    \begin{equation}
R_\infty(T_\infty) \leq F\left({\RegStrong}(T)        \right)
    \end{equation} %
    for some universal function $F : \mathbb{R}_+ \rightarrow \mathbb{R}_+$
    and
    \begin{equation}
        \vp_i^T \mK_{l,h}^T \mQ_{l,h} \vp_j = \phi_{l,h}(i,j) %
    \end{equation}
    for the $\vp_i, \mK_{l,h}, \mQ_{l,h}$ parameters of $T$; for each $l,h$.
    By assumption on $T$, each $\phi_{l,h}$ is translation-invariant.
We have also constructed $\widehat{\vp_i}$ with period $\Delta$, so that $T_\infty$ satisfies \textsc{Periodic}. 

\subsubsection{Proving Lemma~\ref{lemma:translation-positional} (III): Proving Correctness}
In order to conclude Lemma~\ref{lemma:translation-positional}, it remains to establish the correctness of the translation; that is, $T \equiv T_\infty$ at length $\leq N(T)$.
To do this, it suffices to show that both transformers provide the same next-token predictions for each $i=1, \dots, N(T)$:
\begin{equation}
\boxed{
    \widehat{\mU}_\sigma^T \widehat{\vy_i^{(L)}} =  \Gamma_{ \emptyset, \mU_\sigma}(\widehat{\vy_i^{(L)}}) = \mU_\sigma^T \vy_i^{(L)}}
\end{equation}
\textbf{Informally}, proving this requires showing that the attention logits and MLP activations in $T_\infty$ match those in $T$; the result then follows from the linearity of $\Gamma_{ \emptyset, \mU_\sigma}$ and the way $\Gamma_{\dots}$ is defined for the vector parameters and how value matrices $\widehat{\mV_{l,h}}$ move information.
\textbf{Formally}, we prove the correctness of the translation inductively, by showing the following equalities.
Recall (from Definition~\ref{def:product-param}) that, when $\mathcal{S} \in \mathcal{P}$ is a set of value matrices, we write $\prod_{S \in \mathcal{S}} S$ for the product of these matrices, ordered by layers, with the matrix associated with the lowest layer at the right.
Then
\begin{lemma}[Preservation of Products by Translation]\label{lemma:preservation}
For layer $l \in \{0, 1, \dots, L\}$, for any $\mathcal{S}, \mathcal{S}_1, \mathcal{S}_2 \in \mathcal{P}$ %
for any $k > l$, for any $m \geq l$,
and for any $\vw \in \widehat{\mathcal{VI}}$, by induction over the layers $l$:

\begin{equation}
\boxed{
\begin{aligned}
 (A)\ \ \ \ \ \ & \text{Preservation of Products with Vector Parameters} \\
 & \Gamma_{\mathcal{S}, \vw}(\widehat{\vy_i^{(l)}})  =  \vw^T \left(\prod_{S \in \mathcal{S}} S\right) \vy_i^{(l)} \\
 & \text{if } \forall l' \in \{1, \dots, L\}: \left[\left(\mV_{l',h'} \in \mathcal{S}\right) \Rightarrow l' > l\right] \\
 (B)\ \ \ \ \ \ & \text{Preservation of Attention Logits (I)} \\
 &  \sum_{\vv, \vw, \mathcal{T}_1 \geq_l \mathcal{S}_1, \mathcal{T}_2 \geq_l \mathcal{S}_2} \Lambda_{\mathcal{S}_1, \mathcal{T}_1, \mathcal{T}_2, k, h, \vv, \vw}(\widehat{\vy_i^{(l)}}) \cdot \Omega_{\mathcal{S}_2, \mathcal{T}_1, \mathcal{T}_2, k, h, \vv, \vw}(\widehat{\vy_j^{(l)}})  \\
 & =  (\vy_i^{(l)})^T \left(\prod_{S \in \mathcal{S}_1} S\right)^T \mK_{k,h}^T \mQ_{k,h} \left(\prod_{S \in \mathcal{S}_2} S\right) \vy_j^{(l)} \\
  & \text{if } \mathcal{S}_1 \cup \mathcal{S}_2 \neq \emptyset \\
 (C)\ \ \ \ \ \ & \text{Preservation of Attention Logits (II)} \\
 & \phi_{k,h}(i,j) + \sum_{\vv, \vw, \mathcal{T}_1 \geq_l \emptyset, \mathcal{T}_2 \geq_l \emptyset} \Lambda_{\emptyset, \mathcal{T}_1, \mathcal{T}_2, k, h, \vv, \vw}(\widehat{\vy_i^{(l)}}) \cdot \Omega_{\emptyset, \mathcal{T}_1, \mathcal{T}_2, k, h, \vv, \vw}(\widehat{\vy_j^{(l)}})  \\
 & =  (\vy_i^{(l)})^T \mK_{k,h}^T \mQ_{k,h}  \vy_j^{(l)} \\
 (D)\ \ \ \ \ \ & \text{Preservation of Attention Logits (III)} \\
 &  \sum_{\vv, \vw, \mathcal{T}_1 \geq_l \mathcal{S}_1, \mathcal{T}_2 \geq_l \mathcal{S}_2} \Lambda_{\mathcal{S}_1, \mathcal{T}_1, \mathcal{T}_2, k, h, \vv, \vw}((\widehat{\mB_{m}})_{\cdot,s}) \cdot \Omega_{\mathcal{S}_2, \mathcal{T}_1, \mathcal{T}_2, k, h, \vv, \vw}(\widehat{\vy_j^{(l)}})  \\
 & =  (\mB_{m})_{\cdot,s}^T \left(\prod_{S \in \mathcal{S}_1} S\right)^T \mK_{k,h}^T \mQ_{k,h} \left(\prod_{S \in \mathcal{S}_2} S\right) \vy_j^{(l)} \\
 (E)\ \ \ \ \ \ & \text{Preservation of Attention Logits (IV)} \\
 &  \sum_{\vv, \vw, \mathcal{T}_1 \geq_l \mathcal{S}_1, \mathcal{T}_2 \geq_l \mathcal{S}_2} \Lambda_{\mathcal{S}_1, \mathcal{T}_1, \mathcal{T}_2, k, h, \vv, \vw}(\widehat{\vy_i^{(l)}}) \cdot \Omega_{\mathcal{S}_2, \mathcal{T}_1, \mathcal{T}_2, k, h, \vv, \vw}((\widehat{\mB_m})_{\cdot,s})  \\
 & =  (\vy_i^{(l)})^T \left(\prod_{S \in \mathcal{S}_1} S\right)^T \mK_{k,h}^T \mQ_{k,h} \left(\prod_{S \in \mathcal{S}_2} S\right) (\mB_m)_{\cdot,s} \\
\end{aligned}}
\end{equation}
and analogous statements with the post-MLP activations $\vy_i^{(l)}, \widehat{\vy_i^{(l)}}$ replaced by the pre-MLP activations $\mY_i^{(l)}, \widehat{\mY}_i^{(l)}$.
\end{lemma}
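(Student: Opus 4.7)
The plan is to prove all five statements (A)--(E) together by strong induction on the layer index $l$, leveraging the region-based construction so that each claim reduces to a bookkeeping identity on how $\Gamma, \Lambda, \Omega$ transform under the three primitive operations: (i) value-matrix application, (ii) attention aggregation, and (iii) MLP update. The base case $l=0$ is essentially by definition: since $\widehat{\vy_i^{(0)}} = \widehat{\mE}_{x_i} + \widehat{\vp}_i$ and the constructions of $\widehat{\mE}_\sigma, \widehat{\vp}_i$ set each coordinate equal to the corresponding product function $\beta_{\mathcal{S}, \vw, \cdot}$ or $\alpha_{l,h,\mathcal{T}_1, \mathcal{T}_2, \vw_1, \vw_2}$, statements (A)--(E) reduce, after expanding $\vy_i^{(0)} = \mE_{x_i} + \vp_i$, to summing the defining entries. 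The only subtle point in the base case is statement (C): the pairs $(\vp_i, \vp_j)$ with $\mathcal{T}_1 = \mathcal{T}_2 = \emptyset$ were deliberately zeroed out in the $\Lambda/\Omega$ construction, and the missing contribution $\vp_i^T \mK_{k,h}^T \mQ_{k,h} \vp_j$ is exactly supplied by the term $\phi_{k,h}(i,j)$ from (\ref{eq:translation-pos-func}).

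For the inductive step, I would handle the attention update and the MLP update separately. For attention, the key observation is that $\widehat{\mV_{l,h}}$ acts on the regions precisely by shifting $\mathcal{S} \mapsto \mathcal{S}\cup\{\mV_{l,h}\}$ (and $\mathcal{S}_1, \mathcal{S}_2$ similarly), while $\mV_{l,h} \notin \mathcal{S}$ is enforced. Hence applying the inductive hypothesis to $\widehat{\vy_j^{(l-1)}}$ with $\mathcal{S}' := \mathcal{S} \cup \{\mV_{l,h}\}$ yields
\[\Gamma_{\mathcal{S}, \vw}(\widehat{\mV_{l,h}}\widehat{\vy_j^{(l-1)}}) = \vw^T\Big(\prod_{S\in\mathcal{S}}S\Big)\mV_{l,h}\,\vy_j^{(l-1)},\]
and the same pattern works for $\Lambda$ and $\Omega$, with the partial order $\geq_l$ handling exactly which value matrices have already been absorbed. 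Combined with the fact that attention weights computed inside $T_\infty$ agree with those in $T$ (by (C) at layer $l-1$ applied to the $\mK^T\mQ$ expansion (\ref{eq:translation-ktq})), the softmax-weighted aggregation commutes with all three region projections, proving (A)--(E) for $\widehat{\mY_i^{(l)}}$.

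For the MLP update, I would use the construction of $\widehat{\mA}_l$ and $\widehat{\vb}_l$: the inductive hypothesis (A) for $\widehat{\mY_i^{(l)}}$ with $\mathcal{S} = \emptyset$ and $\vw = (\mA_l)_{s,\cdot}$ gives $(\widehat{\mA}_l)_{s,\cdot}\widehat{\mY_i^{(l)}} = (\mA_l)_{s,\cdot}\mY_i^{(l)}$, so the hidden activation $\hat{X}_s = \psi_{l,s}((\mA_l)_{s,\cdot}\mY_i^{(l)} + (\vb_l)_s)$ matches the scalar in the original transformer. Then, by the defining equations for $\Gamma, \Lambda, \Omega$ on $(\widehat{\mB}_l)_{\cdot,s}\hat{X}_s$, every region entry of the MLP contribution is $\hat{X}_s$ times the corresponding entry of the $\widehat{\mB}_l$ column, which by construction equals the relevant product function involving $(\mB_l)_{\cdot,s}$. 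Linearity in $\hat{X}_s$ and summing over $s$ gives (A)--(E) for $\widehat{\vy_i^{(l)}}$; statements (D) and (E) are needed here precisely because the MLP contribution can itself participate later in attention logits through its $\Lambda/\Omega$ entries.

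The main obstacle I anticipate is the careful verification that the partial order $\geq_l$ correctly tracks which value matrices have and have not yet been applied, especially across the transition from $\widehat{\vY_i^{(l)}}$ (post-attention, pre-MLP) to $\widehat{\vy_i^{(l)}}$ (post-MLP), and in correctly decrementing $\mathcal{S}_1, \mathcal{S}_2$ when an activation traverses $\widehat{\mV_{l,h}}$. A second delicate point is the combinatorics of the sum in (B), (D), (E): one must confirm that the pairs $(\mathcal{T}_1, \mathcal{T}_2)$ with $\mathcal{T}_i \geq_l \mathcal{S}_i$ bijectively enumerate the products of value matrices at layers $\leq l$ prepended to $\prod_{S\in\mathcal{S}_i} S$, so that the $\Lambda \cdot \Omega$ sum telescopes into the single product on the right-hand side. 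Once these two bookkeeping lemmas are cleanly stated, the remaining computation is mechanical, and the final identity $\widehat{\mU}_\sigma^T \widehat{\vy_i^{(L)}} = \mU_\sigma^T \vy_i^{(L)}$ is just (A) applied with $l = L$, $\mathcal{S} = \emptyset$, and $\vw = \mU_\sigma$.
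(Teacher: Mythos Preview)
Your plan is correct and follows essentially the same route as the paper's proof: induction over layers, base case by unwinding the definitions of $\widehat{\mE}_\sigma$ and $\widehat{\vp}_i$ (with the $\phi_{k,h}$ correction in (C) exactly as you note), then the inductive step split into the attention update (using that $\widehat{\mV_{l,h}}$ shifts the index sets and that (C) at layer $l-1$ forces the attention weights to match) and the MLP update (using (A) to match hidden activations and (D)/(E) for the cross terms with $(\widehat{\mB}_l)_{\cdot,s}$). One small point your plan elides that the paper makes explicit in the base case: after expanding, the terms involve $\vp_{i\%\Delta}$ rather than $\vp_i$, and one must invoke translation invariance (for the mixed $\mE/\vp$ terms) and the periodicity choice of $\Delta$ (for the $\vp/\vp$ term with $\mathcal{T}_1\cup\mathcal{T}_2\neq\emptyset$) to identify these with the unreduced $\vp_i,\vp_j$; this is exactly the kind of bookkeeping you flag as an anticipated obstacle.
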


From (A), we in particular obtain the correctness of the translation by noting that next-token predictions are replicated:
\begin{equation}
    \widehat{\mU}_\sigma^T \widehat{\vy_i^{(L)}} =  \Gamma_{ \emptyset, \mU_\sigma}(\widehat{\vy_i^{(L)}}) = \mU_\sigma^T \vy_i^{(L)}
\end{equation}

\begin{proof}[Proof of Lemma~\ref{lemma:preservation}]
The formal proof proceeds by induction over $l$.
It is conceptually straightforward, consisting of expanding definitions and taking care of the special treatment of positional encodings.
We show it in considerable detail to build intuition.
For the \textbf{inductive base}, at $l=0$, where $\vy_i^{(0)}$ is a sum of a word embedding and a positional encoding, the claims are immediate from the definitions.
For expository purposes, and for building intuition for the more complex inductive step, we show them in more detail.
Starting from (for simplicity, we are taking the offset to be zero here):
\begin{equation}
\begin{aligned}
    {\vy_i^{(0)}} = {\mE_{x_{i}}} + {\vp_i} \\
    \widehat{\vy_i^{(0)}} = \widehat{\mE_{x_{i}}} + \widehat{\vp_i} \\
    \end{aligned}
\end{equation}
we first, for (A), write
\begin{align*}
   \Gamma_{\mathcal{S}, \vw}(\widehat{\mE_{x_{i}}} + \widehat{\vp_i}) 
  & =  \Gamma_{\mathcal{S}, \vw}(\widehat{\mE_{x_{i}}}) + \Gamma_{\mathcal{S}, \vw}(\widehat{\vp_i}) \\
  & =  \beta_{\mathcal{S}, \vw, \mE_{x_{i}}} + \beta_{\mathcal{S}, \vw, \vp_{i \% \Delta}} \\
 & =  \vw^T \left(\prod_{S \in \mathcal{S}} S\right) \vy_i^{(0)} \\
 \end{align*}
 proving case (A) of the inductive base.
Second, for (B) and (C), write using the linearity of $\Lambda_{\dots}, \Omega_{\dots}$:
\begin{align*}
 & \sum_{\vv, \vw, \mathcal{T}_1 \geq_0 \mathcal{S}_1, \mathcal{T}_2 \geq_0 \mathcal{S}_2} \Lambda_{\mathcal{S}_1, \mathcal{T}_1, \mathcal{T}_2, k, h, \vv, \vw}(\widehat{\mE_{x_{i}}} + \widehat{\vp_i}) \cdot \Omega_{\mathcal{S}_2, \mathcal{T}_1, \mathcal{T}_2, k, h, \vv, \vw}(\widehat{\mE_{x_{i}}} + \widehat{\vp_i})  \\
 = & \sum_{\vv, \vw, \mathcal{T}_1 \geq_0 \mathcal{S}_1, \mathcal{T}_2 \geq_0 \mathcal{S}_2} \Lambda_{\mathcal{S}_1, \mathcal{T}_1, \mathcal{T}_2, k, h, \vv, \vw}(\widehat{\mE_{x_{i}}}) \cdot \Omega_{\mathcal{S}_2, \mathcal{T}_1, \mathcal{T}_2, k, h, \vv, \vw}(\widehat{\mE_{x_{i}}})  \\
  & + \sum_{\vv, \vw, \mathcal{T}_1 \geq_0 \mathcal{S}_1, \mathcal{T}_2 \geq_0 \mathcal{S}_2} \Lambda_{\mathcal{S}_1, \mathcal{T}_1, \mathcal{T}_2, k, h, \vv, \vw}(\widehat{\mE_{x_{i}}}) \cdot \Omega_{\mathcal{S}_2, \mathcal{T}_1, \mathcal{T}_2, k, h, \vv, \vw}(\widehat{\vp_i})  \\
  & + \sum_{\vv, \vw, \mathcal{T}_1 \geq_0 \mathcal{S}_1, \mathcal{T}_2 \geq_0 \mathcal{S}_2} \Lambda_{\mathcal{S}_1, \mathcal{T}_1, \mathcal{T}_2, k, h, \vv, \vw}(\widehat{\vp_i}) \cdot \Omega_{\mathcal{S}_2, \mathcal{T}_1, \mathcal{T}_2, k, h, \vv, \vw}(\widehat{\mE_{x_{i}}})  \\
 & + \sum_{\vv, \vw, \mathcal{T}_1 \geq_0 \mathcal{S}_1, \mathcal{T}_2 \geq_0 \mathcal{S}_2} \Lambda_{\mathcal{S}_1, \mathcal{T}_1, \mathcal{T}_2, k, h, \vv, \vw}(\widehat{\vp_i}) \cdot \Omega_{\mathcal{S}_2, \mathcal{T}_1, \mathcal{T}_2, k, h, \vv, \vw}( \widehat{\vp_i})  
 \end{align*}
 The only way of satisfying $\mathcal{T} \geq_0 \mathcal{S}$ is for $\mathcal{T}$ to equal $\mathcal{S}$. After plugging in the definitions, the sums collapse due to the indicator terms in the definition of the token and positional encodings, and we obtain after simplifying:
 \begin{align*}
 = & \alpha_{k,h,\mathcal{S}_1, \mathcal{S}_2, {\mE_{x_{i}}}, {\mE_{x_{j}}}}  \\
  & + \alpha_{\mathcal{S}_1, \mathcal{S}_1, \mathcal{S}_2, k, h, {\mE_{x_{i}}}, \vp_{j\%\Delta}}  \\
  & + \alpha_{\mathcal{S}_1, \mathcal{S}_1, \mathcal{S}_2, k, h, {\vp_{i\%\Delta}}, \mE_{x_j}}  \\
 & + \alpha_{\mathcal{S}_1, \mathcal{S}_1, \mathcal{S}_2, k, h, {\vp_{i\%\Delta}}, \vp_{j\%\Delta}} \cdot 1_{\mathcal{S}_1 \cup \mathcal{S}_2 \neq \emptyset}  
 \end{align*}
By translation-invariance, the second and third term are independent of the positional encoding arguments. By our choice of $\Delta$ at the beginning of the proof, the fourth term equals $\alpha_{\mathcal{S}_1, \mathcal{S}_1, \mathcal{S}_2, k, h, {\vp_{i}}, \vp_{j}} \cdot 1_{\mathcal{S}_1 \cup \mathcal{S}_2 \neq \emptyset}$, as this is periodic in $(i,j)$ with periodicity $\Delta$.
We can thus rewrite as
 \begin{align*}
 = & \alpha_{k,h,\mathcal{S}_1, \mathcal{S}_2, {\mE_{x_{i}}}, {\mE_{x_{j}}}}  \\
  & + \alpha_{\mathcal{S}_1, \mathcal{S}_1, \mathcal{S}_2, k, h, {\mE_{x_{i}}}, \vp_{j}}  \\
  & + \alpha_{\mathcal{S}_1, \mathcal{S}_1, \mathcal{S}_2, k, h, {\vp_{i}}, \mE_{x_j}}  \\
 & + \alpha_{\mathcal{S}_1, \mathcal{S}_1, \mathcal{S}_2, k, h, {\vp_{i}}, \vp_{j}} \cdot 1_{\mathcal{S}_1 \cup \mathcal{S}_2 \neq \emptyset}  
 \end{align*}
Applying the definition of $\alpha_{\dots}$, the above equals
 \begin{align*}
= & \begin{cases}
     (\vy_i^{(0)})^T  \mK_{k,h}^T \mQ_{k,h}  (\vy_j^{(0)}) - \phi_{k,h}(i,j) & \mathcal{S}_1 \cup \mathcal{S}_2 = \emptyset \\
     (\vy_i^{(0)})^T \left(\prod_{S \in \mathcal{S}_1} S\right)^T \mK_{k,h}^T \mQ_{k,h} \left(\prod_{S \in \mathcal{S}_2} S\right) (\vy_j^{(0)})  & \mathcal{S}_1 \cup \mathcal{S}_2 \neq \emptyset \\
 \end{cases}
 \end{align*}
 This establishes cases (B) and (C) of the inductive base.
The proof of cases (D) and (E) in the inductive base is analogous.

For the \textbf{inductive step}, the \textbf{intuition} is that each activation $\vy_i^{(l)}$ is a linear combination of vector parameters, with different sets of value matrices acting on those:
\begin{equation}
    \vy_i^{(l)} = \sum_{\vv \in \mathcal{VO}} \sum_{\mathcal{S} \in \mathcal{P}} \lambda_{\vv, i, l, \mathcal{S}}  \left(\prod_{S \in \mathcal{S}} S\right) \vv
\end{equation}
where the coefficients are determined by attention weights and the activations of MLP hidden units.
Importantly, the attention weights and MLP activations turn out to be the same in the {\LimitTransformer} as in the original transformer, provided we can prove that attention and MLPs are faithfully simulated (which indeed follows from cases A and C of the inductive claim). %
Hence, the same decomposition is valid in the {\LimitTransformer}:
\begin{equation}
    \widehat{\vy_i^{(l)}} = \sum_{\vv \in \mathcal{VO}} \sum_{\mathcal{S} \in \mathcal{P}} \lambda_{\vv, i, l, \mathcal{S}}  \left(\prod_{S \in \mathcal{S}} \widehat{S}\right) \widehat{\vv}
\end{equation}
with the same $\lambda_{\vv, i, l, \mathcal{S}}$ coefficients as in the original transformer.
Then, case (A) of the inductive claim follows intuitively by the calculation:
\begin{align*}
    \vw^T \vy_i^{(l)} = &\sum_{\vv \in \mathcal{VO}} \sum_{\mathcal{S} \in \mathcal{P}} \lambda_{\vv, i, l, \mathcal{S}}  \vw^T \left(\prod_{S \in \mathcal{S}} S\right)  \vv \\
    = &\sum_{\vv \in \mathcal{VO}} \sum_{\mathcal{S} \in \mathcal{P}} \lambda_{\vv, i, l, \mathcal{S}}  \widehat{\vw}^T \left(\prod_{S \in \mathcal{S}} \widehat{S}\right)  \widehat{\vv} \\
    = &\widehat{\vw}^T  \sum_{\vv \in \mathcal{VO}} \sum_{\mathcal{S} \in \mathcal{P}} \lambda_{\vv, i, l, \mathcal{S}}  \left(\prod_{S \in \mathcal{S}} \widehat{S}\right) \widehat{\vv} \\
    = &\widehat{\vw}^T \widehat{\vy_i^{(l)}} 
\end{align*}
which is warranted provided that, when $\vv \in \mathcal{VO}$ and $\vw \in \mathcal{VI}$, we have that $\widehat{\vw}^T \left(\prod_{S \in \mathcal{S}} \widehat{S}\right)  \widehat{\vv}$ equals $\vw^T \left(\prod_{S \in \mathcal{S}} S\right)  \vv$ -- this is ensured because of the way the vector parameters $\widehat{\vv}$ and the value matrices $\widehat{\mV_{l,h}}$ are defined.
The same idea establishes cases (D--E).
A similar, though somewhat more complex (due to the \emph{bilinear} nature of attention) calculation establishes cases (B--C).
Formalizing this reasoning essentially amounts to inductively proving cases (A--E); it will not be necessary to keep track of an explicit decomposition using $\lambda_{\dots}$ coefficients; rather, one can mechanically verify these conditions inductively by plugging in definitions and applying the inductive hypothesis.

\textbf{Formally proving the inductive step} consists in mechanically expanding definitions and applying the inductive hypothesis. First, (C) applied to layer $l-1$ entails that the attention logits for attention heads operating in layer $l$ match those of the original transformer.
We start by establishing the inductive step for the pre-MLP activations $\mY_i^{(l)}$.
Starting from:
\begin{equation}\label{eq:attention-equation-widehat}
\begin{aligned}
{\mY_i^{(l)}} = & {\vy_i^{(l-1)}} + \sum_{h=1}^H \sum_{j=1}^i  \attWT{i}{j}^{(l,h)} {\mV_{l,h}} {\vy_j^{(l-1)}} \\
    \widehat{\mY_i^{(l)}} = & \widehat{\vy_i^{(l-1)}} + \sum_{h=1}^H \sum_{j=1}^i  \attWT{i}{j}^{(l,h)} \widehat{\mV_{l,h}} \widehat{\vy_j^{(l-1)}}
    \end{aligned}
\end{equation}
we show the inductive step first for (A) in the case of the pre-MLP activation $\mY_i^{(l)}$. For $\mathcal{S}$ satisfying
\begin{equation}
    \forall l' \in \{1, \dots, L\}: \left[\left(\mV_{l',h'} \in \mathcal{S}\right) \Rightarrow l' > l\right]
\end{equation}
we consider
\begin{align*}
    \Gamma_{\mathcal{S}, \vw}\left(\widehat{\mY_i^{(l)}}\right) = & \Gamma_{\mathcal{S}, \vw}\left(\widehat{\vy_i^{(l-1)}}\right) + \sum_{h=1}^H  \sum_{j=1}^i \attWT{i}{j}^{(l,h)} \Gamma_{\mathcal{S}, \vw}\left(\widehat{\mV}_{l,h} \widehat{\vy_j^{(l-1)}}\right) \\
    = & \Gamma_{\mathcal{S}, \vw}\left(\widehat{\vy_i^{(l-1)}}\right) + \sum_{h=1}^H  \sum_{j=1}^i \attWT{i}{j}^{(l,h)} \Gamma_{\mathcal{S} \cup \{\mV_{l,h}\}, \vw}\left( \widehat{\vy_j^{(l-1)}}\right)
\end{align*}
where $\attWT{i}{j}$ denotes attention weights.
The claim here now follows from the inductive hypothesis for (A):
\begin{align*}
    = & \vw^T \left(\prod_{S\in\mathcal{S}} S\right) {\vy_i^{(l-1)}} + \sum_{h=1}^H  \sum_{j=1}^i \attWT{i}{j}^{(l,h)} \vw^T \left(\prod_{S\in\mathcal{S}} S\right) \mV_{l,h} {\vy_j^{(l-1)}} \\
    = & \vw^T \left(\prod_{S\in\mathcal{S}} S\right) {\mY_i^{(l)}} 
\end{align*}
where the last step used (\ref{eq:attention-equation-widehat}).
Next, we consider (B) and (C).
First, for (B), assuming $\mathcal{T}_1 \cup \mathcal{T}_2 \neq \emptyset$, we first find using (\ref{eq:attention-equation-widehat}) and the linearity of $\Lambda_{\dots}$ and $\Omega_{\dots}$  (portions changed marked in blue): 
\begin{align*}
   & \sum_{\vv, \vw, \mathcal{T}_1, \mathcal{T}_2} \Lambda_{\mathcal{S}_1, \mathcal{T}_1, \mathcal{T}_2, k, h, \vv, \vw}(\widehat{\mY_i^{(l)}}) \Omega_{\mathcal{S}_2, \mathcal{T}_1, \mathcal{T}_2, k, h, \vv, \vw}(\widehat{\mY_j^{(l)}}) \\
   = & \sum_{\vv, \vw, \mathcal{T}_1, \mathcal{T}_2} \Lambda_{\mathcal{S}_1, \mathcal{T}_1, \mathcal{T}_2, k, h, \vv, \vw}\left(\textcolor{dblue}{\widehat{\vy_i^{(l-1)}} + \sum_{h'=1}^H \sum_{w=1}^i  \tilde{a}_{iw}^{(l,h')} \widehat{\mV_{l,h'}} \widehat{\vy_w^{(l-1)}}}\right) \\
   &\ \ \ \ \ \ \ \ \ \ \ \ \ \ \cdot \Omega_{\mathcal{S}_2, \mathcal{T}_1, \mathcal{T}_2, k, h, \vv, \vw}\left(\textcolor{dblue}{\widehat{\vy_j^{(l-1)}} + \sum_{h''=1}^H \sum_{w'=1}^j  \tilde{a}_{jw'}^{(l,h'')} \widehat{\mV_{l,h''}} \widehat{\vy_{w'}^{(l-1)}}}\right) \\
   = & \sum_{\vv, \vw, \mathcal{T}_1, \mathcal{T}_2} 
   \Lambda_{\mathcal{S}_1, \mathcal{T}_1, \mathcal{T}_2, k, h, \vv, \vw}\left(\widehat{\vy_i^{(l-1)}}\right)\Omega_{\mathcal{S}_2, \mathcal{T}_1, \mathcal{T}_2, k, h, \vv, \vw}
   \left(\widehat{\vy_j^{(l-1)}}\right) \\
 & +\sum_{\vv, \vw, \mathcal{T}_1, \mathcal{T}_2} 
 \Lambda_{\mathcal{S}_1, \mathcal{T}_1, \mathcal{T}_2, k, h, \vv, \vw}\left(\widehat{\vy_i^{(l-1)}}\right)   \sum_{h'=1}^H \sum_{w=1}^j  \tilde{a}_{jw}^{(l,h')} \Omega_{\mathcal{S}_2, \mathcal{T}_1, \mathcal{T}_2, l, h, \vv, \vw}\left(\widehat{\mV_{l,h'}}\widehat{\vy_w^{(l-1)}}\right) \\
 & +\sum_{\vv, \vw, \mathcal{T}_1, \mathcal{T}_2} 
   \sum_{h''=1}^H \sum_{w=1}^i  \tilde{a}_{iw}^{(l,h'')} \Lambda_{\mathcal{S}_1, \mathcal{T}_1, \mathcal{T}_2, l, h, \vv, \vw}\left(\widehat{\mV_{l,h''}}\widehat{\vy_w^{(l-1)}}\right) 
   \Omega_{\mathcal{S}_2, \mathcal{T}_1, \mathcal{T}_2, k, h, \vv, \vw}
   \left(\widehat{\vy_j^{(l-1)}}\right) \\
  & +\sum_{\vv, \vw, \mathcal{T}_1, \mathcal{T}_2} 
   \sum_{h'=1}^H \sum_{w=1}^i  \tilde{a}_{iw}^{(l,h')} \Lambda_{\mathcal{S}_1, \mathcal{T}_1, \mathcal{T}_2, k, h, \vv, \vw}\left(\widehat{\mV_{l,h'}}\widehat{\vy_w^{(l-1)}}\right) 
   \\
   &\ \ \ \ \ \ \ \ \ \ \ \ \ \ \ \ \ \cdot \sum_{h''=1}^H \sum_{w'=1}^j  \tilde{a}_{jw'}^{(l,h'')} \Omega_{\mathcal{S}_2, \mathcal{T}_1, \mathcal{T}_2, k, h, \vv, \vw}\left(\widehat{\mV_{l,h''}}\widehat{\vy_{w'}^{(l-1)}}\right)
\end{align*}
Now the definition of $\widehat{\mV}_{l,h}$ allows us to rewrite this as:
\begin{align*}
   = & \sum_{\vv, \vw, \mathcal{T}_1 \geq_l \mathcal{S}_1, \mathcal{T}_2 \geq_l \mathcal{S}_2} 
   \Lambda_{\mathcal{S}_1, \mathcal{T}_1, \mathcal{T}_2, l, h, \vv, \vw}\left(\widehat{\vy_i^{(l-1)}}\right)\Omega_{\mathcal{S}_2, \mathcal{T}_1, \mathcal{T}_2, l, h, \vv, \vw}
   \left(\widehat{\vy_j^{(l-1)}}\right) \\
 & +\sum_{\vv, \vw, \mathcal{T}_1 \geq_l \mathcal{S}_1, \mathcal{T}_2 \geq_l \mathcal{S}_2} 
 \Lambda_{\mathcal{S}_1, \mathcal{T}_1, \mathcal{T}_2, l, h, \vv, \vw}\left(\widehat{\vy_i^{(l-1)}}\right)   \sum_{h'=1}^H \sum_{w=1}^j  \tilde{a}_{jw}^{(l,h')} \Omega_{\textcolor{dblue}{\mathcal{S}_2 \cup \{\mV_{l,h'}\}}, \mathcal{T}_1, \mathcal{T}_2, l, h, \vv, \vw}\left(\textcolor{dblue}{\widehat{\vy_w^{(l-1)}}}\right) \\
 & +\sum_{\vv, \vw, \mathcal{T}_1 \geq_l \mathcal{S}_1, \mathcal{T}_2 \geq_l \mathcal{S}_2} 
   \sum_{h'=1}^H \sum_{w=1}^i  \tilde{a}_{iw}^{(l,h')} \Lambda_{\textcolor{dblue}{\mathcal{S}_1 \cup \{\mV_{l,h'}\}}, \mathcal{T}_1, \mathcal{T}_2, l, h, \vv, \vw}\left(\textcolor{dblue}{\widehat{\vy_w^{(l-1)}}}\right) 
   \Omega_{\mathcal{S}_2, \mathcal{T}_1, \mathcal{T}_2, l, h, \vv, \vw}
   \left(\widehat{\vy_j^{(l-1)}}\right) \\
  & +\sum_{\vv, \vw, \mathcal{T}_1 \geq_l \mathcal{S}_1, \mathcal{T}_2 \geq_l \mathcal{S}_2} 
   \sum_{h'=1}^H \sum_{j=1}^i  \attWT{i}{j}^{(l,h)} \Lambda_{\textcolor{dblue}{\mathcal{S}_1 \cup \{\mV_{l, h'}\}}, \mathcal{T}_1, \mathcal{T}_2, l, h', \vv, \vw}\left(\textcolor{dblue}{\widehat{\vy_j^{(l-1)}}}\right)  \\
   & \ \ \ \ \ \ \ \ \ \ \ \ \ \ \ \ \ \ \ \ \ \ \ \ \ \ \ \cdot \sum_{h''=1}^H \sum_{w'=1}^j  \tilde{a}_{jw'}^{(l,h'')} \Omega_{\textcolor{dblue}{\mathcal{S}_2 \cup \{\mV_{l,h''}\}}, \mathcal{T}_1, \mathcal{T}_2, l, h, \vv, \vw}\left({\textcolor{dblue}{\widehat{\vy_{w'}^{(l-1)}}}}\right)
\end{align*}
In order to directly apply the inductive hypothesis, we rearrange the summations:
\begin{align*}
   = & \sum_{\vv, \vw, \mathcal{T}_1 \geq_l \mathcal{S}_1, \mathcal{T}_2 \geq_l \mathcal{S}_2} 
   \Lambda_{\mathcal{S}_1, \mathcal{T}_1, \mathcal{T}_2, l, h, \vv, \vw}\left(\widehat{\vy_i^{(l-1)}}\right)\Omega_{\mathcal{S}_2, \mathcal{T}_1, \mathcal{T}_2, l, h, \vv, \vw}
   \left(\widehat{\vy_j^{(l-1)}}\right) \\
 & + \textcolor{dblue}{\sum_{h'=1}^H \sum_{w=1}^j  \tilde{a}_{jw}^{(l,h')}\sum_{\vv, \vw, \mathcal{T}_1 \geq_l \mathcal{S}_1, \mathcal{T}_2 \geq_l \mathcal{S}_2}} 
 \Lambda_{\mathcal{S}_1, \mathcal{T}_1, \mathcal{T}_2, l, h, \vv, \vw}\left(\widehat{\vy_i^{(l-1)}}\right)    \Omega_{\mathcal{S}_2 \cup \{\mV_{l,h'}\}, \mathcal{T}_1, \mathcal{T}_2, l, h, \vv, \vw}\left(\widehat{\vy_w^{(l-1)}}\right) \\
 & +\textcolor{dblue}{\sum_{h'=1}^H \sum_{w=1}^i  \tilde{a}_{iw}^{(l,h')}  \sum_{\vv, \vw, \mathcal{T}_1 \geq_l \mathcal{S}_1, \mathcal{T}_2 \geq_l \mathcal{S}_2}} 
   \Lambda_{\mathcal{S}_1 \cup \{\mV_{l,h'}\}, \mathcal{T}_1, \mathcal{T}_2, l, h, \vv, \vw}\left(\widehat{\vy_w^{(l-1)}}\right) 
   \Omega_{\mathcal{S}_2, \mathcal{T}_1, \mathcal{T}_2, l, h, \vv, \vw}
   \left(\widehat{\vy_j^{(l-1)}}\right) \\
  & + {\textcolor{dblue}{\sum_{h'=1}^H \sum_{w=1}^i  \tilde{a}_{iw}^{(l,h')} \sum_{h''=1}^H \sum_{w'=1}^j  \tilde{a}_{jw'}^{(l,h'')} \sum_{\vv, \vw, \mathcal{T}_1 \geq_l \mathcal{S}_1, \mathcal{T}_2 \geq_l \mathcal{S}_2}}} 
   \Lambda_{\mathcal{S}_1 \cup \{\mV_{l, h'}\}, \mathcal{T}_1, \mathcal{T}_2, l, h, \vv, \vw}\left(\widehat{\vy_w^{(l-1)}}\right) 
    \\
    & \ \ \ \ \ \ \ \ \ \ \ \cdot \Omega_{\mathcal{S}_2 \cup \{\mV_{l,h''}\}, \mathcal{T}_1, \mathcal{T}_2, l, h, \vv, \vw}\left(\widehat{\vy_{w'}^{(l-1)}}\right)
\end{align*}
Note that, as above, $\mathcal{S}_1$, $\mathcal{S}_2$ are fixed and the sums run over $\mathcal{T}_1, \mathcal{T}_2$.
We can rewrite the above as:
\begin{align*}
   = & \sum_{\vv, \vw, \textcolor{dblue}{\mathcal{T}_1 \geq_{l-1} \mathcal{S}_1, \mathcal{T}_2 \geq_{l-1} \mathcal{S}_2}} 
   \Lambda_{\mathcal{S}_1, \mathcal{T}_1, \mathcal{T}_2, l, h, \vv, \vw}\left(\widehat{\vy_i^{(l-1)}}\right)\Omega_{\mathcal{S}_2, \mathcal{T}_1, \mathcal{T}_2, l, h, \vv, \vw}
   \left(\widehat{\vy_j^{(l-1)}}\right) \\
 & +\sum_{h'=1}^H \sum_{w=1}^j  \tilde{a}_{jw}^{(l,h')}\sum_{\vv, \vw, \textcolor{dblue}{\mathcal{T}_1 \geq_{l-1} \mathcal{S}_1, \mathcal{T}_2 \geq_{l-1} \mathcal{S}_2 \cup \{\mV_{l,h'}\}}} 
 \Lambda_{\mathcal{S}_1, \mathcal{T}_1, \mathcal{T}_2, l, h, \vv, \vw}\left(\widehat{\vy_i^{(l-1)}}\right)    
 \\
 & \ \ \ \ \  \ \ \ \ \  \ \ \ \ \ \cdot \Omega_{\mathcal{S}_2 \cup \{\mV_{l,h}\}, \mathcal{T}_1, \mathcal{T}_2, l, h, \vv, \vw}\left(\widehat{\vy_w^{(l-1)}}\right) \\
 & +\sum_{h'=1}^H \sum_{w=1}^i  \attWT{i}{j}^{(l,h)}  \sum_{\vv, \vw, \textcolor{dblue}{\mathcal{T}_1 \geq_{l-1} \mathcal{S}_1 \cup \{\mV_{l,h'}\}, \mathcal{T}_2 \geq_{l-1} \mathcal{S}_2}} 
   \Lambda_{\mathcal{S}_1 \cup \{\mV_{l,h'}\}, \mathcal{T}_1, \mathcal{T}_2, l, h, \vv, \vw}\left(\widehat{\vy_j^{(l-1)}}\right)  \\
 & \ \ \ \ \  \ \ \ \ \  \ \ \ \ \ \cdot 
   \Omega_{\mathcal{S}_2, \mathcal{T}_1, \mathcal{T}_2, l, h, \vv, \vw}
   \left(\widehat{\vy_j^{(l-1)}}\right) \\
  & +\sum_{h'=1}^H \sum_{w=1}^i  \tilde{a}_{iw}^{(l,h')} \sum_{h''=1}^H \sum_{w'=1}^j  \tilde{a}_{jw'}^{(l,h'')} \sum_{\substack{\vv, \vw,  \textcolor{dblue}{\mathcal{T}_1 \geq_{l-1} \mathcal{S}_1 \cup \{\mV_{l, h''}\}},\\\textcolor{dblue}{\mathcal{T}_2 \geq_{l-1} \mathcal{S}_2 \cup \{\mV_{l,h'}\}}}} 
   \Lambda_{\mathcal{S}_1 \cup \{\mV_{l, h'}\}, \mathcal{T}_1, \mathcal{T}_2, l, h, \vv, \vw}\left(\widehat{\vy_w^{(l-1)}}\right) \\
    & \ \ \ \ \  \ \ \ \ \  \ \ \ \ \ \cdot\Omega_{\mathcal{S}_2 \cup \{\mV_{l,h''}\}, \mathcal{T}_1, \mathcal{T}_2, l, h, \vv, \vw}\left(\widehat{\vy_{w'}^{(l-1)}}\right)
\end{align*}
We are now ready to apply the inductive hypothesis: Directly plugging the inductive hypothesis for (B) into the second through fourth terms gives us:
\begin{align*}
   = & \sum_{\vv, \vw, \mathcal{T}_1 \geq_{l-1} \mathcal{S}_1, \mathcal{T}_2 \geq_{l-1} \mathcal{S}_2}
   \Lambda_{\mathcal{S}_1, \mathcal{T}_1, \mathcal{T}_2, l, h, \vv, \vw}\left(\widehat{\vy_i^{(l-1)}}\right)\Omega_{\mathcal{S}_2, \mathcal{T}_1, \mathcal{T}_2, l, h, \vv, \vw}
   \left(\widehat{\vy_j^{(l-1)}}\right) \\
 & + \textcolor{dblue}{\sum_{h'=1}^H \sum_{w=1}^j  \tilde{a}_{jw}^{(l,h')} 
 (\vy_i^{(l-1)})^T \left(\prod_{S \in \mathcal{S}_1} S\right)^T \mK_{k,h}^T \mQ_{k,h}    \left(\prod_{S \in \mathcal{S}_2} S\right)^T \mV_{l,h} \vy_w^{(l-1)} }\\
 & +\textcolor{dblue}{\sum_{h'=1}^H \sum_{w=1}^i  \tilde{a}_{iw}^{(l,h')}  
 \vy_j^{(l-1)} \mV_{l,h'}^T \left(\prod_{S \in \mathcal{S}_1} S\right)^T 
 \mK_{l,h}^T \mQ_{l,h}
   \left(\prod_{S \in \mathcal{S}_2} S\right)
   \vy_j^{(l-1)} }\\
  & +\textcolor{dblue}{\sum_{h'=1}^H \sum_{w=1}^i  \tilde{a}_{iw}^{(l,h')} \sum_{h''=1}^H \sum_{w'=1}^j  \tilde{a}_{jw'}^{(l,h'')} 
   (\vy_w^{(l-1)})^T \mV_{l, h'}^T \left(\prod_{S \in \mathcal{S}_1} S\right)^T \mK_{l,h}^T \mQ_{l,h} 
    \left(\prod_{S \in \mathcal{S}_2} S\right) \mV_{l,h''} \vy_{w'}^{(l-1)}}
\end{align*}
We now distinguish two cases, for proving (B) and (C).
The first one is that $\mathcal{S}_1 = \mathcal{S}_2 = \emptyset$.
In this case, by case (C) of the inductive hypothesis:
\begin{align*}
   = & (\vy_i^{(l-1)})^T \mK_{k,h}^T \mQ_{k,h} \vy_j^{(l-1)} - \phi_{k,h}(i,j) \\
 & +\sum_{h'=1}^H \sum_{w=1}^j  \tilde{a}_{jw}^{(l,h')} 
 (\vy_i^{(l-1)})^T  \mK_{k,h}^T \mQ_{k,h}     \mV_{l,h} \vy_w^{(l-1)} \\
 & +\sum_{h'=1}^H \sum_{w=1}^i  \tilde{a}_{iw}^{(l,h')}  
 \vy_j^{(l-1)} \mV_{l,h'}^T  
 \mK_{l,h}^T \mQ_{l,h}
   \vy_j^{(l-1)} \\
  & +\sum_{h'=1}^H \sum_{w=1}^i  \tilde{a}_{iw}^{(l,h')} \sum_{h''=1}^H \sum_{w'=1}^j  \tilde{a}_{jw'}^{(l,h'')} 
   (\vy_w^{(l-1)})^T \mV_{l, h'}^T  \mK_{l,h}^T \mQ_{l,h} 
     \mV_{l,h''} \vy_{w'}^{(l-1)}
\end{align*}
Now, applying (\ref{eq:attention-equation-widehat}) again, we rearrange to sums to obtain the conclusion
\begin{equation}
       =  (\mY_i^{(l)})^T \mK_{k,h}^T \mQ_{k,h} \mY_j^{(l)} - \phi_{k,h}(i,j) \\
\end{equation}
proving (upon rearranging) the inductive step for (C) in the case of $\mY_i^{(l)}$.
In the second case, $\mathcal{S}_1 \cup \mathcal{S}_2 \neq \emptyset$; here, we use case (B) of the inductive hypothesis to instead rewrite as
\begin{align*}
   = & 
   \textcolor{dblue}{(\vy_i^{(l-1)})^T \left(\prod_{S \in \mathcal{S}_1} S\right)^T \mK_{k,h}^T \mQ_{k,h}    \left(\prod_{S \in \mathcal{S}_2} S\right) (\vy_j^{(l-1)})} \\
 & +\sum_{h'=1}^H \sum_{w=1}^j  \tilde{a}_{jw}^{(l,h')} 
 (\vy_i^{(l-1)})^T \left(\prod_{S \in \mathcal{S}_1} S\right)^T \mK_{k,h}^T \mQ_{k,h}    \left(\prod_{S \in \mathcal{S}_2} S\right) \mV_{l,h'} \vy_w^{(l-1)} \\
 & +\sum_{h'=1}^H \sum_{w=1}^i  \tilde{a}_{iw}^{(l,h')}  
 \vy_j^{(l-1)} \mV_{l,h'}^T \left(\prod_{S \in \mathcal{S}_1} S\right)^T 
 \mK_{l,h}^T \mQ_{l,h}
   \left(\prod_{S \in \mathcal{S}_2} S\right)
   \vy_j^{(l-1)} \\
  & +\sum_{h'=1}^H \sum_{w=1}^i  \tilde{a}_{iw}^{(l,h')} \sum_{h''=1}^H \sum_{w'=1}^j  \tilde{a}_{jw'}^{(l,h'')} 
   (\vy_w^{(l-1)})^T \mV_{l, h'}^T \left(\prod_{S \in \mathcal{S}_1} S\right)^T \mK_{l,h}^T \mQ_{l,h} 
    \left(\prod_{S \in \mathcal{S}_2} S\right) \mV_{l,h''} \vy_{w'}^{(l-1)}
\end{align*}
Now, applying (\ref{eq:attention-equation-widehat}) again, we rearrange to sums to obtain the conclusion
\begin{equation}
       =  (\mY_i^{(l)})^T \left(\prod_{S \in \mathcal{S}_1} S\right)^T \mK_{k,h}^T \mQ_{k,h} \left(\prod_{S \in \mathcal{S}_2} S\right) \mY_j^{(l)}  \\
\end{equation}
This proves the inductive step for (B) in the case of $\mY_i^{(l)}$.
We next address the inductive step for (D) in the case of the pre-MLP activation:
\begin{align*}
    &\sum_{\vv, \vw, \mathcal{T}_1 \geq_l \mathcal{S}_1, \mathcal{T}_2 \geq_l \mathcal{S}_2} \Lambda_{\mathcal{S}_1, \mathcal{T}_1, \mathcal{T}_2, k, h, \vv, \vw}(\widehat{\mB_{m}}) \cdot \Omega_{\mathcal{S}_2, \mathcal{T}_1, \mathcal{T}_2, k, h, \vv, \vw}(\widehat{\mY_j^{(l)}}) \\
    = &(\mB_{m})^T \left(\prod_{S \in \mathcal{S}_1} S\right)^T \mK_{k,h}^T \mQ_{k,h} \left(\prod_{S \in \mathcal{S}_2} S\right) \mY_j^{(l)}
\end{align*}
By unfolding $\widehat{\mY_j^{(l)}}$ using (\ref{eq:attention-equation-widehat}) and using the linearity of $\Omega_{\mathcal{S}_2, \mathcal{T}_1, \mathcal{T}_2, k, h, \vv, \vw}$, the claim follows directly from the inductive hypothesis for (D).
The same reasoning applies to (E).
Overall, we have proven the inductive step (A--E) for the pre-MLP activations $\mY_i^{(l)}$.

We now need to show that the inductive step also holds for the post-MLP activations.
Recall that the MLP acts as
\begin{equation}\label{eq:mlp-hat-and-original}
\begin{aligned}
{\vy_i^{(l)}} = {\vY_i^{(l)}} + \sum_{s=1}^{d_{MLP}} ({\mB}_l)_{\cdot, s} \cdot \psi_{l,s}\left(({\mA}_l)_{s, \cdot} ({\vY_i^{(l)}}) + ({{\vb_l}})_s\right) \\
\widehat{\vy_i^{(l)}} = \widehat{\vY_i^{(l)}} + \sum_{s=1}^{d_{MLP}} (\widehat{\mB}_l)_{s, \cdot} \cdot \psi_{l,s}\left((\widehat{\mA}_l)_{s, \cdot} (\widehat{\vY_i^{(l)}}) + ({\widehat{\vb_l}})_s\right) \\
\end{aligned}
\end{equation}
The proof proceeds by expanding this equation and reducing the claim to the already-proven inductive step for pre-MLP activations (for handling the direct contribution from the pre-MLP activation), and cases (D) and (E) (for handling the contributions of the MLP units).
First, we note that, for each $l,s$, by the case (A) of the inductive hypothesis and by the definition of $\widehat{\vb_l}$,
\begin{equation}
    \psi_{l,s}\left(({\mA}_l)_{s, \cdot} ({\vY_i^{(l)}}) + ({{\vb_l}})_s\right) = \psi_{l,s}\left((\widehat{\mA}_l)_{s, \cdot} (\widehat{\vY_i^{(l)}}) + ({\widehat{\vb_l}})_s\right)
\end{equation}
We will abbreviate this number as $\Xi_{l,s,i} \in \mathbb{R}$.
We now prove the case (A) of the inductive step for the post-MLP activation:
\begin{align*}
     \Gamma_{\mathcal{S}, \vw}(\widehat{\vy_i^{(l)}}) 
   =&  \Gamma_{\mathcal{S}, \vw}( \widehat{\vY_i^{(l)}} + \sum_{s=1}^{d_{MLP}} (\widehat{\mB}_l)_{s, \cdot} \cdot \Xi_{l,s,i}) \\
   =&  \Gamma_{\mathcal{S}, \vw}( \widehat{\vY_i^{(l)}}) + \sum_{s=1}^{d_{MLP}} \Gamma_{\mathcal{S}, \vw}( (\widehat{\mB}_l)_{s, \cdot} ) \cdot \Xi_{l,s,i} \\
   =&  \vw^T \left(\prod_{S \in \mathcal{S}} S\right) {\vY_i^{(l)}} + \sum_{s=1}^{d_{MLP}}  \vw^T \left(\prod_{S \in \mathcal{S}} S\right) ({\mB}_l)_{\cdot, s}  \cdot \Xi_{l,s,i} \\
  =&  \vw^T \left(\prod_{S \in \mathcal{S}} S\right) \vy_i^{(l)}
\end{align*}
To prove cases (B) and (C) for the post-MLP activations $\vy_i^{(l)}$, we now consider
\begin{align*}
     & \sum_{\vv, \vw, \mathcal{T}_1 \geq_l \mathcal{S}_1, \mathcal{T}_2 \geq_l \mathcal{S}_2} \Lambda_{\mathcal{S}_1, \mathcal{T}_1, \mathcal{T}_2, k, h, \vv, \vw}\left(\widehat{\vy_i^{(l)}}\right) \cdot \Omega_{\mathcal{S}_2, \mathcal{T}_1, \mathcal{T}_2, k, h, \vv, \vw}\left(\widehat{\vy_j^{(l)}}\right) \\
     = & \sum_{\vv, \vw, \mathcal{T}_1 \geq_l \mathcal{S}_1, \mathcal{T}_2 \geq_l \mathcal{S}_2} \Lambda_{\mathcal{S}_1, \mathcal{T}_1, \mathcal{T}_2, k, h, \vv, \vw}\left(\widehat{\mY_i^{(l)}} + \sum_{s=1}^{d_{MLP}} \Xi_{l,s,i} \cdot (\widehat{\mB}_l)_{\cdot,s}\right) \\
    & \ \ \ \ \  \ \ \ \ \  \ \ \ \ \ \cdot \Omega_{\mathcal{S}_2, \mathcal{T}_1, \mathcal{T}_2, k, h, \vv, \vw}\left(\widehat{\mY_j^{(l)}} + \sum_{t=1}^{d_{MLP}} \Xi_{l,t,j} \cdot (\widehat{\mB}_l)_{\cdot,t}\right) \\
     = & \sum_{\vv, \vw, \mathcal{T}_1 \geq_l \mathcal{S}_1, \mathcal{T}_2 \geq_l \mathcal{S}_2} 
     \Lambda_{\mathcal{S}_1, \mathcal{T}_1, \mathcal{T}_2, k, h, \vv, \vw}\left(\widehat{\mY_i^{(l)}}\right) 
     \Omega_{\mathcal{S}_2, \mathcal{T}_1, \mathcal{T}_2, k, h, \vv, \vw}\left(\widehat{\mY_j^{(l)}} \right) \\
     & + \sum_{s=1}^{d_{MLP}} \Xi_{l,s,j} \cdot \sum_{\vv, \vw, \mathcal{T}_1 \geq_l \mathcal{S}_1, \mathcal{T}_2 \geq_l \mathcal{S}_2} 
     \Lambda_{\mathcal{S}_1, \mathcal{T}_1, \mathcal{T}_2, k, h, \vv, \vw}\left(\widehat{\mY_i^{(l)}}\right) 
          \Omega_{\mathcal{S}_2, \mathcal{T}_1, \mathcal{T}_2, k, h, \vv, \vw}\left((\widehat{\mB}_l)_{\cdot,s}\right) \\
& + \sum_{s=1}^{d_{MLP}} \Xi_{l,s,i} \cdot \sum_{\vv, \vw, \mathcal{T}_1 \geq_l \mathcal{S}_1, \mathcal{T}_2 \geq_l \mathcal{S}_2} 
     \Lambda_{\mathcal{S}_1, \mathcal{T}_1, \mathcal{T}_2, k, h, \vv, \vw}\left((\widehat{\mB}_l)_{\cdot,s}\right) 
     \Omega_{\mathcal{S}_2, \mathcal{T}_1, \mathcal{T}_2, k, h, \vv, \vw}\left(\widehat{\mY_j^{(l)}} \right) \\
& + \sum_{s=1}^{d_{MLP}} \Xi_{l,s,i} \cdot  \sum_{t=1}^{d_{MLP}} \Xi_{l,t,j} \cdot  \sum_{\vv, \vw, \mathcal{T}_1 \geq_l \mathcal{S}_1, \mathcal{T}_2 \geq_l \mathcal{S}_2} 
     \Lambda_{\mathcal{S}_1, \mathcal{T}_1, \mathcal{T}_2, k, h, \vv, \vw}\left((\widehat{\mB}_l)_{\cdot,s}\right) \\
    & \ \ \ \ \  \ \ \ \ \  \ \ \ \ \ \cdot
          \Omega_{\mathcal{S}_2, \mathcal{T}_1, \mathcal{T}_2, k, h, \vv, \vw}\left((\widehat{\mB}_l)_{\cdot,t}\right)
     \\
\end{align*}
We apply the inductive step for the pre-MLP activation in cases  (D), and (E) to rewrite the second and third term, and apply the definition of $\widehat{\mB_l}$ to rewrite the fourth term:
\begin{align*}
         = & \sum_{\vv, \vw, \mathcal{T}_1 \geq_l \mathcal{S}_1, \mathcal{T}_2 \geq_l \mathcal{S}_2} 
     \Lambda_{\mathcal{S}_1, \mathcal{T}_1, \mathcal{T}_2, k, h, \vv, \vw}\left(\widehat{\mY_i^{(l)}}\right) 
     \Omega_{\mathcal{S}_2, \mathcal{T}_1, \mathcal{T}_2, k, h, \vv, \vw}\left(\widehat{\mY_j^{(l)}} \right) \\
     & + \sum_{s=1}^{d_{MLP}} \Xi_{l,s,j} \cdot  
     \textcolor{dblue}{(\mY_i^{(l)})^T \left(\prod_{S \in \mathcal{S}_1} S\right)^T \mK_{k,h}^T \mQ_{k,h} \left(\prod_{S \in \mathcal{S}_2} S\right) (\mB_l)_{s,\cdot} } \\
& + \sum_{s=1}^{d_{MLP}} \Xi_{l,s,i} \cdot  
   \textcolor{dblue}{ (\mB_{l})_{\cdot,s}^T \left(\prod_{S \in \mathcal{S}_1} S\right)^T \mK_{k,h}^T \mQ_{k,h} \left(\prod_{S \in \mathcal{S}_2} S\right) \mY_j^{(l)} } \\
& + \sum_{s=1}^{d_{MLP}} \Xi_{l,s,i} \cdot  \sum_{t=1}^{d_{MLP}} \Xi_{l,t,j} \cdot  
           \textcolor{dblue}{ (\mB_l)_{\cdot,s}^T (\prod_{S \in \mathcal{S}_1} S)^T \mK_{k,h}^T \mQ_{k,h} (\prod_{S \in \mathcal{S}_2} S) (\mB_l)_{\cdot,t}}
     \\
\end{align*}
In the case where $\mathcal{S}_1 = \mathcal{S}_2 = \emptyset$, we obtain using case (C) of the inductive hypothesis for the pre-MLP activation:
\begin{align*}
         = &  
     (\mY_i^{(l)})^T \mK_{k,h}^T \mQ_{k,h} \mY_j^{(l)} - \phi_{k,h}(i,j)\\
     & + \sum_{s=1}^{d_{MLP}} \Xi_{l,s,j} \cdot  
     \textcolor{dblue}{(\mY_i^{(l)})^T  \mK_{k,h}^T \mQ_{k,h}  (\mB_l)_{s,\cdot} } \\
& + \sum_{s=1}^{d_{MLP}} \Xi_{l,s,i} \cdot 
   \textcolor{dblue}{ (\mB_{l})_{\cdot,s}^T  \mK_{k,h}^T \mQ_{k,h}  \mY_j^{(l)} } \\
& + \sum_{s=1}^{d_{MLP}} \Xi_{l,s,i} \cdot  \sum_{t=1}^{d_{MLP}} \Xi_{l,t,j} \cdot  
           \textcolor{dblue}{ (\mB_l)_{\cdot,s}^T  \mK_{k,h}^T \mQ_{k,h}  (\mB_l)_{\cdot,t}}
     \\
\end{align*}
Using (\ref{eq:mlp-hat-and-original}) and the definition of $\Xi_{l,s,i}$, this rewrites to
\begin{align*}
         = &  
     (\vy_i^{(l)})^T \mK_{k,h}^T \mQ_{k,h} \vy_j^{(l)} - \phi_{k,h}(i,j)
\end{align*}
from which case (B) of the inductive hypothesis follows by rearranging.
If instead $\mathcal{S}_1 \cup \mathcal{S}_2 \neq \emptyset$, the same reasoning leads to case (C) of the inductive hypothesis.
Analogous reasoning establishes cases (D) and (E) for the post-MLP activations.
Overall, we have established the inductive step for cases (A--E) for the post-MLP activations.
\end{proof}

\section{Additional Supporting Results}
\subsection{Regularizer at Initialization}\label{app:reg-init}
Here, we provide evidence that the additional regularizer (\ref{eq:additional-penalty}) is bounded independently of $N$ under plausible initializations of parameters.
Recall
\begin{equation}
    (\ref{eq:additional-penalty}) = \sum_{l=1}^L \sum_{h=1}^H \sum_{j=1}^{N(T)} \left|\vp_1^T \mK_{l,h}^T \mQ_{l,h} \vp_j\right|^2
\end{equation}
Intuitively, and as formalized in Proposition~\ref{prop:init-att-sum}, when independently initializing the positional encodings $\vp_i$, their inner products as mediated through $\mK_{l,h}^T \mQ_{l,h}$ will tend to be small. As long as the width grows linearly with $N$, the aggregate value of (\ref{eq:additional-penalty}) will tend to be bounded independently of $N$.
Note that (\ref{eq:additional-penalty}) only includes products involving position $1$, which due to translation invariance for $T \in \Theta_n$ places a bound on all products.
As standard training does not enforce translation invariance of the products $\vp_i^T \mK_{l,h}^T \mQ_{l,h} \vp_j$, one may also be interested in a variant that takes all pairs of positions into account, to the extent that they can enter causal attention: %
\begin{equation}
   \frac{1}{N(T)} \sum_{l=1}^L \sum_{h=1}^H \sum_{j=1}^{N(T)} \sum_{i=1}^j \left|\vp_i^T \mK_{l,h}^T \mQ_{l,h} \vp_j\right|^2
\end{equation}
Here, the same conclusion holds.
We describe it formally, at the example of the second variant, in Proposition~\ref{prop:init-att-sum}.

\begin{proposition}\label{prop:init-att-sum}
Assume $d = \Theta(N)$. Assume the entries of each $\vp_1, \dots, \vp_{N} \in \mathbb{R}^d$ and $\mK_{l,h}^T\mQ_{l,h} \in \mathbb{R}^{d \times d}$ ($l=1, \dots, L$; $h=1, \dots, H$) are initialized i.i.d. from $\mathcal{N}(0, \frac{1}{d})$. The number of layers $L$ and heads $H$ are constant with respect to $N$. Then %

\begin{equation}\label{eq:expect_init}
    \mathbb{E}\left(\frac{1}{N} \sum_{l=1}^L \sum_{h=1}^H \sum_{1 \leq i\leq j \leq N} |\vp_i^T \mK_{l,h}^T \mQ_{l,h} \vp_j|^2\right) = O(1)
\end{equation}
\end{proposition}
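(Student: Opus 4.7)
The plan is to show that each squared-product term $\mathbb{E}\bigl[(\vp_i^T \mK_{l,h}^T \mQ_{l,h}\vp_j)^2\bigr]$ is $O(1/d)$ uniformly in $i,j$. Summing the at-most-$N(N+1)/2$ terms gives $O(N^2/d) = O(N)$ since $d = \Theta(N)$, then the prefactor $\frac{1}{N}$ and the constants $L,H$ yield $O(1)$.

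Fix $(l,h)$ and write $\mA := \mK_{l,h}^T \mQ_{l,h}$. First I would handle the off-diagonal case $i \neq j$, where $\vp_i$ and $\vp_j$ are independent. Conditioning on $\mA$ and $\vp_j$ and using that $\vp_i$ has i.i.d.\ $\mathcal{N}(0,1/d)$ entries,
\[
\mathbb{E}\bigl[(\vp_i^T \mA \vp_j)^2 \mid \mA, \vp_j\bigr] = \tfrac{1}{d}\,\|\mA\vp_j\|_2^2.
\]
Taking expectation over $\vp_j$ gives $\tfrac{1}{d^2}\|\mA\|_F^2$. A direct second-moment computation on the entries $A_{kl} = \sum_m K_{mk} Q_{ml}$ (using independence of $\mK,\mQ$) gives $\mathbb{E}[A_{kl}^2] = 1/d$, hence $\mathbb{E}\|\mA\|_F^2 = d$ and the off-diagonal contribution is $1/d$.

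For the diagonal case $i=j$, I would expand $(\vp_i^T \mA \vp_i)^2 = \sum_{k,l,m,n} A_{kl}A_{mn} p_{ik}p_{il}p_{im}p_{in}$ and apply Isserlis' theorem to the Gaussian four-tuple, yielding
\[
\mathbb{E}\bigl[(\vp_i^T \mA \vp_i)^2 \mid \mA\bigr] = \tfrac{1}{d^2}\bigl[\Tr(\mA)^2 + \|\mA\|_F^2 + \Tr(\mA^2)\bigr].
\]
Then $\mathbb{E}\|\mA\|_F^2 = d$ as above, while $\Tr(\mA) = \sum_{i,k} K_{ki}Q_{ki}$ and a short computation using the independence and covariance structure of $\mK,\mQ$ gives $\mathbb{E}[\Tr(\mA)^2] = 1$ and $\mathbb{E}[\Tr(\mA^2)] = 1$. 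Hence the diagonal contribution is $(d+2)/d^2 = O(1/d)$.

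Combining the two cases and summing over the $L\cdot H = O(1)$ heads and layers gives
\[
\mathbb{E}\!\left(\tfrac{1}{N}\sum_{l,h}\sum_{i\leq j}(\vp_i^T\mA\vp_j)^2\right) = \tfrac{1}{N}\cdot O(1)\cdot O(N^2/d) = O(N/d) = O(1),
\]
as desired. The only non-routine step is the diagonal case, since it requires the four-point Gaussian Wick expansion and a separate evaluation of $\mathbb{E}[\Tr(\mA^2)]$ and $\mathbb{E}[\Tr(\mA)^2]$; this is the main place where errors of a factor $d$ could creep in, but both quantities turn out to be $O(1)$ rather than $O(d)$, so the diagonal does not dominate.
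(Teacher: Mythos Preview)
Your proposal is correct and follows essentially the same route as the paper: bound each term $\mathbb{E}[(\vp_i^T \mA \vp_j)^2]$ by $O(1/d)$, split into the cases $i\neq j$ and $i=j$, then sum. The only difference is a matter of bookkeeping: the paper first uses that the entries of $\mA = \mK^T\mQ$ are pairwise uncorrelated (so cross terms in the index expansion vanish) and then averages over $\vp$, whereas you condition on $\mA$ first and apply Isserlis' theorem to the Gaussian four-point function in $\vp$, obtaining the trace expression $\tfrac{1}{d^2}[\Tr(\mA)^2 + \|\mA\|_F^2 + \Tr(\mA^2)]$ before averaging over $\mK,\mQ$. Both computations give the same $(d+2)/d^2$ for the diagonal case; your trace-based formulation is a bit more structured and would generalize more readily (e.g., to non-i.i.d.\ but still Gaussian $\vp$), while the paper's entrywise expansion is slightly more elementary.
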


\begin{proof}
    We begin by showing that the expectation of each term in the sum is $O(1/d)$ and hence the sum is bounded by a constant. There are two cases for the expectation of terms: (i) The first is $i \neq j$ when the vectors $\vp_i$ and $\vp_j$ are independent and the second is $i = j$ when $\vp_i$ and $\vp_j$ are dependent. 

    For this section, let $\mK, \mQ$ denote the matrices $\mK_{l,h} \mQ_{l,h}$ for any fixed $l, h$. Let $\mK_{ij}$ and $\mQ_{ij}$ denote the entry of the corresponding matrices $i$th column and $j$th row. Let $A = \mK^T\mQ \in \mathbb{R}^{d \times d}$. Note that the expectation of any entry of $A$, 
    \[
        \mathbb{E}[\mA_{ij}] = \mathbb{E}[\mK_{i}^T \mQ_j] = \mathbb{E}[\sum_{k=1}^{d}\mK_{i,k} \mQ_{j,k}] = 0.
    \]
    Further, 

    $$
    \mathbb{E}[\mA_{ij}^2] =\mathbb{E}[(\sum_{k=1}^{d}\mK_{i,k} \mQ_{j,k})^2] = \mathbb{E}[\sum_{k=1}^{d}\mK_{i,k}^2 \mQ_{j,k}^2] + 2\mathbb{E}[\sum_{m=1}^{d-1}\sum_{n=m+1}^{d} \mK_{i,m} \mQ_{j,m} \mK_{i,n} \mQ_{j,n} ] 
    $$
    $$
    = \mathbb{E}[\sum_{k=1}^{d}\mK_{i,k}^2 \mQ_{j,k}^2] = d\sigma^4 = \sigma^2
    $$
    For products of two different entries $\mA_{i,j}\mA_{mn}$, note that $\mathbb{E}[\mA_{i,j}\mA_{m,n}] =\mathbb{E}[\sum_{u=1}^{d}\sum_{v=1}^{d}\mK_{i,u} \mQ_{j,u}\mK_{m,v} \mQ_{n,v})]$ which is $0$ when $i \neq m$ or $j \neq n$.

    For each term $|\vp_i^T A \vp_j|^2$, we have
    \begin{align*}
    \mathbb{E}[|\vp_i^T A Q \vp_j|^2] =& \mathbb{E}[(\sum_{u=1}^{d}\sum_{v=1}^{d} \vp_{i, u} \mA_{u,v} \vp_{j,v})^2 ] \\
    =& \mathbb{E}[\sum_{u=1}^{d}\sum_{v=1}^{d} (\vp_{i, u} \mA_{u,v} \vp_{j,v})^2 ] + 2\mathbb{E}[\sum_{u,v \neq m,n} \vp_{i, u}\vp_{i, v}\mA_{u,v}\mA_{m,n}\vp_{j, v}\vp_{j,n}]\\
    =& \mathbb{E}[\sum_{u=1}^{d}\sum_{v=1}^{d} \vp_{i, u}^2 \mA_{u,v}^2 \vp_{j,v}^2 ] 
    \end{align*}
    since all terms of the form $\mathbb{E}[\vp_{i, u}\vp_{i, v}\mA_{u,v}\mA_{m,n}\vp_{j, v}\vp_{j,n}]$ are $0$ due to independence of $\mA$ and $\vp$.

    For $i \neq j$, we have 

    $$
    \mathbb{E}[\sum_{u=1}^{d}\sum_{v=1}^{d} \vp_{i, u}^2 \mA_{u,v}^2 \vp_{j,v}^2 ] = \sum_{u=1}^{d}\sum_{v=1}^{d} \mathbb{E}[\vp_{i, u}^2]\mathbb{E}[\mA_{u,v}^2]\mathbb{E}[\vp_{j, v}^2] = d^2 \sigma^{6} = \frac{1}{d}.
    $$

    For $i = j$, we have 

    $$
     \mathbb{E}[\sum_{u=1}^{d}\sum_{v=1}^{d} \vp_{i, u}^2 \mA_{u,v}^2 \vp_{i,v}^2 ] = \mathbb{E}[\sum_{u=1}^{d} \vp_{i, u}^4 \mA_{u,v}^2 ] + \mathbb{E}[\sum_{u=1}^{d}\sum_{v\neq u} \vp_{i, u}^2 \mA_{u,v}^2 \vp_{i,v}^2 ]
    $$
    $$
    = d (3\sigma^6) + (d^2 - d)\sigma^6 < \frac{3}{d}.
    $$

    Since $d = \Theta(N)$ and each term is less than $\frac{3}{d}$, we have that the sum in Eq.\ref{eq:expect_init} is $O(1)$.
    
\end{proof}

\subsection{Empirical Length Generalization of Positional Functions}\label{sec:empirical-pos-gen}

Here, we show empirically that, when directly fitting parameters so that a product $\vp_i \mK^T \mQ \vp_j$ reproduces some function $\phi(\cdot,\cdot)$ at smaller distances, it will length generalize when these are local or periodic, but under different conditions matching the role of local and periodic functions in our theory. 
Specifically, we show that they length-generalize well at large $d$ when they are \textsc{Local}; whereas,  when $d$ is smaller, length generalization works well when they are \textsc{Periodic}.
Length generalization is poor on functions that are neither local nor periodic.

\paragraph{Experimental Setup}

We randomly initialize 200 position embeddings $\{\vp_i  \in \mathbb{R}^d : 1 \leq i < 201\}$, as well as query and key matrices, $\mQ, \mK \in \mathbb{R}^{d\times d}$. We experiment with $d=\{32, 256\}$. We optimize the mean square error (MSE) between $\vp_i^T \mK^T \mQ \vp_j$ and $\phi(\cdot,\cdot)$ on length of 50, and test on length $\{50, 100, 150\}$. Concretely, during training, we sample random offsets $o$ from [0, 150] and take the sequence of $\vp_{1+o}, \dots, \vp_{50+o}$ to compute the loss. When testing on length $n$, we compute the average loss over all offsets in [0, 200-n]. We ignore the loss on those entries where $j < i$ to mimic causal masking. The $\phi(\cdot,\cdot)$ we use in experiments (except for the one combined from two $\phi(\cdot,\cdot)$) only takes two values, 0 when condition is false and $2\log50$ when condition is true. We thus use different condition to describe different $\phi(\cdot,\cdot)$. For example, we use $\phi:\texttt{j=i-c}$ to denote the following function:
\begin{equation}
    \phi(j,i) = \left\{
\begin{array}{ll}
      2\log50 & j=i-c \\
      0 & else\\
\end{array} 
\right. 
\end{equation}
where $c$ is a constant number.

The embeddings and weight matrices are trained with Adam optimizer, using batch size of 64, learning rate of 1e-3, for 15k steps. Additionally, we add mean squared weights (i.e., squared Frobenius norm divided by number of elements) to the loss to mimic training regularizer, with coefficient of 0.01.

Results are in Figure~\ref{fig:nice-func} and \ref{fig:non-nice-func}. Note that in both figures, y axis is using logarithmic scale above 1.0 and stays linear scale below 1.0. In the last column of \ref{fig:non-nice-func}, ``combined'' denotes functions that combine two functions as follows: $\phi = \phi_1 + \phi_2$, where $\phi_1:\texttt{j=i-c}$ and $\phi_2:$ \texttt{(i-j)=$c_2$ mod $c_1$}. %

\begin{figure}
    \centering
    \includegraphics[width=0.9\linewidth]{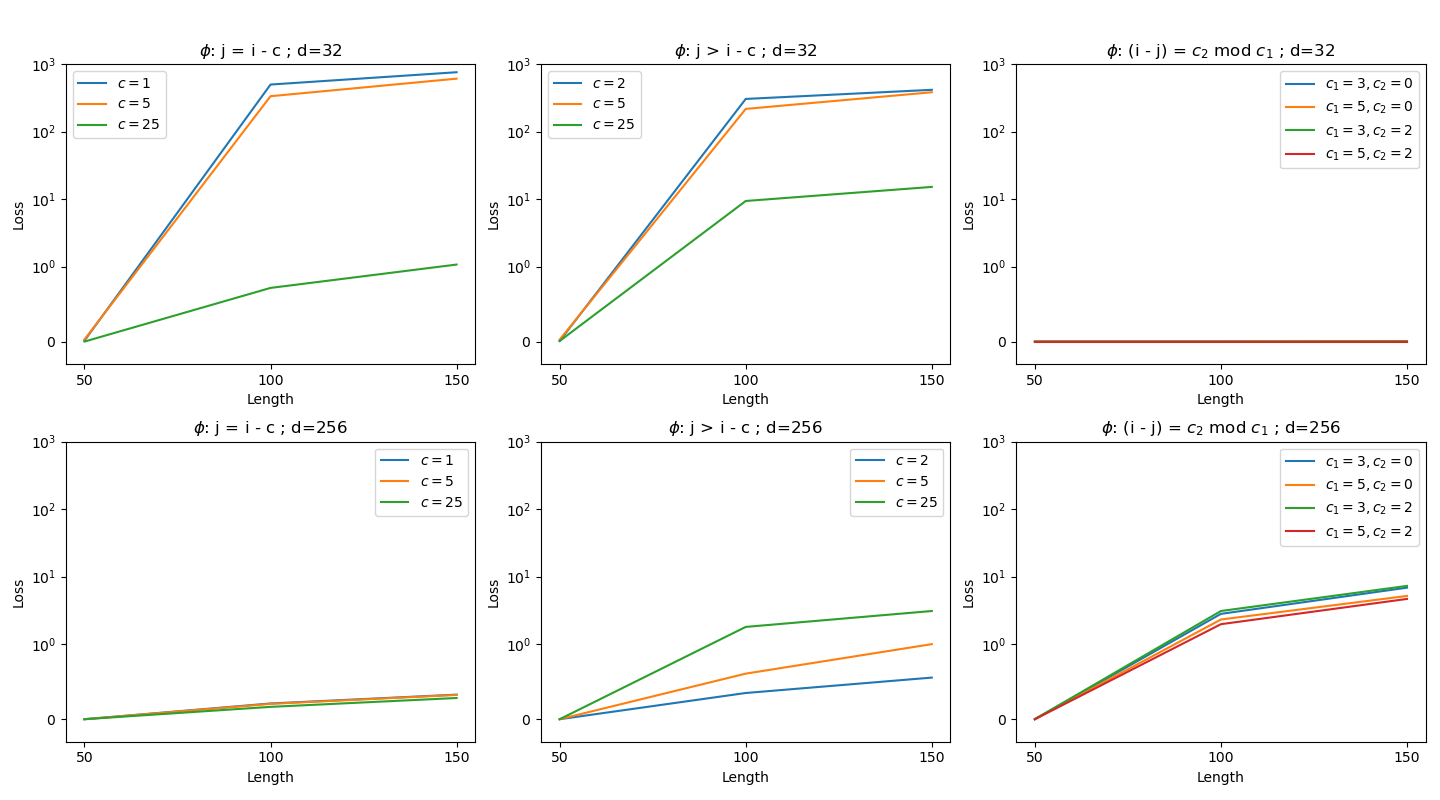}
    \caption{Appendix~\ref{sec:empirical-pos-gen}: MSE loss in fitting (length $= 50$) and generalizing (higher lengths) functions $\phi(\cdot,\cdot)$ with products $\vp_j^T \mK^T \mQ \vp_i$. We show local functions testing if $j=i-c$ (left), if $j > i-c$ (center), periodic functions testing whether $i-j \equiv c_2\ (\text{mod} c_1)$ (right). We show result at small (top, $d=32$) and high (bottom, $d=256$) dimensionality.
    Local functions length-generalize well when dimensionality is high (bottom left, bottom center); generalization is more successful with functions concentrated on few pairs (bottom left is nonzero at only one value of $j-i$; bottom center is nonzero at $c$ different values of $j-i$).
    Periodic functions length-generalize well when dimensionality is low (top right).
    The results match the distinct roles played by local and periodic functions in our theoretical constructions: 
    Periodic functions are mediated by bounded-rank products (Lemma~\ref{lemma:deducing-periodicity-fidi}), local functions are mediated by the products $\vp^T \mK_{l,h}^T \mQ \vp$.
    }
    \label{fig:nice-func}
\end{figure}

\begin{figure}
    \centering
    \includegraphics[width=0.9\linewidth]{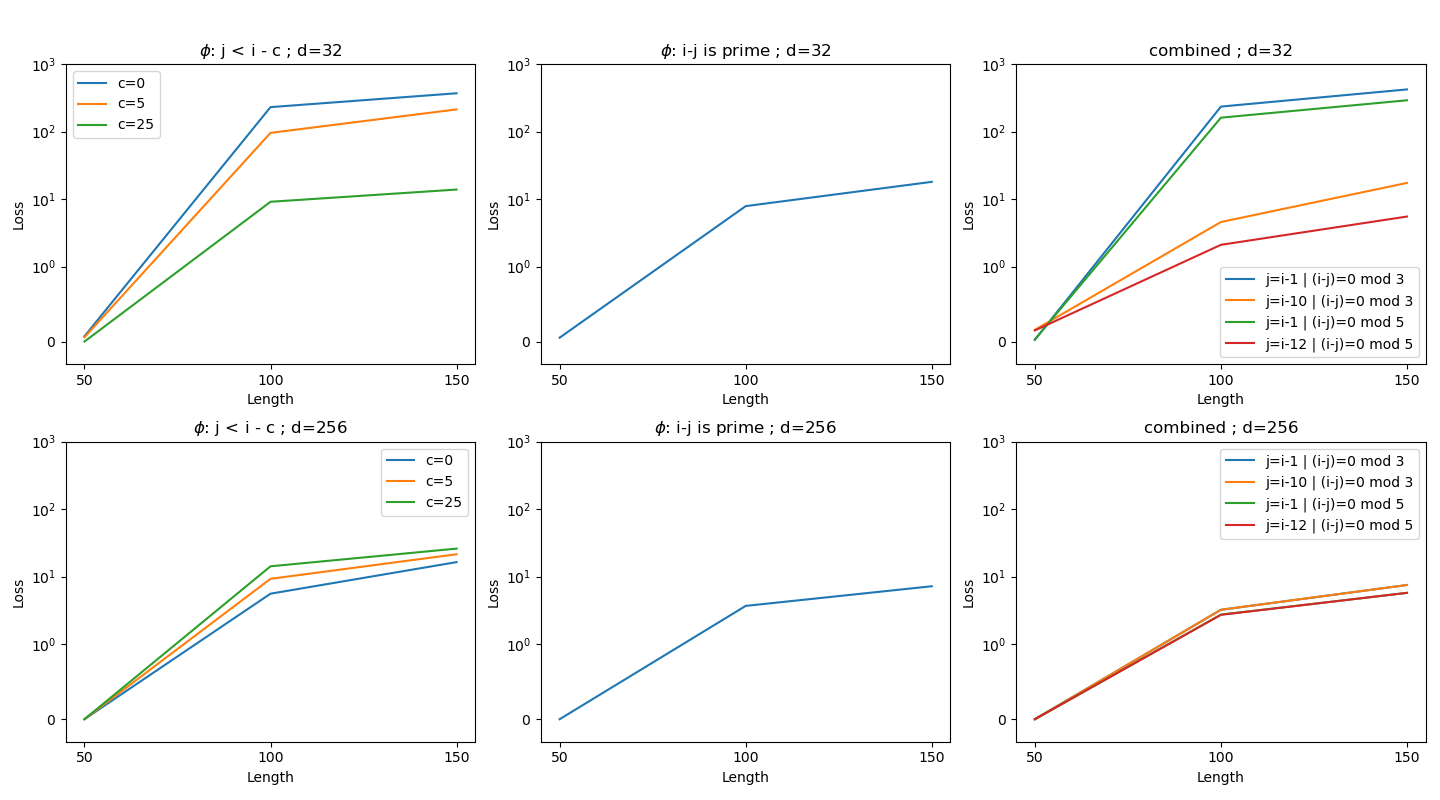}
    \caption{Appendix~\ref{sec:empirical-pos-gen}: 
    MSE loss in fitting (length $= 50$) and generalizing (higher lengths) functions $\phi(\cdot,\cdot)$ with products $\vp_j^T \mK^T \mQ \vp_i$. We show functions that are neither local nor periodic, which tests if $j<i-c$ (left), if $i-j$ is prime number (center), and a function created by adding a local and a periodic function (right). We show result at small (top, $d=32$) and high (bottom, $d=256$) dimensionality.
    Compared with results in Figure~\ref{fig:nice-func}, we can see that such functions, neither local nor periodic, length-generalize poorly.
    }
    \label{fig:non-nice-func}
\end{figure}

\subsection{Bound for Encodings Norm in Terms of Function Complexity}

Recall that our regularizer includes a penalty (\ref{eq:additional-penalty}) on attention dot products.
Here, we discuss a conjecture:

\begin{conjecture}\label{conj:nuclear-norm-removing}
    The term (\ref{eq:additional-penalty}) can be removed from the regularizer while maintaining a (potentially weaker) length generalization guarantee for {\LimitTransformer}s.
\end{conjecture}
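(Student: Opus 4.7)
The plan is to re-examine the proof of Theorem~\ref{thm:guarantee} and isolate the precise role of the penalty (\ref{eq:additional-penalty}), then show that an analogous (but possibly weaker) conclusion still follows from $\RegWeak$ alone. Inspecting Lemma~\ref{lemma:key-lemma}, the term (\ref{eq:additional-penalty}) enters in essentially one place: it renders $D_n(\tau) = \sum_{l,h} \sum_{i=1}^{\min(n,\tau)} |\phi_{l,h}^{(\tilde{T}_n)}(1,i)|^2$ uniformly bounded in $\tau$, so that extracting a subsequence limit yields a uniform locality parameter $\tau_\infty$ shared by the Limit Transformers $\tilde{T}_n$. Removing (\ref{eq:additional-penalty}) breaks this step, so the first task is to find a substitute that still forces locality in the limit.

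First, I would apply Lemma~\ref{lemma:limit-to-transformer} to produce, for every $f$ expressible by a \textsc{Local} and \textsc{Periodic} {\LimitTransformer}, an explicit competitor $T_n^\star \in \Theta_n$ with uniformly bounded $\RegWeak(T_n^\star)$. Since the Inference Procedure selects $T_n$, we obtain $\RegWeak(T_n) \leq \RegWeak(T_n^\star) + 1/n$, and hence uniform bounds on width, heads, layers, precision, spectral and Frobenius parameter norms, and, through Lemma~\ref{lemma:translation-positional}, on the \textsc{Periodic} period $\Delta$ of the associated Limit Transformers $\tilde{T}_n$. The only object left uncontrolled is the tail of each $\phi_{l,h}^{(\tilde{T}_n)}(1,\cdot)$.

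Next, I would try to bound this tail using a structural argument instead of a direct penalty. The translation-invariance requirement in Definition~\ref{def:hypothesis-class} implies that, for each $l,h$, the Gram matrix $G_{ij} := \vp_i^T \mK_{l,h}^T \mQ_{l,h} \vp_j$ is Toeplitz in $(i,j)$, and its spectral behavior is controlled by $\|\mK_{l,h}^T \mQ_{l,h}\|_{\operatorname{spectral}}$, $\max_i \|\vp_i\|_2$, and the effective dimension of $\operatorname{span}\{\vp_i\}$. The strategy is then to combine this Toeplitz structure with Lemma~\ref{lemma:nuclear-norm} and the fixed-precision rounding of $\phi_{l,h}$ to obtain an eventual bound of the form $\sum_j |\phi_{l,h}(1,j)|^2 \leq F(\RegWeak(T_n))$ for a universal function $F$, which is exactly what is needed to substitute for (\ref{eq:additional-penalty}) in the proof of Lemma~\ref{lemma:key-lemma}. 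With such a bound in hand, the rest of Theorem~\ref{thm:guarantee} would go through essentially verbatim.

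The principal obstacle is this Gram-matrix step: $\RegWeak$ constrains $\mK_{l,h}^T \mQ_{l,h}$ in spectral norm but not in rank, and a priori the positional span may have unbounded effective dimension as $n \to \infty$; the implicit-bias calculation of Appendix~\ref{app:reg-init} gives hope that this does not occur under standard initialization, but converting that probabilistic statement into a deterministic consequence of $\RegWeak$ is delicate. A natural fallback, which I expect may ultimately be necessary, is to settle for a genuinely weaker form of length generalization---for instance, agreement with $f$ on all inputs of length $\leq m^\alpha$ for some $\alpha < 1$ rather than $\leq m$, or agreement on all but a vanishing fraction of inputs---by replacing uniform locality with a quantitative $\ell^2$-to-$\ell^\infty$ bound on $\phi_{l,h}$ and invoking the communication-complexity framework of Theorem~\ref{thm:comm-comp-limit-transformer} to control the fraction of inputs on which $T_n$ and $f$ can disagree. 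This would match the conjecture's disclaimer that the resulting guarantee may be strictly weaker than Theorem~\ref{thm:guarantee}.
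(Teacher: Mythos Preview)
This statement is a \emph{conjecture} in the paper, not a theorem; the paper does not prove it but only sketches a heuristic argument. So your proposal should be compared against that heuristic, not against a proof.

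Your diagnosis of where the penalty (\ref{eq:additional-penalty}) enters Lemma~\ref{lemma:key-lemma} is accurate, and your plan to exploit the Toeplitz structure of $G_{ij} = \vp_i^T \mK_{l,h}^T \mQ_{l,h} \vp_j$ together with Lemma~\ref{lemma:nuclear-norm} is close in spirit to the paper's heuristic. However, the target you set---an $\ell^2$ bound $\sum_j |\phi_{l,h}(1,j)|^2 \leq F(\RegWeak(T_n))$, which would recover \textsc{Local}---is stronger than what the paper's heuristic aims for, and the nuclear-norm argument does not obviously deliver it. The paper instead conjectures only that the function $F(k) := \vp_1^T \mK_{l,h}^T \mQ_{l,h} \vp_{1+k}$ is \emph{ultimately periodic}: the idea is that if $F$ were not ultimately periodic, one could choose indices so that the associated Gram matrix has nuclear norm growing super-linearly in $n$, which via Lemma~\ref{lemma:nuclear-norm} would force $\|\mK_{l,h}^T \mQ_{l,h}\|$ to diverge, contradicting $\RegWeak$-boundedness. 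Fixed precision is used to rule out aperiodic limits with irrational periods. Ultimate periodicity is a genuinely weaker conclusion than locality---a nonzero period means $\phi_{l,h}$ is \emph{not} local---so the resulting {\LimitTransformer} would fall outside the current \textsc{Local} framework, and the ``weaker guarantee'' alluded to in the conjecture would presumably require enlarging the class of admissible $\phi_{l,h}$.

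Your fallback to a sublinear generalization length or an approximate guarantee is reasonable and consistent with the conjecture's disclaimer, but note that it is a different kind of weakening than the paper envisions: you weaken the \emph{conclusion} (length range or fraction of inputs), whereas the paper's heuristic weakens the \emph{structural class} (periodic rather than local $\phi_{l,h}$). Either route is plausible, but you should be explicit that your $\ell^2$-bound step is the part neither you nor the paper can currently close, and that aiming for ultimate periodicity may be the more natural intermediate goal given what Lemma~\ref{lemma:nuclear-norm} actually controls.
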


To provide a heuristic argument for this, assume that for each upper bound N on the input length, we have a configuration of positional encodings and the matrix $\mA$, such that the following property holds: For any  indices $N \geq j > i > 0$, let 
\begin{equation}
    \vp_i^T \mA \vp_j = F(j-i)
\end{equation} where $F : \mathbb{N} \rightarrow \mathbb{R}$ is a function that maps to numbers representable in $p$-bit precision. The function $F$ and the precision $p$ are chosen globally, across the different N's.
Boundedness of $\RegStrong(T_n)$ across $n$ entails $\|\vp_i\|_2, \|\mA\|\spectral < C$, for $C$ a global constant.
We also know that $\sup_{x\in\mathbb{N}} |F(x)| < \infty$.
We conjecture that one can use these assumptions, and Lemma~\ref{lemma:nuclear-norm}, to prove that F cannot be ``too complicated''. Specifically, we conjecture that $F$ will be \textbf{ultimately periodic}: when $x$ exceeds some threshold, $F(x+\Delta) = F(x)$ for some period $\Delta$.
For, if $F$ is not ultimately periodic, we hope to construct a matrix $G$ whose nuclear norm can be made arbitrarily large, so large as to give a superconstant lower bound on $\|\mA\|\spectral$ -- which is a contradiction.
First note that Lemma~\ref{lemma:nuclear-norm} even holds if $\mG = \mY^T \mA \mX$ where $\mX, \mY$ are two different matrices with $n$ unit-norm columns.
That is, we can consider a matrix $\mG_{ij} = \vp_{x_i}^T \mA \vp_{\vy_j}$ where we conjecture that one can choose $\vx_1, ..., \vx_n$ and $\vy_1, ..., \vy_n$ to give an arbitrarily large lower bound on $\|\mA\|\spectral$, under the assumption that $F$ is not ultimately periodic.
Here, it is important that $F$ maps to fixed-precision output; otherwise, one could get functions that have irrational periods and thus are not periodic when restricted to $\mathbb{N}$. %

\begin{lemma}\label{lemma:nuclear-norm}
Let $x_1, \dots, x_n \in \mathbb{R}^d$ be vectors with $\|x_i\|_2=1$, and let $A \in \mathbb{R}^{d\times d}$ arbitrary. Let $G \in \mathbb{R}^{n\times n}$ such that $G_{ij} = x_i^T A x_j$.
Then
\begin{equation}
    \|A\| \geq \frac{\|G\|_*}{n}
\end{equation}
where $\|A\|$ denotes the spectral norm.
\end{lemma}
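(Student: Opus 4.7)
The plan is to reparameterize $G$ as a sandwich around $A$. Let $X \in \mathbb{R}^{d \times n}$ denote the matrix whose columns are $x_1, \dots, x_n$, so that $G = X^T A X$. Since each column has unit $\ell_2$ norm, the Frobenius norm satisfies $\|X\|_F^2 = \mathrm{tr}(X^T X) = n$.

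Next, I would use the trace characterization of the nuclear norm via the SVD. Taking $G = U \Sigma V^T$ with $U, V \in \mathbb{R}^{n \times n}$ orthogonal and $\Sigma$ diagonal, we have $\|G\|_* = \mathrm{tr}(\Sigma) = \mathrm{tr}(U^T G V)$. Substituting $G = X^T A X$, this rewrites as
\begin{equation*}
\|G\|_* = \mathrm{tr}(U^T X^T A X V) = \sum_{i=1}^{n} (X u_i)^T A (X v_i),
\end{equation*}
where $u_i, v_i$ denote the $i$-th columns of $U, V$ respectively.

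Now I would bound each summand by the spectral norm: $(X u_i)^T A (X v_i) \leq \|A\|\, \|X u_i\|_2 \,\|X v_i\|_2$. Applying Cauchy-Schwarz to the resulting sum gives
\begin{equation*}
\|G\|_* \leq \|A\| \sum_{i=1}^{n} \|X u_i\|_2 \|X v_i\|_2 \leq \|A\| \sqrt{\textstyle\sum_i \|X u_i\|_2^2}\sqrt{\textstyle\sum_i \|X v_i\|_2^2}.
\end{equation*}
Here the orthogonality of $U$ and $V$ does the work: $\sum_i \|X u_i\|_2^2 = \mathrm{tr}(U^T X^T X U) = \mathrm{tr}(X^T X) = n$, and similarly for $V$. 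Combining, $\|G\|_* \leq n \|A\|$, which rearranges to the claim.

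There is no serious obstacle — the lemma is essentially an inequality between operator and trace norms of a congruence. The only subtle point is making sure that the factor of $n$ (and not $n^2$) comes out, which is precisely why one must apply Cauchy-Schwarz to the sum of products $\|X u_i\|_2 \|X v_i\|_2$ and then use that $\sum_i \|X u_i\|_2^2 = \|X\|_F^2$ by orthogonality of $U$, rather than bounding each factor separately by $\sqrt{n}$.
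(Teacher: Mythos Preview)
Your proof is correct and follows essentially the same approach as the paper's: both write $G = X^T A X$, invoke the SVD-based trace characterization $\|G\|_* = \mathrm{tr}(U^T G V)$, and then bound using Cauchy--Schwarz together with $\|X\|_F^2 = n$. The only cosmetic difference is that the paper packages the Cauchy--Schwarz step as the Frobenius inner-product inequality $\mathrm{tr}(P^T Q) \leq \|P\|_F\|Q\|_F$ combined with the submultiplicativity $\|AX\|_F \leq \|A\|\,\|X\|_F$, whereas you expand the trace as a sum over columns and apply Cauchy--Schwarz on vectors; these are the same computation.
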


\begin{proof}

First, note that for any matrix $B = U \Sigma V$, we have
\begin{equation}\label{eq:nuclear_trace}
    \|B\|_* = tr(\Sigma) = tr(\Sigma V V^T U^T U) = tr(U \Sigma V V^T U^T) = tr(BV^TU^T)
\end{equation}
where $V^TU^T$ has each singular value bounded by 1 (in fact, it's orthogonal). %
In other words, $\|B\|_{*} = tr(BM)$ where $M$ is an orthogonal matrix. 

For any two real-valued and possibly non-square matrices, we have

\begin{equation}\label{eq:mat_frobenius}
    tr(A^T B) \leq \|A\|_{F} \|B\|_{F}
\end{equation}

and by submultiplicativity, we have
\begin{equation}\label{eq:submulti}
    \|AB\|_{F} \leq \|A\| \|B\|_F
\end{equation}

Using Eq. \ref{eq:nuclear_trace} and the above two properties, it follows that,

$$
tr(X^T A X M) = tr(A X M X^T) \leq \|X^T A^T\|_{F} \|M X^T\|_{F} 
$$
$$
\leq \|AX\|_{F} \|M X^T\|_{F} \leq  \|A\|  \|X\|_F \|M\|  \|X\|_F \leq n \|A\|.
$$

\end{proof}

\subsection{Positional Abilities in NoPE vs APE}

\rebuttal{
A recent study by \cite{kazemnejad2024impact} found NoPE to work better than APE at length generalization and to theoretically simulate positional information -- an apparent contrast to our findings, which find APE to be more powerful than NoPE both theoretically and empirically (Figure 1).
There are a few important aspects that resolve this apparent contradiction.}
\begin{enumerate}
\item The APE setting in \cite{kazemnejad2024impact} is different from ours, as they used fixed sinusoidal functions \citep{vaswani2017attention}, whereas we follow \cite{zhou2023algorithms} in examining the potentially more expressive case of arbitrary learnable positional encodings. 
\item \cite{kazemnejad2024impact} trained APE transformers only on an initial segment of the full set of positions, so that the positional encodings at higher positions did not appear during training. This inherently puts APE at a potential disadvantage compared to our setting, where (in line both with \citet{zhou2023algorithms} and standard LLM training) all positional encodings appear during training.
\item In support of their empirical findings, \cite{kazemnejad2024impact} also showed theoretically that NoPE transformers can simulate positional information.
Specifically, they found (their Theorem 1) that NoPE transformers can compute an activation with value $1/t$ at position $t$, e.g. by attending uniformly and reading out the contribution coming from a BOS token.
Importantly, this is weaker than the ways in which APE transformers can use positional information.
For instance, distinguishing close-by positions, say $t-1$ and $t-2$, in such an encoding requires rapidly increasing parameter values as $t \rightarrow \infty$, as the values get arbitrarily close to $0$. In this sense, even a simple task such as an induction head, which requires attention from $t$ to $t-1$, requires parameter norms rapidly increasing with the input length\footnote{\cite{kazemnejad2024impact} use an MLP to convert $1/t$ to $t$. Such an MLP would need an increasing number of hidden units as the maximum context size $n$ grows, hence, $\|\mA\|_{F}$, $\|\mB\|_{F}$ growing with $n$. Hence, $\RegStrong(T_n) \rightarrow \infty$ by Definition~\ref{def:regularizer}. Such a construction is thus not predicted to length-generalize under our theoretical framework.}, not allowed under our inference procedure.
In our theoretical framework, NoPE is not predicted to length-generalize as well as APE on such a task.
Indeed, empirically, we find NoPE transformers not to perform well on the induction head task, and a variety of other tasks not expressible in $\CRASPempty$ (dotted lines in Figure~\ref{fig:results}) despite extensive hyperparameter search (Figure~\ref{fig:results} and Section~\ref{sec:comparing-ape-nope-induction-head}).
\end{enumerate}

\rebuttal{We further note that \cite{kazemnejad2024impact} also report strong performance for NoPE on a copying task (their Figure 3), in apparent contradiction to our finding that COPY is hard even for APE. However, note COPY in \cite{kazemnejad2024impact} had three variants: The first two just asked the model to replicate the number of tokens but not their identities, a task well in $\CRASPempty$, and indeed NoPE worked well (their Figure F.4).
Their third variant matched our COPY task and is not in {\CRASPpl}; neither NoPE nor APE generalize to 2x training length (their Figure F.4), in line with both our empirical and theoretical findings.}

\subsection{Conceptual Relation to Solomonoff Induction}

A standard approach to studying learning in algorithmic information theory, specifically Solomonoff Induction \citep{LiVitanyi2008introduction}, does not specifically ask for  identifying the correct function at some point, but rather aims to bound the overall number of mistakes.
In the case of Solomonoff Induction, computability of the underlying function guarantees that this number is finite.
Here, we show that our length generalization guarantee can also be viewed as bounding the number of mistakes made by the inference procedure:

\begin{defin}
    When applying the Inference Procedure from Definition~\ref{def:inference-procedure} on a target function $f$, we define the \emph{Number of Errors} as the summed numbers of inputs on which $T_n$ makes a mistake:
    \begin{equation}
    \operatorname{ErrorCount}(T_1, T_2, \dots) :=    \sum_{n=1}^\infty \sum_{x \in \mathfrak{S} : |x| \leq n} 1_{T_n(x,o) \neq f(x)}
    \end{equation}
\end{defin}
Then we can show the following variant of Theorem~\ref{thm:guarantee}:
\begin{thm}\label{thm:solomonoff}
Let $f \in \mathcal{F}(\Sigma)$.
Then the following are equivalent:
\begin{enumerate}
\item $f$ is expressible by a {\LimitTransformer} satisfying \textsc{Periodic} and \textsc{Local}.
\item (Guaranteed Length Generalization) %
Applying the Inference Procedure from Definition~\ref{def:inference-procedure}  to $f$ generates a sequence $T_1, T_2, \dots$ with $\sup_{n =1,2,3,\dots} {\RegStrong}(T_n) < \infty$ and $\operatorname{ErrorCount}(T_1, T_2, \dots) < \infty$.
\end{enumerate}
\end{thm}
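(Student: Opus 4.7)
}
The plan is to derive this statement as a direct corollary of Theorem~\ref{thm:guarantee}, exploiting the fact that the finite-error-count condition is essentially equivalent to the eventual-matching condition already characterized there. The key observation is that each erroneous prediction contributes at least one integer unit to $\operatorname{ErrorCount}$, so a finite total count forces all but finitely many $T_n$ to err on zero inputs; conversely, the remaining mistakes from the early $T_n$ are finite in number because the alphabet $\Sigma$ is finite.

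For the direction 2 $\Rightarrow$ 1, I would start from the assumption that some sequence $T_1, T_2, \dots$ generated by the Inference Procedure has both $\sup_n \RegStrong(T_n) < \infty$ and $\operatorname{ErrorCount}(T_1, T_2, \dots) < \infty$. Since each inner sum $\sum_{x \in \mathfrak{S}, |x| \leq n} 1_{T_n(x,o) \neq f(x)}$ is a non-negative integer, finiteness of the outer sum forces all but finitely many of these inner sums to vanish. Hence there exists $N_0$ such that, for every $m > N_0$, $T_m$ matches $f$ on all inputs of length $\leq m$. Combined with the bounded regularizer, this is exactly condition 2 of Theorem~\ref{thm:guarantee}, so applying its 2 $\Rightarrow$ 1 direction yields a {\LimitTransformer} satisfying \textsc{Periodic} and \textsc{Local} that expresses $f$.

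For the direction 1 $\Rightarrow$ 2, I would invoke Theorem~\ref{thm:guarantee} to obtain, for any sequence produced by the Inference Procedure, both $\sup_n \RegStrong(T_n) < \infty$ and a threshold $N_0$ beyond which every $T_m$ matches $f$ on all inputs of length $\leq m$. For $n > N_0$ the corresponding inner sum vanishes. For each $n \leq N_0$, the set of relevant input-offset pairs $\{(x,o) : x \in \mathfrak{S}, |x| \leq n, |x|+o \leq n\}$ has cardinality at most $(n+1) \cdot |\Sigma|^n$ and is therefore finite, bounding the contribution of $T_n$. Summing $N_0$ such finite contributions yields $\operatorname{ErrorCount}(T_1, T_2, \dots) < \infty$, and $\sup_n \RegStrong(T_n) < \infty$ is inherited verbatim from Theorem~\ref{thm:guarantee}.

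There is essentially no technical obstacle once Theorem~\ref{thm:guarantee} is available; the only mildly subtle point is fixing the intended quantification over the offset $o$ in the definition of $\operatorname{ErrorCount}$. Under the natural reading (summing over all valid $o$ with $|x|+o \leq n$, consistent with how ``matches $f$'' is used in Theorem~\ref{thm:guarantee}), both arguments above go through unchanged, since the finite-alphabet bound on the number of bad pairs applies for each fixed $n$ regardless of how offsets are aggregated.
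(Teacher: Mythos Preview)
Your proposal is correct and takes essentially the same approach as the paper: the paper's proof simply observes that, in view of Theorem~\ref{thm:guarantee}, it suffices to note that $\operatorname{ErrorCount}(T_1,T_2,\dots)<\infty$ is equivalent to the existence of an $N_0$ beyond which every $T_m$ matches $f$ on all inputs of length $\leq m$. Your write-up is slightly more explicit (spelling out why each direction of that equivalence holds, and flagging the unbound offset $o$ in the definition), but the underlying reduction is identical.
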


\begin{proof}
    In view of Theorem~\ref{thm:guarantee}, it suffices to note that the following two conditions are equivalent:
    \begin{enumerate}
        \item $\operatorname{ErrorCount}(T_1, T_2, \dots) < \infty$
        \item There is some $N_0$ such that, for all $m > N_0$, $T_m$ matches $f$ on all inputs of any length $k \leq m$.
    \end{enumerate}
\end{proof}

\subsection{The Role of Translation Invariance}\label{app:translation-invariance}

The hypothesis class (Definition~\ref{def:hypothesis-class}) assumes that product functions involving positional encodings are translation invariant:    That is, every product function involving exactly one positional encoding is constant  across positions, and for every $1 \leq i, j, i+\Delta, j+\Delta \leq n$,
    \begin{equation}\label{eq:translation-invariance}
        \vp_i^T \mM_1 \dots \mM_k \vp_j = \vp_{i+\Delta}^T \mM_1 \dots \mM_k \vp_{j+\Delta}
    \end{equation}
    whenever $\mM_1 \dots \mM_k$ is a product of parameter matrices linking the input layer, such as $\mK_{1,h}^T \mQ_{1,h}$, $\mV_{2,h}^T \mK_{3,h'}^T \mQ_{3,h'} \mV_{1,h''}$, and similar.

Translation invariance formalizes the idea that the algorithm employed by the transformer is the same independently of the offset. However, it is not explicitly enforced in standard training.
Here, we justify this assumption as follows:
\begin{enumerate}

    \item We theoretically show that translation invariance is beneficial for length generalization, by describing a sequence of transformers $T_n$ that violate it and that fail to length generalize on a simple induction head task (Section~\ref{sec:invariance-theory}).
    
    \item We empirically show that trained transformers often exhibit translation invariance, both in transformers that we trained from scratch on algorithmic problems, and in GPT-2 (Section~\ref{sec:invariance-trained}).

\end{enumerate}

We conjecture that translation-invariance happens when the training signal itself is offset-invariant:
On the algorithmic problems, our theoretical and experimental setups assume that each training example is presented with a random offset; hence, the target function is translation-invariant.
In the case of language modeling, the task is also approximately translation-invariant, because -- except for the beginning of a document -- the structure of language (e.g., its grammar) is likely to be largely independent of absolute position. Hence, we expect that the behavior of the model will be approximately translation-invariant.
Thus, with an offset-invariant target function, we conjecture that standard training implicitly favors translation-invariance of the transformer's algorithm.
Understanding this theoretically, in a theoretical account of SGD dynamics and generalization on multi-layer transformers, is an interesting problem for future research.

\subsubsection{Non-Offset-Invariant Transformers Failing to Length-Generalize}\label{sec:invariance-theory}

Here, we theoretically show on a simple induction head task how offset invariance is beneficial for length generalization.
We construct a sequence $(T_n)_n$ of transformers violating offset invariance that each compute the induction head task at lengths $\leq n/2$, but fail to compute it at length $n$:

\begin{proposition}
    There is a function $f \in \mathcal{F}$ and a sequence $T_n$ of transformers violating offset-invariance, where $n$ operates at lengths $\leq n$, each product function $\vp_i \mK_{l,h}^T \mQ_{l,h} \vp_j$ is local for $\tau=2$ where $\sup_n \mathcal{R}(T_n)<\infty$ and each $T_n$ matches $f$ at lengths $\leq n/2$, but no $T_n$ matches $f$ at length $n$.
\end{proposition}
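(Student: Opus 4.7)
The plan is to take $f$ as the induction-head function from Equation~(\ref{eq:induct}), which Section~\ref{sec:induction_circuit} shows is computed by a \LimitTransformer{} satisfying \textsc{Periodic} and \textsc{Local}, and to construct each $T_n$ as a \emph{sabotaged} copy of that transformer. The sabotage is a third-layer gadget that fires if and only if the \SOS{} token is at absolute position $1$ and the query sits at absolute position $n$. Since \SOS{} sits at absolute position $1$ exactly when the offset $o=0$, and absolute position $n$ appears in an input of length $\leq n/2$ only when $o = n/2$, these two conditions hold simultaneously only when $|x| = n$ and $o = 0$, which is precisely the failure event the proposition requires, while leaving length-$\leq n/2$ behaviour untouched.

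More concretely, I would add one attention head in layer~3 whose key-query interaction is purely token-based, so that its positional product $\vp_i^T \mK^T \mQ \vp_j \equiv 0$ (trivially $\tau=2$-local), and whose softmax concentrates almost all mass on the unique \SOS{} key via a large token-token alignment. With an appropriate value matrix $\mV$ writing into an unused block of the residual stream, this head deposits $\mV \vp_{o+1}$ at every query position. A two-unit Heaviside MLP then implements a logical AND: one unit has input row aligned with $\mV \vp_1$ to detect $o = 0$, the other has input row aligned with $\vp_n$ to detect the query being at absolute position $n$; the conjunction is routed through $\mB$ and $\mU$ and added to the output logits with magnitude large enough to flip the $\operatorname{argmax}$.

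Verification then reduces to four checks. (a) \emph{Matching at $|x| \leq n/2$}: if $o > 0$ the first detector fails because $\mV \vp_{o+1} \neq \mV \vp_1$, and if $o = 0$ every query has absolute index $\leq n/2 < n$ so the second detector fails; either way the sabotage is silent and the underlying induction head returns $f(x)$. (b) \emph{Failure at $|x| = n$}: at $o = 0, i = n$ both detectors fire, perturbing the output. (c) \emph{Locality}: layers~1--2 already use $\tau = 1$-local products, and the sabotage head contributes zero positional product. (d) \emph{$\sup_n \mathcal{R}(T_n) < \infty$}: depth, head count, MLP hidden dimension, $\max_{l,h} \operatorname{rank}(\mV_{l,h})$, and all spectral/Frobenius/$\ell_2$ norms are $O(1)$ in $n$; the width $d$, which $\mathcal{R}$ does not penalise, may grow so that $\{\vp_i\}$ is kept near-orthogonal at unit norm; and the penalty~(\ref{eq:additional-penalty}) is bounded because each $\vp_1^T \mK_{l,h}^T \mQ_{l,h} \vp_j$ is nonzero for $O(1)$ values of $j$ by $\tau = 2$-locality. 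Offset-invariance is blatantly violated: the MLP inner product $\vw^T \vp_i$ with $\vw$ aligned with $\vp_n$ is not translation-invariant in $i$, yet such non-key-query products are unconstrained by the proposition's hypothesis.

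The principal obstacle is achieving sharp finite-precision discrimination inside the Heaviside MLP: the two hidden units must fire on exactly $\vp_n$ (resp.\ $\mV \vp_1$) and on no other $\vp_i$ (resp.\ $\mV \vp_j$). This forces the $\vp_i$ to be sufficiently pairwise-separated, necessitating a growing width $d$, and one must verify that the bounded-rank $\mV$ still separates $\mV \vp_1$ from $\mV \vp_j$ for $j \neq 1$. A secondary technicality is ensuring the $\log |x|$-scaled softmax focuses tightly enough on the unique \SOS{} key that the transported vector lies within the Heaviside margin of the ideal $\mV \vp_{o+1}$; because \SOS{} is unique in every $x \in \mathfrak{S}$, this is a routine scaling calculation.
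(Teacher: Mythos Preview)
Your approach is valid and genuinely different from the paper's. The paper does not bolt a sabotage gadget onto a correct induction head; instead it builds a two-head construction in which the heads \emph{swap roles} depending on absolute position. In layer~1, head~1 realises $\vp_i^T\mK^T\mQ\vp_j=2\delta_{i,j-1}$ for $j\leq 3n/4$ but $2\delta_{i,j-2}$ for $j>3n/4$, while head~2 does the reverse around the threshold $n/4$; in layer~2, whichever head is currently using the wrong lag is neutralised by routing its attention to \SOS{} (which contributes zero value). For $|x|\leq n/2$ exactly one head is always active and correct; at $|x|=n$ both contribute and the wrong lag leaks into the output. Thus in the paper the violation of translation-invariance lives \emph{inside} the $\vp_i^T\mK^T\mQ\vp_j$ products themselves (still $\tau=2$-local, but not constant along diagonals), whereas in your construction those products are perfectly translation-invariant and the violation is pushed into $\beta$-type products such as $(\mA_3)_{s,\cdot}\vp_i\approx\delta_{i,n}$. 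Your route is more modular and attaches to any base transformer; the paper's route better shows that a non-adversarial, ``reasonable-looking'' division of labour already fails.

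There is one real gap in your write-up. Two separate Heaviside units, one detecting $o=0$ and the other detecting ``query at absolute position $n$'', cannot be combined into a conjunction by the linear maps $\mB_3$ and $\mU$ alone. Concretely: at $|x|\leq n/2$ with $o=0$ the first unit fires at every query position while the second never does, so any nonzero linear routing of the first unit already perturbs the output and breaks the match with $f$ on short inputs; symmetrically, at $|x|=n/2$, $o=n/2$ the second unit fires at the last position while the first does not. The fix is immediate: use a \emph{single} Heaviside unit whose input row reads both the deposited $\vu$-block and the positional block, computing $\langle\vp_1,\vp_{o+1}\rangle+\langle\vp_n,\vp_{\text{query}}\rangle$ with bias $-3/2$; with orthonormal $\vp_i$ this fires iff both indicators equal~$1$. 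Your flagged concern about the bounded-rank value matrix is then resolved by taking $\mV_{3,1}=\vu\,\vp_1^{T}$ (rank one, unit spectral norm), since $\mV_{3,1}\vp_j=\langle\vp_1,\vp_j\rangle\vu$ already isolates $j=1$ without needing $\mV$ to be injective on the full positional span.
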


This is an important contrast to the translation-invariant setup, where such a sequence $T_n$ where (i) $\sup_n \RegStrong(T_n) < \infty$ and (ii) all $\vp_i^T \mK^T \mQ \vp_j$ are local for a single $\tau<\infty$, will necessarily match $f$ at length $n$ when $n$ is large.
This property is used in showing our length generalization guarantee.
This property is not available when translation invariance is violated, exemplifying that offset invariance is theoretically beneficial for length generalization.

\begin{proof}
    We take $f$ to compute a simple induction head task (Section~\ref{sec:induction_circuit}). There is a simple translation-invariant construction: In Layer 1, each position collects the token appearing at the previous condition; in Layer 2, each position $j$ attends to previous positions $i$ that had followed the token also appearing at $j$.
This construction performs correctly across input lengths, reflecting length generalization.

Here, we describe a transformer performing this task while violating translation invariance, and \emph{failing to length-generalize}.
We use two heads, one performing ``correctly'' at smaller indices; the other one performing ``correctly'' at large indices.
We note that this version is more complex and not minimal; we conjecture that translation-invariant solutions will generally tend to be simpler for $\CRASP$-expressible problems.
We consider an alphabet $\Sigma = \{SOS, \sigma_1, \dots, \sigma_k\}$.
For each $n$, we define a transformer $T_n$ as follows.
We define the encodings:
\begin{align*}
    \ve_{SOS} =& \left(\begin{matrix}1\\0 \in \mathbb{R}^{k} \\ 0\in \mathbb{R}^{k} \\ 0\in \mathbb{R}^{k} \\ 0\in \mathbb{R}^{k} \\ 0 \in \mathbb{R}^{n}\end{matrix}\right) &
    \ve_{\sigma_i} =& \left(\begin{matrix}0 \\ e_i  \in \mathbb{R}^{k} \\  0 \in \mathbb{R}^{k} \\ 0 \in \mathbb{R}^{k} \\ 0\in \mathbb{R}^{k} \\ 0 \in \mathbb{R}^{n}\end{matrix}\right) &
    \vp_i =& \left(\begin{matrix}0 \\  0 \\  0 \\  0 \\ 0\in \mathbb{R}^{k} \\  e_i \in \mathbb{R}^{n}\end{matrix}\right)
\end{align*}
where $e_i \in \mathbb{R}^n$ is the $i$-th one-hot vector.
We define $\vp_i$ and $\mK_{1,h}$, $\mQ_{1,h}$ by:
\begin{align*}
    \vp_i^T K_{1,1}^T Q_{1,1} \vp_j = \begin{cases} 2\delta_{i,j-1} & 1 \leq j \leq \frac{3n}{4} \\
    2\delta_{i,j-2} & else 
    \end{cases} \\
    \vp_i^T K_{1,2}^T Q_{1,2} \vp_j = \begin{cases} 2\delta_{i,j-2} & 1 \leq j \leq \frac{n}{4} \\
    2\delta_{i,j-1} & else 
    \end{cases} 
\end{align*}
This is \textbf{not translation-invariant}: Informally, one of the two heads takes the role of copying from the preceding position for smaller indices; the other one takes it for large indices.
We set
\begin{align*}
    \mV_{1,1} \cdot \left(\begin{matrix}...\\ \ve_i\\ ...\\ ...\\ ... \\ ...\end{matrix}\right) = \left(\begin{matrix}0 \\ 0 \\ \ve_i \\ 0 \\ 0\\ 0\end{matrix}\right) \\
    \mV_{1,2} \cdot \left(\begin{matrix}...\\ \ve_i\\ ...\\ ...\\ ... \\ ... \end{matrix}\right) = \left(\begin{matrix}0 \\ 0 \\ 0 \\ \ve_i\\ 0 \\ 0\end{matrix}\right) \\
\end{align*}
  Now, given an input of the form $x_1 x_2 \dots \in \Sigma^*$ (where $x_1 = \$$), and an offset $o$, we have
\begin{align*}
    \vy_i^{(1)} = %
    \left(\begin{matrix} 0 \\ \ve_{x_{i}}  \\ \ve_{x_{{i}-1}}  \\ \ve_{x_{{i}-2}}  \\ 0  \\ \ve_{i+o}\end{matrix}\right) & \text{ if } i+o < \frac{n}{4} & \ \ \ \ \ \ \ \ 
    \left(\begin{matrix} 0 \\ \ve_{x_{i}} \\ \ve_{x_{{i}-1}} \\ \ve_{x_{{i}-1}} \\ 0 \\ \ve_{i+o}\end{matrix}\right) & \text{ if } \frac{n}{4} \leq i+o \leq \frac{3n}{4} & \ \ \ \ \ \ \ \ 
    \left(\begin{matrix} 0 \\ \ve_{x_{i}} \\ \ve_{x_{{i}-2}} \\ \ve_{x_{{i}-1}} \\ 0\\ \ve_{i+o}\end{matrix}\right) & \text{ if } \frac{3n}{4} < i+o \leq n \\
\end{align*}
Then, we set $\mK_{2,1}^T \mQ_{2,1}$ and $\mK_{2,2}^T \mQ_{2,2}$ to satisfy:
\begin{align*}
    (\vy_i^{(1)})^T \mK_{2,1}^T \mQ_{2,1} \vy_j^{(1)} &= \begin{cases}
        10 & x_i = \$; j+o < \frac{3n}{4} \\
        0 & else
    \end{cases} + \begin{cases}
        1 & (\vy_i^{(1)})_3 = (\vy_j^{(1)})_2 \\
        0 & else
    \end{cases} \\
    (\vy_i^{(1)})^T \mK_{2,2}^T \mQ_{2,2} \vy_j^{(1)} &= \begin{cases}
        10 & x_i = \$; j+o \geq \frac{3n}{4} \\
        0 & else
    \end{cases} + \begin{cases}
        1 & (\vy_i^{(1)})_4 = (\vy_j^{(1)})_2 \\
        0 & else
    \end{cases}
\end{align*}
The value matrices are given as ($h=1,2$):
\begin{equation}
    \mV_{2,h} \left(\begin{matrix}... \\ \ve_i \\ ... \\ ... \\ .... \\ ....\end{matrix}\right) = \left(\begin{matrix}0 \\ 0 \\ 0 \\ 0 \\ \ve_i \\ 0\end{matrix}\right)
\end{equation}
By design, $\sup_n \RegStrong(T_n) < \infty$.

Due to the design of the token embeddings, attention to the SOS token contributes a zero output. 
Hence, at query position $j$, the first head provides the output
\begin{align*}
   & \sum_{i=1+o, \dots, j} \frac{\exp\left(\begin{cases}
        10 & x_{i} = \$; j+o < \frac{3n}{4} \\
        0 & else
    \end{cases} + \begin{cases}
        10 & (\vy_i^{(1)})_3 = (\vy_j^{(1)})_2 \\
        0 & else
    \end{cases}\right)}{\text{Normalization}} \mV_{2,h} \vy_j^{(2)} \\
    =   &  \sum_{i=1+o, \dots, j} \frac{\exp\left(\begin{cases}
        20 & i=1; j+o < \frac{3n}{4} \\
        10 & \ve_{x_{i-1}} = \ve_{x_{j}}, i+o<\frac{3n}{4} \\
        10 & \ve_{x_{i-2}} = \ve_{x_{j}}, i+o \geq \frac{3n}{4} \\
        0 & else
    \end{cases}\right)}{\text{Normalization}} \mV_{2,h} \vy_j^{(2)}
\end{align*}
whereas the second head provides the output:
\begin{align*}
  &  \sum_{i=1, \dots, j} \frac{\exp\left(\begin{cases}
        10 & x_{i} = \$; j+o \geq \frac{3n}{4} \\
        0 & else
    \end{cases} + \begin{cases}
        1 & (\vy_i^{(1)})_4 = (\vy_j^{(1)})_2 \\
        0 & else
    \end{cases}\right)}{\text{Normalization}} \mV_{2,h} \vy_j^{(2)} \\
    =   &  \sum_{i=1+o, \dots, j} \frac{\exp\left(\begin{cases}
        20 & i=1; j+o \geq \frac{3n}{4} \\
        10 & \ve_{x_{i-2}} = \ve_{x_{j}}, i+o < \frac{n}{4} \\
        10 & \ve_{x_{i-1}} = \ve_{x_{j}}, i+o \geq \frac{n}{4} \\
        0 & else
    \end{cases}\right)}{\text{Normalization}} \mV_{2,h} \vy_j^{(2)}
\end{align*}
If the input length is $\leq \frac{n}{2}$, this simplifies as follows:
If $j+o < \frac{3n}{2}$, the first head's contribution is dominated by SOS (which leads to zero value vector), and an overall near-zero contribution.
The second head contributes the frequencies of symbols appearing immediately after prior occurrences of $x_j$ (\ref{eq:induct}).
On the other hand, if $j+o \geq \frac{3n}{2}$, the reverse situation holds; the second head makes a near-zero contribution, and the first head contributes (\ref{eq:induct}).
Hence, $T_n$ computes (\ref{eq:induct}) when the input length is $\leq \frac{n}{2}$.

However, for longer inputs, this does not hold any more: if, say, $o=0$ and $j = \frac{3n}{4}$, the second head additionally has contributions counting, at positions where $i < \frac{n}{4}$, how often a symbol appeared \emph{two positions} after a symbol matching $x_j$.
Hence, $T_n$ performs correctly at lengths $\leq \frac{n}{2}$, but not at longer lengths.
\end{proof}

\subsubsection{Translation-Invariance in Trained Transformers}\label{sec:invariance-trained}

Here, we provide empirical evidence that translation invariance, even though it is not explicitly enforced in standard training, may be \emph{approximately satisfied in trained transformers}. In this section, we aim to check, empirically, how often or how much the product 
\begin{equation}
\vp^T_i \mK^T_{l,h} \mQ_{l,h} \vp_j
\end{equation}
is translation-invariant as a function of $(i,j)$.
We conduct experiments for both  models trained on algorithmic problems and a  transformer language model trained on real-world data.

\paragraph{Transformers trained on algorithmic problems}
We select five small transformers which generalize well on algorithmic tasks. They are trained on BINARY MAJORITY, BINARY MAJORITY INTERLEAVE, MAJORITY, SORT, and COPY UNIQUE (defined in Section~\ref{sec:task-definitions-algorithmic}). 
For a given head $(l,h)$, we assemble each product $\vp^T_i \mK^T_{l,h} \mQ_{l,h} \vp_j$ into a matrix $\mM := \mP^T \mQ^T_{l,h} \mK_{l,h} \mP \in \mathbb{R}^{n\times n}$.\footnote{Our experiments use a standard transformer implementation including layer norm, whereas our theoretical analysis disregards layer norm (see Appendix~\ref{app:layer-norm}). Layer norm can in principle have a nontrivial impact on the attention logit dot products.
We accounted for this, by fixing the variance term in layer norm as its average value across a big amount of input data (around 240k tokens for small transformers and 400k tokens for GPT-2), thus making it a linear operation. Effectively, we are thus able to account for layer norm by applying it as a linear operation to the columns of $\mP$.} 
Then such a product is \textbf{translation-invariant} if and only if the entries are \emph{constant along each sub-diagonal}\footnote{Due to causal masking, super-diagonal entries are irrelevant, see Definition~\ref{def:hypothesis-class}.}:
\begin{equation}\label{eq:app-offset-invariance}
    \vp^T_i \mK^T_{l,h} \mQ_{l,h} \vp_j = \vp^T_{i+\Delta} \mK^T_{l,h} \mQ_{l,h} \vp_{j+\Delta}\ \ \ \ \forall 1 \leq i \leq j \leq j+\Delta \leq n
\end{equation}
Hence, we quantified the deviance from translation-invariance by calculating the \textbf{offset variance} $\text{Var}_{offset}$, the average variance along each sub-diagonal,
 as follows:
\begin{align}
    \overline{\text{Diag}_c} = &\frac{1}{N-c} \sum_{i=c}^{N-1} \mM_{i(i-c)} \\
    \text{Var}_{offset} =& \frac{2}{N(N+1)} \sum_{i=0}^{N-1} \sum_{j=0}^i (\mM_{ij} - \overline{\text{Diag}_{i-j}})^2
    \label{eq:offset-variance1}
\end{align}
We calculate the $\text{Var}_{offset}$ for each head in these models, and show a histogram of the values across heads in Figure \ref{fig:product-toy-model}. 
Most of the heads have very small variance, and a single head across these models has variance around 8.

\begin{wrapfigure}{r}{0.3\textwidth}
    \centering
        \centering
        \includegraphics[width=0.33\textwidth]{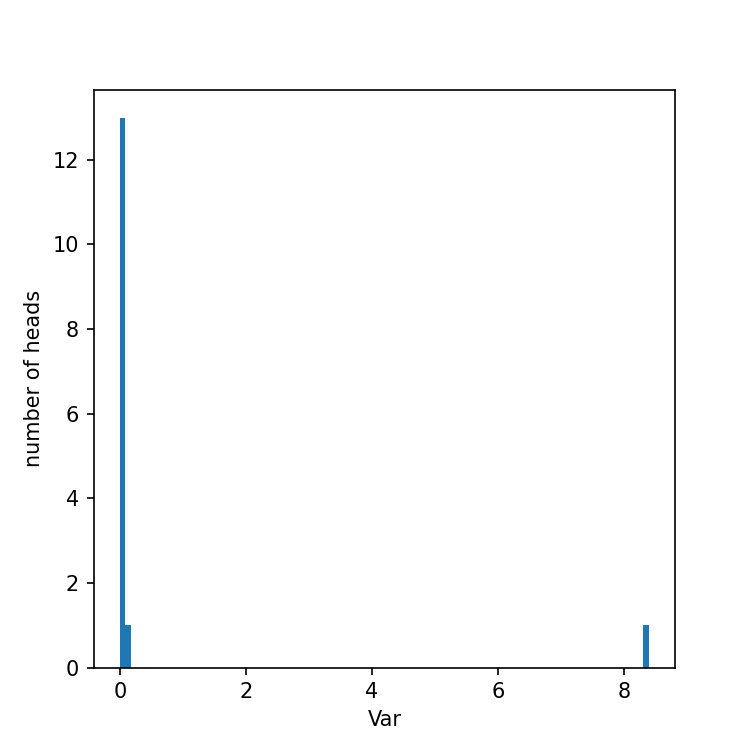} %
        \label{fig:histogram-toymodels}
    \caption{
    Product functions across five small transformers trained on algorithmic tasks.
    Histogram of heads by their  offset variance measured by $\text{Var}_{offset}$ (Eq.~\ref{eq:offset-variance1}), for all heads from five small transformers trained on algorithmic problems. 
    Most heads have very low Offset Variance; we investigate these in Figure~\ref{fig:product-toy-model-maps}.
    One head across the five models has a higher variance; we investigate it in  Figure~\ref{fig:product-head00-uniquecopy}.
    }
    \label{fig:product-toy-model}\label{fig:offset-var-hist2-toy-model}
\end{wrapfigure}
We next investigated the behavior of these heads in greater detail (Figure~\ref{fig:product-toy-model-maps}--\ref{fig:weights-toy-model}).
Figure~\ref{fig:product-toy-model-maps} provides examples for the first two bins in Figure~\ref{fig:product-toy-model}.
The head in (a) shows an entirely uniform pattern, translation-invariant by definition.
More interestingly, the head in (b), from the transformer trained on BINARY MAJORITY INTERLEAVE, shows a perfectly periodic pattern reflecting the parity of $j-i$:
\begin{equation}
\vp_i \mK^T \mQ \vp_j = \begin{cases}
            \approx 2 & j-i \text{ even} \\
            \approx -1.5 & j-i \text{ odd}
        \end{cases}
\end{equation}
Intuitively, this pattern allows the head to specifically attend to positions at an \emph{even} distance from the query position, which is important for solving BINARY MAJORITY INTERLEAVE.
As it only depends on $j-i$, this satisfies (\ref{eq:app-offset-invariance}) and the pattern is translation-invariant, in accordance with our hypothesis class in Definition~\ref{def:hypothesis-class}.\footnote{In the asymptotic regime of $n \rightarrow \infty$, our Inference Procedure prefers transformers where product functions $\vp_i^T \mK^T \mQ \vp_j$ are local, and thus cannot be periodic over unbounded distances. Such periodic relations are instead, asymptotically, predicted to be covered by product functions additionally involving components whose rank is penalized by $\mathcal{R}(T_n)$, such as $\vp_i^T \mK^T \mQ \mV \vp_j$. Our theory applies in the asymptotic limit, and thus predicts that as $n\rightarrow \infty$ (and thus $d \rightarrow \infty$), this periodic pattern might instead be taken over by such products. An extension of our theory that accounts for training dynamics, beyond an idealized inference procedure, may be able to predict in which settings which kind of product will represent periodic relations.}
Finally, in Figure \ref{fig:product-head00-uniquecopy}, we study the single head which has $\text{Var}_{offset} \approx 8$. 
Subfigure (a) shows the overall pattern of $\mP^T \mQ^T_{l,h} \mK_{l,h} \mP$.
It is not translation-invariant, but still shows similarity to a comparable translation-invariant pattern in (b): $\vp_i \mK^T \mQ \vp_j$ is large when $j-i$ is small or very large, and small in between.
In order to understand the variability of the contributions that inner products $\vp_i^T \mK^T \mQ \vp_j$ make to attention weights across different offsets, we measure the variance of post-softmax weights across context windows. Specifically, given a window size $w<N$, we apply softmax to $\mP_{[:,o:o+w]}^T \mQ^T_{l,h} \mK_{l,h} \mP_{[:,o:o+w]}$ together with causal masking, where $o\leq N-w$ is the offset.  We denote the post-softmax matrix when offset is $o$ as $\mM^{(o)}$. 
Subfigure (c) shows this matrix at different offsets $o$.
Throughout, the softmax output is focused on the immediately preceding position:
\begin{equation}
    (\mM^{(o)})_{ij} = \begin{cases}
        \approx 0.8 & i-1=j \\
        < 0.2 & \text{else}
    \end{cases}
\end{equation}
Thus, the product function makes an approximately translation-invariant contribution to attention weights.
Overall, the transformers trained on algorithmic problems show approximate translation-invariance even though this is not enforced as part of standard initialization or training.

\begin{figure}
    \centering
    \begin{subfigure}[b]{0.32\textwidth}
        \centering
        \includegraphics[width=\textwidth]{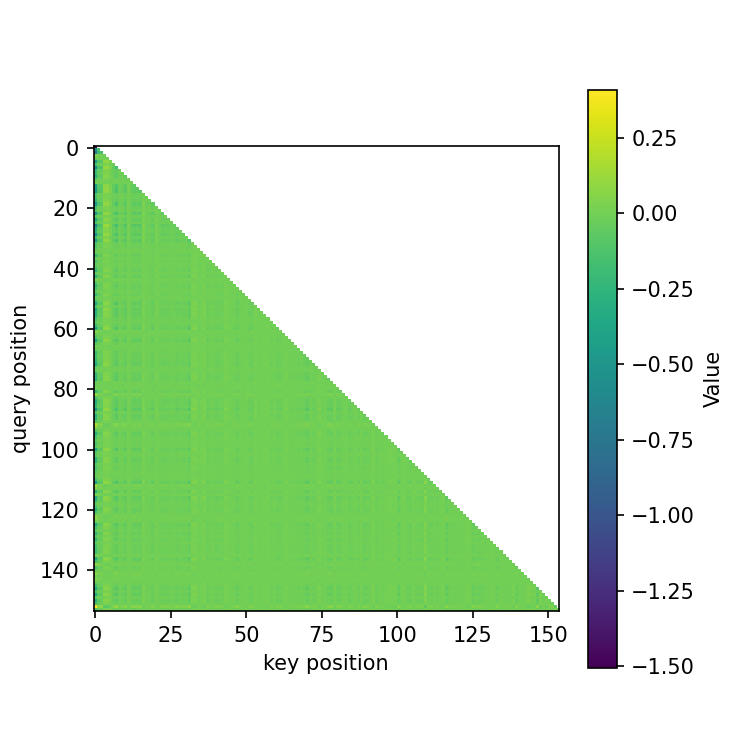} %
        \caption{}
    \end{subfigure}
    \begin{subfigure}[b]{0.32\textwidth}
        \centering
        \includegraphics[width=\textwidth]{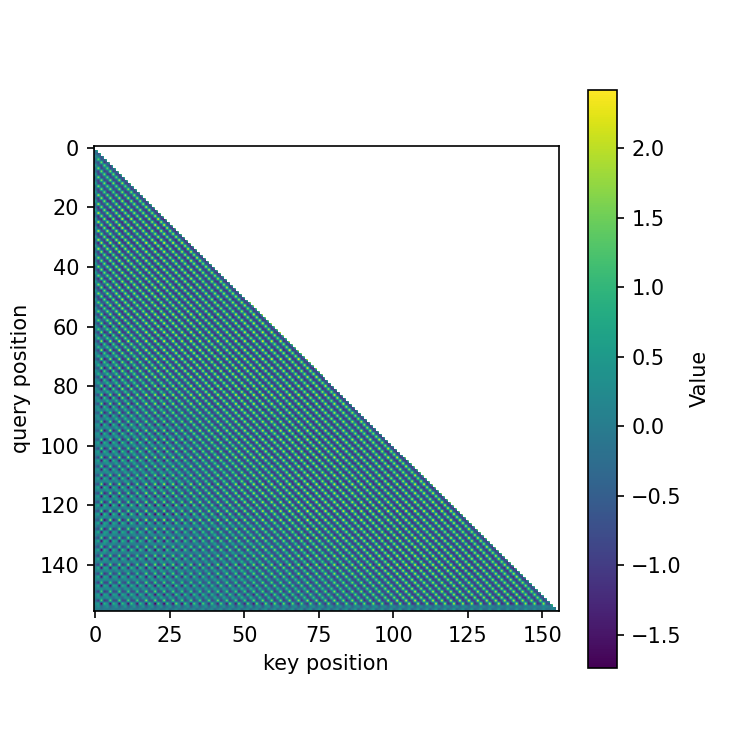} %
        \caption{}
        \label{fig:product-head03-interleave}
    \end{subfigure}
    
    \caption{
    Product functions across five small transformers trained on algorithmic tasks.
    (a) $\mP^T \mQ^T_{l,h} \mK_{l,h} \mP$ of a head whose variance is 0.001 (Figure~\ref{fig:product-toy-model} a).
    (b) $\mP^T \mQ^T_{l,h} \mK_{l,h} \mP$ of a head whose variance is 0.098 (second bin in Figure~\ref{fig:product-toy-model} (a)). This head represents an almost perfectly periodic pattern, depending only on the parity of $j-i$.
    All heads with Offset Variance close to 0 look qualitatively like (a) or (b).
    We investigate the higher-variance head in Figure~\ref{fig:product-head00-uniquecopy}.
    }
    \label{fig:product-toy-model-maps}
\end{figure}

\begin{figure}
    \centering
    \begin{subfigure}[b]{0.32\textwidth}
        \centering
        (a)
        \includegraphics[width=\textwidth]{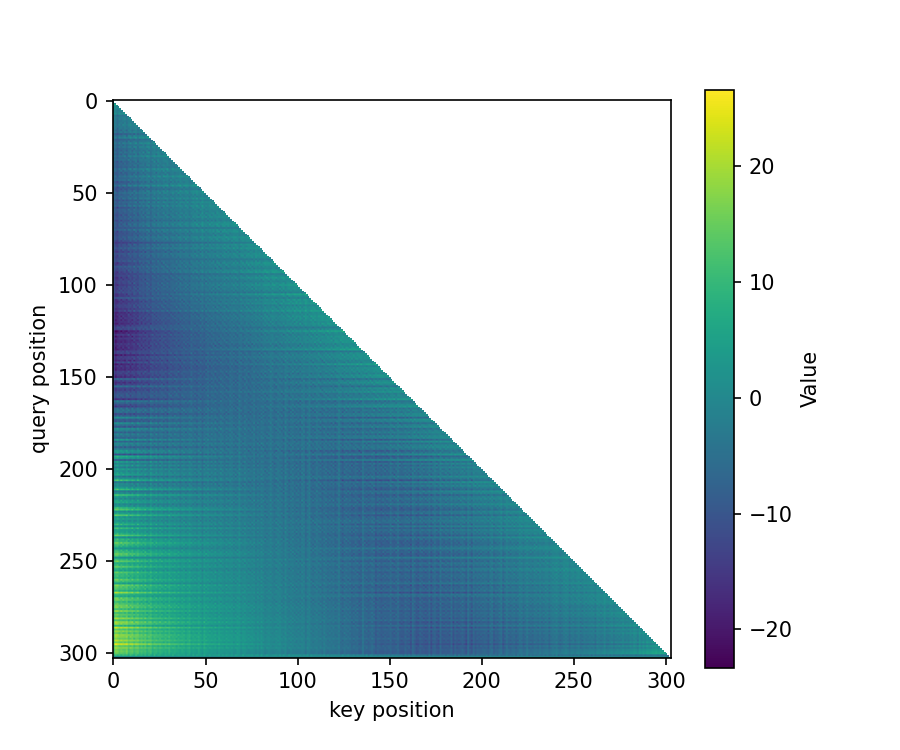} %
    \end{subfigure}
    \begin{subfigure}[b]{0.32\textwidth}
        \centering
        (b)
        \includegraphics[width=\textwidth]{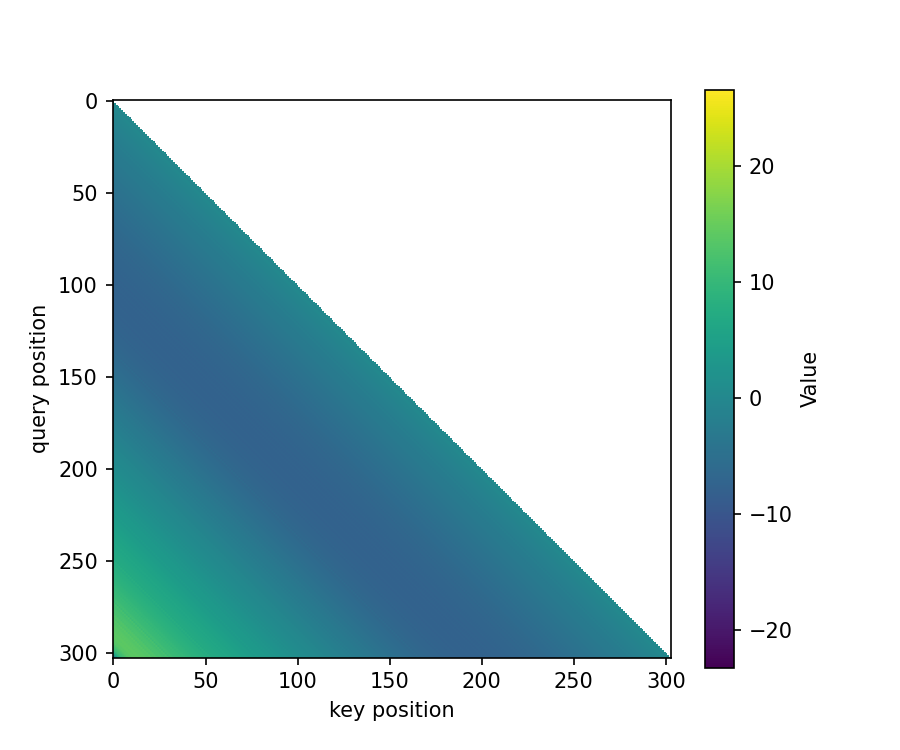} %
    \end{subfigure}

    \ \ 
    
    \ \ 

    \ \ 
    
(c)

\begin{subfigure}[b]{0.285\textwidth}
        \centering
        \includegraphics[width=\textwidth]{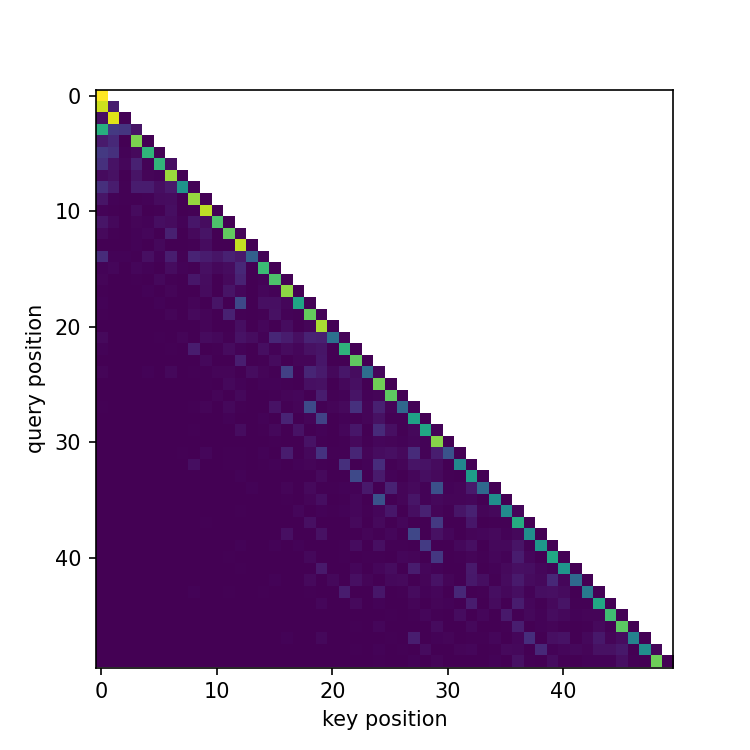} 
    \end{subfigure}
    \begin{subfigure}[b]{0.285\textwidth}
        \centering
        \includegraphics[width=\textwidth]{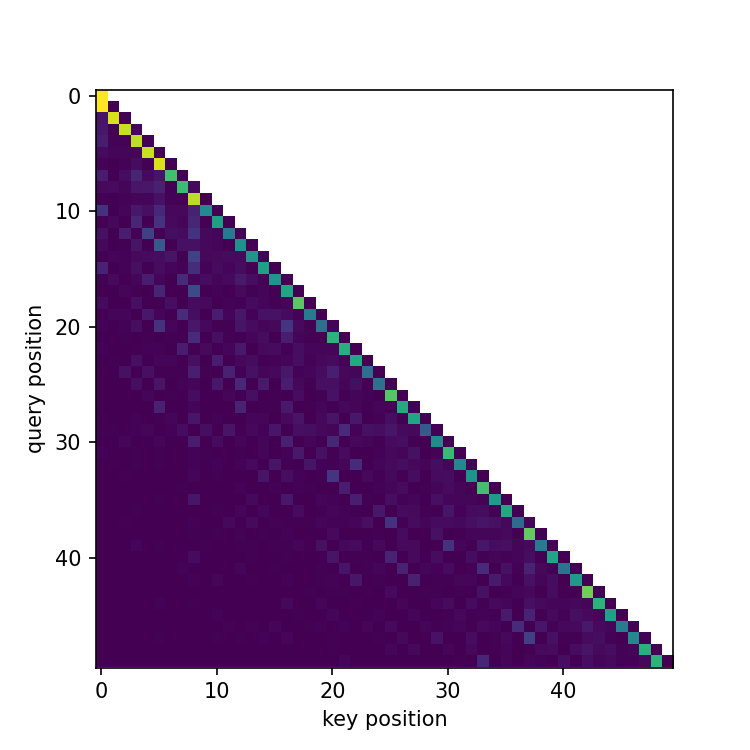} 
    \end{subfigure}
    \begin{subfigure}[b]{0.285\textwidth}
        \centering
        \includegraphics[width=\textwidth]{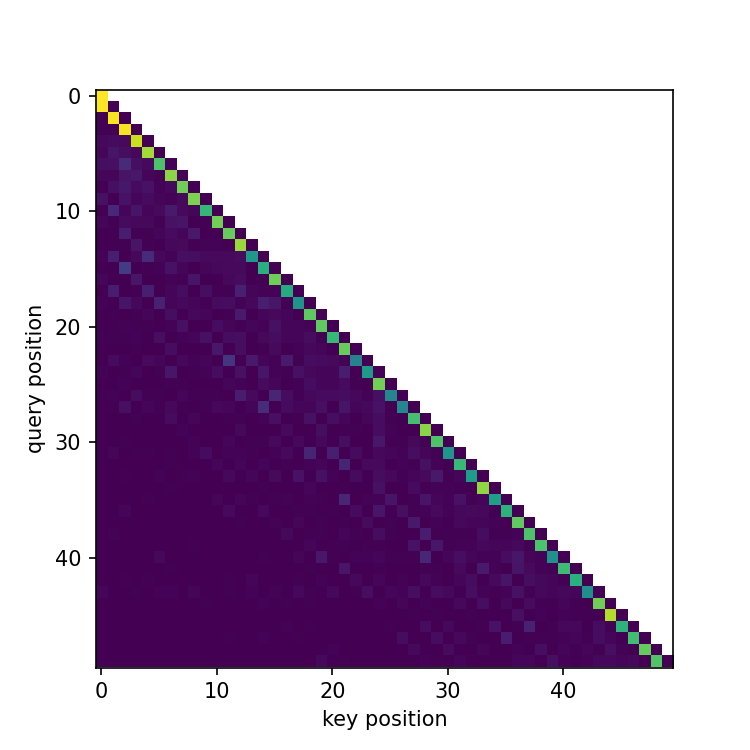} 
    \end{subfigure}
    
    \begin{subfigure}[b]{0.285\textwidth}
        \centering
        \includegraphics[width=\textwidth]{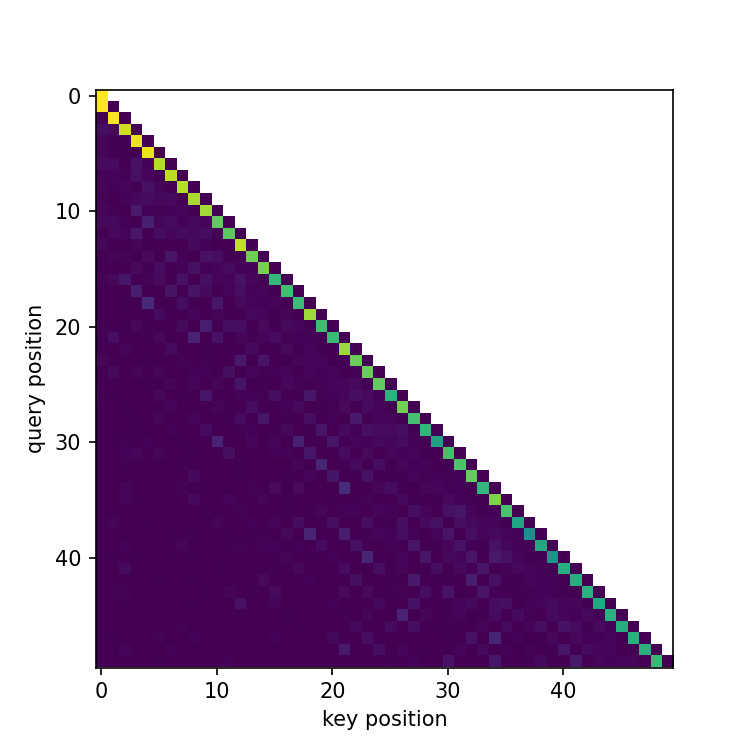} 
    \end{subfigure}
    \begin{subfigure}[b]{0.35\textwidth}
        \centering
        \includegraphics[width=\textwidth]{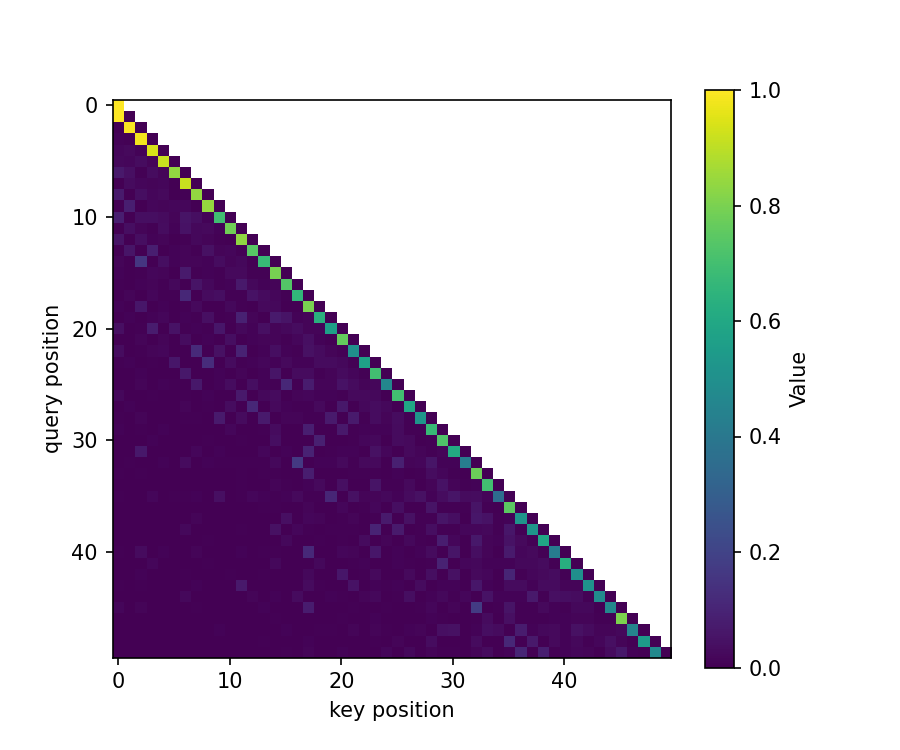} 
    \end{subfigure}

    \caption{
    Head 0,0 in a small transformer trained for COPY UNIQUE.
    This is the head whose variance is 8.39 (the single head in the largest bin in Figure \ref{fig:product-toy-model}).
    (a) $\mP^T \mQ^T_{l,h} \mK_{l,h} \mP$. Values are high when $j-i$ is small or very large, and low in between.
    (b) For comparison, we show a similar but perfectly translation-invariant matrix. We replace values $\mP^T \mQ^T_{l,h} \mK_{l,h} \mP$ with their average value along the diagonal. The pattern is qualitatively similar to (a), highlighting how (a) is similar to though does not fully satisfy translation-invariance.
    (c) Heatmaps showing the post-softmax matrices $M^{(o)}$, where $o=$ 0, 40, 80, 120, 160 respectively, and at a context window $w=50$.
    Across offsets, the largest contribution at position $t$ is to the immediately preceding position $t-1$; that is, the entries of the post-softmax matrix $\mM^{(o)}$ are largely determined by the relative distance $j-i$.
Hence,  while the overall function is not exactly translation-invariant, the resulting attention behavior is approximately offset-independent, and implements attention to the immediately preceding position.
    }
    \label{fig:product-head00-uniquecopy}\label{fig:weights-toy-model}
\end{figure}

\paragraph{Results on an LM: GPT-2 Small}
\begin{wrapfigure}{r}{0.4\textwidth}
    \centering
        \centering
    \includegraphics[width=0.33\textwidth]{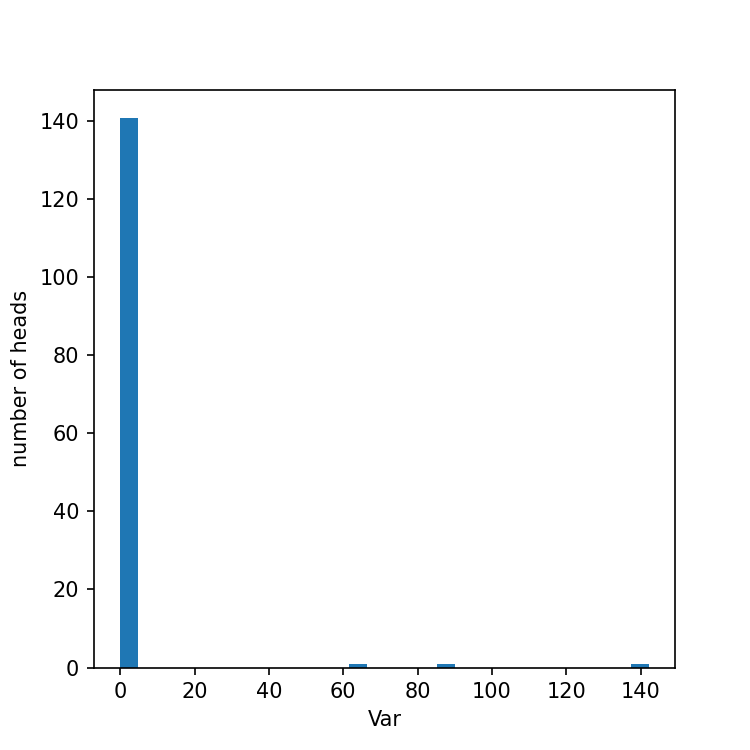}

    \caption{Product Functions in GPT-2 Small: 
    Histograms of Offset Variance measured. %
    Heads with variance close to 0 have approximately uniform values (Figure~\ref{fig:product-gpt2}a).
    Heads with higher variances are shown in Figure~\ref{fig:product-gpt2} (b--d).
    }
    \label{fig:histogram-gpt2}\label{fig:offset-var-hist2-gpt2}
\end{wrapfigure}
We conduct the same calculation on GPT-2 Small \citep{radford2019language}, a language model with 12 layers, and 12 heads per layer, which has served as a common object of study in the mechanistic analysis of language models \citep{conmy2023towards}.
The results are shown in Figure \ref{fig:histogram-gpt2} and \ref{fig:product-gpt2}.
Again, there are heads producing near-constant products, and heads producing a focus on positions $i$ with small distance $j-i$.
In addition to products of the form $\vp_i \mK^T \mQ \vp_j$, we also inspected more complex products including value matrices\footnote{GPT-2 Small uses additional output matrices, e.g. $\mO_{l,h}\mV_{l,h}$, corresponding to the value matrices in our theoretical analysis.}, finding these to generally be close to zero and thus also translation-invariant. %
Overall, we have found product functions in GPT-2 Small to display approximately offset-invariant behavior, again even though this is not enforced in training or initialization.

\begin{figure}
    \centering
    \begin{subfigure}[b]{0.23\textwidth}
        \centering
        (a)
        
        \includegraphics[width=\textwidth]{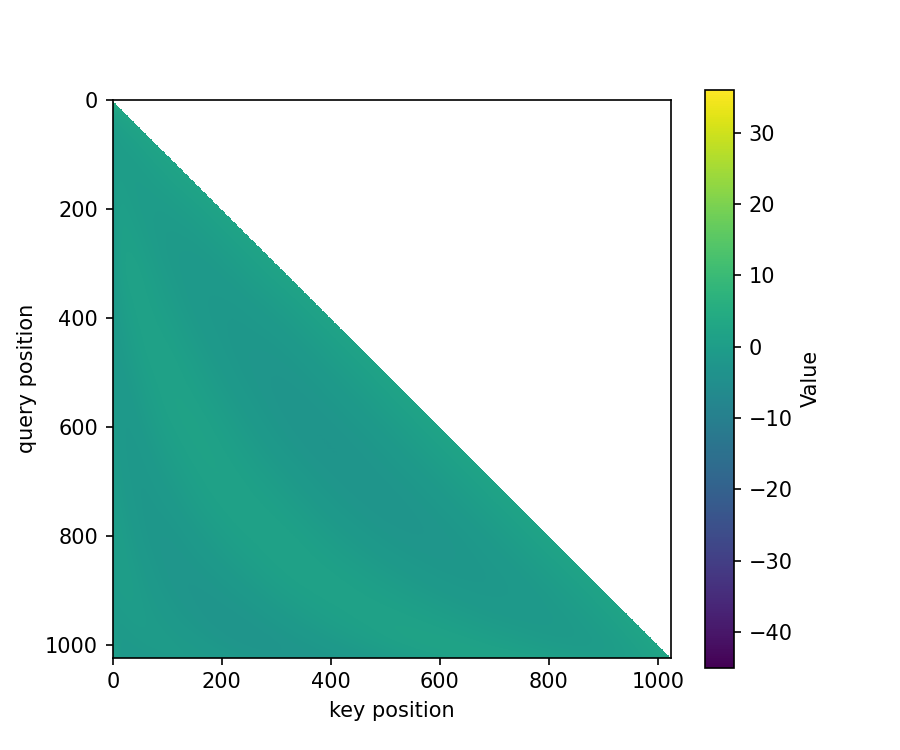} %
    \end{subfigure}
    \begin{subfigure}[b]{0.23\textwidth}
        \centering
(b)

        \includegraphics[width=\textwidth]{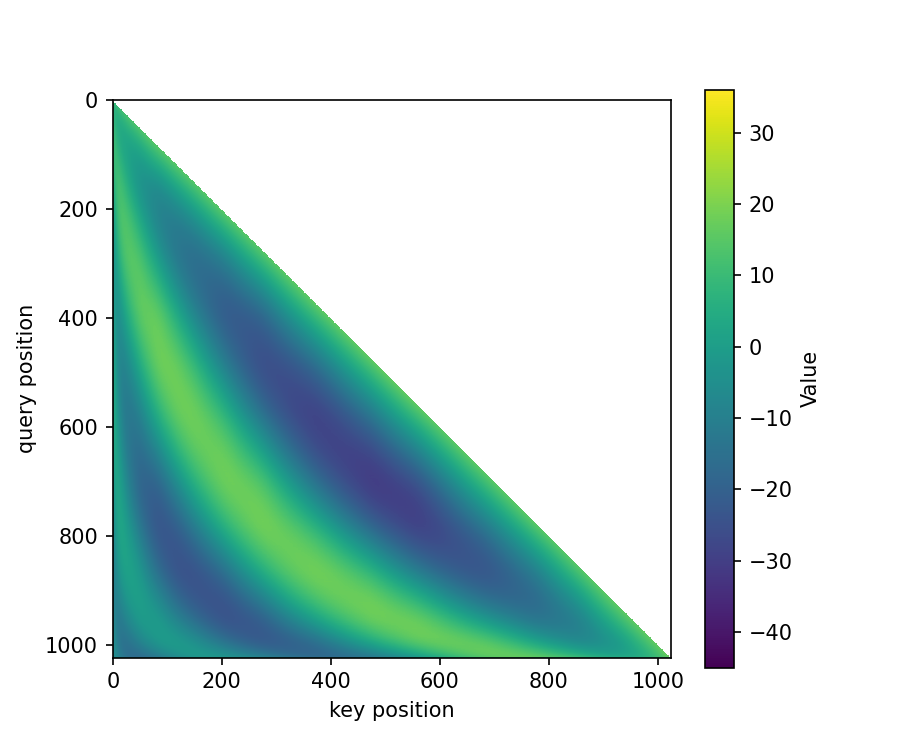} %
    \end{subfigure}
    \begin{subfigure}[b]{0.23\textwidth}
        \centering
        (c)
        
        \includegraphics[width=\textwidth]{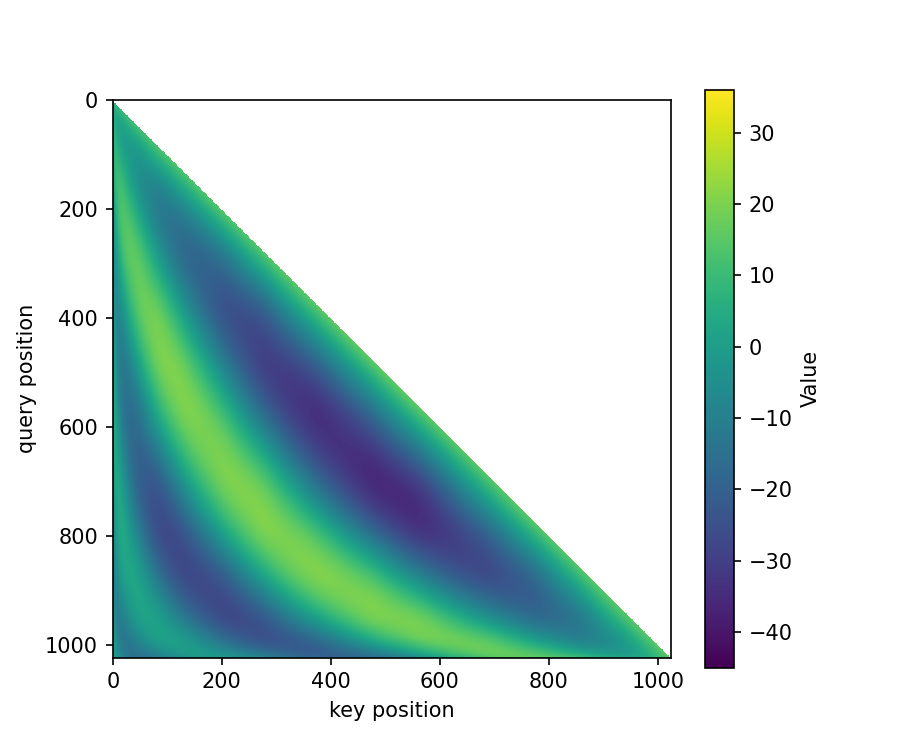} %
    \end{subfigure}
    \begin{subfigure}[b]{0.23\textwidth}
        \centering
        (d)
        
        \includegraphics[width=\textwidth]{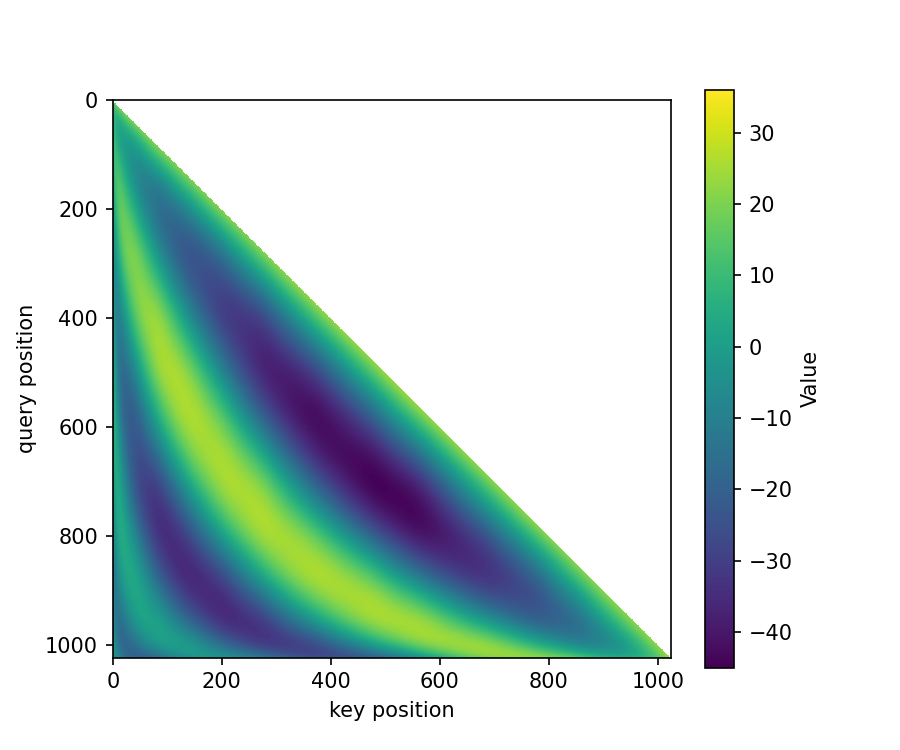} %
    \end{subfigure}

    \ \ 
    
    \ \ 

    \ \ 
    
(e)

\ 

 \centering
    \begin{subfigure}[b]{0.285\textwidth}
        \centering
        $o=0$
        
        \includegraphics[width=\textwidth]{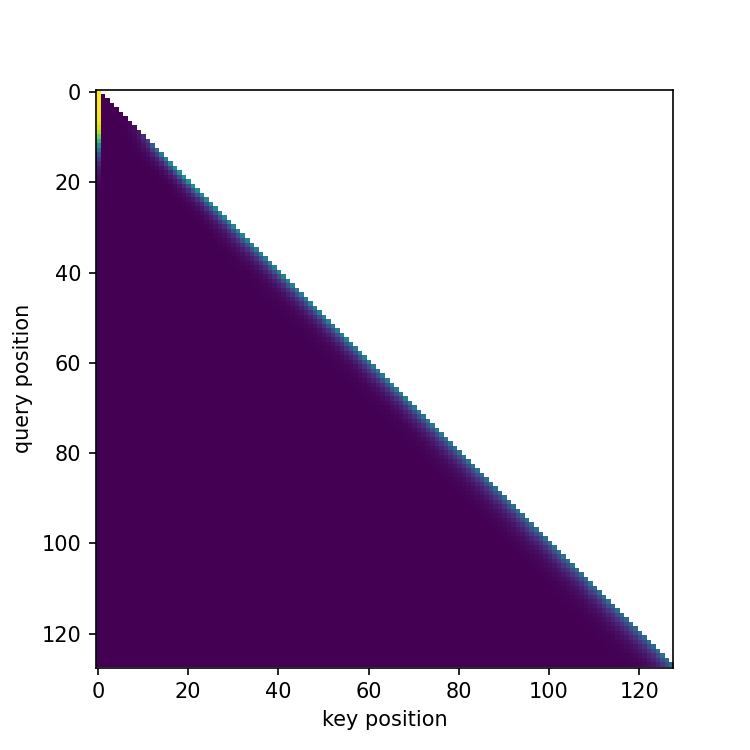} 
    \end{subfigure}
    \begin{subfigure}[b]{0.285\textwidth}
        \centering
        $o=100$
        
        \includegraphics[width=\textwidth]{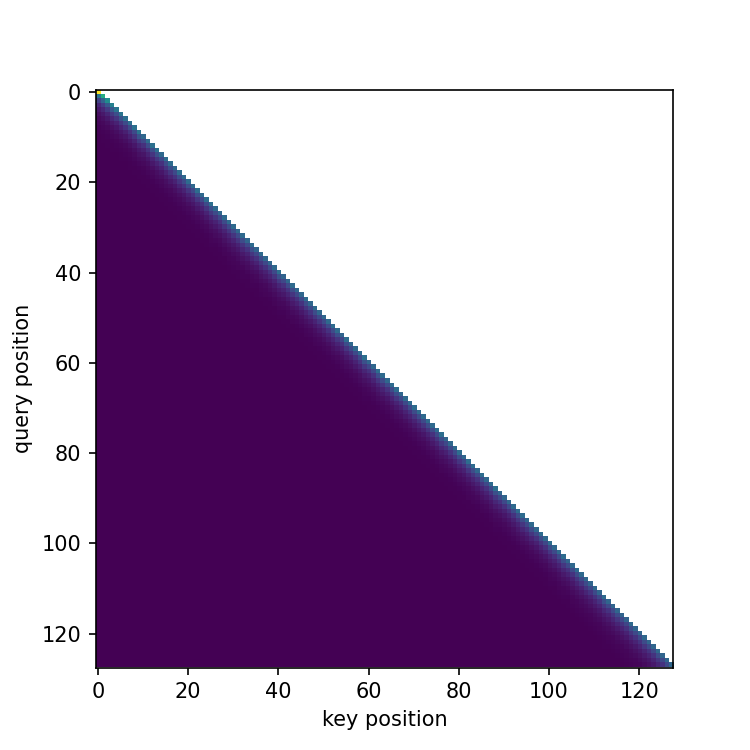} 
    \end{subfigure}
    \begin{subfigure}[b]{0.285\textwidth}
        \centering
        $o=200$
        
        \includegraphics[width=\textwidth]{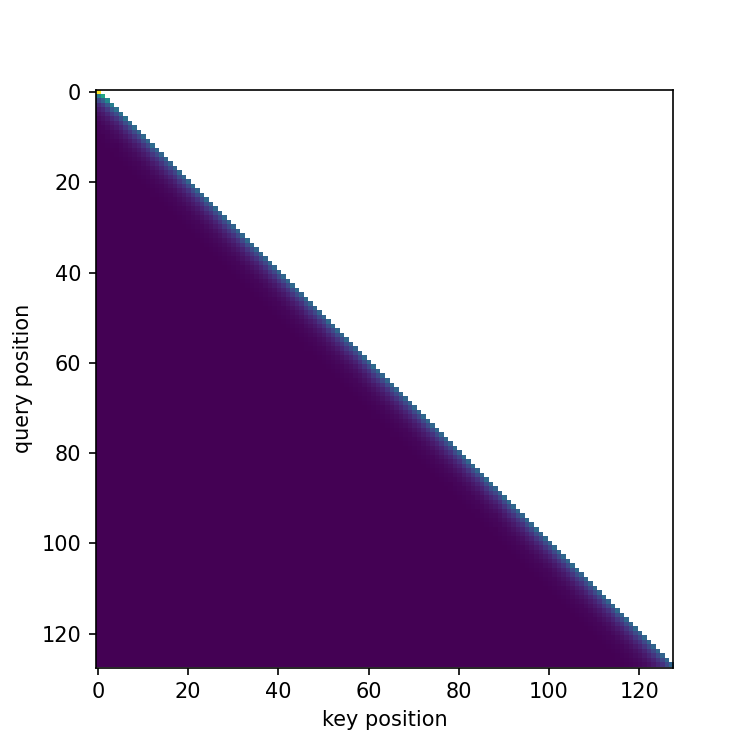} 
    \end{subfigure}

    \ \ 
    
    \begin{subfigure}[b]{0.285\textwidth}
        \centering
        $o=300$
        
        \includegraphics[width=\textwidth]{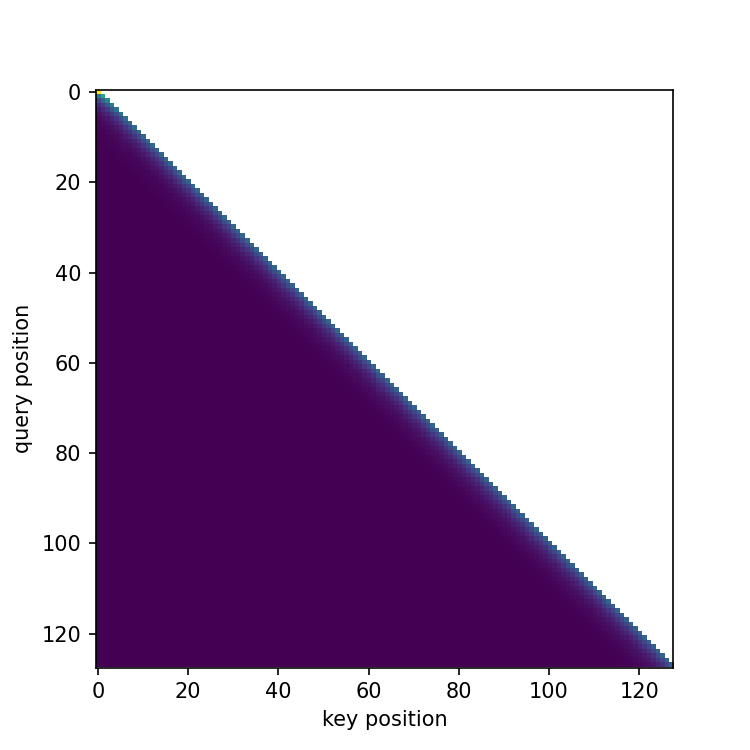} 
    \end{subfigure}
    \begin{subfigure}[b]{0.35\textwidth}
        \centering
        $o=400$
        
        \includegraphics[width=\textwidth]{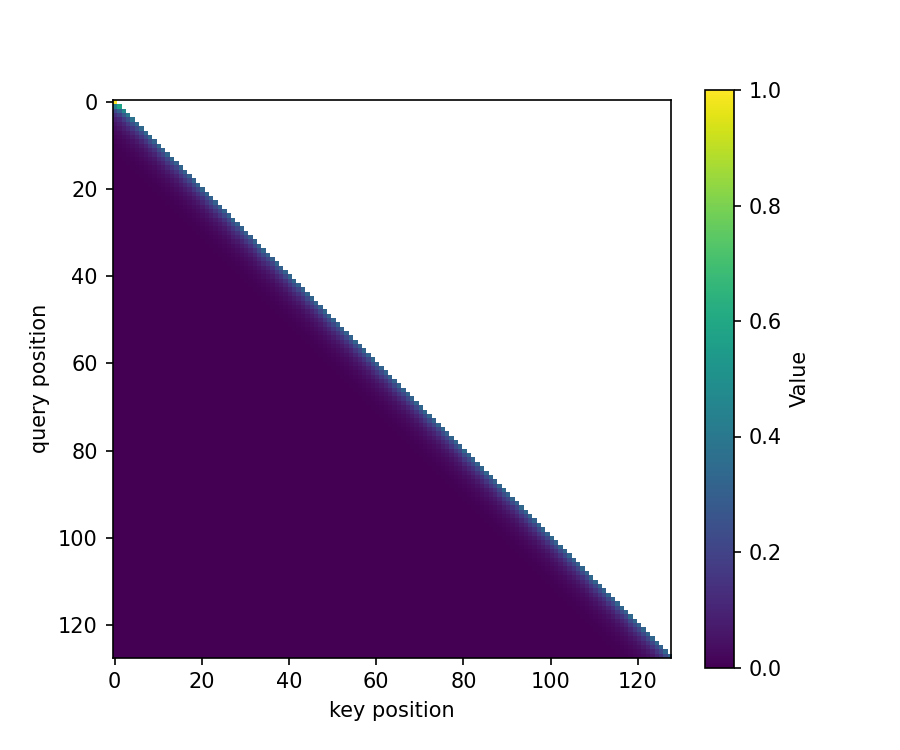} 
    \end{subfigure}

    \caption{Product functions in GPT-2 small.
    Heatmaps showing the product $\mP^T \mQ^T_{l,h} \mK_{l,h} \mP$ of heads in different bins in Figure \ref{fig:histogram-gpt2}.
    (a) A head whose Offset Variance is 0.6. The pattern is close to uniform, %
    making it trivially translation-invariant.
    (b--d) Heads with Offset Variances 66.0 (b), 87.9 (c), 142.1 (d). 
    These heads (b--d) produce high products at small distances, and at distances $\approx 300$. The first aspect is exactly translation-invariant; the second one is approximately translation-invariant in the middle of the range, though not when closer to the boundaries.
    (e) Heatmaps showing the post-softmax matrices $M^{(o)}$ for Head 0,3, the head with largest variance in Figure~\ref{fig:offset-var-hist2-gpt2}, where $o=$ 0, 100, 200, 300, 400 respectively, and window size $w=128$. With the exception of positions very close to the beginning ($o=0$), the weights from query position $j$ are focused on key positions $i$ with small $j-i$ (note the green/yellow diagonal).
    Note that the green/yellow diagonal is faint as it is spread over multiple small distances $j-i$, but it is consistently present and of consistent width.
    Hence, the contribution of $\vp_i \mK^T \mQ \vp_j$ to attention patterns is largely independent of the offset here, even in the head with the numerically largest Offset Variance.
}
    \label{fig:product-gpt2}\label{fig:weights-gpt2}
\end{figure}

\subsection{Comparing APE and NoPE  on Induction Head Tasks}\label{sec:comparing-ape-nope-induction-head}

We further conduct more experiments comparing the generalization ability of transformers with APE and NoPE.
We have already found that APE performs better at NoPE at a variety of task that are not expressible in $\CRASPempty$, such as COPY UNIQUE (Figure~\ref{fig:results}).
Here, we further bolster these results by (i) considering a parametric family of further such tasks, and (ii) conducting a much broader hyperparameter search for NoPE.

We introduce 3 problems, and refer them as COPY UNIQUE $t-1$, $t-2$, $t-3$ respectively. Here, ``$t-c$'' means the problem is naturally solved with an attention pattern where each token attends to the $c$-th last previous token. The $t-1$ problem is identical to the COPY UNIQUE task we described before; it asks the model to copy a string in which each token appears at most once (hence, ``UNIQUE''), and is naturally solved using an induction head circuit (Section~\ref{sec:induction_circuit}). The $t-2$ problem consists of inputs such as 
\texttt{
\setlength{\tabcolsep}{3pt}
\small
\begin{NiceTabular}{*{13}{c}}[hvlines]
\SOS & \# & 14 & 23 & 6 & 9 & 1 & 18 & SEP & 14 & 6 & 1 & EOS
\end{NiceTabular}
}
That is, the problem is tasked with copying every second token from the input, which can be naturally solved using attention from position $t$ to position $t-2$.
Finally, the $t-3$ problem consists of inputs such as
\texttt{
\setlength{\tabcolsep}{3pt}
\small
\begin{NiceTabular}{*{13}{c}}[hvlines]
\SOS & \# & \# & 14 & 23 & 6 & 9 & 1 & 18 & SEP & 14 & 9 & EOS
\end{NiceTabular}
}

In general, for a $t-c$ problem, each input sequence starts with \SOS  and $c-1$  special tokens \#, and then continues with $k \cdot c$ unique tokens, where $k \in \mathbb{Z}^+$ is a positive integer, finally the output sequence after SEP, which consists of every $c$th token in the input sequence. We introduce the extra special token \# so that the task is easily solved using an induction head-like construction attending from position $t$ to position $t-c$.
The model is trained to output the tokens after SEP. In these problems, \textsc{Len} (recall Section~\ref{sec:task-definitions-algorithmic}) refers to the length of the portion between \SOS and SEP. The minimum \textsc{Len} is $l_{min}=2c-1$. The total vocabulary size of tokens except for special tokens is equal to the maximum number of unique tokens needed, i.e., 150.

As before, the models are trained on sequences of \textsc{Len} $\leq 50$, and we optimize hyperparameters for accuracy on lengths $[51, 100]$. For APE transformers, we fixed the architecture to have 2 layers and 1 head per layer, and the searching space is: learning rate = \{0.001, 0.0001\}, model dimension = \{64, 256, 512\}, dropout rate = \{0, 0.1\}. For NoPE transformers, we search in a \emph{much larger space}: number of layers = \{2, 3, 4, 5\}, number of heads = \{1, 2, 4\}, learning rate = \{0.001, 0.0001\}, model dimension = \{64, 256, 512\}, dropout rate = \{0, 0.1\}. Other hyperparameters are fixed and same as before; the attention dropout is still 0. We sweep all hyperparameter combinations and select the combination with highest accuracy on on $[51, 100]$, shown in Table \ref{tab:additional-exp-ape-nope-hyper}. The selected combination is then used to train 5 models with different random seeds. The average accuracy on the test set (different from before, each now 5,000 examples, more than before) is shown in Table \ref{tab:additional-exp-ape-nope-results}. We can see that, even though  we search in a much larger space for NoPE and select the most generalizable model according to accuracy on on $[51, 100]$, APE's performance is still far stronger than NoPE across the board.
We also change the length range, that is, we train model on sequences of \textsc{Len} $\leq 128$, and we optimize hyperparameters for accuracy on lengths $[129, 256]$, and test the model's performance on $[257, 384]$ as well as the previous two ranges. Results are shown in Table \ref{tab:additional-exp-ape-nope-results-384}. We see that, again, APE's performance is superior than NoPE.
These problems are easily expressed in $\CRASPpl$ but not $\CRASPempty$, and our theory thus predicts length generalization for APE but not NoPE, in line with the empirical results.

\begin{table}[]
    \centering
    \begin{tabular}{c|ccc|ccc} \hline
         & \multicolumn{3}{c|}{APE} & \multicolumn{3}{c}{NoPE} \\
         & $\leq$ 50 & [51, 100] & [101, 150] & $\leq$ 50 & [51, 100] & [101, 150] \\ \hline
        t-1 & 1.000 & 1.000 & 0.986 & 1.000 & 0.770 & 0.044 \\
        t-2 & 1.000 & 1.000 & 0.977 & 0.999 & 0.232 & 0.000   \\
        t-3 & 1.000 & 1.000 & 0.990 & 1.000 & 0.116 & 0.000 \\ \hline
    \end{tabular}
    \caption{Experimental results for transformer with APE and NoPE, on the three problems we defined. The models are trained on data where \textsc{Len}$\leq50$, and tested on all three length ranges. Accuracy is averaged over 5 successful runs (accuracy on \textsc{Len} $\leq$ 50 is greater than 0.99). We find the set of hyperparameters that generalizes best in each case, shown in Table \ref{tab:additional-exp-ape-nope-hyper}.}
    \label{tab:additional-exp-ape-nope-results}
\end{table}

\begin{table}[]
   \centering
   \begin{tabular}{c|ccc|ccc} \hline
        & \multicolumn{3}{c|}{APE} & \multicolumn{3}{c}{NoPE} \\
        & $\leq$ 128 & [129, 256] & [257, 384] & $\leq$ 128 & [129, 256] & [257, 384] \\ \hline
       t-1 & 1.000 & 1.000 & 0.989 & 1.000  & 0.602  & 0.000 \\
       t-2 & 1.000 & 1.000 & 0.998 & 1.000 & 0.241 & 0.000  \\
       t-3 & 1.000 & 1.000 & 0.997 & 0.997 & 0.058 & 0.000 \\ \hline
   \end{tabular}
   \caption{Experimental results for transformer with APE and NoPE, on the three problems we defined. The models are trained on data where \textsc{Len}$\leq128$, and tested on all three length ranges. Accuracy is averaged over 5 successful runs (accuracy on \textsc{Len} $\leq$ 128 is greater than 0.99). We find the set of hyperparameters that generalizes best in each case, shown in Table \ref{tab:additional-exp-ape-nope-hyper-384}.}
   \label{tab:additional-exp-ape-nope-results-384}
\end{table}

\begin{table}[]
    \centering
    \begin{tabular}{c|ccc|ccc} \hline
         & \multicolumn{3}{c|}{APE} & \multicolumn{3}{c}{NoPE} \\ 
         & Model Arch & LR & Dropout & Model Arch & LR & Dropout \\ \hline
        t-1 & \textcolor{gray}{2 layer; 1 head;} 64 dim & 1e-3 & 0.1  & 5 layer; 2 head; 512 dim & 1e-3 & 0.1  \\
        t-2 & \textcolor{gray}{2 layer; 1 head;} 256 dim & 1e-4 & 0.1  & 5 layer; 4 head; 256 dim & 1e-3 & 0.1 \\
        t-3 &  \textcolor{gray}{2 layer; 1 head;} 256 dim & 1e-3 & 0.1  & 5 layer; 4 head; 256 dim & 1e-3 & 0 \\ \hline
    \end{tabular}
    \caption{The best configuration of hyperparameters we found for each case in Table \ref{tab:additional-exp-ape-nope-results}. Gray text stands for fixed hyperparameter. Because we know 2 layers with 1 head can solve these problems when using APE, we do not perform further search there. On the other hand, for NoPE, we perform hyperparameter search over a larger space.}
    \label{tab:additional-exp-ape-nope-hyper}
\end{table}

\begin{table}[]
    \centering
    \begin{tabular}{c|ccc|ccc} \hline
         & \multicolumn{3}{c|}{APE} & \multicolumn{3}{c}{NoPE} \\ 
         & Model Arch & LR & Dropout & Model Arch & LR & Dropout \\ \hline
        t-1 & \textcolor{gray}{2 layer; 1 head;} 64 dim & 1e-3 & 0.1  & 5 layer; 1 head; 512 dim & 1e-3 & 0.1  \\
        t-2 & \textcolor{gray}{2 layer; 1 head;} 256 dim & 1e-3 & 0.1  & 5 layer; 2 head; 512 dim & 1e-3 & 0.1 \\
        t-3 &  \textcolor{gray}{2 layer; 1 head;} 256 dim & 1e-3 & 0.1  & 5 layer; 2 head; 512 dim & 1e-3 & 0.1 \\ \hline
    \end{tabular}
    \caption{The best configuration of hyperparameters we found for each case in \ref{tab:additional-exp-ape-nope-results-384}.}
    \label{tab:additional-exp-ape-nope-hyper-384}
\end{table}

\end{document}